\theoremstyle{plain}
\newtheorem{theorem}{Theorem}[section]
\newtheorem{proposition}[theorem]{Proposition}
\newtheorem{lemma}[theorem]{Lemma}
\newtheorem{corollary}[theorem]{Corollary}
\theoremstyle{definition}
\theoremstyle{remark}
\newtheorem{remark}[theorem]{Remark}
\DeclareMathOperator*{\argmax}{argmax}
\newcommand{\defeq}{\vcentcolon=}
\newcommand{\var}{\mathrm{Var}}
\newcommand{\chisos}{\mathscr{S}}
\newcommand{\chilos}{\mathscr{L}}
\newcommand{\glos}{\mathscr{M}}
\newcommand{\gitrepo}{\url{https://github.com/hanna-tseran/maxout_expected_gradients}}
\newcommand{\gaussian}{N}
\icmltitlerunning{Expected Gradients of Maxout Networks and Consequences to Parameter Initialization}
\begin{document}

\twocolumn[
\icmltitle{Expected Gradients of Maxout Networks\texorpdfstring{\\}{ }and Consequences to Parameter Initialization}




\begin{icmlauthorlist}
\icmlauthor{Hanna Tseran}{MPI}
\icmlauthor{Guido Mont\'ufar}{MPI,UCLA}
\end{icmlauthorlist}

\icmlaffiliation{MPI}{Max Planck Institute for Mathematics in the Sciences, 04103 Leipzig, Germany}
\icmlaffiliation{UCLA}{Department of Mathematics and Department of Statistics, UCLA, Los Angeles, CA 90095, USA}

\icmlcorrespondingauthor{Hanna Tseran}{hanna.tseran@mis.mpg.de}

\icmlkeywords{maxout unit, input-output Jacobian, parameter initialization, expressivity, linear regions, curve distortion, NTK}

\vskip 0.3in
]



\printAffiliationsAndNotice{} 

\begin{abstract}
We study the gradients of a maxout network with respect to inputs and parameters and obtain bounds for the moments depending on the architecture and the parameter distribution. We observe that the distribution of the input-output Jacobian depends on the input, which complicates a stable parameter initialization. Based on the moments of the gradients, we formulate parameter initialization strategies that avoid vanishing and exploding gradients in wide networks. Experiments with deep fully-connected and convolutional networks show that this strategy improves SGD and Adam training of deep maxout networks. In addition, we obtain refined bounds on the expected number of linear regions, results on the expected curve length distortion, and results on the NTK.\footnote[3]{The code is available at \gitrepo.}
\end{abstract}

\section{Introduction}
\label{sec:intro}

We study the gradients of maxout networks and derive a rigorous parameter initialization strategy as well as several implications for stability and expressivity. 
Maxout networks were proposed  by \citet{goodfellow_maxout_2013} as an alternative to ReLU networks with the potential to improve issues with dying neurons and attain better model averaging when used with Dropout \citep{hinton2012improving}. 
Dropout is used in transformer architectures \citep{vaswani2017attention}, and maximum aggregation functions are used in Graph Neural Networks \citep{hamilton_graph_2020}. 
Therefore, we believe that developing the theory and implementation aspects of maxout networks can serve as an interesting platform for architecture design. 
We compute
bounds on the moments of the gradients of maxout networks depending on the parameter distribution and the network architecture. 
The analysis is based on the input-output Jacobian. We discover that, in contrast to ReLU networks, when initialized with a zero-mean Gaussian distribution, the distribution of the input-output Jacobian of a maxout network depends on the network input, which may lead to unstable gradients and training difficulties.
Nonetheless, we can obtain a rigorous parameter initialization recommendation for wide networks. 
The analysis of gradients also allows us to refine previous bounds on the expected number of linear regions of maxout networks at initialization and derive new results on the length distortion and the NTK. 

\paragraph{Maxout networks}
A rank-$K$ maxout unit, introduced by \citet{goodfellow_maxout_2013}, computes the maximum of $K$ real-valued parametric affine functions. 
Concretely, a rank-$K$ maxout unit with $n$ inputs implements a function $\mathbb{R}^n\to\mathbb{R}$; $\mathbf{x} \mapsto \max_{k \in [K]} \{ \langle W_{k}, \mathbf{x} \rangle + b_{k}\}$, where $W_{k} \in \mathbb{R}^{n}$ and $b_{k} \in \mathbb{R}$, $k \in [K] \vcentcolon= \{1, \ldots, K\}$, are trainable weights and biases.
The $K$ arguments of the maximum are called the pre-activation features of the maxout unit.
This may be regarded as a multi-argument generalization of a ReLU, which computes the maximum of a real-valued affine function and zero. 
\citet{goodfellow_maxout_2013} demonstrated that maxout networks could perform better than ReLU networks under similar circumstances.
Additionally, maxout networks have been shown to be useful for combating catastrophic forgetting in neural networks
\citep{goodfellow_empirical_2015}. 
On the other hand, \citet{castaneda_evaluation_2019} evaluated the performance of maxout networks in a big data setting and observed that increasing the width of ReLU networks is more effective in improving performance than replacing ReLUs with maxout units and that ReLU networks converge faster than maxout networks.
We observe that proper initialization strategies for maxout networks have not been studied in the same level of detail as for ReLU networks and that this might resolve some of the problems encountered in previous maxout network applications. 

\paragraph{Parameter initialization} 
The vanishing and exploding gradient problem
has been known since the work of \citet{hochreiter_untersuchungen_1991}. 
It makes choosing an appropriate learning rate harder and slows training \citep{sun_optimization_2019}. 
Common approaches to address this difficulty include the choice of specific architectures, e.g.\ LSTMs \citep{hochreiter_untersuchungen_1991} or ResNets \citep{he_deep_2016}, and normalization methods such as batch normalization \citep{ioffe_batch_2015} or explicit control of the gradient magnitude with gradient clipping \citep{pascanu_difficulty_2013}.  
We will focus on approaches based on parameter initialization that control the activation length and parameter gradients 
\citep{lecun_efficient_2012,glorot_understanding_2010, he_delving_2015,gurbuzbalaban_fractional_2021, zhang2019fixup, bachlechner2021rezero}.
\citet{he_delving_2015} studied forward and backward passes
to obtain initialization recommendations for ReLU. 
A more rigorous analysis of the gradients was performed by \cite{hanin_how_2018,hanin_which_2018}, who also considered higher-order moments and derived recommendations on the network architecture. 
\citet{sun_improving_2018} derived a corresponding strategy
for rank $K = 2$ maxout networks. 
For higher maxout ranks, \citet{tseran_expected_2021} considered balancing the forward pass, assuming Gaussian or uniform distribution on the pre-activation features of each layer. However, this assumption is not fully justified. 
We will analyze maxout network gradients, including the higher order moments, and give a rigorous justification for the initialization suggested by \citet{tseran_expected_2021}.

\paragraph{Expected number of linear regions}
Neural networks with piecewise linear activation functions subdivide their input space into linear regions, i.e., \ regions over which the computed function is (affine) linear. 
The number of linear regions serves as a complexity measure to differentiate network architectures \citep{pascanu_number_2014,montufar_number_2014,telgarsky_representation_2015,telgarsky_benefits_2016}.
The first results on the expected number of linear regions were obtained by \citet{hanin_complexity_2019, hanin_deep_2019} for ReLU networks, 
showing that it can be much smaller than the maximum possible number.  \citet{tseran_expected_2021} obtained corresponding results for maxout networks. 
An important factor controlling the bounds in these works is a constant depending on the gradient of the neuron activations with respect to the network input. 
By studying the input-output Jacobian of maxout networks, we obtain a refined bound for this constant and, consequently, the expected number of linear regions.  

\paragraph{Expected curve distortion}
Another complexity measure 
is the distortion of the length of an input curve as it passes through a network. 
\citet{poole_exponential_2016} studied the propagation of Riemannian curvature through wide neural networks using a mean-field approach, and later, a related notion of ``trajectory length'' was considered by
\citet{raghu_expressive_2017}. 
It was demonstrated that these measures can grow exponentially with the network depth, which was linked to the ability of deep networks to ``disentangle'' complex representations. 
Based on these notions, \citet{murray_activation_2022} studies how to avoid rapid convergence of pairwise input correlations, vanishing and exploding gradients. 
However, \citet{hanin_deep_2021} proved that for a ReLU network with He initialization the 
length of the curve
does not grow with the depth and even shrinks slightly.
We establish similar results for maxout networks. 

\paragraph{NTK}
It is known that the Neural Tangent Kernel (NTK) of a finite network can be approximated by its expectation \citep{jacot_neural_2018}. 
However, for ReLU networks \citet{hanin_finite_2020} showed that if both the depth and width tend to infinity, the NTK does not converge to a constant in probability. 
By studying the expectation of the gradients, we show that similarly to ReLU, the NTK of maxout networks does not converge to a constant when both width and depth are sent to infinity.%

\paragraph{Contributions} 
Our contributions can be summarized as follows. 
\begin{itemize}[leftmargin=*]
    \item For {\bf expected gradients}, we derive stochastic order bounds for the directional derivative of the input-output map of a deep fully-connected maxout network (Theorem~\ref{th:jacobian_so}) as well as bounds for the moments (Corollary~\ref{cor:jacobian_moments}).
    Additionally, we derive an equality in distribution for the directional derivatives (Theorem~\ref{th:jxu_tighter_equality}), based on which we also discuss the moments (Remark~\ref{rem:wide_net_main}) in wide networks. 
    We further derive the moments of the activation length of a fully-connected maxout network (Corollary~\ref{cor:act_len_moments}).

    \item We rigorously derive {\bf parameter initialization} guidelines for wide maxout networks preventing vanishing and exploding gradients and formulate architecture recommendations. 
    We experimentally demonstrate that they make it possible to train standard-width deep fully-connected and convolutional maxout networks using simple procedures (such as SGD with momentum and Adam), yielding higher accuracy than other initializations or ReLU networks on 
    image classification tasks.
    
    \item We derive {\bf several implications} refining previous bounds on the expected number of linear regions (Corollary~\ref{cor:cgrad}), 
    and new results on length distortion (Corollary~\ref{cor:distortion}) and the NTK (Corollary~\ref{cor:ntk_weights}).
\end{itemize}

\begin{figure*}[t!]
    \begin{subfigure}{\textwidth}
        \centering
        \setlength\tabcolsep{0pt}
        \begin{tabular}{ccccc}
            \centering
            &\multicolumn{2}{c}{Maxout, rank $K = 5$} &&
            ReLU \\
            $1$ hidden layer &
            $3$ hidden layers &
            $5$ hidden layers &
            $10$ hidden layers &
            $5$ hidden layers \\
            \includegraphics[width=0.19\linewidth]{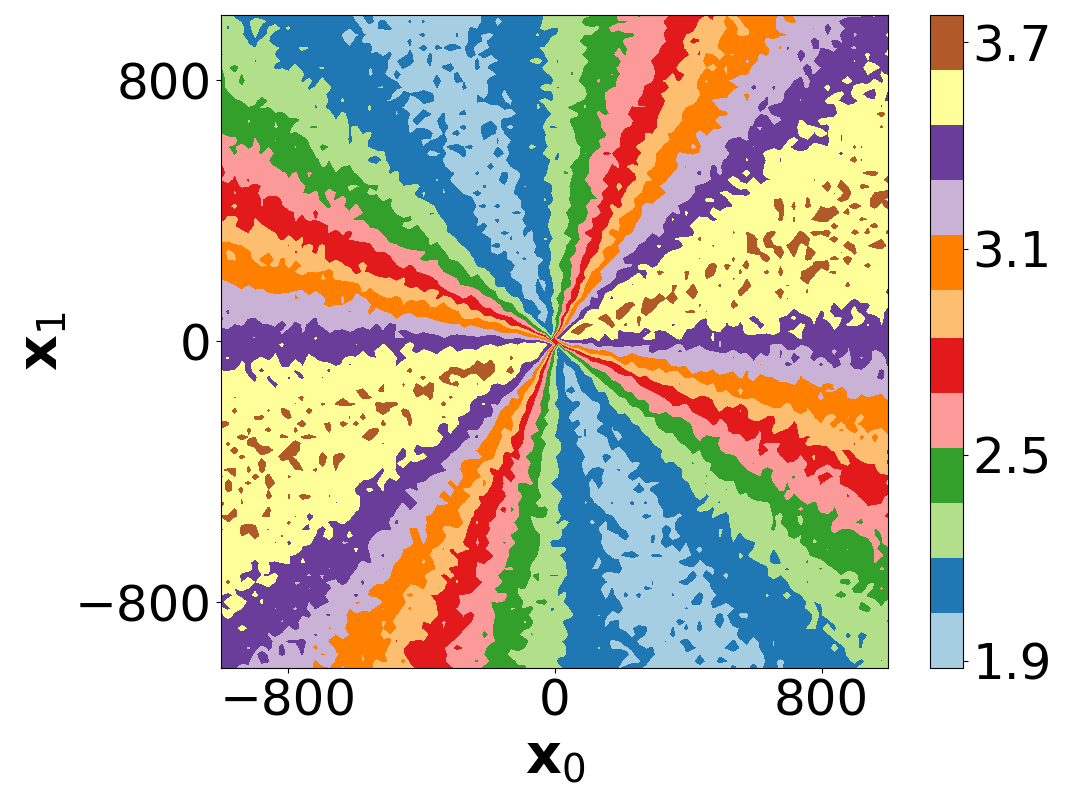} &
            \includegraphics[width=0.19\linewidth]{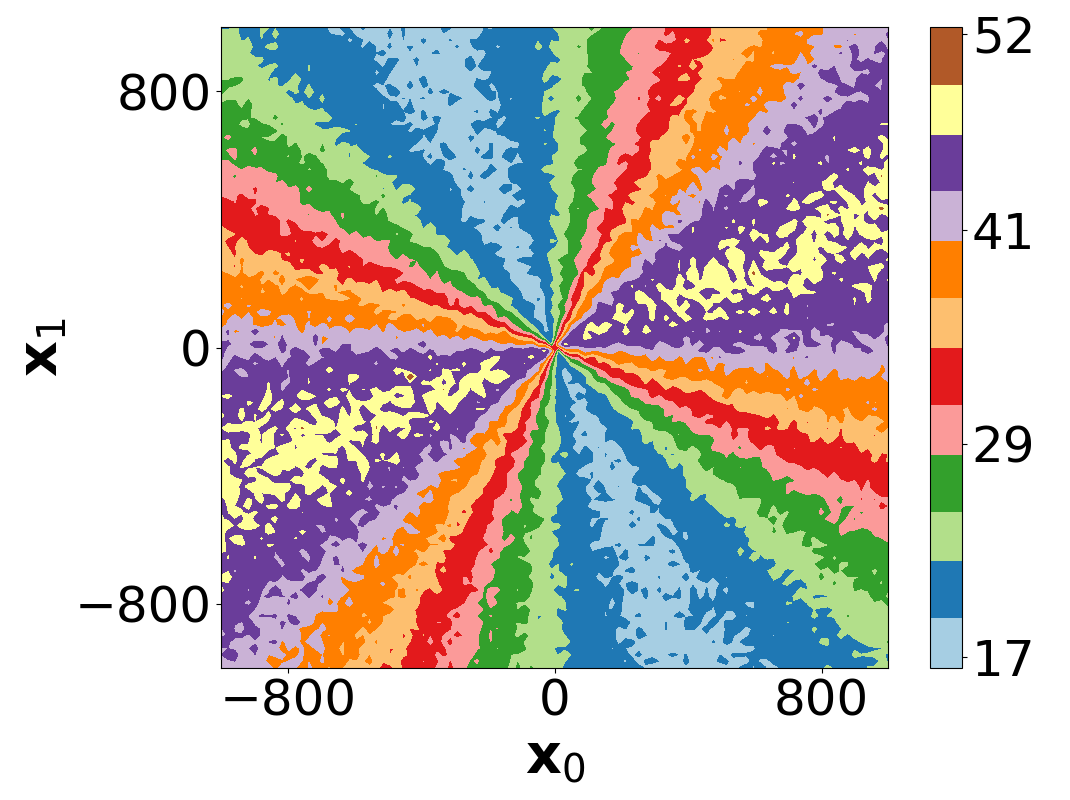} &
            \includegraphics[width=0.19\linewidth]{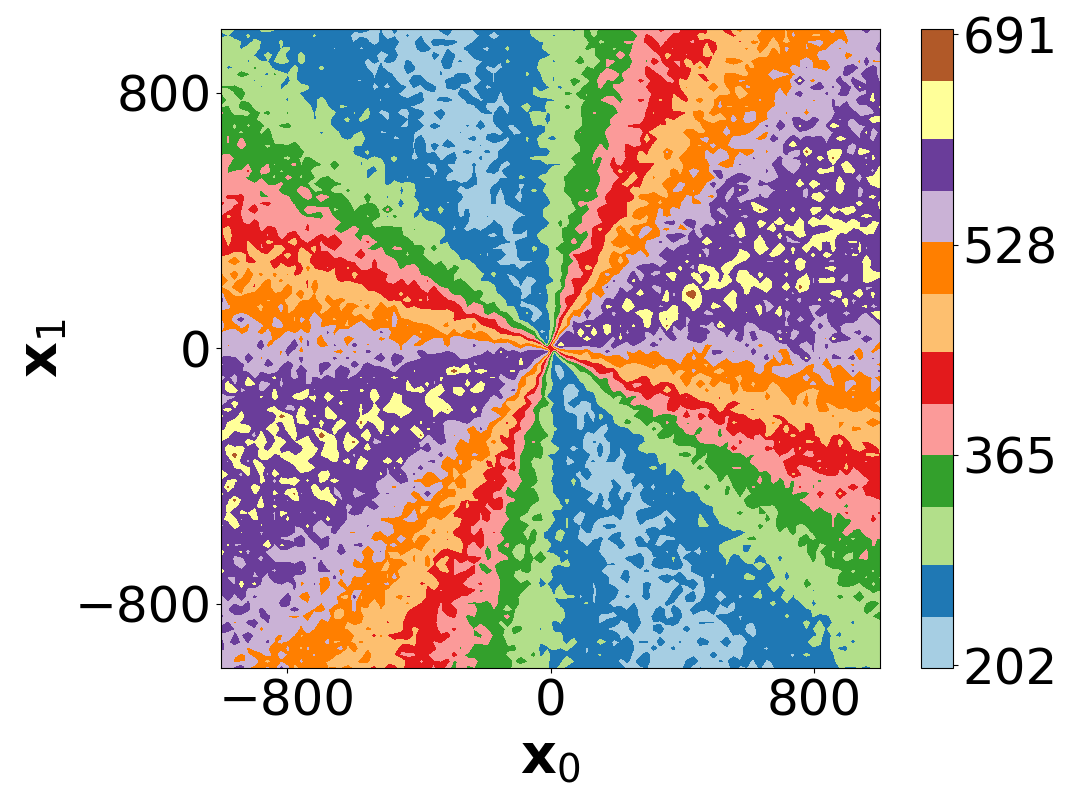} &
            \includegraphics[width=0.2\linewidth]{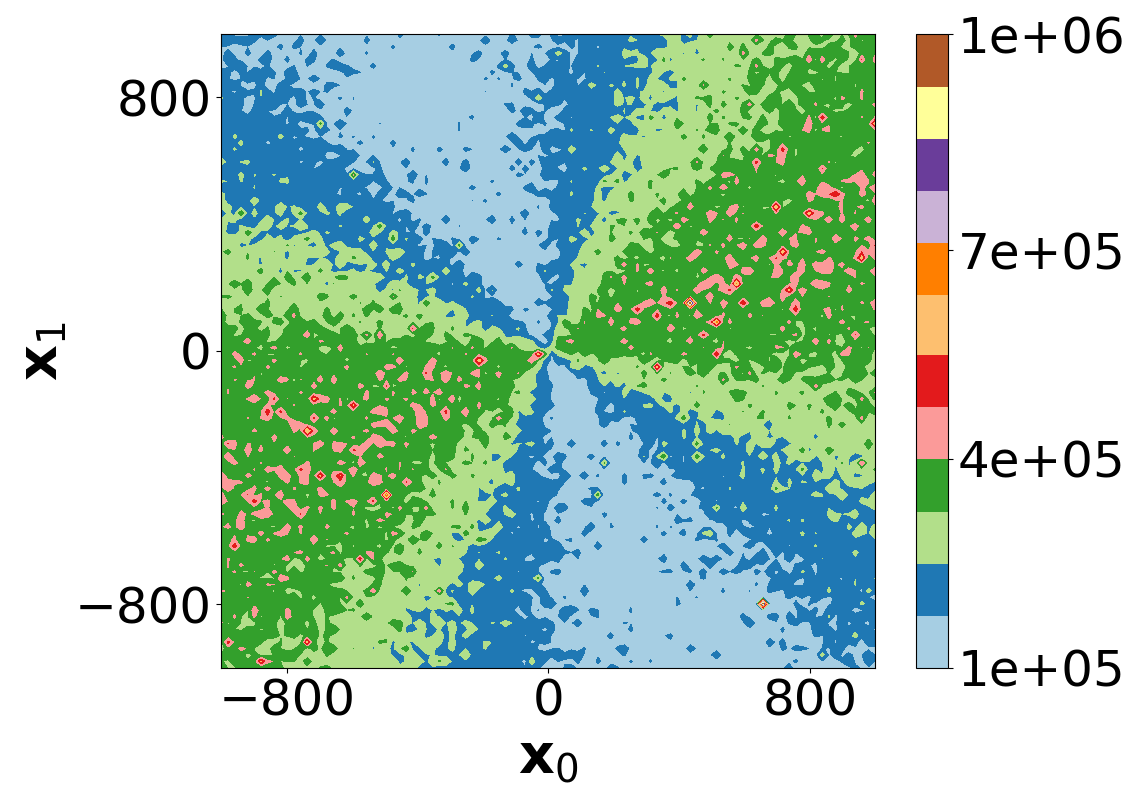} &
            \includegraphics[width=0.19\linewidth]{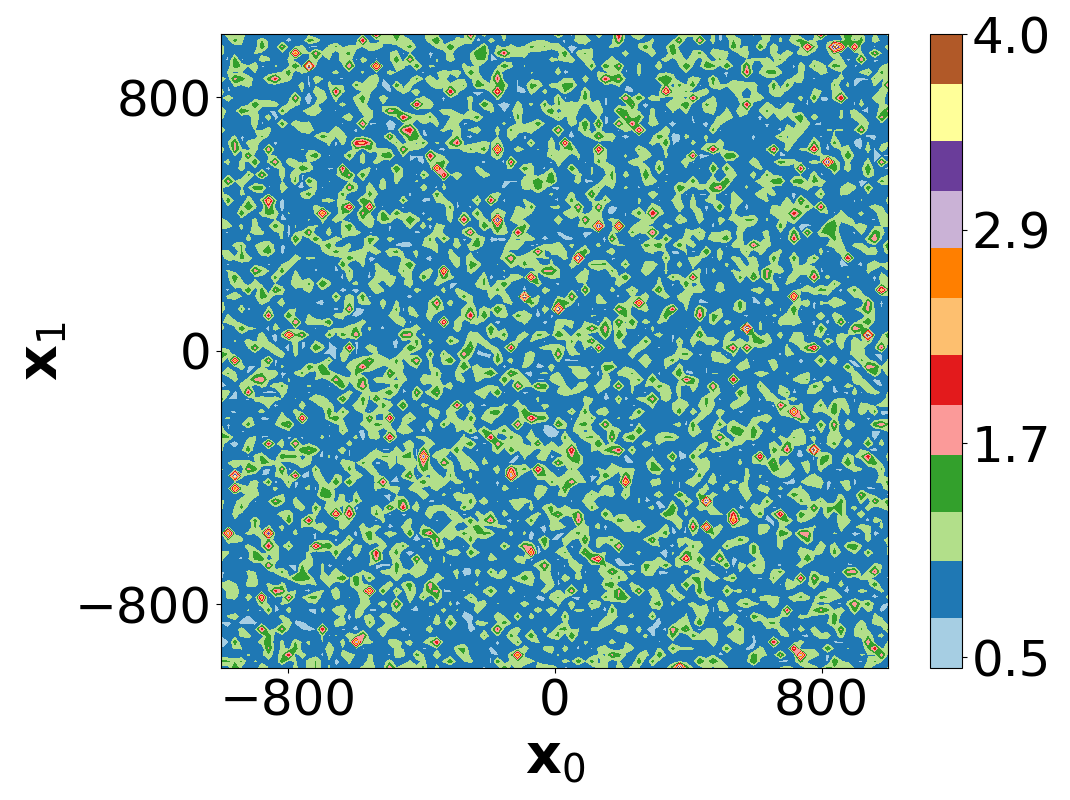}
        \end{tabular}
    \end{subfigure} 
    \caption{
    Expectation of the directional derivative of the input-output map $\mathbb{E}[\| \mathbf{J}_{\mathcal{N}}(\mathbf{x}) \mathbf{u} \|^2]$ for width-2 fully-connected networks with inputs in $\mathbb{R}^2$. 
    For maxout networks, this expectation depends on the input, while for ReLU networks, it does not. 
    Input points $\mathbf{x}$ were generated as a grid of $100 \times 100$ points in $[-10^3, 10^3]^2$, and $\mathbf{u}$ was a fixed vector sampled from the unit sphere. 
    The expectation was estimated based on $10{,}000$ initializations with weights and biases sampled from $\gaussian(0, 1)$. 
    }
    \label{fig:maxout_relu_second}
\end{figure*}

\section{Preliminaries}
\label{sec:prelims}

\paragraph{Architecture} 
We consider feedforward fully-connected
maxout neural networks with $n_0$ inputs, $L$ hidden layers of widths $n_1,\ldots, n_{L-1}$, and a linear output layer, 
which implement functions of the form $\mathcal{N} = \psi \circ \phi_{L-1} \circ \cdots \circ \phi_1$. 
The $l$-th hidden layer is a function $\phi_l\colon \mathbb{R}^{n_{l-1}}\to\mathbb{R}^{n_l}$ with
components $i\in [n_l]:=\{1,\ldots, n_l\}$ given by the maximum of $K \geq 2$ trainable affine functions $\phi_{l,i} \colon \mathbb{R}^{n_{l-1}} \to \mathbb{R}$; 
$\mathbf{x}^{(l-1)} \mapsto \max_{k \in [K]} \{ W_{i,k}^{(l)} \mathbf{x}^{(l-1)} + \mathbf{b}_{i,k}^{(l)}\}$, where $W_{i,k}^{(l)} \in \mathbb{R}^{n_{l-1}}$, $\mathbf{b}_{i,k} \in \mathbb{R}$. Here $\mathbf{x}^{(l-1)}\in\mathbb{R}^{n_{l-1}}$ denotes the output of the $(l-1)$th layer and $\mathbf{x}^{(0)} \defeq \mathbf{x}$.
We will write $\mathbf{x}_{i,k}^{(l)} = W_{i,k}^{(l)}\mathbf{x}^{(l-1)} + \mathbf{b}_{i,k}^{(l)}$ to denote the $k$th pre-activation of the $i$th neuron in the $l$th layer. 
Finally $\psi\colon\mathbb{R}^{n_{L-1}}\to\mathbb{R}^{n_{L}}$ is a linear output layer. 
We will write $\mathbf{\Theta} = \{\mathbf{W}, \mathbf{b}\}$ for the parameters.
Unless stated otherwise,
we assume that for each layer, the weights and biases are initialized as i.i.d.\ samples from a Gaussian distribution with mean $0$ and variance $c / n_{l-1}$, where $c$ is a positive constant.
For the linear output layer, the variance is set as $1 / n_{L-1}$. 
We shall study appropriate choices of $c$. 
We will use $\|\cdot\|$ to denote the $\ell_2$ vector norm. 
We recall that a real valued random variable $X$ is said to be smaller than $Y$ in the stochastic order, denoted by $X \leq_{st} Y$, if $\Pr(X>x) \leq \Pr(Y > x)$ for all $x \in \mathbb{R}$.
In Appendix~\ref{app:notation}, we list all the variables and symbols with their definitions, and in Appendix~\ref{app:basic}, we review basic notions about maxout networks and random variables that we will use in our results.

\paragraph{Input-output Jacobian and activation length} 
We are concerned with the gradients of the outputs
with respect to the inputs, $\nabla \mathcal{N}_i(\mathbf{x}) = \nabla_\mathbf{x} \mathcal{N}_i$, and 
with respect to the
parameters, $\nabla \mathcal{N}_i(\mathbf{\Theta}) = \nabla_\mathbf{\Theta} \mathcal{N}_i$.
In our notation, the argument indicates the variables with respect to which we are taking the derivatives. 
To study these gradients, we consider the input-output Jacobian
$\mathbf{J}_{\mathcal{N}}(\mathbf{x}) = \left[ \nabla
\mathcal{N}_1(\mathbf{x}), \ldots, \nabla 
\mathcal{N}_{n_L}(\mathbf{x}) \right]^{T}$. 
To see the connection to the gradient with respect to the network parameters, consider any loss function $\mathcal{L}: \mathbb{R}^{n_L} \rightarrow \mathbb{R}$. 
A short calculation
shows that, for a fixed input $\mathbf{x} \in \mathbb{R}^{n_0}$,
the
derivative of the loss with respect to one of the weights $W_{i,k',j}^{(l)}$ of a maxout unit is 
$\big\langle \nabla \mathcal{L} (\mathcal{N} \left( \mathbf{x}) \right), \mathbf{J}_{\mathcal{N}} ( \mathbf{x}^{(l)}_{i} ) \big\rangle \mathbf{x}^{(l-1)}_{j}$ if $k' = \argmax_k\{\mathbf{x}^{(l)}_{i,k}\}$ and zero otherwise, i.e.\ 
\begin{align}
    \frac{\partial \mathcal{L} (\mathbf{x})}{ \partial W_{i,k',j}^{(l)}}
    &=  C(\mathbf{x}, W) \ \| \mathbf{J}_{\mathcal{N}} \left(\mathbf{x}^{(l)}\right) \mathbf{u}\| \ \mathbf{x}^{(l-1)}_{j},
    \label{eq:chain_rule}
\end{align}
where $ C(\mathbf{x}, W)$
$\defeq 
\| \mathbf{J}_{\mathcal{N}} (\mathbf{x}_i^{(l)} )\|^{-1}
\langle \nabla \mathcal{L} (\mathcal{N} ( \mathbf{x}) ), \mathbf{J}_{\mathcal{N}} ( \mathbf{x}^{(l)}_{i} ) \rangle$ 
and $\mathbf{u} = \mathbf{e}_i \in \mathbb{R}^{n_{l}}$. 
A similar decomposition of the derivative was used by \citet{hanin_which_2018, hanin_how_2018} for ReLU networks. 
By \eqref{eq:chain_rule} the fluctuation of the gradient norm around its mean is captured by the joint distribution of the squared norm of the directional derivative $\| \mathbf{J}_{\mathcal{N}} (\mathbf{x}) \mathbf{u}\|^2$
and the normalized activation length $A^{(l)} = \|\mathbf{x}^{(l)}\|^2/n_l$.
We also observe that $\| \mathbf{J}_{\mathcal{N}} \left(\mathbf{x}\right) \mathbf{u}\|^2$ is related to the singular values of the input-output Jacobian, which is of interest since
a spectrum concentrated around one at initialization can speed up training \citep{saxe_exact_2014, pennington_resurrecting_2017, pennington_emergence_2018}:
First, the sum of singular values is $\text{tr}( \mathbf{J}_{\mathcal{N}} \left(\mathbf{x}\right)^T  \mathbf{J}_{\mathcal{N}} \left(\mathbf{x}\right) ) = \sum_{i = 1}^{n_L} \langle \mathbf{J}_{\mathcal{N}} \left(\mathbf{x}\right)^T \mathbf{J}_{\mathcal{N}} \left(\mathbf{x}\right) \mathbf{u}_i , \mathbf{u}_i \rangle = \sum_{i=1}^{n_L} \|  \mathbf{J}_{\mathcal{N}} \left(\mathbf{x}\right) \mathbf{u}_i \|^2$, where the vectors $\mathbf{u}_i$ form an orthonormal basis.
Second, using the Stieltjes transform, one can show that singular values of the Jacobian depend on the even moments of the entries of $\mathbf{J}_{\mathcal{N}}$ \citep[Section~3.1]{hanin_which_2018}.
  
\section{Results}
\label{sec:results}

\subsection{Bounds on the Input-Output Jacobian} 
\label{subsec:jacobian}

\begin{restatable}[Bounds on $\|\mathbf{J}_{\mathcal{N}} (\mathbf{x}) \mathbf{u}\|^2$]{theorem}{thjacobianso}
    \label{th:jacobian_so}
    Consider a maxout network with the settings of Section~\ref{sec:prelims}.
    Assume that the biases are independent of the weights but otherwise initialized using any approach. 
    Let $\mathbf{u} \in \mathbb{R}^{n_0}$ be a fixed unit vector.
    Then, almost surely with respect to the parameter initialization, 
    for any input into the network $\mathbf{x} \in \mathbb{R}^{n_0}$, 
    the following stochastic order bounds hold: 
    \begin{align*}
        \frac{1}{n_{0}} \chi_{n_L}^2
        \prod_{l=1}^{L - 1}
        \frac{c}{n_{l}} \sum_{i=1}^{n_{l}} \xi_{l, i}(\chi^2_1, K)
        \leq_{st} 
        \| \mathbf{J}_{\mathcal{N}}(\mathbf{x}) \mathbf{u}\|^2\\
        \leq_{st} 
        \frac{1}{n_{0}} \chi_{n_L}^2
        \prod_{l=1}^{L - 1}
        \frac{c}{n_{l}} \sum_{i=1}^{n_{l}} \Xi_{l, i}(\chi^2_1, K),
    \end{align*}
    where
    $\xi_{l, i}(\chi^2_1, K)$ 
    and $\Xi_{l, i}(\chi^2_1, K)$ are respectively the smallest and largest order statistic in a sample of size $K$ of chi-squared random variables with $1$ degree of freedom, independent of each other and of the vectors $\mathbf{u}$ and $\mathbf{x}$. 
\end{restatable}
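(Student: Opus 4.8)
The plan is to track the image of the fixed unit vector $\mathbf u$ as it is pushed forward through the layers and, at each hidden layer, to bound the squared norm of that image both below and above by the squared norm at the previous layer times a sum of per-neuron order statistics of $K$ i.i.d.\ chi-squared variables; telescoping these per-layer bounds and appending the linear output layer then yields the two stochastic order bounds. To begin, fix $\mathbf x$ and write $W^{(\psi)}\in\mathbb R^{n_L\times n_{L-1}}$ for the output weight matrix (i.i.d.\ entries of variance $1/n_{L-1}$). For fixed $\mathbf x$ the events that some maxout unit has a tie in its $\argmax$ along the forward pass are, layer by layer, finitely many nondegenerate linear conditions on the Gaussian weights and so have probability zero; inductively each $\mathbf x^{(l)}$ then has a density and is nonzero almost surely, hence no ties occur, $\mathcal N$ is differentiable at $\mathbf x$, and $\mathbf J_{\mathcal N}(\mathbf x)\mathbf u = W^{(\psi)}\tilde W^{(L-1)}\cdots\tilde W^{(1)}\mathbf u$, where $\tilde W^{(l)}\in\mathbb R^{n_l\times n_{l-1}}$ has $i$-th row $W^{(l)}_{i,k^\ast_i}$ with $k^\ast_i=\argmax_k\{\mathbf x^{(l)}_{i,k}\}$. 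Set $\mathbf v^{(0)}=\mathbf u$ and $\mathbf v^{(l)}=\tilde W^{(l)}\mathbf v^{(l-1)}$ for $l=1,\dots,L-1$, so $\|\mathbf J_{\mathcal N}(\mathbf x)\mathbf u\|^2=\|W^{(\psi)}\mathbf v^{(L-1)}\|^2$.

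The crux is the trivial but decisive remark that $\mathbf v^{(l)}_i=\langle W^{(l)}_{i,k^\ast_i},\mathbf v^{(l-1)}\rangle$ is literally one of the $K$ numbers $q^{(l)}_{i,k}\defeq\langle W^{(l)}_{i,k},\mathbf v^{(l-1)}\rangle$, whichever index the $\argmax$ happens to select, so $\min_k(q^{(l)}_{i,k})^2\le(\mathbf v^{(l)}_i)^2\le\max_k(q^{(l)}_{i,k})^2$ pointwise; summing over $i$ (and noting $\mathbf v^{(l-1)}\neq 0$ a.s.),
\begin{equation*}
\tfrac{c}{n_{l-1}}\|\mathbf v^{(l-1)}\|^2\sum_{i=1}^{n_l}\xi_{l,i}\;\le\;\|\mathbf v^{(l)}\|^2\;\le\;\tfrac{c}{n_{l-1}}\|\mathbf v^{(l-1)}\|^2\sum_{i=1}^{n_l}\Xi_{l,i},
\end{equation*}
where $\xi_{l,i}$ and $\Xi_{l,i}$ are the smallest and largest of the $K$ numbers $(q^{(l)}_{i,k})^2/(\tfrac{c}{n_{l-1}}\|\mathbf v^{(l-1)}\|^2)$, $k\in[K]$. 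Let $\mathcal F_{l-1}$ gather the weights and biases of layers $1,\dots,l-1$; then $\mathbf v^{(l-1)}$ is $\mathcal F_{l-1}$-measurable, while $W^{(l)}$ is independent of $\mathcal F_{l-1}$ since the weights are independent across layers and, by hypothesis, of all biases. Hence, conditionally on $\mathcal F_{l-1}$, the $q^{(l)}_{i,k}$ are i.i.d.\ $\gaussian(0,\tfrac{c}{n_{l-1}}\|\mathbf v^{(l-1)}\|^2)$ by rotational invariance of the Gaussian, so the rescaled squares above are i.i.d.\ $\chi^2_1$; consequently the vectors $(\xi_{l,i})_i$ and $(\Xi_{l,i})_i$ have exactly the claimed smallest- and largest-order-statistic laws, are independent of $\mathcal F_{l-1}$, and, being measurable with respect to the layers $\le l$, the blocks for different $l$ are mutually independent. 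The linear output layer carries no $\argmax$, so the same computation gives the exact identity $\|W^{(\psi)}\mathbf v^{(L-1)}\|^2=\tfrac{1}{n_{L-1}}\|\mathbf v^{(L-1)}\|^2\,Y$ with $Y\sim\chi^2_{n_L}$ independent of $\mathcal F_{L-1}$.

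It remains to telescope. Multiplying the per-layer inequalities (all factors nonnegative) and using $\|\mathbf v^{(0)}\|^2=1$ gives, almost surely,
\begin{equation*}
\Big(\prod_{l=1}^{L-1}\tfrac{c}{n_{l-1}}\sum_{i=1}^{n_l}\xi_{l,i}\Big)\tfrac{Y}{n_{L-1}}\;\le\;\|\mathbf J_{\mathcal N}(\mathbf x)\mathbf u\|^2\;\le\;\Big(\prod_{l=1}^{L-1}\tfrac{c}{n_{l-1}}\sum_{i=1}^{n_l}\Xi_{l,i}\Big)\tfrac{Y}{n_{L-1}}.
\end{equation*}
By the independence established above, each bounding variable is a product of the same collection of mutually independent factors as the corresponding side of the theorem — one $\chi^2_{n_L}$ and, for each $l$, the block $\sum_{i=1}^{n_l}\xi_{l,i}$ (resp.\ $\sum_{i=1}^{n_l}\Xi_{l,i}$), a sum of $n_l$ i.i.d.\ order statistics — and since permuting which width normalizer multiplies which independent block does not change the law of the product, a short computation gives $\big(\prod_{l=1}^{L-1}\tfrac{c}{n_{l-1}}\sum_i\Xi_{l,i}\big)\tfrac{Y}{n_{L-1}}\overset{d}{=}\tfrac{1}{n_0}\,\chi^2_{n_L}\prod_{l=1}^{L-1}\tfrac{c}{n_l}\sum_i\Xi_{l,i}$, the right-hand side of the theorem, and likewise for the minima. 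Since an almost sure inequality between random variables implies the corresponding stochastic order, this is exactly the asserted statement.

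The only place requiring care is the independence bookkeeping in the middle step: $\mathbf v^{(l-1)}$ depends on all earlier weights and biases, and the $\argmax$ at layer $l$ intertwines $W^{(l)}$ with $\mathbf v^{(l-1)}$ through the pre-activations along $\mathbf x^{(l-1)}$, so one might fear an intractable dependence. The reason it causes no trouble is that the pointwise sandwich disposes of the selection entirely — we never need to describe which of the $q^{(l)}_{i,k}$ is chosen — so the only independence ever invoked is $W^{(l)}\perp\mathcal F_{l-1}$, which is immediate from the hypotheses; everything else reduces to the rotational-invariance computation of the conditional law of the $q^{(l)}_{i,k}$ and the routine almost sure differentiability check.
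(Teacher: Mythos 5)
Your proof is correct and follows essentially the same route as the paper's: factor the Jacobian into the selected-row matrices, sandwich each layer's squared norm between the min and max over the $K$ pre-activation inner products, identify the rescaled squares as $\chi^2_1$ order statistics via rotational invariance, and telescope, converting the almost sure inequality into a stochastic order. The only cosmetic difference is that the paper normalizes the propagated direction $\mathbf{u}^{(l)}$ at each layer and invokes a lemma on Gaussian matrices applied to independent unit vectors, whereas you carry the unnormalized vectors and make the conditional-independence bookkeeping explicit; these are the same argument.
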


The proof is in Appendix \ref{app:jacobian_norm}.
It is based on appropriate modifications to the ReLU discussion of \citet{hanin_products_2020, hanin_deep_2021}
and proceeds by writing the Jacobian norm as the product of the layer norms and bounding them with $\min_{k \in [K]}\{ \langle W_{i,k}^{(l)}, \mathbf{u}^{(l-1)} \rangle^2 \} $ and $\max_{k \in [K]}\{ \langle W_{i,k}^{(l)}, \mathbf{u}^{(l-1)} \rangle^2 \}$.
Since the product of a Gaussian vector with a unit vector is always Gaussian, the lower and upper bounds are distributed as the smallest and largest order statistics in a sample of size $K$ of chi-squared random variables with $1$ degree of freedom. 
In contrast to ReLU networks, we found that for maxout networks, it is not clear how to obtain equality in distribution involving only independent random variables because of the dependency of the distribution of $\| \mathbf{J}_{\mathcal{N}}(\mathbf{x}) \mathbf{u}\|^2$ on the network input $\mathbf{x}$ and the direction vector $\mathbf{u}$ (see Figure~\ref{fig:maxout_relu_second}).
We discuss this in more detail in Section~\ref{subsec:jac_eq_in_distr}. 

\begin{figure*}
    \begin{subfigure}{\textwidth}
        \centering
        \setlength\tabcolsep{0pt}
        \begin{tabular}{cc l}
            Fully-connected network
            & Convolutional network
            &\\
            \begin{tabular}{cc}
                \centering
                Width $= 2$ &
                Width $= 50$ \\
                
                \begin{tabular}{c}
                    \centering
                    \includegraphics[width=0.23\textwidth]{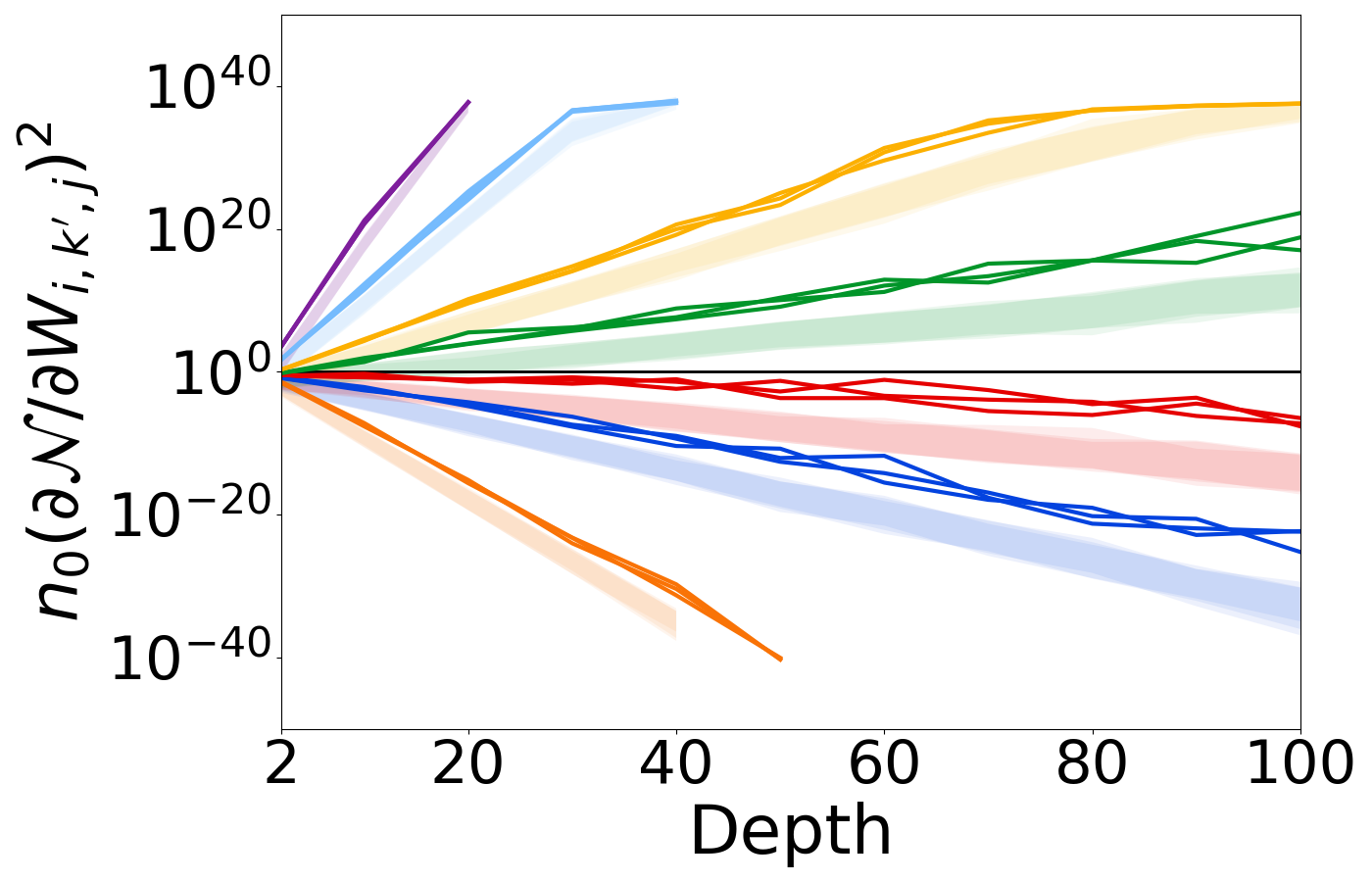}
                \end{tabular} &
                
                \begin{tabular}{c}
                    \centering
                    \includegraphics[width=0.23\textwidth]{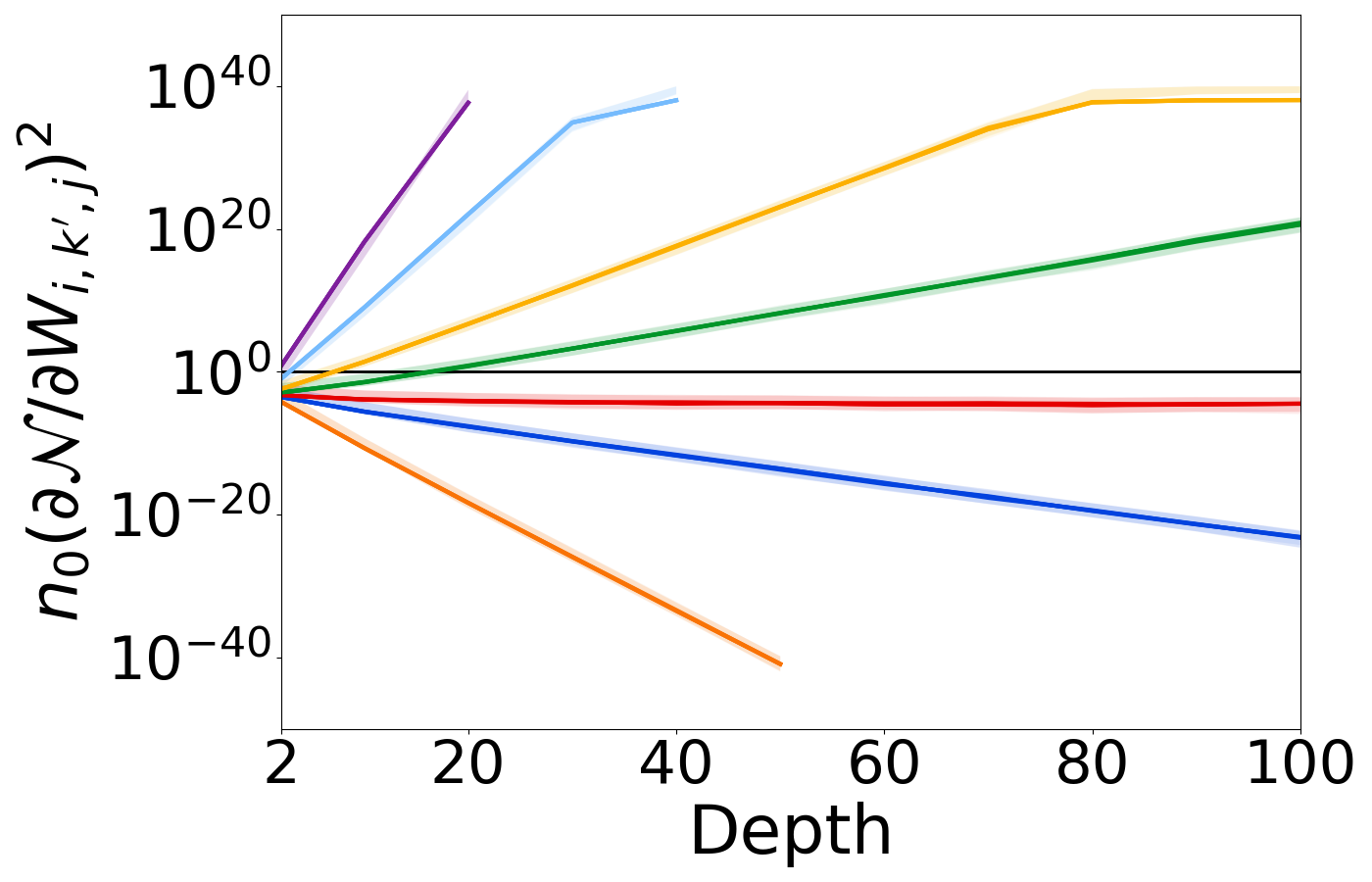}
                \end{tabular}
            \end{tabular}
            &
            \begin{tabular}{cc}
                \centering
                Width $= 2$ &
                Width $= 50$ \\
                
                \begin{tabular}{c}
                    \centering
                    \includegraphics[width=0.23\textwidth]{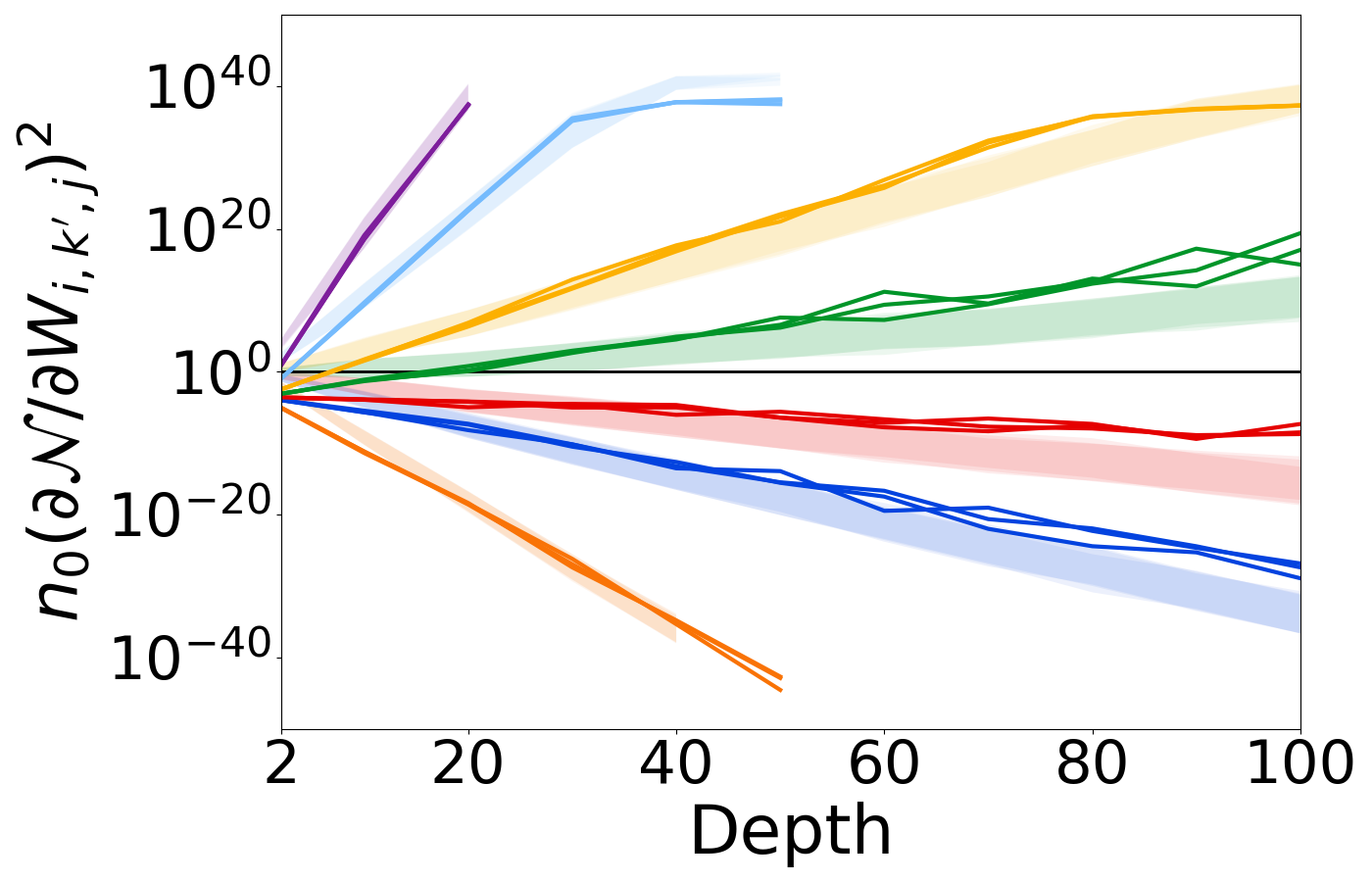}
                \end{tabular} &
                
                \begin{tabular}{c}
                    \centering
                    \includegraphics[width=0.23\textwidth]{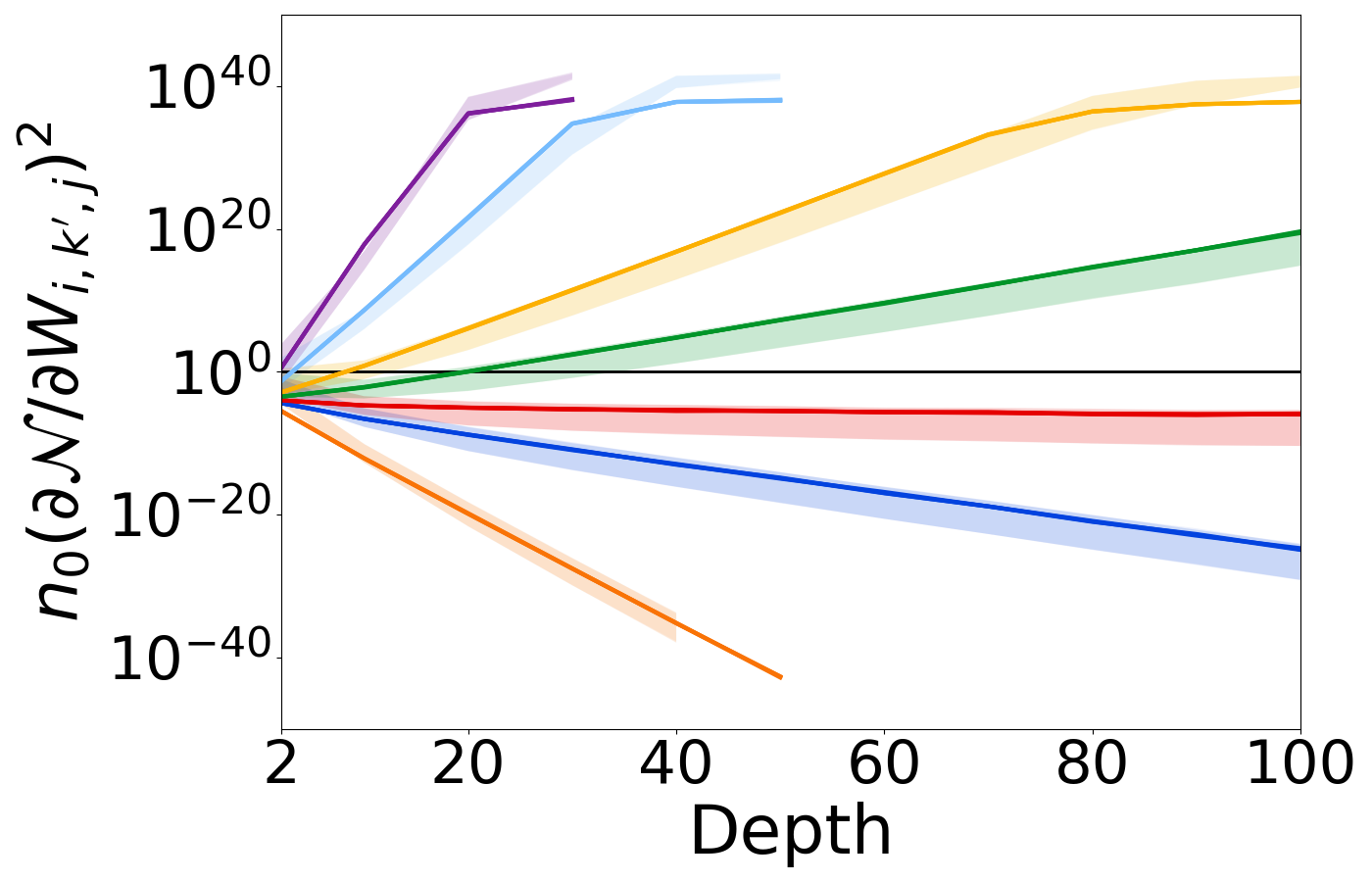}
                \end{tabular}
            \end{tabular}
            &
            \begin{tabular}{l}
                \begin{tabular}{l}
                    \includegraphics[width=0.08\textwidth]{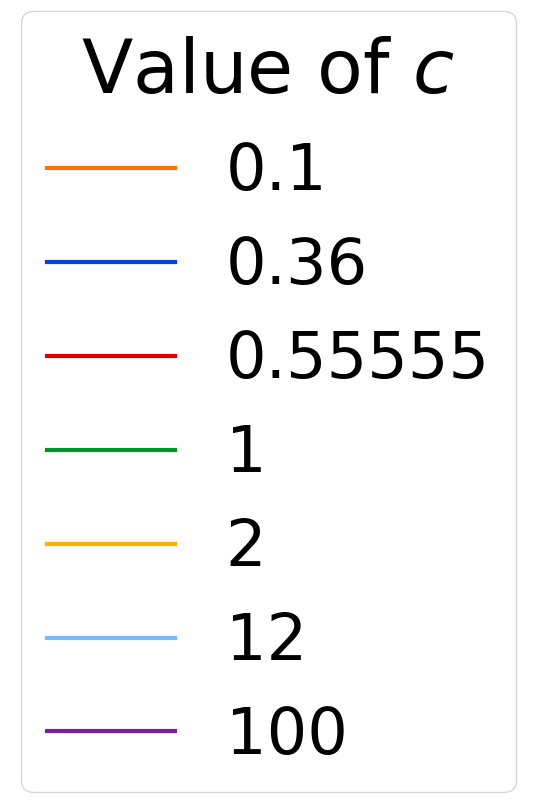}
                \end{tabular}
            \end{tabular}
        \end{tabular}
    \end{subfigure}
    \caption{
    Expected value
    and interquartile range of the squared gradients $n_0 (\partial \mathcal{N} / \partial W_{i,k',j})^2$ 
    as a function of depth. 
    Weights are sampled from $\gaussian(0, c / \text{fan-in})$ in fully-connected networks and $\gaussian(0, c / (k^2 \cdot \text{fan-in}))$, where $k$ is the kernel size, in CNNs.
    Biases are zero, and the maxout rank $K$ is $5$.
    The gradient is stable in wide fully-connected and convolutional networks with $c = 0.55555$ (red line), the value suggested in Section~\ref{sec:implications_init}. 
    The dark and light blue lines represent the bounds from Corollary \ref{cor:jacobian_moments}, and equal $1 / \chilos = 0.36$ and $1 /\chisos = 12$.
    The yellow line corresponds to the ReLU-He initialization. 
    We compute the mean and quartiles from $100$ network initializations and a fixed input. 
    The same color lines that are close to each other correspond to $3$ different unit-norm network inputs.
    }
    \label{fig:gradients}
\end{figure*}

\begin{restatable}[Bounds on the moments of $\| \mathbf{J}_{\mathcal{N}}(\mathbf{x}) \mathbf{u} \|^2$]{corollary}{corsqgradient}
    \label{cor:jacobian_moments}
    Consider a maxout network with the settings of Section~\ref{sec:prelims}.
    Assume that the biases are independent of the weights but otherwise initialized using any approach. 
    Let $\mathbf{u} \in \mathbb{R}^{n_0}$ be a fixed unit vector and $\mathbf{x} \in \mathbb{R}^{n_0}$ be any input into the network,
    Then
    \begin{enumerate}[(i),leftmargin=0.75cm]
        \item $\displaystyle 
            \frac{n_L}{n_{0}} (c \chisos)^{L - 1}
            \leq \mathbb{E}[\| \mathbf{J}_{\mathcal{N}}(\mathbf{x}) \mathbf{u} \|^2]
            \leq \frac{n_L}{n_{0}} (c \chilos)^{L - 1}
            $,
        \item $\displaystyle 
            \var\left[ \| \mathbf{J}_{\mathcal{N}}(\mathbf{x}) \mathbf{u}\|^2 \right]
            \leq 
            \left( \frac{n_L}{n_0}\right)^{2} c^{2(L-1)}
            \Bigg(
            K^{2(L-1)} \\
            \hspace*{1cm} \cdot \exp\Bigg\{
            4 \Bigg(\sum_{l=1}^{L - 1}\frac{1}{n_l K}
            + \frac{1}{n_L} \Bigg)  \Bigg\}
            - \chisos^{2(L - 1)}\Bigg)
            $,
        \item $\displaystyle 
            \mathbb{E} \left[ \| \mathbf{J}_{\mathcal{N}}(\mathbf{x}) \mathbf{u}\|^{2t} \right]
            \leq
            \left(\frac{n_L}{n_0}\right)^{t}
            \left(c K\right)^{t(L-1)} \\
            \hspace*{1.5cm} \cdot \exp\Bigg\{ t^2 \Bigg(\sum_{l=1}^{L - 1}\frac{1}{n_l K}
            + \frac{1}{n_L} \Bigg)  \Bigg\},
            \enspace t \in \mathbb{N}
            $,
    \end{enumerate}
    where the expectation is taken with respect to the distribution of the network weights.
    The constants
    $\chisos$ and $\chilos$ depend on $K$ and denote the means of the smallest and the largest order statistic in a sample of $K$ chi-squared random variables.
    For $K = 2, \ldots, 10$, 
    $\chisos \in [0.02, 0.4]$ 
    and
    $\chilos \in [1.6, 4]$. See Table~\ref{tab:constants} in Appendix~\ref{app:jacobian_moments} for the exact values. 
\end{restatable}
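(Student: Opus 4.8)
The plan is to derive all three statements directly from Theorem~\ref{th:jacobian_so}. Recall that $X\le_{st}Y$ implies $\mathbb{E}[g(X)]\le\mathbb{E}[g(Y)]$ for every non-decreasing $g$ for which both sides are finite (reviewed in Appendix~\ref{app:basic}); since $\|\mathbf{J}_{\mathcal{N}}(\mathbf{x})\mathbf{u}\|^2$ and both of its stochastic bounds are nonnegative and the bounds have all moments finite, $t$-th powers are order-preserving for every $t\in\mathbb{N}$. So it suffices to compute, respectively upper bound, the moments of the two random variables in Theorem~\ref{th:jacobian_so},
\[
\mathbf{L}\defeq\frac{1}{n_0}\chi^2_{n_L}\prod_{l=1}^{L-1}\frac{c}{n_l}\sum_{i=1}^{n_l}\xi_{l,i}(\chi^2_1,K),\qquad \mathbf{U}\defeq\frac{1}{n_0}\chi^2_{n_L}\prod_{l=1}^{L-1}\frac{c}{n_l}\sum_{i=1}^{n_l}\Xi_{l,i}(\chi^2_1,K),
\]
and then insert $\mathbf{L}\le_{st}\|\mathbf{J}_{\mathcal{N}}(\mathbf{x})\mathbf{u}\|^2\le_{st}\mathbf{U}$.

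For the first moments, I would use that all the chi-squared variables in Theorem~\ref{th:jacobian_so} are mutually independent (across layers $l$, across neurons $i$, and independent of the factor $\chi^2_{n_L}$), so the expectations of $\mathbf{L}$ and $\mathbf{U}$ factorize over $l$; with $\mathbb{E}[\chi^2_{n_L}]=n_L$, $\mathbb{E}[\xi_{l,i}(\chi^2_1,K)]=\chisos$, and $\mathbb{E}[\Xi_{l,i}(\chi^2_1,K)]=\chilos$ this gives $\mathbb{E}[\mathbf{L}]=\tfrac{n_L}{n_0}(c\chisos)^{L-1}$ and $\mathbb{E}[\mathbf{U}]=\tfrac{n_L}{n_0}(c\chilos)^{L-1}$, which is (i). For the higher moments in (iii) I would combine three ingredients: (a) the factorization over $l$ as above; (b) the elementary bound $\mathbb{E}[(\chi^2_m)^t]=\prod_{j=0}^{t-1}(m+2j)\le m^t\exp\{t(t-1)/m\}\le m^t e^{t^2/m}$, which follows from $1+x\le e^x$; and (c) the pointwise domination $\Xi_{l,i}(\chi^2_1,K)=\max_{k\in[K]}Z_{i,k}^2\le\sum_{k\in[K]}Z_{i,k}^2$, where the $Z_{i,k}$ are i.i.d.\ standard normal, so that $\sum_{i=1}^{n_l}\Xi_{l,i}(\chi^2_1,K)$ is dominated by a $\chi^2_{n_l K}$ variable. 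Putting these together,
\[
\mathbb{E}[\mathbf{U}^t]\le\frac{\mathbb{E}[(\chi^2_{n_L})^t]}{n_0^t}\prod_{l=1}^{L-1}\frac{c^t}{n_l^t}\,\mathbb{E}\big[(\chi^2_{n_l K})^t\big]\le\Big(\frac{n_L}{n_0}\Big)^t (cK)^{t(L-1)}\exp\Big\{t^2\Big(\sum_{l=1}^{L-1}\frac{1}{n_l K}+\frac{1}{n_L}\Big)\Big\},
\]
and $\|\mathbf{J}_{\mathcal{N}}(\mathbf{x})\mathbf{u}\|^{2t}\le_{st}\mathbf{U}^t$ yields (iii).

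Statement (ii) then follows formally from $\var[\|\mathbf{J}_{\mathcal{N}}(\mathbf{x})\mathbf{u}\|^2]=\mathbb{E}[\|\mathbf{J}_{\mathcal{N}}(\mathbf{x})\mathbf{u}\|^4]-\big(\mathbb{E}[\|\mathbf{J}_{\mathcal{N}}(\mathbf{x})\mathbf{u}\|^2]\big)^2$ by bounding the fourth moment from above with (iii) at $t=2$ and the squared first moment from below with (i) (both quantities are nonnegative, so squaring preserves the direction of the inequality). The remarks on $\chisos$ and $\chilos$ are separate from the argument: by definition these are the expectations of the extreme order statistics $\xi(\chi^2_1,K)$ and $\Xi(\chi^2_1,K)$ of $K$ i.i.d.\ $\chi^2_1$ variables, which have closed forms in terms of the $\chi^2_1$ distribution function and are straightforward to evaluate; I would record the values in Table~\ref{tab:constants} and read off the quoted ranges for $K=2,\dots,10$.

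I do not expect a genuine analytic obstacle, since everything is downstream of Theorem~\ref{th:jacobian_so}. The two points that need care are bookkeeping: first, verifying that the independence asserted in Theorem~\ref{th:jacobian_so} is strong enough to factorize the moments of $\mathbf{L}$ and $\mathbf{U}$ across both layers and neurons; and second, using the sharp domination $\Xi\le\chi^2_K$ (rather than a cruder moment bound) so that the exponential correction terms come out with the stated $1/(n_l K)$ weights in (ii) and (iii) instead of a looser $1/n_l$.
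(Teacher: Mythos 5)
Your proposal is correct and follows essentially the same route as the paper's proof: stochastic order implies moment ordering for the nonnegative bounds of Theorem~\ref{th:jacobian_so}, the means factorize by independence into $\chisos$, $\chilos$ factors, the higher moments use the domination $\sum_i \Xi_{l,i}(\chi^2_1,K) \leq \chi^2_{n_l K}$ together with the noncentral chi-squared moment formula and $1+x\leq e^x$, and the variance bound combines the second-moment upper bound with the squared lower bound on the mean. No gaps.
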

Notice that for $t \geq 2$, the $t$th moments of the input-output Jacobian depend on the architecture of the network, but the mean does not  (Corollary~\ref{cor:jacobian_moments}), similarly to their behavior in ReLU networks~\citet{hanin_which_2018}. 
We also observe that the upper bound on the $t$th moments
can grow exponentially with the network depth depending on the maxout rank.
However, the upper bound on the moments can be tightened provided corresponding bounds for the largest order statistics of the chi-squared distribution.

\subsection{Distribution of the Input-Output Jacobian}
\label{subsec:jac_eq_in_distr}

Here we present the equality in distribution for the input-output Jacobian. It contains dependent variables for the individual layers and thus cannot be readily used to obtain bounds on the moments, but it is particularly helpful for studying the behavior of wide maxout networks.

\begin{restatable}[Equality in distribution for $\| \mathbf{J}_{\mathcal{N}}(\mathbf{x}) \mathbf{u}\|^2$]{theorem}{thequalityindistr}
    \label{th:jxu_tighter_equality}
    Consider a maxout network with the settings of Section~\ref{sec:prelims}.
    Let $\mathbf{u} \in \mathbb{R}^{n_0}$ be a fixed unit vector and $\mathbf{x} \in \mathbb{R}^{n_0}, \mathbf{x} \neq \mathbf{0}$ be any input into the network.
    Then, almost surely, with respect to the parameter initialization,
    $\|\mathbf{J}_{\mathcal{N}} (\mathbf{x}) \mathbf{u}\|^2$ equals in distribution
    \begin{align*}
        \frac{1}{n_{0}} \chi_{n_L}^2 \prod_{l = 1}^{L - 1} \frac{c}{n_{l}} 
        &\sum_{i=1}^{n_l}
        \Big(
        v_{i}
        \sqrt{1 - \cos^2 \gamma_{\bm{\mathfrak{x}}^{(l-1)}, \bm{\mathfrak{u}}^{(l-1)}}} \\
        &+ \Xi_{l, i}(\gaussian(0, 1), K) \cos \gamma_{\bm{\mathfrak{x}}^{(l-1)}, \bm{\mathfrak{u}}^{(l-1)}} \Big)^2,
    \end{align*}
    where 
    $v_{i}$ and $\Xi_{l, i}(\gaussian(0, 1), K)$ are independent, $v_{i} \sim \gaussian(0, 1)$, $\Xi_{l, i}(\gaussian(0, 1), K)$
    is the largest order statistic in a sample of $K$ standard Gaussian random variables.
    Here $\gamma_{\bm{\mathfrak{x}}^{(l)}, \bm{\mathfrak{u}}^{(l)}}$ denotes the angle between 
    $\bm{\mathfrak{x}}^{(l)} \defeq (\mathbf{x}^{(l)}_1, \ldots, \mathbf{x}^{(l)}_{n_{l}}, 1)$ and $\bm{\mathfrak{u}}^{(l)} \defeq (\mathbf{u}^{(l)}_1, \ldots, \mathbf{u}^{(l)}_{n_{l}}, 0)$ in $\mathbb{R}^{n_{l} + 1}$, 
    where $\mathbf{u}^{(l)} = \overline{W}^{(l)} \mathbf{u}^{(l-1)} / \| \overline{W}^{(l)} \mathbf{u}^{(l-1)}\|$ when $\overline{W}^{(l)} \mathbf{u}^{(l-1)}\neq 0$ and $0$ otherwise,
    and $\mathbf{u}^{(0)} = \mathbf{u}$. 
    The matrices $\overline{W}^{(l)}$ consist of rows $\overline{W}_i^{(l)} = W_{i, k'}^{(l)}\in\mathbb{R}^{n_{l-1}}$,
    where $k' = \argmax_{k\in[K]}\{W^{(l)}_{i,k}\mathbf{x}^{(l - 1)} + b^{(l)}_{i,k}\}$.
\end{restatable}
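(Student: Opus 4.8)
The plan is to fix $\mathbf{x}$, restrict to the linear region containing it, and peel off the layers one at a time, in the spirit of the ReLU arguments of \citet{hanin_products_2020, hanin_deep_2021}. Almost surely the parameters avoid the measure-zero set on which some pre-activation tie occurs at one of the relevant intermediate inputs, so $\mathcal{N}$ is differentiable at $\mathbf{x}$ and the chain rule gives $\mathbf{J}_{\mathcal{N}}(\mathbf{x}) = W^{(L)} \overline{W}^{(L-1)} \cdots \overline{W}^{(1)}$, where $W^{(L)}$ is the output-layer weight matrix and $\overline{W}^{(l)}$ is the active-weight matrix from the statement; this is the decomposition underlying \eqref{eq:chain_rule}. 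Writing $\mathbf{u}^{(l)}$ for the normalized images as in the theorem, telescoping yields $\|\mathbf{J}_{\mathcal{N}}(\mathbf{x}) \mathbf{u}\|^2 = \|W^{(L)} \mathbf{u}^{(L-1)}\|^2 \prod_{l=1}^{L-1} \|\overline{W}^{(l)} \mathbf{u}^{(l-1)}\|^2$; on the null event that some $\overline{W}^{(l)} \mathbf{u}^{(l-1)}$ vanishes both sides of the asserted identity are $0$, so it may be discarded.

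For a hidden layer I would condition on $\mathcal{F}_{l-1} := \sigma(W^{(1)}, \ldots, W^{(l-1)})$, which fixes $\mathbf{x}^{(l-1)}$, $\mathbf{u}^{(l-1)}$ and the angle $\gamma^{(l-1)} := \gamma_{\bm{\mathfrak{x}}^{(l-1)}, \bm{\mathfrak{u}}^{(l-1)}}$. Stacking weights and biases, $g_{i,k} := (W^{(l)}_{i,k}, b^{(l)}_{i,k}) = \sqrt{c/n_{l-1}}\, h_{i,k}$ with $h_{i,k}$ standard Gaussian in $\mathbb{R}^{n_{l-1}+1}$, the $k$th pre-activation of neuron $i$ equals $\langle g_{i,k}, \bm{\mathfrak{x}}^{(l-1)} \rangle = \sqrt{c/n_{l-1}}\, \|\bm{\mathfrak{x}}^{(l-1)}\|\, a_{i,k}$ and $\langle W^{(l)}_{i,k}, \mathbf{u}^{(l-1)} \rangle = \langle g_{i,k}, \bm{\mathfrak{u}}^{(l-1)} \rangle = \sqrt{c/n_{l-1}}\, (a_{i,k} \cos\gamma^{(l-1)} + b_{i,k} \sin\gamma^{(l-1)})$, where $a_{i,k} := \langle h_{i,k}, \hat{\bm{\mathfrak{x}}}^{(l-1)} \rangle$, $b_{i,k} := \langle h_{i,k}, \mathbf{w}^{(l-1)} \rangle$, and $\hat{\bm{\mathfrak{x}}}^{(l-1)}, \mathbf{w}^{(l-1)}$ are orthonormal with $\bm{\mathfrak{u}}^{(l-1)} = \cos\gamma^{(l-1)} \hat{\bm{\mathfrak{x}}}^{(l-1)} + \sin\gamma^{(l-1)} \mathbf{w}^{(l-1)}$ (using $\|\bm{\mathfrak{u}}^{(l-1)}\| = \|\mathbf{u}^{(l-1)}\| = 1$). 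Per neuron the $2K$ variables $a_{i, \cdot}, b_{i, \cdot}$ are i.i.d.\ standard Gaussian, independent across $i$ and of $\mathcal{F}_{l-1}$; the active index $k'$ depends only on the $a$'s, so $\Xi_{l,i} := a_{i,k'} = \max_k a_{i,k}$ is the largest order statistic of $K$ standard Gaussians and $v_i^{(l)} := b_{i,k'} \sim \gaussian(0,1)$ is independent of $\Xi_{l,i}$, independent across $i$, and independent of $\mathcal{F}_{l-1}$. Hence $\|\overline{W}^{(l)} \mathbf{u}^{(l-1)}\|^2 = \frac{c}{n_{l-1}} \sum_{i=1}^{n_l} (v_i^{(l)} \sin\gamma^{(l-1)} + \Xi_{l,i} \cos\gamma^{(l-1)})^2$, and for the output layer conditioning on $\mathcal{F}_{L-1}$ gives $\|W^{(L)} \mathbf{u}^{(L-1)}\|^2 = \frac{1}{n_{L-1}} \chi^2_{n_L}$ with $\chi^2_{n_L}$ independent of $\mathcal{F}_{L-1}$.

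The delicate point, and the feature separating maxout from ReLU, is that the angle $\gamma^{(l)}$ entering layer $l+1$ is built from the already-revealed weights, so the layer contributions are not mutually independent. The observation that closes the argument is that everything needed downstream is recoverable from $\gamma^{(l-1)}$ together with the fresh families $\{\Xi_{l,i}\}_i$ and $\{v_i^{(l)}\}_i$: the maxout output obeys $\mathbf{x}^{(l)}_i = \sqrt{c/n_{l-1}}\, \|\bm{\mathfrak{x}}^{(l-1)}\|\, \Xi_{l,i}$ (the pre-activations of neuron $i$ are a fixed positive multiple of the $a_{i,k}$), and $\mathbf{u}^{(l)}_i \propto v_i^{(l)} \sin\gamma^{(l-1)} + \Xi_{l,i} \cos\gamma^{(l-1)}$ with a common positive normalization; hence $\gamma^{(l)}$ is a deterministic function of $\gamma^{(l-1)}$, $\{\Xi_{l,i}\}_i$ and $\{v_i^{(l)}\}_i$. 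Proceeding by induction on the number of layers and using the tower property, one replaces each $W^{(l)}$ by these fresh, mutually independent families without altering the joint law of $(\|\overline{W}^{(1)} \mathbf{u}^{(0)}\|^2, \ldots, \|\overline{W}^{(L-1)} \mathbf{u}^{(L-2)}\|^2, \|W^{(L)} \mathbf{u}^{(L-1)}\|^2)$; multiplying the factors yields the stated expression, the normalizing constant matching because $\frac{1}{n_{L-1}} \prod_{l=1}^{L-1} \frac{c}{n_{l-1}} = \frac{1}{n_0} \prod_{l=1}^{L-1} \frac{c}{n_l}$. I expect this bookkeeping of the cross-layer dependence to be the main obstacle; the within-layer Gaussian and order-statistic computations are routine.
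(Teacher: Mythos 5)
Your proposal is correct and follows essentially the same route as the paper's proof: the product decomposition of the Jacobian over layers, the bias-augmented vectors $\bm{\mathfrak{x}}^{(l-1)},\bm{\mathfrak{u}}^{(l-1)}$, and the observation that the argmax depends only on the component of the Gaussian weight vector along $\bm{\mathfrak{x}}^{(l-1)}$, so that the active row decomposes as an independent pair (largest order statistic, standard Gaussian) projected onto $\bm{\mathfrak{u}}^{(l-1)}$ via $\cos\gamma$ and $\sin\gamma$. The only cosmetic differences are that you work in the two-dimensional frame spanned by $\hat{\bm{\mathfrak{x}}}^{(l-1)}$ and $\mathbf{w}^{(l-1)}$ where the paper builds a full orthonormal basis and then collapses the $n_{l-1}$ orthogonal Gaussian terms into one, and that you spell out the cross-layer conditioning and tower-property bookkeeping more explicitly than the paper does.
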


This statement is proved in Appendix~\ref{app:wide_net}. 
The main strategy is to construct an orthonormal basis
$\mathbf{B} = (\mathbf{b}_1, \ldots, \mathbf{b}_{n_{l}})$, where $\mathbf{b}_1 \defeq \bm{\mathfrak{x}}^{(l)} / \|\bm{\mathfrak{x}}^{(l)}\|$, 
which allows us to express the layer gradient depending on the angle between  $\bm{\mathfrak{x}}^{(l)}$ and $\bm{\mathfrak{u}}^{(l)}$. 

\begin{remark}[Wide networks]
    \label{rem:wide_net_main}
    By Theorem \ref{th:jxu_tighter_equality}, in a maxout network the distribution of $\| \mathbf{J}_{\mathcal{N}}(\mathbf{x}) \mathbf{u}\|^2$ depends on the $\cos \gamma_{\bm{\mathfrak{x}}^{(l-1)}, \bm{\mathfrak{u}}^{(l-1)}}$, which changes as the network gets wider or deeper.
    Since independent and isotropic random vectors in high-dimensional spaces tend to be almost orthogonal, we expect that the cosine will be close to zero for the earlier layers of wide networks, and individual units will behave similarly to squared standard Gaussians. 
    In wide and deep networks, if the network parameters are sampled from $\gaussian(0, c / n_{l-1})$, $c = 1 / \glos$, 
    and $K \geq 3$,  we expect that $|\cos \gamma_{\bm{\mathfrak{x}}^{(l)}, \bm{\mathfrak{u}}^{(l)}}| \approx 1$ 
    for the later layers of deep networks and individual units will behave more as the squared largest order statistics. 
    Here $\glos$ is the second moment of the largest order statistic in a sample of size $K$ of standard Gaussian random variables. 
    Based on this, for deep and wide networks, we can expect that 
    \begin{align}
        \mathbb{E}[\| \mathbf{J}_{\mathcal{N}}(\mathbf{x}) \mathbf{u} \|^2]
        \approx \frac{n_L}{n_{0}} (c \glos)^{L - 1} = \frac{n_L}{n_{0}}. 
        \label{eq:expect-stable}
    \end{align}    
    This intuition is discussed in more detail in
    Appendix~\ref{app:wide_net}. 
    According to \eqref{eq:expect-stable}, we expect that the expected gradient magnitude will be stable with depth when an appropriate initialization is used. 
    See Figure~\ref{fig:gradients} for a numerical evaluation of the effects of the width and depth on the gradients.
\end{remark}

\subsection{Activation Length}
\label{subsec:activation_length}

To have a full picture of the derivatives in \eqref{eq:chain_rule}, we consider the activation length. 
The full version and proof of Corollary~\ref{cor:act_len_moments} are in Appendix~\ref{app:activation_length}. The proof is based on Theorem~\ref{th:jxu_tighter_equality}, replacing $\bm{\mathfrak{u}}$ with $\bm{\mathfrak{x}}/\|\bm{\mathfrak{x}}\|$.

\begin{corollary}[Moments of the normalized activation length]\label{cor:act_len_moments}
    Consider a maxout network with the settings of Section~\ref{sec:prelims}.
    Let 
    $\mathbf{x} \in \mathbb{R}^{n_0}$ be any input into the network.
    Then, for the moments of the normalized activation length $A^{(l')}$ of the $l'$th layer we have 
    
    Mean: \quad 
    $\displaystyle
        \mathbb{E}\left[ A^{(l')} \right]
        = \|\bm{\mathfrak{x}}^{(0)}\|^2 \frac{1}{n_0} \left(c \glos\right)^{l'} \\
        \hspace*{4cm}+ \sum_{j = 2}^{l'} \left(\frac{1}{n_{j-1}} \left(c \glos\right)^{l'-j+1}\right)
    $,

    Moments of order $t \geq 2$: \quad 
    $\displaystyle
        G_1\left( (c \glos)^{t l'} \right) \\
        \hspace*{.5cm} \leq \mathbb{E}\left[ \left(A^{(l')}\right)^t \right] 
        \leq
        G_2 \left((cK)^{tl'} \exp\left\{ \sum_{l = 1}^{l'}  \frac{t^2}{n_l K} \right\} \right)
    $.
    
    The expectation is taken with respect to the distribution of the network weights and biases,
    and
    $\glos$ is a constant depending on K that can be computed approximately, see Table \ref{tab:constants} for the values for $K = 2, \ldots, 10$. See
    Appendix \ref{app:activation_length} for the variance bounds and details on functions $G_1, G_2$.
\end{corollary}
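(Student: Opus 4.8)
The plan is to follow the proof of Theorem~\ref{th:jxu_tighter_equality} but track the forward activations instead of the directional derivative, replacing the propagated unit direction $\bm{\mathfrak{u}}^{(l)}$ by the normalized augmented activation $\bm{\mathfrak{x}}^{(l)}/\|\bm{\mathfrak{x}}^{(l)}\|$, so that the relevant angle vanishes and the Gaussian ``noise'' term of that theorem drops out. Concretely, writing $\bm{\mathfrak{W}}^{(l)}_{i,k}\defeq(W^{(l)}_{i,k},\mathbf{b}^{(l)}_{i,k})\in\mathbb{R}^{n_{l-1}+1}$, whose entries are i.i.d.\ $\gaussian(0,c/n_{l-1})$, we have $\mathbf{x}^{(l)}_i=\max_{k\in[K]}\langle\bm{\mathfrak{W}}^{(l)}_{i,k},\bm{\mathfrak{x}}^{(l-1)}\rangle$. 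Conditioning on $\bm{\mathfrak{x}}^{(l-1)}$ (which is nonzero since its last coordinate is $1$, and whose per-unit argmax is a.s.\ unique), the $K$ pre-activations of unit $i$ are i.i.d.\ $\gaussian\big(0,(c/n_{l-1})\|\bm{\mathfrak{x}}^{(l-1)}\|^2\big)$ and independent across $i$, so $\mathbf{x}^{(l)}_i=\sqrt{c/n_{l-1}}\,\|\bm{\mathfrak{x}}^{(l-1)}\|\,\Xi_{l,i}$ with $\Xi_{l,i}$ i.i.d.\ copies of the largest order statistic of $K$ standard Gaussians, independent of $\bm{\mathfrak{x}}^{(l-1)}$. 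With $S_l\defeq\sum_{i=1}^{n_l}\Xi_{l,i}^2$ this yields the affine recursion
\begin{equation*}
\|\bm{\mathfrak{x}}^{(l)}\|^2=\|\mathbf{x}^{(l)}\|^2+1\ \overset{d}{=}\ \frac{c}{n_{l-1}}\,S_l\,\|\bm{\mathfrak{x}}^{(l-1)}\|^2+1,
\end{equation*}
where $S_1,\dots,S_{l'}$ are mutually independent because they depend on disjoint blocks of weights; the extra $+1$ and the $1/n_{l-1}$ normalization (versus $1/n_l$ in Theorem~\ref{th:jxu_tighter_equality}) come from the deterministic bias coordinate and from not renormalizing the activation at each layer.

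Unrolling the recursion from $\bm{\mathfrak{x}}^{(0)}$ gives
\begin{equation*}
A^{(l')}=\frac{\|\mathbf{x}^{(l')}\|^2}{n_{l'}}=\frac{1}{n_{l'}}\left(\|\bm{\mathfrak{x}}^{(0)}\|^2\prod_{l=1}^{l'}\frac{c\,S_l}{n_{l-1}}+\sum_{j=2}^{l'}\prod_{l=j}^{l'}\frac{c\,S_l}{n_{l-1}}\right).
\end{equation*}
Taking expectations and using independence together with $\mathbb{E}[S_l]=n_l\glos$, every product telescopes, $\prod_{l=j}^{l'}(c\,n_l\,\glos/n_{l-1})=(c\glos)^{l'-j+1}n_{l'}/n_{j-1}$, and dividing by $n_{l'}$ recovers the stated mean. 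For the lower bound on the $t$-th moment, apply Jensen, $\mathbb{E}[(A^{(l')})^t]\geq(\mathbb{E}[A^{(l')}])^t$, and keep only the leading $(c\glos)^{tl'}$ contribution, which defines $G_1$. For the upper bound, use convexity of $x\mapsto x^t$ to bring the power inside the $l'$-term sum at the cost of a factor $(l')^{\,t-1}$, factor $\mathbb{E}[\prod_l S_l^t]=\prod_l\mathbb{E}[S_l^t]$ by independence, and bound $S_l\leq_{st}\chi^2_{n_l K}$ since the squared maximum of $K$ standard Gaussians is at most their sum of squares; then $\mathbb{E}[(\chi^2_m)^t]=\prod_{s=0}^{t-1}(m+2s)\leq m^t e^{t(t-1)/m}\leq m^t e^{t^2/m}$ with $m=n_l K$, and $\prod_l n_l$ telescopes against $\prod_l n_{l-1}$, producing the factor $(cK)^{tl'}\exp\{\sum_{l=1}^{l'}t^2/(n_l K)\}$ wrapped inside the polynomial-in-$(\|\bm{\mathfrak{x}}^{(0)}\|^2,l',t)$ prefactor $G_2$. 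The variance bound follows the same lines, computing $\mathbb{E}[(A^{(l')})^2]$ exactly from $\mathbb{E}[S_l^2]=n_l\var[\Xi_{l,i}^2]+(n_l\glos)^2$ and independence and subtracting $(\mathbb{E}[A^{(l')}])^2$; this is deferred to Appendix~\ref{app:activation_length}.

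The main obstacle is making the first step rigorous: establishing, almost surely and jointly over all layers up to $l'$, that conditioning on $\bm{\mathfrak{x}}^{(l-1)}$ turns the next layer's contribution into an \emph{independent} sample of squared largest order statistics with exactly the claimed scaling, including a clean treatment of the augmenting bias coordinate, which is the feature distinguishing this recursion from the one in Theorem~\ref{th:jxu_tighter_equality}. Once that is in place, the remaining pieces --- telescoping the products, the convexity split of the sum, and the chi-squared moment estimate --- are routine.
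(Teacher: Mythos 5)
Your proposal is correct and follows essentially the same route as the paper: the paper first establishes your equality in distribution for $A^{(l')}$ (its Corollary on the distribution of the normalized activation length, obtained by unrolling $\|\bm{\mathfrak{x}}^{(l)}\|^2$ and invoking the machinery of Theorem~\ref{th:jxu_tighter_equality} with $\bm{\mathfrak{u}}^{(l)}=\bm{\mathfrak{x}}^{(l)}/\|\bm{\mathfrak{x}}^{(l)}\|$, i.e.\ $\cos\gamma=1$, which is exactly the conditioning-on-$\bm{\mathfrak{x}}^{(l-1)}$ step you flag as the main obstacle, resolved there via Lemma~\ref{lem:gaussian_variable} and Theorem~\ref{th:orthogonal_lin_comb}), and then derives the mean by independence and $\mathbb{E}[\Xi^2]=\glos$, the upper bound via the power-mean inequality, the pathwise domination $\sum_i\Xi_{l,i}^2\leq\chi^2_{n_lK}$, and the same chi-squared moment estimate. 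The only minor deviations are that the paper's lower bound keeps all summands (superadditivity of $x\mapsto x^t$ plus Jensen per layer) rather than only the leading $(c\glos)^{tl'}$ term, and its variance bound simply reuses the second-moment upper bound rather than an exact computation; both variants are valid given that $G_1,G_2$ are left unspecified in the statement.
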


We could obtain an exact expression for the mean activation length for a finitely wide maxout network since its distribution only depends on the norm of the input, while this is not the case for the input-output Jacobian (Sections~\ref{subsec:jacobian} and \ref{subsec:jac_eq_in_distr}).
We observe that the variance and the $t$th moments, $t \geq 2$, have an exponential dependence on the network architecture, including the maxout rank, whereas the mean does not, similarly to the input-output Jacobian (Corollary \ref{cor:jacobian_moments}). Such behavior also occurs for
ReLU networks \citep{hanin_how_2018}. 
See Figure~\ref{fig:act_len} in Appendix~\ref{app:activation_length} for an evaluation of the result.

\section{Implications to Initialization and Network Architecture}
\label{sec:implications_init}

We now aim to find initialization approaches and architectures that can avoid exploding and vanishing gradients.
We take the annealed exploding and vanishing gradients definition from \citet{hanin_which_2018} as a starting point for such investigation for maxout networks. 
Formally, 
we require
\begin{align*}
    &\mathbb{E}\left[\left(\frac{\partial \mathcal{L} (\mathbf{x}) }{ \partial W_{i,k',j}^{(l)}}\right)^2\right] {=} \Theta(1), 
    \var\left[ \left(\frac{\partial \mathcal{L} (\mathbf{x}) }{ \partial W_{i,k',j}^{(l)}}\right)^2 \right] {=} \Theta(1), \\
    &\sup_{l \geq 1 }\mathbb{E}\left[ \left(\frac{\partial \mathcal{L} (\mathbf{x}) }{ \partial W_{i,k',j}^{(l)}}\right)^{2t} \right] {<} \infty, 
    \quad \forall t \geq 3,
\end{align*}
where the expectation is with respect to the weights and biases.
Based on
\eqref{eq:chain_rule} these conditions can be attained by ensuring that similar conditions hold for $\|\mathbf{J}_{\mathcal{N}} (\mathbf{x}) \mathbf{u}\|^2$ and $A^{(l)}$.

\begin{table*}[ht!]
    \caption{Recommended values for the constant $c$ for different maxout ranks $K$ based on Section \ref{sec:implications_init}. 
    }
    \label{tab:recommended_init}
    \setlength{\tabcolsep}{5pt}
    \centering
    \small{
    \begin{tabular}{r ccc ccc ccc}
        \toprule
        $K$ & $2$ & $3$ & $4$ & $5$ & $6$ & $7$ & $8$ & $9$ & $10$\\
        \midrule
        $c$ & $1$ & $0.78391$ & $0.64461$ & $0.55555$ & $0.49462$ & $0.45039$ & $0.41675$ & $0.39023$ & $0.36872$ \\
        \bottomrule
    \end{tabular}
    }
\end{table*}

\paragraph{Initialization recommendations} 
Based on Corollary~\ref{cor:jacobian_moments}, the mean of $\|\mathbf{J}_{\mathcal{N}} (\mathbf{x}) \mathbf{u}\|^2$ can be stabilized for some $c \in [1 / \chilos, 1 / \chisos]$. 
However, Theorem~\ref{th:jxu_tighter_equality} shows that $\|\mathbf{J}_{\mathcal{N}} (\mathbf{x}) \mathbf{u}\|^2$ depends on the input into the network. Hence, we expect that there is no value of $c$ stabilizing input-output Jacobian for every input simultaneously. 
Nevertheless, based on Remark~\ref{rem:wide_net_main}, for wide and deep maxout networks, $\mathbb{E}[\| \mathbf{J}_{\mathcal{N}}(\mathbf{x}) \mathbf{u} \|^2] \approx n_L/n_{0}$ if $c = 1/\glos$, and the mean becomes stable.
While Remark~\ref{rem:wide_net_main} does not include maxout rank $K = 2$, the same recommendation can be obtained for it using the approach from \citet{he_delving_2015}, see \citet{sun_improving_2018}.
Moreover, according to Corollary~\ref{cor:act_len_moments}, the mean of the normalized activation length remains stable for different network depths if $c = 1/ \glos$. 
Hence, we recommend $c = 1/ \glos$ as an appropriate value for initialization. 
See Table~\ref{tab:recommended_init} for the numerical value of $c$ for $K = 2,\ldots, 10$. 
We call this type of initialization, when the parameters are sampled from $\gaussian(0, c / \text{fan-in})$, $c = 1/\glos$, ``maxout initialization''. 
We note that this matches the previous recommendation from \citet{tseran_expected_2021}, which we now derived rigorously. 

\paragraph{Architecture recommendations}
In Corollaries~\ref{cor:jacobian_moments} and \ref{cor:act_len_moments} the upper bound on the 
moments $t \geq 2$ of $\| \mathbf{J}_{\mathcal{N}}(\mathbf{x}) \mathbf{u} \|^2$ and $A^{(l)} = \|\mathbf{x}^{(l)}\|^2/n_l$
can grow exponentially with the depth depending on the values of $(cK)^L$ and $\sum_{l=1}^{L - 1}1/ (n_l K)$. 
Hence, we recommend choosing the widths such that $\sum_{l=1}^{L - 1}1/ (n_l K) \leq 1$, which holds, e.g.,\ if $n_l \geq L / K, \forall l = 1, \ldots, L-1$, 
and choosing a moderate value of the maxout rank $K$. 
However, the upper bound can still tend to infinity for the high-order moments. 
From Remark~\ref{rem:wide_net_main}, it follows that for $K \geq 3$ to have a stable initialization independent of the network input, a maxout network has to be deep and wide. 
Experimentally, we observe that for $100$-neuron wide networks with $K = 3$, the absolute value of the cosine that determines the initialization stability converges to $1$ at around $60$ layers, and for $K=4,5$, at around $30$ layers. See Figure~\ref{fig:cosine_init} in Appendix~\ref{app:wide_net}. 
To sum up, we recommend working with deep and wide maxout networks with widths satisfying $\sum_{l=1}^{L - 1}1/ (n_l K) \leq 1$, and choosing the maxout-rank not too small nor too large, e.g., $K=5$. 

\section{Implications to Expressivity and NTK}
\label{sec:implications}

With Theorems~\ref{th:jacobian_so} and \ref{th:jxu_tighter_equality} in place, we can now obtain maxout versions of the several types of results that previously have been derived only for ReLU networks. 

\subsection{Expected Number of Linear Regions of Maxout Networks}
\label{subsec:lin_regions}

For a piece-wise linear function $f\colon \mathbb{R}^{n_0}\to\mathbb{R}$, a linear region is defined as a maximal connected subset of $\mathbb{R}^{n_0}$ on which $f$ has a constant gradient. 
\citet{tseran_expected_2021} and \citet{hanin_deep_2019} established upper bounds on the expected number of linear regions of maxout
and ReLU networks, respectively. One of the key factors controlling these bounds is 
$C_{\text{\normalfont{grad}}}$,
which is any upper bound on $(\sup_{\mathbf{x} \in \mathbb{R}^{n_0}} \mathbb{E}[ \| \nabla \zeta_{z, k} (\mathbf{x}) \|^t ])^{1/t}$, for any $t \in \mathbb{N}$ and $z = 1, \ldots, N$. 
Here $\zeta_{z,k}$ is the $k$th pre-activation feature of the $z$th unit in the network, $N$ is the total number of units, and the gradient is with respect to the network input. 
Using Corollary~\ref{cor:jacobian_moments}, we
obtain a value for $C_{\text{\normalfont{grad}}}$ for maxout networks, which remained an open problem in the work of \citet{tseran_expected_2021}. 
The proof of Corollary~\ref{cor:cgrad}
and the resulting refined bound on the expected number of linear regions are in Appendix~\ref{app:linear_regions}.

\begin{restatable}[Value for $C_{\text{\normalfont{grad}}}$]{corollary}{corcgrad}
    \label{cor:cgrad}
    Consider a maxout network with the settings of Section~\ref{sec:prelims}.
    Assume that the biases are independent of the weights but otherwise initialized using any approach.
    Consider the pre-activation feature $\zeta_{z,k}$ of a unit $z = 1, \ldots, N$.
    Then, for any $t \in \mathbb{N}$,
    \begin{align*}
        &\left( \sup_{\mathbf{x} \in \mathbb{R}^{n_0}} \mathbb{E}\left[ \left\| \nabla \zeta_{z, k} (x) \right\|^{t} \right]\right)^{\frac{1}{t}} \\
        &\leq
        n_0^{-\frac{1}{2}}
        \max\left\{1, \left(c K\right)^{\frac{L - 1}{2}}\right\} \exp\left\{ \frac{t}{2} \left(\sum_{l=1}^{L - 1}\frac{1}{n_l K} + 1 \right) \right\}. 
    \end{align*}
\end{restatable}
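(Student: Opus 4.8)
The plan is to recognize $\nabla\zeta_{z,k}(\mathbf{x})$ as the input-output Jacobian of a small auxiliary network and then apply Corollary~\ref{cor:jacobian_moments}(iii). Suppose the unit $z$ sits in layer $l\in\{1,\dots,L-1\}$. Viewed as a function of the network input, the pre-activation feature $\zeta_{z,k}$ is computed by the network $\widetilde{\mathcal{N}}$ obtained by keeping the first $l-1$ maxout layers of $\mathcal{N}$, with their original widths $n_1,\dots,n_{l-1}$ and their original i.i.d.\ Gaussian weights and (independent) biases, and appending the scalar affine read-out $\mathbf{x}^{(l-1)}\mapsto W^{(l)}_{z,k}\mathbf{x}^{(l-1)}+\mathbf{b}^{(l)}_{z,k}$ as a linear output layer of width $1$. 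Hence $\nabla\zeta_{z,k}(\mathbf{x})$ is (the transpose of) the input-output Jacobian $\mathbf{J}_{\widetilde{\mathcal{N}}}(\mathbf{x})$, exactly the kind of object controlled in Section~\ref{subsec:jacobian}. The only mismatch with the conventions of Section~\ref{sec:prelims} is that the read-out weights $W^{(l)}_{z,k}$ have entry variance $c/n_{l-1}$ rather than $1/n_{l-1}$; factoring $W^{(l)}_{z,k}=\sqrt{c}\,\widetilde W$ with $\widetilde W$ of the standard variance shows $\|\nabla\zeta_{z,k}(\mathbf{x})\|=\sqrt{c}\,\|\nabla\zeta'(\mathbf{x})\|$, where $\zeta'$ is the corresponding read-out of a network $\widetilde{\mathcal{N}}'$ that satisfies \emph{all} the assumptions of Section~\ref{sec:prelims}, with $l-1$ hidden layers and output width $n_{\widetilde{L}}=1$.

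Next I would apply Corollary~\ref{cor:jacobian_moments}(iii) to $\widetilde{\mathcal{N}}'$, with its depth $l$ and output width $1$ in place of $L$ and $n_L$, and multiply back the factor $c^{t}$ from the rescaling; since the bound of Corollary~\ref{cor:jacobian_moments}(iii) is uniform in $\mathbf{x}$, so is the result, and one obtains
\[
\sup_{\mathbf{x}}\ \mathbb{E}\big[\|\nabla\zeta_{z,k}(\mathbf{x})\|^{2t}\big]\ \le\ \frac{c^{t}}{n_0^{t}}\,(cK)^{t(l-1)}\exp\Big\{t^{2}\Big(\textstyle\sum_{m=1}^{l-1}\frac{1}{n_{m}K}+1\Big)\Big\},\qquad t\in\mathbb{N}.
\]
Two elementary reductions then give the stated inequality. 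Since $l-1\le L-1$, the sum and the power are monotone: $\sum_{m=1}^{l-1}\frac1{n_mK}\le\sum_{m=1}^{L-1}\frac1{n_mK}$, and a short case split on whether $cK\le 1$ or $cK>1$ (using $l\le L-1$) gives $c^{t}(cK)^{t(l-1)}\le\max\{1,(cK)^{t(L-1)}\}$; this also makes the estimate uniform over all units $z=1,\dots,N$. And since Corollary~\ref{cor:jacobian_moments}(iii) only supplies even powers $2t$ while $C_{\text{\normalfont{grad}}}$ ranges over all $t\in\mathbb{N}$, for odd $t$ I would interpolate via monotonicity of $L^{p}$-norms, $(\mathbb{E}[X^{t}])^{1/t}\le(\mathbb{E}[X^{2\lceil t/2\rceil}])^{1/(2\lceil t/2\rceil)}$, using $\lceil t/2\rceil\le t$ to keep the exponential factor below $\exp\{\frac{t}{2}(\sum_{m=1}^{L-1}\frac1{n_mK}+1)\}$ while the $(cK)$-power is unchanged after taking the root (alternatively, one can feed the stochastic-order bound of Theorem~\ref{th:jacobian_so} for $\widetilde{\mathcal{N}}'$ directly into real-order moments of chi-squares). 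Raising to the power $1/t$ completes the proof; the $n_0^{-1/2}$ is precisely the $(n_{\widetilde{L}}/n_0)^{t}=n_0^{-t}$ prefactor of Corollary~\ref{cor:jacobian_moments}(iii) at output width $1$.

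The main effort is bookkeeping rather than any new estimate. One has to verify that $\widetilde{\mathcal{N}}$ inherits the hypotheses of Corollary~\ref{cor:jacobian_moments} --- the biases stay independent of the weights, and $W^{(l)}_{z,k}$ is independent of the first $l-1$ layers, so the corollary applies verbatim --- and to keep careful track of the output-layer variance, which is the origin of the factor $c$ that must be absorbed into $\max\{1,(cK)^{(L-1)/2}\}$. I also expect the one genuinely subtle point to be matching the quantity $\|\nabla\zeta_{z,k}(\mathbf{x})\|$ appearing in the definition of $C_{\text{\normalfont{grad}}}$ (which in the linear-region bounds enters through a directional derivative of $\zeta_{z,k}$) with the directional/Jacobian quantity that Corollary~\ref{cor:jacobian_moments} controls; once this identification and the reduction above are in place, the corollary follows from Corollary~\ref{cor:jacobian_moments}(iii) together with the two inequalities.
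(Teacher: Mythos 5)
Your proposal follows essentially the same route as the paper's proof: view $\nabla\zeta_{z,k}$ as the input--output Jacobian of a depth-$l(z)$ auxiliary network with a single linear output, apply Corollary~\ref{cor:jacobian_moments}(iii) with output width $1$, pass from $t$-th to even moments via Jensen/monotonicity of $L^p$ norms, and take the maximum over depths to obtain the factor $\max\{1,(cK)^{(L-1)/2}\}$. The only difference is bookkeeping: you explicitly track the factor $\sqrt{c}$ arising because the read-out weights $W^{(l)}_{z,k}$ have variance $c/n_{l-1}$ rather than the $1/n_{l-1}$ of a Section~\ref{sec:prelims} output layer (a point the paper's reduction leaves implicit) and absorb it through the $cK\leq 1$ versus $cK>1$ case split, which is, if anything, a slightly more careful rendering of the same argument.
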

The value of $C_{\text{\normalfont{grad}}}$ given in Corollary~\ref{cor:cgrad} grows as
$O ( (cK)^{L-1} \exp \{ t \sum_{l=1}^{L - 1} 1 / (n_l K) \} )$. 
The first factor
grows exponentially with the network depth if $cK > 1$. This is the case when the network is initialized as in Section~\ref{sec:implications_init}. However, since $K$ is usually a small constant and $c \leq 1$,
$cK \geq 1$ is a small constant.
The second factor
grows exponentially with the depth if $\sum_{l=1}^{L - 1}1 / (n_l K) \geq 1$. Hence, the exponential growth can be avoided if $n_l \geq (L - 1) / K, \forall l = 1, \ldots, L-1$.

\subsection{Expected Curve Length Distortion}
\label{subsec:curve}

Let $M$ be a smooth $1$-dimensional curve in $\mathbb{R}^{n_0}$ of length $\text{\normalfont{len}}(M)$ and $\mathcal{N}(M) \subseteq \mathbb{R}^{n_L}$ the image of $M$ under the map $\mathbf{x} \mapsto \mathcal{N}(\mathbf{x})$. We are interested in the length distortion of $M$, defined as $\text{\normalfont{len}}(\mathcal{N}(M)) / \text{\normalfont{len}}(M)$.
Using the results from Section \ref{subsec:jacobian}, observing that the input-output Jacobian of maxout networks is well defined almost everywhere, and following \citet{hanin_deep_2021}, we obtain the following corollary. The proof is in Appendix \ref{app:curve_distortion}. 

\begin{restatable}[Expected curve length distortion]{corollary}{cordistortion}
    \label{cor:distortion}
    Consider a maxout network with the settings of Section~\ref{sec:prelims}.
    Assume that the biases are independent of the weights but otherwise initialized using any approach.
    Let $M$ be a smooth $1$-dimensional curve of unit length in $\mathbb{R}^{n_0}$. 
    Then, the following upper bounds on the 
    moments of $\text{\normalfont{len}}(\mathcal{N}(M))$ hold:
    \begin{align*}
        &\mathbb{E}\left[ \text{\normalfont{len}}(\mathcal{N}(M)) \right]
        \leq
        \left( \frac{n_L}{n_{0}} \right)^{\frac{1}{2}} (c \chilos)^{\frac{L - 1}{2}}, \\
        &\var \left[ \text{\normalfont{len}}(\mathcal{N}(M)) \right]
        \leq
        \frac{n_L}{n_{0}} (c \chilos)^{L - 1},\\
        &\mathbb{E}\left[ \text{\normalfont{len}}(\mathcal{N}(M))^t \right]
        \leq
        \left(\frac{n_L}{n_0}\right)^{\frac{t}{2}}
        \left(c K\right)^{\frac{t(L-1)}{2}} \\
        & \hspace*{3cm} \cdot \exp\left\{ \frac{t^2}{2} \left(\sum_{l=1}^{L - 1}\frac{1}{n_l K} + \frac{1}{n_L} \right) \right\},
    \end{align*}
    where
    $\chilos$ is a constant depending on $K$, see Table \ref{tab:constants} in Appendix \ref{app:jacobian_moments} for values for $K = 2, \ldots, 10$.
\end{restatable}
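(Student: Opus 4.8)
The plan is to express $\text{\normalfont{len}}(\mathcal{N}(M))$ as an integral of the directional derivative of the input-output map along the curve, and then to transfer the moment bounds already established in Section~\ref{subsec:jacobian} to this integral. Concretely, parametrize $M$ by arclength $\mathbf{x}(s)$, $s \in [0, \ell]$ with $\ell = \text{\normalfont{len}}(M) = 1$; since $\mathcal{N}$ is piecewise linear with a well-defined Jacobian almost everywhere, the coarea/change-of-variables formula gives $\text{\normalfont{len}}(\mathcal{N}(M)) = \int_0^1 \| \mathbf{J}_{\mathcal{N}}(\mathbf{x}(s)) \,\mathbf{u}(s) \| \, ds$, where $\mathbf{u}(s) = \mathbf{x}'(s)$ is the unit tangent vector. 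One must check that differentiation under the integral and the chain rule are valid along $M$; this is where following \citet{hanin_deep_2021} pays off, since the measure-zero exceptional set of non-differentiability can be handled exactly as in the ReLU case (almost surely over the initialization, $M$ meets the nonlinear locus in finitely many points, and there $\|\mathbf{J}_{\mathcal{N}}\|$ is still bounded by one-sided derivatives).

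Next I would bound the moments. For the mean, apply Jensen's inequality in the form $\mathbb{E}[\int_0^1 \|\mathbf{J}_{\mathcal{N}}(\mathbf{x}(s))\mathbf{u}(s)\|\,ds] \leq \int_0^1 \mathbb{E}[\|\mathbf{J}_{\mathcal{N}}(\mathbf{x}(s))\mathbf{u}(s)\|]\,ds \leq \int_0^1 (\mathbb{E}[\|\mathbf{J}_{\mathcal{N}}(\mathbf{x}(s))\mathbf{u}(s)\|^2])^{1/2}\,ds$, and then plug in the upper bound $\frac{n_L}{n_0}(c\chilos)^{L-1}$ from Corollary~\ref{cor:jacobian_moments}(i), which is uniform in $\mathbf{x}$ and $\mathbf{u}$; taking the square root out of the integral over the unit-length domain gives $\big(\tfrac{n_L}{n_0}\big)^{1/2}(c\chilos)^{(L-1)/2}$. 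For the $t$th moment, use the integral Minkowski (or Jensen with the probability measure $ds$ on $[0,1]$) inequality: $\mathbb{E}[\text{\normalfont{len}}(\mathcal{N}(M))^t] = \mathbb{E}[(\int_0^1 \|\mathbf{J}_{\mathcal{N}}(\mathbf{x}(s))\mathbf{u}(s)\|\,ds)^t] \leq \int_0^1 \mathbb{E}[\|\mathbf{J}_{\mathcal{N}}(\mathbf{x}(s))\mathbf{u}(s)\|^t]\,ds \leq \int_0^1 (\mathbb{E}[\|\mathbf{J}_{\mathcal{N}}(\mathbf{x}(s))\mathbf{u}(s)\|^{2t}])^{1/2}\,ds$, and substitute Corollary~\ref{cor:jacobian_moments}(iii) with $t$ replaced by $t$ inside the square root (the exponent $t^2$ there becomes $(2t)^2/4 = t^2$ after the square root), yielding the stated $(\tfrac{n_L}{n_0})^{t/2}(cK)^{t(L-1)/2}\exp\{\tfrac{t^2}{2}(\sum 1/(n_lK) + 1/n_L)\}$.

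For the variance, write $\var[\text{\normalfont{len}}(\mathcal{N}(M))] \leq \mathbb{E}[\text{\normalfont{len}}(\mathcal{N}(M))^2] \leq \int_0^1 \mathbb{E}[\|\mathbf{J}_{\mathcal{N}}(\mathbf{x}(s))\mathbf{u}(s)\|^2]\,ds \leq \frac{n_L}{n_0}(c\chilos)^{L-1}$, again using Corollary~\ref{cor:jacobian_moments}(i) and the unit length of $M$; bounding the variance by the second moment is wasteful but matches the stated (clean) bound. The one genuinely delicate point—and the main obstacle—is justifying the arclength formula $\text{\normalfont{len}}(\mathcal{N}(M)) = \int_0^1 \|\mathbf{J}_{\mathcal{N}}(\mathbf{x}(s))\mathbf{u}(s)\|\,ds$ rigorously for the piecewise linear map $\mathcal{N}$, i.e.\ showing that the finitely many breakpoints contribute nothing and that the image curve's length is genuinely the integral of the local stretch factor; everything after that is an application of Jensen/Minkowski plus the already-proven uniform moment bounds on $\|\mathbf{J}_{\mathcal{N}}(\mathbf{x})\mathbf{u}\|^2$. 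I would handle this exactly as in \citet{hanin_deep_2021}, noting that the argument there only uses that the activation is piecewise linear and that almost surely the curve is in "general position" relative to the network's region boundaries.
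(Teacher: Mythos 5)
Your proposal is correct and follows essentially the same route as the paper: write $\text{\normalfont{len}}(\mathcal{N}(M))=\int_0^1\|\mathbf{J}_{\mathcal{N}}(\gamma(\tau))\gamma'(\tau)\|\,d\tau$ using that the Jacobian exists almost everywhere (ties in the argmax have probability zero), reduce via Tonelli/Jensen on the unit-length parameter interval to a single uniform bound $\mathbb{E}[\text{\normalfont{len}}(\mathcal{N}(M))^t]\le\mathbb{E}[\|\mathbf{J}_{\mathcal{N}}(\mathbf{x})\mathbf{u}\|^t]\le(\mathbb{E}[\|\mathbf{J}_{\mathcal{N}}(\mathbf{x})\mathbf{u}\|^{2t}])^{1/2}$, and plug in Corollary~\ref{cor:jacobian_moments}; the paper simply packages your interchange step as the cited Lemma~C.1 of \citet{hanin_deep_2021} and bounds the variance by the second moment exactly as you do. The only nitpick is the parenthetical about the exponent: after taking the square root of the bound in Corollary~\ref{cor:jacobian_moments}(iii) the exponent is $t^2/2$, which is what your final displayed bound correctly states.
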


\begin{remark} [Expected curve length distortion in wide maxout networks]
    If the network is initialized according to Section~\ref{sec:implications_init},
    using Remark~\ref{rem:wide_net_main} and repeating the steps of the proof of Corollary~\ref{cor:distortion}, we get $\mathbb{E}\left[ \text{\normalfont{len}}(\mathcal{N}(M)) \right] \lesssim (n_L / n_0)^{1/2}$ and $\var \left[ \text{\normalfont{len}}(\mathcal{N}(M)) \right] \approx n_L / n_0$.
\end{remark}

Hence, similarly to ReLU networks, wide maxout networks, if initialized to keep the gradients stable, have low expected curve length distortion at initialization. 
However, we cannot conclude whether the curve length shrinks.
For narrow
networks, the upper bound does not exclude the possibility that the expected 
distortion grows exponentially with the network depth, depending on the initialization.

\begin{table}[t!]
    \setlength{\tabcolsep}{1.1pt}
    \caption{
        Accuracy on the test set for networks trained using SGD with Nesterov momentum. Observe that maxout networks initialized with the maxout or max-pooling initialization perform significantly better than the ones initialized with other initializations and better or comparably to ReLU networks.
    }
  \label{tab:cnn_convergence_sgd}
  \centering
  \tiny 
    \begin{tabular}{p{10mm} ccccc}
        \toprule
        & \multicolumn{4}{c}{MAXOUT} & RELU\\
        \cmidrule(r){2-5}
        & Small value & Max-pooling init & Maxout init & Naive & He init \\
        VALUE\\OF $c$ & & Section~\ref{sec:experiments} (Ours) & Section~\ref{sec:implications_init} (Ours) & ReLU He & \\
        & $0.1$ & $0.55 \ \& \ 0.27$ & $0.55555$ & $2$
        & $2$\\
        \midrule
        & \multicolumn{5}{c}{FULLY-CONNECTED}\\
        \cmidrule(r){2-6}
        MNIST
        & $11.35^{\pm 0.00}$
        & ---
        & $\mathbf{97.8^{\pm 0.15}}$
        & $53.22^{\pm 24.08}$
        & $97.43^{\pm 0.06}$\\
        Iris
        & $30.00^{\pm 0.00}$
        & ---
        & $\mathbf{91.67^{\pm 3.73}}$
        & $82.5^{\pm 4.93}$
        & $\mathbf{91.67^{\pm 3.73}}$\\
        \midrule
        & \multicolumn{5}{c}{CONVOLUTIONAL}\\
        \cmidrule(r){2-6}
        MNIST
        & $11.35^{\pm 0.00}$
        & $99.58^{\pm 0.03}$
        & $\mathbf{99.59^{\pm 0.04}}$
        & $98.02^{\pm 0.21}$
        & $99.49^{\pm 0.04}$\\
        CIFAR-10
        & $10.00^{\pm 0.00}$
        & $\mathbf{91.7^{\pm 0.17}}$
        & $91.21^{\pm 0.13}$
        & $44.84^{\pm 0.69}$
        & $90.12^{\pm 0.25}$\\
        CIFAR-100
        & $1.00^{\pm 0.00}$
        & $65.33^{\pm 0.27}$
        & $\mathbf{65.39^{\pm 0.39}}$
        & $12.02^{\pm 0.8}$
        & $59.59^{\pm 0.82}$\\
        Fashion MNIST
        & \multirow{2}{*}{$10.00^{\pm 0.00}$}
        & \multirow{2}{*}{$\mathbf{93.55^{\pm 0.13}}$}
        & \multirow{2}{*}{$93.49^{\pm 0.13}$}
        & \multirow{2}{*}{$81.56^{\pm 0.15}$}
        & \multirow{2}{*}{$93.28^{\pm 0.11}$}\\
        SVHN
        & $19.59^{\pm 0.00}$
        & $97.3^{\pm 0.04}$
        & $\mathbf{97.78^{\pm 0.02}}$
        & $50.97^{\pm 1.71}$
        & $96.74^{\pm 0.03}$\\
        \bottomrule
    \end{tabular}
\end{table}

\subsection{On-Diagonal NTK}
\label{subsec:ntk}

We denote the on-diagonal NTK with 
$K_{\mathcal{N}}(\mathbf{x}, \mathbf{x}) = \sum_{i} (\partial \mathcal{N} (\mathbf{x}) / \partial \theta_i )^2$.  
In Appendix~\ref{app:ntk} we show: 

\begin{restatable}[On-diagonal NTK]{corollary}{corntkweights}
    \label{cor:ntk_weights}
    Consider a maxout network with the settings of Section~\ref{sec:prelims}.
    Assume that $n_L = 1$ and that the biases are initialized to zero and are not trained.
    Assume that $\chisos \leq c \leq \chilos$, where the constants $\chisos, \chilos$ are as specified in
    Table~\ref{tab:constants}.
    Then,
    \begin{align*}
        &\|\bm{\mathfrak{x}}^{(0)}\|^2 \frac{ (c \chisos)^{L-2} }{n_0} P
        \leq \mathbb{E}[K_{\mathcal{N}}(\mathbf{x}, \mathbf{x})] \\
        &\hspace{4cm} \leq \|\bm{\mathfrak{x}}^{(0)}\|^2 \frac{ (c \chilos)^{L-2} \glos^{L-1} }{n_0} P,\\
        &\mathbb{E}[K_{\mathcal{N}}(\mathbf{x}, \mathbf{x})^2] \\
        &\quad \leq
        2 P P_W (cK)^{2(L-2)} \frac{\|\bm{\mathfrak{x}}^{(0)}\|^{4}}{n_0^2}
        \exp\left\{ \sum_{j=1}^{L - 1}\frac{4}{n_j K} + 4  \right\},
    \end{align*}
    where $P = \sum_{l=0}^{L-1} n_l$,
    $P_W = \sum_{l=0}^L n_l n_{l-1}$,
    and $\glos$ is as specified in Table~\ref{tab:constants}.
\end{restatable}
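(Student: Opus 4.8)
The plan is to reduce the on-diagonal NTK to the two quantities already under control, namely the input--output Jacobian of Section~\ref{subsec:jacobian} and the activation length of Section~\ref{subsec:activation_length}. Write $\mathbf{J}^{(l)} \defeq \partial\mathcal{N}/\partial\mathbf{x}^{(l)}$ for the Jacobian of the (scalar, since $n_L=1$) output with respect to the $l$th layer output. Specializing the chain-rule computation behind \eqref{eq:chain_rule} to the scalar output map itself ($\mathcal{L}=\mathrm{id}$) and dropping the untrained biases gives the standard layer-wise decomposition $K_{\mathcal{N}}(\mathbf{x},\mathbf{x}) = \|\mathbf{x}^{(L-1)}\|^2 + \sum_{l=1}^{L-1}\|\mathbf{J}^{(l)}\|^2\,\|\mathbf{x}^{(l-1)}\|^2$, where the first term is the contribution of the linear output layer and, in the $l$th term, only the active index $k'=\argmax_k\{\mathbf{x}^{(l)}_{i,k}\}$ of each unit contributes. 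Each factor $\|\mathbf{J}^{(l)}\|^2 = \sum_{i=1}^{n_l}\|\mathbf{J}^{(l)}\mathbf{e}_i\|^2$ is a sum of squared directional derivatives of the sub-network $\psi\circ\phi_{L-1}\circ\cdots\circ\phi_{l+1}$ evaluated at input $\mathbf{x}^{(l)}$ along the fixed unit directions $\mathbf{e}_i$, so Theorem~\ref{th:jacobian_so} and Corollary~\ref{cor:jacobian_moments} apply to it verbatim (with $L-1$ replaced by the number $L-1-l$ of hidden layers of the sub-network and $n_0$ by $n_l$), while $\|\mathbf{x}^{(l-1)}\|^2 = n_{l-1}A^{(l-1)}$ is exactly the (unnormalized) activation length of Corollary~\ref{cor:act_len_moments}.

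For the mean, I condition on the weights of layers $1,\dots,l$, which fix $\mathbf{x}^{(l-1)},\mathbf{x}^{(l)}$ and are independent of the sub-network weights feeding $\mathbf{J}^{(l)}$. The crucial point --- and the step I expect to be the main obstacle --- is that the \emph{distribution} of $\|\mathbf{J}^{(l)}\|^2$ genuinely depends on the random sub-network input $\mathbf{x}^{(l)}$ (this is precisely the phenomenon isolated by Theorem~\ref{th:jxu_tighter_equality} and Figure~\ref{fig:maxout_relu_second}), so one cannot treat the Jacobian and activation factors as independent; the way around it is that the moment bounds of Corollary~\ref{cor:jacobian_moments}(i) hold \emph{uniformly over all inputs}, hence conditionally $\mathbb{E}[\|\mathbf{J}^{(l)}\|^2\mid\text{layers}\le l]\in[(c\chisos)^{L-1-l},(c\chilos)^{L-1-l}]$ deterministically. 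Taking the outer expectation, inserting the mean activation length from Corollary~\ref{cor:act_len_moments} (or, in the zero-bias case, the exact recursion $\mathbb{E}[\|\mathbf{x}^{(l-1)}\|^2]=\|\bm{\mathfrak{x}}^{(0)}\|^2(n_{l-1}/n_0)(c\glos)^{l-1}$), summing over $l$, and bounding mixed products of constants crudely via $\chisos\le 1\le\glos\le\chilos$ (so that $(c\chisos)^{L-1-l}(c\glos)^{l-1}\ge(c\chisos)^{L-2}$ and $(c\chilos)^{L-1-l}(c\glos)^{l-1}\le(c\chilos)^{L-2}\glos^{L-1}$) yields both sides of the claimed mean bound, with $P=\sum_{l=0}^{L-1}n_l$ coming from $\sum_{l=1}^{L-1}n_{l-1}+n_{L-1}$; for the upper bound one additionally absorbs the lower-order correction terms of the mean activation length into the leading term, which is where the extra $\glos^{L-1}$ slack is spent.

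For the second moment I apply Cauchy--Schwarz over the $P_W$ trainable weights: $K_{\mathcal{N}}(\mathbf{x},\mathbf{x})^2 = \big(\sum_{\theta}(\partial\mathcal{N}/\partial\theta)^2\big)^2 \le P_W\sum_{\theta}(\partial\mathcal{N}/\partial\theta)^4$, which reduces the problem to fourth moments of individual weight-derivatives. Grouping by layer, $\mathbb{E}[(\partial\mathcal{N}/\partial W^{(l)}_{i,k',j})^4]=\mathbb{E}\big[(\mathbf{x}^{(l-1)}_j)^4\,\mathbb{E}[(\mathbf{J}^{(l)}_i)^4\mid\text{layers}\le l]\big]$, where the inner conditional fourth moment is bounded by Corollary~\ref{cor:jacobian_moments}(iii) with $t=2$ (again uniformly in $\mathbf{x}^{(l)}$, by the observation above), and $\sum_j\mathbb{E}[(\mathbf{x}^{(l-1)}_j)^4]$ is controlled by the $t=2$ activation-length moment of Corollary~\ref{cor:act_len_moments}; this per-coordinate fourth-moment sum scales linearly in $n_{l-1}$, and summing the layer contributions produces the second combinatorial factor $P$. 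Collecting the exponential factors from the two $t=2$ bounds gives $(cK)^{2(L-2)}\exp\{\sum_{j=1}^{L-1}4/(n_jK)+4\}$, the numerical constant $2$ coming from the crude handling of the output-layer term together with the absorbed activation-length corrections, and the $\|\bm{\mathfrak{x}}^{(0)}\|^4/n_0^2$ from the activation-length moments. Apart from the conditioning/independence point already flagged, the remaining work is purely book-keeping: tracking which powers of $c,K,\chisos,\chilos,\glos$ attach to the Jacobian versus the activation factor at each layer and checking that the crude inequalities relating these constants never run the wrong way.
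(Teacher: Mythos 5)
Your proposal is correct and follows essentially the same route as the paper: decompose $K_{\mathcal{N}}(\mathbf{x},\mathbf{x})$ over the trainable weights, factor each $(\partial\mathcal{N}/\partial W^{(l)}_{i,k',j})^2$ as $\|\mathbf{J}_{\mathcal{N}}(\mathbf{x}^{(l)})\mathbf{e}_i\|^2(\mathbf{x}^{(l-1)}_j)^2$, control the two factors by Corollary~\ref{cor:jacobian_moments} (applied to the depth-$(L-l)$ sub-network, uniformly in its input) and the zero-bias activation-length moments, and use the Cauchy--Schwarz/AM--QM step over the $P_W$ weights for the second moment. Your conditioning-on-the-first-$l$-layers argument is just a more explicit justification of the factorization the paper obtains by invoking independence of the random variables in its stochastic-order and equality-in-distribution representations, so the two proofs coincide in substance.
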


By Corollary~\ref{cor:ntk_weights},
$\mathbb{E}[K_{\mathcal{N}}(\mathbf{x}, \mathbf{x})^2] / (\mathbb{E}[K_{\mathcal{N}}(\mathbf{x}, \mathbf{x})])^2$ is in $O ( (P_W / P ) C^{L} \exp \{ \sum_{l=1}^{L} 1 / (n_l K) \})$, 
where $C$ depends on  $\chilos, \glos$ and $K$.
Hence, if widths $n_1, \ldots, n_{L-1}$ and depth $L$ tend to infinity, this upper bound does not converge to a constant, suggesting that the NTK might not converge to a constant in probability. 
This is in line with previous results for ReLU networks by \citet{hanin_finite_2020}.

\section{Experiments}
\label{sec:experiments}
We check how the initialization proposed in Section~\ref{sec:implications_init} affects the network training. 
This initialization was first proposed heuristically by \citet{tseran_expected_2021}, where it was tested for $10$-layer fully-connected networks with an MNIST experiment. 
We consider both fully-connected and convolutional neural networks and run experiments for MNIST \citep{lecun_mnist_2010}, Iris \citep{fisher_use_1936}, Fashion MNIST \citep{xiao_fashion-mnist_2017}, SVHN \citep{netzer_reading_2011}, CIFAR-10 and CIFAR-100 \citep{krizhevsky_learning_2009}. 
Fully connected networks have $21$ layers and CNNs have a VGG-19-like architecture \citep{simonyan_very_2015} with $20$ or $16$ layers depending on the input size, all with maxout rank $5$. 
Weights are sampled from $\gaussian(0, c / \text{fan-in})$ in fully-connected networks and $\gaussian(0, c / (k^2 \cdot \text{fan-in}))$ in CNNs of kernel size $k$. The biases are initialized to zero.
We report the mean and std of $4$ runs. 

We use plain deep networks without any kind of modifications or pre-training.
We do not use normalization techniques, such as batch normalization \citep{ioffe_batch_2015},
since this would obscure the effects of the initialization. 
Because of this, our results are not necessarily state-of-the-art. 
More details on the experiments are given in Appendix~\ref{app:experiments}, and the implementation is made available at \gitrepo. 

\begin{table}[t!]
    \setlength{\tabcolsep}{1.1pt}
    \caption{
        Accuracy on the test set for the networks trained with Adam.
        Observe that maxout networks initialized with the maxout or max-pooling initialization perform better or comparably to ReLU networks, while maxout networks initialized with ReLU-He converge slower and perform worse.
    }
    \label{tab:cnn_convergence_adam}
    \centering
    \tiny 
    \begin{tabular}{p{10mm}l cccc}
        \toprule
        && \multicolumn{3}{c}{MAXOUT} & RELU\\
        \cmidrule(r){3-5}
        && Max-pooling init & Maxout init & Naive & He init \\
        \multicolumn{2}{c}{VALUE OF $c$} & Section~\ref{sec:experiments} (Ours) & Section~\ref{sec:implications_init} (Ours) & ReLU He & \\
        && $0.55 \ \& \ 0.27$ & $0.55555$ & $2$
        & $2$\\
        \midrule
        && \multicolumn{4}{c}{FULLY-CONNECTED}\\
        \cmidrule(r){3-6}
        \multirow{3}{*}{MNIST}
        & $1/10$ epochs
        & ---
        & $\mathbf{97.56^{\pm 0.18}}$
        & $97.40^{\pm 0.30}$
        & $96.72^{\pm 0.64}$\\
        & $2/10$ epochs
        & ---
        & $\mathbf{98.10^{\pm 0.09}}$
        & $97.97^{\pm 0.12}$
        & $97.54^{\pm 0.16}$\\
        & All epochs
        & ---
        & $98.12^{\pm 0.10}$
        & $\mathbf{98.13^{\pm 0.09}}$
        & $97.37^{\pm 0.08}$\\
        \midrule
        && \multicolumn{4}{c}{CONVOLUTIONAL}\\
        \cmidrule(r){3-6}
        \multirow{3}{*}{MNIST}
        & $1/10$ epochs
        & $99.06^{\pm 0.15}$
        & $98.59^{\pm 0.58}$
        & $98.54^{\pm 0.52}$
        & $\mathbf{99.14^{\pm 0.32}}$\\
        & $2/10$ epochs
        & $99.39^{\pm 0.13}$
        & $98.51^{\pm 0.25}$
        & $99.17^{\pm 0.13}$
        & $\mathbf{99.41^{\pm 0.05}}$\\
        & All epochs
        & $\mathbf{99.53^{\pm 0.04}}$
        & $99.47^{\pm 0.07}$
        & $99.47^{\pm 0.04}$
        & $99.45^{\pm 0.06}$\\
        \midrule
        \multirow{3}{9mm}{Fashion MNIST}
        & $1/10$ epochs
        & $92.04^{\pm 0.29}$
        & $92.35^{\pm 0.12}$
        & $87.95^{\pm 0.33}$
        & $\mathbf{92.45^{\pm 0.41}}$\\
        & $2/10$ epochs
        & $92.61^{\pm 0.22}$
        & $\mathbf{92.85^{\pm 0.21}}$
        & $90.35^{\pm 0.38}$
        & $92.71^{\pm 0.25}$\\
        & All epochs
        & $\mathbf{93.57^{\pm 0.17}}$
        & $93.45^{\pm 0.10}$
        & $91.63^{\pm 0.36}$
        & $92.98^{\pm 0.13}$\\
        \midrule
        \multirow{3}{*}{CIFAR-10}
        & $1/10$ epochs
        & $\mathbf{88.25^{\pm 0.49}}$
        & $87.31^{\pm 0.51}$
        & $74.37^{\pm 0.37}$
        & $85.95^{\pm 0.30}$\\
        & $2/10$ epochs
        & $\mathbf{88.79^{\pm 0.72}}$
        & $87.96^{\pm 0.75}$
        & $81.94^{\pm 0.34}$
        & $87.12^{\pm 0.23}$\\
        & All epochs
        & $\mathbf{91.33^{\pm 0.31}}$
        & $91.06^{\pm 0.22}$
        & $85.23^{\pm 0.20}$
        & $87.70^{\pm 0.10}$\\
        \midrule
        \multirow{3}{*}{CIFAR-100}
        & $1/10$ epochs
        & $50.30^{\pm 3.34}$
        & $\mathbf{53.43^{\pm 1.08}}$
        & $19.22^{\pm 0.51}$
        & $50.39^{\pm 0.91}$\\
        & $2/10$ epochs
        & $57.54^{\pm 1.64}$
        & $\mathbf{57.65^{\pm 0.75}}$
        & $33.21^{\pm 0.51}$
        & $51.34^{\pm 0.51}$\\
        & All epochs
        & $\mathbf{65.33^{\pm 1.26}}$
        & $61.96^{\pm 0.58}$
        & $37.58^{\pm 0.23}$
        & $52.95^{\pm 0.30}$\\
        \bottomrule
    \end{tabular}
\end{table}

\paragraph{Max-pooling initialization}
    To account for the maximum in max-pooling layers, a maxout layer appearing after a max-pooling layer is initialized as if its maxout rank was $K \times m^2$, where $m^2$ is the max-pooling window size. 
    For example, we used $K = 5$ and $m^2 = 4$, resulting in $c = 0.26573$ for such maxout layers.
    All other layers are initialized according to Section~\ref{sec:implications_init}.
    We observe that max-pooling initialization
    often leads to slightly higher accuracy.

\paragraph{Results for SGD with momentum}
Table~\ref{tab:cnn_convergence_sgd} reports test accuracy for networks trained using SGD with Nesterov momentum. 
We compare ReLU and maxout networks with different initializations: maxout, max-pooling, small value 
$c = 0.1$, and He $c = 2$. 
We observe that maxout and max-pooling initializations allow training deep maxout networks and obtaining better accuracy than ReLU networks, whereas performance is significantly worse or training does not progress for maxout networks with other initializations. 

\paragraph{Results for Adam} 
Table~\ref{tab:cnn_convergence_adam} reports test accuracy for networks trained using Adam \citep{DBLP:journals/corr/KingmaB14}.
We compare ReLU and maxout networks with the following initializations: maxout, max-pooling, and He $c = 2$. 
We observe that, compared to He initialization, maxout and max-pooling initializations lead to faster convergence and better test accuracy. 
Compared to ReLU networks, maxout networks have better or comparable accuracy if maxout or max-pooling initialization is used. 

\section{Discussion}
We study the gradients of maxout networks with respect to the parameters and the inputs by analyzing a directional derivative of the input-output map.
We observe that the distribution of the input-output Jacobian of maxout networks depends on the network input (in contrast to ReLU networks), which can complicate the stable initialization of maxout networks. 
Based on bounds on the moments, we derive an initialization that provably avoids vanishing and exploding gradients in wide networks. 
Experimentally, we show that, compared to other initializations, the suggested approach leads to better performance for fully connected and convolutional deep networks of standard width trained with SGD or Adam and better or similar performance compared to ReLU networks. 
Additionally, we refine previous upper bounds on the expected number of linear regions. 
We also derive results for the expected curve length distortion, observing that it does not grow exponentially with the depth in wide networks. 
Furthermore, we obtain bounds on the maxout NTK, suggesting that it might not converge to a constant when both the width and depth are large. 
These contributions enhance the applicability of maxout networks and add to the theoretical exploration of activation functions beyond ReLU. 

\paragraph{Limitations} 
Even though our proposed initialization is optimal in the sense of the criteria specified at the beginning of Section~\ref{sec:implications_init}, our results are applicable only when the weights are sampled from $\gaussian(0, c/\text{fan-in})$ for some $c$.
Further, we derived theoretical results only for fully-connected networks. 
Our experiments indicate that they also hold for CNNs: Figure~\ref{fig:gradients} demonstrates that gradients behave according to the theory for fully connected and convolutional networks,
and Tables~\ref{tab:cnn_convergence_sgd} and \ref{tab:cnn_convergence_adam} show improvement in CNNs performance under the initialization suggested in Section~\ref{sec:implications_init}. 
However, we have yet to conduct the theoretical analysis of CNNs.

\paragraph{Future work}
In future work, we would like to obtain more general results
in settings involving multi-argument functions, such as aggregation functions in graph neural networks,
and investigate the
effects that initialization strategies stabilizing the initial gradients have at later stages of training.

\paragraph{Acknowledgements}
 This project has received funding from the European Research Council (ERC) under the European Union’s Horizon 2020 research and innovation programme (grant agreement n\textsuperscript{o} 757983) and DFG SPP 2298 FoDL grant 464109215. 

\bibliography{{references}}
\bibliographystyle{icml2023}

\newpage
\appendix
\onecolumn

\section*{Appendix} 

The appendix is organized as follows. 
\begin{itemize}[leftmargin=*]
    \item Appendix~\ref{app:notation} Notation
    \item Appendix~\ref{app:basic} Basics
    \item Appendix~\ref{app:jacobian_norm} Bounds for the Input-Output Jacobian Norm \texorpdfstring{$\|\mathbf{J}_{\mathcal{N}} (\mathbf{x}) \mathbf{u}\|^2$}{}
    \item  Appendix~\ref{app:jacobian_moments} Moments of the Input-Output Jacobian Norm \texorpdfstring{$\|\mathbf{J}_{\mathcal{N}} (\mathbf{x}) \mathbf{u}\|^2$}{}
    \item Appendix~\ref{app:wide_net} Equality in Distribution for the Input-Output Jacobian Norm and Wide Network Results
    \item Appendix~\ref{app:activation_length} Activation Length
    \item Appendix~\ref{app:linear_regions} Expected Number of Linear Regions
    \item Appendix~\ref{app:curve_distortion} Expected Curve Length Distortion
    \item Appendix~\ref{app:ntk} NTK
    \item Appendix~\ref{app:experiments} Experiments
\end{itemize}


\section{Notation}
\label{app:notation}

We use the following notation in the paper.

\subsection{Variables}

\subsubsection*{Network Definition}

\setlength{\LTleft}{0pt}
\begin{longtable}{@{}ll}
    $\mathcal{N}$ & network \\
    $L$ & number of the network layers\\
    $l$ & index of a layer\\
    $n_0$ & input dimension\\
    $n_l$ & width of the $l$th layer\\
    $K$ & maxout rank\\
    $k$ & index of a pre-activation feature, $k = 1, \ldots, K$\\
    $k'$ & argmax of the collection of pre-activation features, $k' = \argmax_k\{\mathbf{x}^{(l)}_{i,k}\}$\\
    $\phi_l$ & function implemented by the $l$th hidden layer, $\phi_l\colon \mathbb{R}^{n_{l-1}}\to\mathbb{R}^{n_l}$\\
    $\psi$ & linear output layer, $\psi\colon\mathbb{R}^{n_{L-1}}\to\mathbb{R}^{n_{L}}$\\
    $\mathbf{W}$ & collection of all network weights\\
    $\mathbf{b}$ & collection of all network biases\\
    $\mathbf{\Theta}$ & collection of all network parameters, $\mathbf{\Theta} = \{\mathbf{W}, \mathbf{b}\}$ \\
    $\theta_i$ & $i$th network parameter; here $i = 1, \ldots, |\mathbf{\Theta}|$\\
    $W_{i,k}^{(l)}$ & network weights of the $k$th pre-activation function of the $i$th neuron in the $l$th layer, $W_{i,k}^{(l)} \in \mathbb{R}^{n_{l-1}}$\\
    $\mathbf{b}_{i,k}$ & network bias of the $k$th pre-activation function of the $i$th neuron in the $l$th layer, $\mathbf{b}_{i,k} \in \mathbb{R}$\\
    $\mathbf{x}$, $\mathbf{x}^{(0)}$ & network input, $\mathbf{x} \in \mathbb{R}^{n_0}$, $\mathbf{x} = \mathbf{x}^{(0)}$\\
    $\mathbf{x}^{(l)}$ & output of the $l$th layer, $\mathbf{x}^{(l)}\in\mathbb{R}^{n_{l}}$\\
    $\mathbf{x}_{i,k}^{(l)}$ & $k$th pre-activation of the $i$th neuron in the $l$th layer, $\mathbf{x}_{i,k}^{(l)} = W_{i,k}^{(l)}\mathbf{x}^{(l-1)} + \mathbf{b}_{i,k}^{(l)}$\\
    $N$ & total number of the network units\\
    $z$ & index of a neuron in the network, $z = 1, \ldots, N$\\
    $\zeta_{z,k}$ & $k$th pre-activation feature of the $z$th unit in the network\\
\end{longtable}

\subsubsection*{Network Initialization}

\setlength{\LTleft}{0pt}
\begin{longtable}{@{}ll}
    $N(\mu, \sigma^2)$ & Gaussian distribution with mean $\mu$ and variance $\sigma^2$\\    
    \begin{tabular}{@{}l}
        $c$
    \end{tabular}
    & 
    \begin{tabular}{@{}l}
        a positive constant; we assume that the weights and biases for each hidden layer\\are initialized as i.i.d.\ samples from $N(0, c / n_{l-1})$
    \end{tabular}
    \\   
\end{longtable}

\subsubsection*{Network Optimization}

\setlength{\LTleft}{0pt}
\begin{longtable}{@{}ll}
    $\mathcal{L}$ & loss function, $\mathcal{L}: \mathbb{R}^{n_L} \rightarrow \mathbb{R}$\\
    $\nabla \mathcal{N}_i(\mathbf{x})$, $\nabla_\mathbf{x} \mathcal{N}_i$ & gradients of the network outputs with respect to the inputs, $\nabla \mathcal{N}_i(\mathbf{x}) = \nabla_\mathbf{x} \mathcal{N}_i$\\
    $\nabla \mathcal{N}_i(\mathbf{\Theta})$, $\nabla_\mathbf{\Theta} \mathcal{N}_i$ & gradients of the network outputs with respect to the parameters, $\nabla \mathcal{N}_i(\mathbf{\Theta}) = \nabla_\mathbf{\Theta} \mathcal{N}_i$\\ 
    $\mathbf{J}_{\mathcal{N}}(\mathbf{x})$ & input-output Jacobian of the network, $\mathbf{J}_{\mathcal{N}}(\mathbf{x}) = \left[ \nabla \mathcal{N}_1(\mathbf{x}), \ldots, \nabla \mathcal{N}_{n_L}(\mathbf{x}) \right]^{T}$\\ 
\end{longtable}

\subsubsection*{Variables Appearing in the Results}

\setlength{\LTleft}{0pt}
\begin{longtable}{@{}ll}
    $\bm{u}$ & a fixed unit vector, $\bm{u} \in \mathbb{R}^{n_0}$\\
    $A^{(l)}$ & normalized activation length of the $l$th layer, $A^{(l)} = \|\bm{x}^{(l)}\|^2/n_l$\\
    $t$ & order of a moment\\
    \begin{tabular}{@{}l}
        $\xi_{l, i}(\chi^2_1, K)$
    \end{tabular}
     & 
    \begin{tabular}{@{}l}
        the smallest order statistic in a sample of size $K$\\of chi-squared random variables with $1$ degree of freedom
    \end{tabular}
    \\
    \begin{tabular}{@{}l}
        $\Xi_{l, i}(\chi^2_1, K)$
    \end{tabular}
     & 
    \begin{tabular}{@{}l}
        the largest order statistic in a sample of size $K$\\of chi-squared random variables with $1$ degree of freedom
    \end{tabular}
    \\
    \begin{tabular}{@{}l}
        $\Xi_{l, i}(\gaussian(0, 1), K)$
    \end{tabular}
     & 
    \begin{tabular}{@{}l}
        the largest order statistic in a sample of size $K$\\of standard Gaussian random variables
    \end{tabular}
    \\
    \begin{tabular}{@{}l}
        $\chisos$
    \end{tabular}
     & 
    \begin{tabular}{@{}l}
         mean of the smallest order statistic in a sample of $K$\\chi-squared random variables; see Table~\ref{tab:constants} for the exact values
    \end{tabular}
    \\
    \begin{tabular}{@{}l}
        $\chilos$
    \end{tabular}
     & 
    \begin{tabular}{@{}l}
         mean of the largest order statistic in a sample of $K$\\chi-squared random variables; see Table~\ref{tab:constants} for the exact values
    \end{tabular}
    \\
    \begin{tabular}{@{}l}
        $\glos$
    \end{tabular}
     & 
    \begin{tabular}{@{}l}
         the second moment of the largest order statistic in a sample of size $K$\\of standard Gaussian random variables; see Table~\ref{tab:constants} for the exact values
    \end{tabular}
    \\
    $v_{i}$ & standard Gaussian random variable $v_{i} \sim \gaussian(0, 1)$ \\
    \begin{tabular}{@{}l}
        $\overline{W}^{(l)}$
    \end{tabular}
     & 
    \begin{tabular}{@{}l}
         matrices consisting of rows $\overline{W}_i^{(l)} = W_{i, k'}^{(l)}\in\mathbb{R}^{n_{l-1}}$,\\where $k' = \argmax_{k\in[K]}\{W^{(l)}_{i,k}\bm{x}^{(l - 1)} + b^{(l)}_{i,k}\}$
    \end{tabular}
    \\
    \begin{tabular}{@{}l}
        $\bm{u}^{(l)}$
    \end{tabular}
     & 
    \begin{tabular}{@{}l}
         $\bm{u}^{(l)} = \overline{W}^{(l)} \bm{u}^{(l-1)} / \| \overline{W}^{(l)} \bm{u}^{(l-1)}\|$ when $\overline{W}^{(l)} \bm{u}^{(l-1)}\neq 0$ and $0$ otherwise; $\bm{u}^{(0)} = \bm{u}$
    \end{tabular}
    \\
    \begin{tabular}{@{}l}
        $\gamma_{\bm{\mathfrak{x}}^{(l)}, \bm{\mathfrak{u}}^{(l)}}$
    \end{tabular}
     & 
    \begin{tabular}{@{}l}
         angle between $\bm{\mathfrak{x}}^{(l)} \defeq (\bm{x}^{(l)}_1, \ldots, \bm{x}^{(l)}_{n_{l}}, 1)$ and $\bm{\mathfrak{u}}^{(l)} \defeq (\bm{u}^{(l)}_1, \ldots, \bm{u}^{(l)}_{n_{l}}, 0)$ in $\mathbb{R}^{n_{l} + 1}$
    \end{tabular}
    \\
    \begin{tabular}{@{}l}
        $C_{\text{\normalfont{grad}}}$
    \end{tabular}
     & 
    \begin{tabular}{@{}l}
         any upper bound on $(\sup_{\bm{x} \in \mathbb{R}^{n_0}} \mathbb{E}[ \| \nabla \zeta_{z, k} (\bm{x}) \|^t ])^{1/t}$, for any $t \in \mathbb{N}$;\\
         here the gradient is with respect to the network input
    \end{tabular}
    \\
    $M$ & smooth $1$-dimensional curve of unit length in $\mathbb{R}^{n_0}$\\
    $\mathcal{N}(M)$ & image of the curve $M$ under the map $\bm{x} \mapsto \mathcal{N}(\bm{x})$, $\mathcal{N}(M) \subseteq \mathbb{R}^{n_L}$\\
    $K_{\mathcal{N}}(\bm{x}, \bm{x})$ & on-diagonal NTK, $K_{\mathcal{N}}(\bm{x}, \bm{x}) = \sum_{i} (\partial \mathcal{N} (\bm{x}) / \partial \theta_i )^2$\\
\end{longtable}

\subsection{Symbols}

\setlength{\LTleft}{0pt}
\begin{longtable}{@{}ll}
    $[n]$  & $\{1,\ldots, n\}$\\
    $\|\cdot\|$ & $\ell_2$ vector norm \\
    $\leq_{st}$ ($\geq_{st}$) & smaller (larger) in the stochastic order; see Section~\ref{subsec:basic_probability} for the definition\\
    $\stackrel{d}{=}$ & equality in distribution; see Section~\ref{subsec:basic_probability} for the definition\\
    $\text{\normalfont{len}}(\cdot)$ & length of a curve \\
\end{longtable}

\section{Basics}
\label{app:basic}

\subsection{Basics on Maxout Networks}

\subsubsection{Definition}

As mentioned in the introduction, a rank-$K$ maxout unit computes the maximum of $K$ real-valued affine functions. 
Concretely, a rank-$K$ maxout unit with $n$ inputs implements a function 
$$
\mathbb{R}^n\to\mathbb{R}; \quad \mathbf{x} \mapsto \max_{k \in [K]} \{ \langle W_{k}, \mathbf{x} \rangle + b_{k}\}, 
$$
where $W_{k} \in \mathbb{R}^{n}$ and $b_{k} \in \mathbb{R}$, $k \in [K] \vcentcolon= \{1, \ldots, K\}$, are trainable weights and biases. The $K$ arguments of the maximum are called the pre-activation features of the maxout unit. 
A rank-$K$ maxout unit can be regarded as a composition of an affine map with $K$ outputs and a maximum gate. 
A layer corresponds to parallel computation of several such units. 
For instance a layer with $n$ inputs and $m$ maxout units computes functions of the form 
$$
\mathbb{R}^n\to\mathbb{R}^m; \quad \mathbf{x} \mapsto \left[\begin{matrix}
\max_{k \in [K]} \{ \langle W_{1, k}^{(1)}, \mathbf{x} \rangle + \mathbf{b}_{1, k}^{(1)}\} \\
\vdots \\
\max_{k \in [K]} \{ \langle W_{m, k}^{(1)}, \mathbf{x} \rangle + \mathbf{b}_{m, k}^{(1)}\} 
\end{matrix}\right],  
$$
where now $W_{i,k}^{(1)}$ and $\mathbf{b}_{i,k}^{(1)}$ are the weights and biases of the $k$th pre-activation feature of the $i$th maxout unit in the first layer. 
The situation is illustrated in Figure~\ref{fig:maxout-illustration} for the case of a network with two inputs, one layer with two maxout units of rank three, and one output layer with a single output unit. 

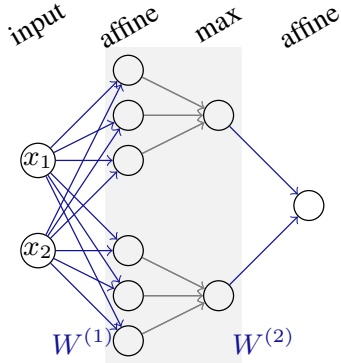
\begin{figure}
\centering
\definecolor{bl}{RGB}{20,20,150}
\scalebox{1.2}{
\begin{tikzpicture}[y=.5cm]
\node[circle,draw,inner sep=.1] (x1) at (0,1) {\small $x_1$};
\node[circle,draw,inner sep=.1] (x2) at (0,-1) {\small $x_2$};
\node[draw, gray!10, fill=gray!10, minimum width=1.5cm, minimum height=3.5cm] at (1.5,0) {}; 
\node[circle,draw] (y11) at (1,1) {};
\node[circle,draw] (y12) at (1,2) {};
\node[circle,draw] (y13) at (1,3) {};
\node[circle,draw] (y21) at (1,-1) {};
\node[circle,draw] (y22) at (1,-2) {};
\node[circle,draw] (y23) at (1,-3) {};
\node[circle,draw] (y1) at (2,2) {};
\node[circle,draw] (y2) at (2,-2) {};
\node[circle,draw] (z) at (3,0) {};
\foreach \x in {1,2}{ 
\foreach \y in {1,2,3}{ 
\draw[bl,->] (x\x)--(y1\y);
\draw[bl,->] (x\x)--(y2\y);

\draw[gray,->] (y1\y)--(y1);
\draw[gray,->] (y2\y)--(y2);
}
}
\draw[bl,->] (y1)--(z);
\draw[bl,->] (y2)--(z);
\node at (0,4) {\rotatebox[]{30}{\small input}};
\node at (1,4) {\rotatebox[]{30}{\small affine}};
\node at (2,4) {\rotatebox[]{30}{\small max}};
\node at (3,4) {\rotatebox[]{30}{\small affine}};
\node at (.5,-3) {\textcolor{bl}{\small $W^{(1)}$}}; 
\node at (2.5,-3) {\textcolor{bl}{\small $W^{(2)}$}}; 

\end{tikzpicture}
}
\caption{Illustration of a simple maxout network with two input units, one hidden layer consisting of two maxout units of rank 3, and an affine output layer with a single output unit.}
\label{fig:maxout-illustration}
\end{figure}

\begin{figure}
    \begin{subfigure}{\textwidth}
        \centering
        \setlength\tabcolsep{0pt}
        \begin{tabular}{cc}
            \multicolumn{2}{c}{RELU NETWORK}\\
            Before training
            & After training\\
            \includegraphics[width=0.5\textwidth]{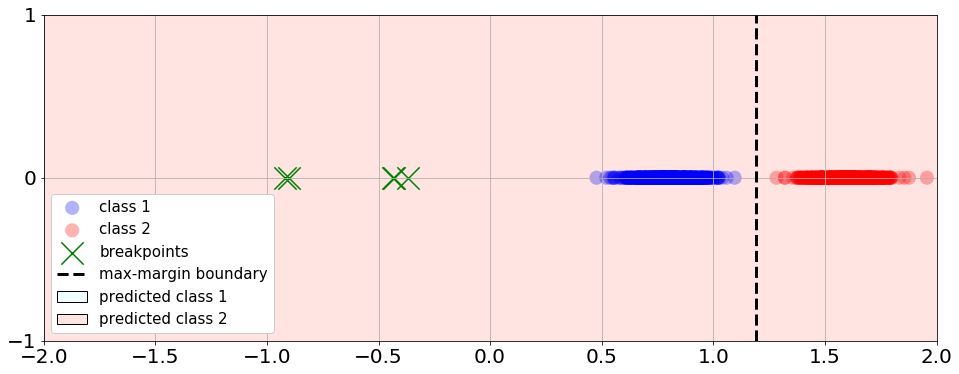}
            &\includegraphics[width=0.5\textwidth]{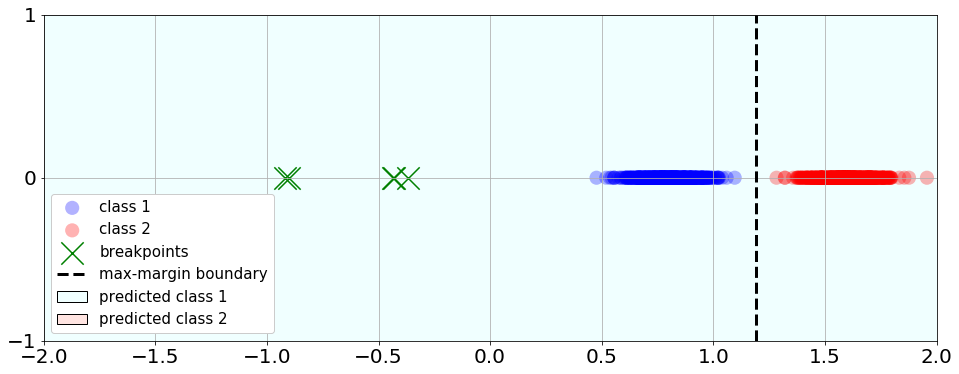}\\
            \vspace{.2cm}&\\
            \multicolumn{2}{c}{MAXOUT NETWORK}\\
            Before training
            & After training\\
            \includegraphics[width=0.5\textwidth]{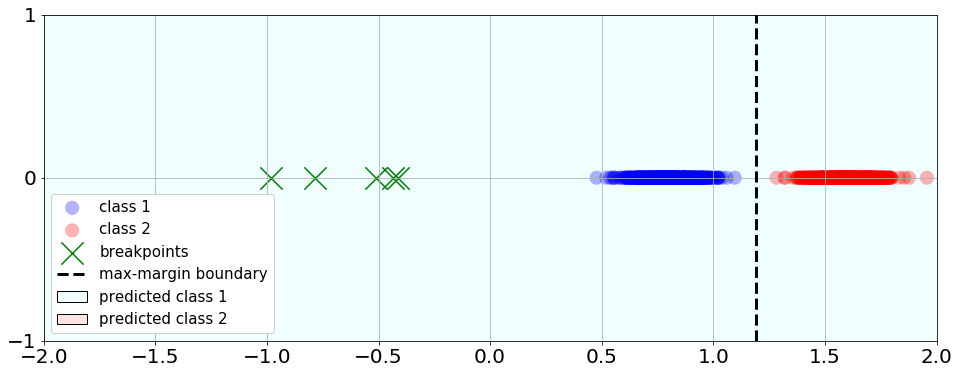}
            &\includegraphics[width=0.5\textwidth]{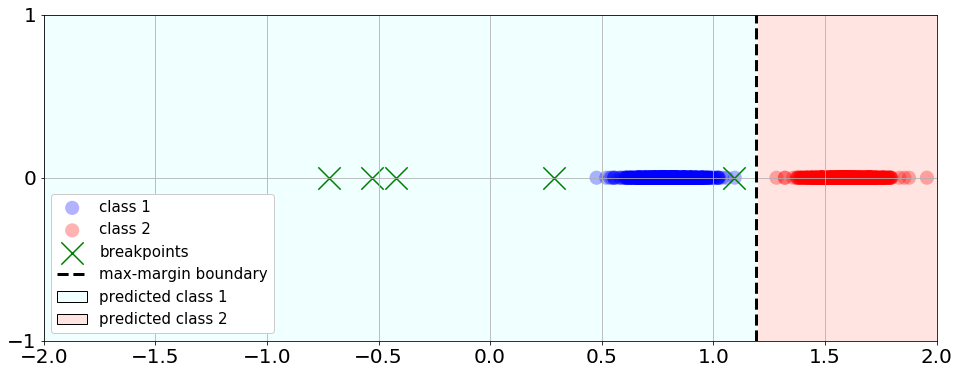}\\
        \end{tabular}
    \end{subfigure}
    \caption{
        Example of a situation where the training is unsuccessful for a ReLU network because all neurons in the first layer are dead, while a maxout network trains successfully on the same dataset. We plot the breakpoints in the first layer. Notice that they do not move during the training of a ReLU network but change their positions in a maxout network.
    }
    \label{fig:dead_neurons}
\end{figure}

\subsubsection{Dying Neurons Problem}

The dying neurons problem in ReLU networks refers to ReLU neurons being inactive on a dataset and never getting updated during optimization. It can lead to a situation when the training cannot commence if all neurons in one layer are dead. This problem never occurs in maxout networks since maxout units are always active. We design a simple experiment to illustrate this issue.

We consider a binary classification task on a dataset sampled from a Gaussian mixture of two univariate Gaussians $\gaussian(0.8, 0.1)$ and $\gaussian(1.6, 0.1)$. We sample $600$ training, $200$ validation, and $200$ test points. We construct maxout and ReLU networks with $5$ layers and $5$ units per layer. Maxout units rank equals $2$. We set weights and biases in the first layer so that the breakpoints are left of the data. For ReLU, we also ensure that the weights are negative to guarantee that the neurons in the first layer are inactive. Hence, all the units in the first layer of the ReLU network are dead. Then we train the network for $20$ epochs using SGD with a learning rate of $0.5$ and batch size of $32$. For the ReLU networks, since all units in the first layer are dead, the training is unsuccessful, and the accuracy on the test set is $50\%$. In contrast, for the maxout network, the test set accuracy is $100\%$. Figure~\ref{fig:dead_neurons} illustrates this example.

\subsection{Basic Notions of Probability}
\label{subsec:basic_probability}
We ought to remind several probability theory notions that we use to state
our results. 
Firstly, recall that if $v_1, \ldots, v_k$ are independent, univariate standard normal random variables, then the sum of their squares, $\sum_{i=1}^{k} v_i^2$, is distributed according to the chi-squared distribution with $k$ degrees of freedom. We will denote such a random variable with $\chi^2_k$.

Secondly, the largest order statistic is a random variable defined as the maximum of a random sample, and the smallest order statistic is the minimum of a sample.
And finally, a real-valued random variable $X$ is said to be smaller than $Y$ in the stochastic order, denoted by $X \leq_{st} Y$, if $\Pr(X>x) \leq \Pr(Y > x)$ for all $x \in \mathbb{R}$. 
We will also denote with $\stackrel{d}{=}$ equality in distribution (meaning the cdfs are the same). 
With this, we start with the results for the squared norm of the input-output Jacobian $\|\mathbf{J}_{\mathcal{N}} (\mathbf{x}) \mathbf{u}\|^2$.

\subsection[Details on the Chain Rule]{Details on the Equation \eqref{eq:chain_rule}}

In \eqref{eq:chain_rule} we are investigating magnitude of $\frac{\partial \mathcal{L} (\mathbf{x})}{ \partial W_{i,k',j}^{(l)}}$. 
The reason we focus on the Jacobian norm rather than on $C$ is as follows. 
We have 
\begin{align*} 
\frac{\partial \mathcal{L} (\mathbf{x})}{ \partial W_{i,k',j}^{(l)}} 
= & \langle \nabla_\mathcal{N} \mathcal{L}(\mathcal{N}(\mathbf{x})),
\mathbf{J}_\mathcal{N}(W_{i,k',j}^{(l)})\rangle \\
= & \langle \nabla_\mathcal{N} \mathcal{L}(\mathcal{N}(\mathbf{x})),
\mathbf{J}_{\mathcal{N}} (\mathbf{x}_i^{(l)}) \rangle  \mathbf{x}_j^{(l-1)} \\
= & \langle \nabla_\mathcal{N} \mathcal{L}(\mathcal{N}(\mathbf{x})),
\mathbf{J}_{\mathcal{N}} (\mathbf{x}^{(l)}) \mathbf{u} \rangle  \mathbf{x}_j^{(l-1)} , \quad \text{$\mathbf{u}=\mathbf{e}_i$ }\\
= & 
C(\mathbf{x},W) \|\mathbf{J}_{\mathcal{N}} (\mathbf{x}^{(l)}) \mathbf{u}\| \mathbf{x}_j^{(l-1)}
\end{align*}
Note that $C(\mathbf{x},W) = \langle \nabla_\mathcal{N} \mathcal{L}(\mathcal{N}(\mathbf{x})),
\mathbf{v} \rangle$ with $\mathbf{v} = \mathbf{J}_{\mathcal{N}} \left(\mathbf{x}^{(l)}\right) \mathbf{u} / \|\mathbf{J}_{\mathcal{N}} \left(\mathbf{x}^{(l)}\right) \mathbf{u}\|$, $\|\mathbf{v}\|=1$. 
Hence $C(\mathbf{x},W)\leq \|\nabla_\mathcal{N} \mathcal{L}(\mathcal{N}(\mathbf{x}))\| \|\mathbf{v} \| = \|\nabla_\mathcal{N} \mathcal{L}(\mathcal{N}(\mathbf{x}))\|$. 
The latter term does not directly depend on the specific parametrization nor the specific architecture of the network but only on the loss function and the prediction. 
In view of the description of $\frac{\partial \mathcal{L} (\mathbf{x})}{ \partial W_{i,k',j}^{(l)}}$, the variance depends on the square of $\mathbf{x}_j^{(l-1)}$. Similarly, the variance of the gradient $\nabla_{W^{(l)}}\mathcal{L}(\mathbf{x}) = (\frac{\partial \mathcal{L} (\mathbf{x})}{ \partial W_{i,k',j}^{(l)}})_{j}$ depends on $\mathbf{x}^{(l-1)} = (\mathbf{x}_j^{(l-1)})_j$ and thus depends on $\|\mathbf{x}^{(l-1)}\|^2 $. 
This is how activation length appears in \eqref{eq:chain_rule}. 


\section{Bounds for the Input-Output Jacobian Norm \texorpdfstring{$\|\mathbf{J}_{\mathcal{N}} (\mathbf{x}) \mathbf{u}\|^2$}{}}
\label{app:jacobian_norm}

\subsection{Preliminaries}

We start by presenting several well-known results we will need for further discussion.

\paragraph{Product of a Gaussian matrix and a unit vector}

\begin{lemma}
    \label{lem:gaussian_variable}
    Suppose $W$ is an $n \times n'$ matrix with i.i.d.\ Gaussian entries and $\mathbf{u}$ is a random unit vector in $\mathbb{R}^{n'}$ that is independent of $W$ but otherwise has any distribution. Then 
    \begin{enumerate}[leftmargin=*]
        \item  $W \mathbf{u}$ is independent of $\mathbf{u}$ and is equal in distribution to $W \mathbf{v}$ where $\mathbf{v}$ is any fixed unit vector in $\mathbb{R}^{n'}$.
        \item If the entries of $W$ are sampled i.i.d.\ from $\gaussian(\mu, \sigma^2)$, then for all $i = 1, \ldots, n$, ${W_i \mathbf{u} \sim \gaussian(\mu, \sigma^2)}$ and independent of $\mathbf{u}$.
        \item If the entries of $W$ are sampled i.i.d.\ from $\gaussian(0, \sigma^2)$, then the squared $\ell_2$ norm ${\| W \mathbf{u}\|^2 \stackrel{d}{=} \sigma^2 \chi_n^2}$, where $\chi_n^2$ is a chi-squared random variable with $n$ degrees of freedom that is independent of $\mathbf{u}$.
    \end{enumerate}
\end{lemma}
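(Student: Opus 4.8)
The plan is to reduce each claim to a computation with a \emph{fixed} deterministic unit vector and then condition. Put $\mathbf{u}$ and $W$ on a common probability space on which they are independent. The core claim is that for every fixed unit vector $\mathbf{v}\in\mathbb{R}^{n'}$ the law of $W\mathbf{v}$ is the same, say $\nu$ (identified explicitly below for parts 2 and 3). Granting this, the conditional distribution of $W\mathbf{u}$ given $\mathbf{u}=\mathbf{v}$ equals the law of $W\mathbf{v}$ — here the independence $\mathbf{u}\perp W$ is used — hence equals $\nu$ for a.e.\ value of $\mathbf{u}$. A random vector whose conditional law given $\mathbf{u}$ is a.s.\ a fixed law $\nu$ is independent of $\mathbf{u}$ and has marginal law $\nu$; this yields at once part 1 ($W\mathbf{u}\perp\mathbf{u}$ and $W\mathbf{u}\stackrel{d}{=}W\mathbf{v}$), and parts 2 and 3 then follow by reading off $\nu$.

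To identify $\nu$ for fixed $\mathbf{v}$: the rows $W_1,\dots,W_n$ are independent, so the coordinates $W_i\mathbf{v}=\sum_{j=1}^{n'}W_{ij}v_j$ are independent over $i=1,\dots,n$, and each is a linear combination of independent Gaussians, hence univariate Gaussian with variance $\sigma^2\sum_j v_j^2=\sigma^2$ (using $\|\mathbf{v}\|=1$); in the centered case the mean is $0$. This is part 2. For part 3 ($\mu=0$) we get $W_i\mathbf{v}/\sigma\sim\gaussian(0,1)$ i.i.d.\ over $i$, so $\|W\mathbf{v}\|^2/\sigma^2=\sum_{i=1}^n(W_i\mathbf{v}/\sigma)^2$ is a $\chi^2_n$ variable by definition, giving $\|W\mathbf{v}\|^2\stackrel{d}{=}\sigma^2\chi^2_n$; since this is free of $\mathbf{v}$, the conditioning step promotes it to $\|W\mathbf{u}\|^2\stackrel{d}{=}\sigma^2\chi^2_n$ independent of $\mathbf{u}$. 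In the centered case the $\mathbf{v}$-independence of the law of $W\mathbf{v}$ can alternatively be seen from rotational invariance: if $O$ is orthogonal with $O\mathbf{v}=\mathbf{v}'$ then $W\mathbf{v}'=(WO)\mathbf{v}$ and $WO\stackrel{d}{=}W$, since an i.i.d.\ $\gaussian(0,\sigma^2)$ matrix is distributionally invariant under right multiplication by an orthogonal matrix.

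This lemma is classical and I expect no real obstacle; the one point needing care is the measure-theoretic step promoting ``the conditional law of $W\mathbf{u}$ given $\mathbf{u}$ is a.s.\ deterministic'' to honest independence of $W\mathbf{u}$ and $\mathbf{u}$ — precisely where $\mathbf{u}\perp W$ is invoked, and what lets the fixed-vector computation carry over to a random direction. A minor secondary subtlety: for $\mu\neq0$ the vector $W\mathbf{v}$ is Gaussian but not isotropic (its mean depends on $\mathbf{v}$), so the rotational-invariance shortcut applies only in the centered case, which is the one used in the sequel; the direct linear-combination computation is the robust route for part 2.
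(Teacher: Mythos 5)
Your proof is correct and takes essentially the same route as the paper: parts 2 and 3 are the identical direct computations (a unit-norm linear combination of i.i.d.\ Gaussians, then a sum of $n$ independent squared standard Gaussians giving $\sigma^2\chi_n^2$), and your conditioning/disintegration argument for part 1 is exactly the content of the result the paper simply cites (Hanin's Lemma~C.3, proved ``by considering directly the joint distribution of $W\mathbf{u}$ and $\mathbf{u}$''). Your closing caveat is apt: the $\mathbf{v}$-independence of the law of $W\mathbf{v}$ (and hence part 1, and the mean claim in part 2) genuinely requires centered entries, since otherwise the mean $\mu\sum_j v_j$ depends on $\mathbf{v}$ --- a wrinkle the paper glosses over but which is harmless because only the centered case is used in the sequel.
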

\begin{proof}
    Statement 1 was proved in, e.g., \citet[Lemma C.3]{hanin_deep_2021} by considering directly the joint distribution of $W\mathbf{u}$ and $\mathbf{u}$.
    
    Statement 2 follows from Statement 1 if we pick $\mathbf{v} = \mathbf{e}_1$. 
    
    To prove Statement 3, recall that by definition of the $\ell_2$ norm, $\|W \mathbf{u}\|^2 = \sum_{i=1}^n \left(W_i \mathbf{u}\right)^2$.
    By Statement 2, for all $i = 1, \ldots, n$, $W_i \mathbf{u}$ are Gaussian random variables independent of $\mathbf{u}$ with mean zero and variance $\sigma^2$.
    Since any Gaussian random variable sampled from $\gaussian(\mu, \sigma^2)$ can be written as $\mu + \sigma v$, where $v \sim \gaussian(0, 1)$, we can write $\sum_{i=1}^n \left(W_i \mathbf{u}\right)^2 = \sigma^2 \sum_{i=1}^n v_i^2$.
    By definition of the chi-squared distribution, $\sum_{i=1}^n v_i^2$ is a chi-squared random variable with $n$ degrees of freedom denoted with $\chi^2_n$, which leads to the desired result.
\end{proof}

\paragraph{Stochastic order} 
We recall the definition of a stochastic order. A real-valued random variable $X$ is said to be smaller than $Y$ in the stochastic order, denoted by $X \leq_{st} Y$, if $\Pr(X>x) \leq \Pr(Y > x)$ for all $x \in \mathbb{R}$.

\begin{remark}[Stochastic ordering for functions]
    \label{rem:so_functions}
    Consider two functions $f: \mathbf{X} \rightarrow \mathbb{R}$ and $g: \mathbf{X} \rightarrow \mathbb{R}$ that satisfy $f(x) \leq g(x)$ for all $x \in \mathbf{X}$.
    Then, for a random variable $X$, $f(X) \leq_{st} g(X)$.
    To see this, observe that for any $y \in \mathbb{R}$, ${\Pr(f(X) > y) = \Pr( X \in \{x \colon f(x) > y\})}$ and ${\Pr(g(X) > y) = \Pr( X \in \{x \colon g(x) > y\})}$. 
    Since $f(x) \leq g(x)$ for all $x \in \mathbf{X}$, $\{x \colon f(x) > y\} \subseteq \{x \colon g(x) > y\}$.
    Hence, ${\Pr(f(X) > y) \leq \Pr(g(X) > y)}$, and $f(X) \leq_{st} g(X)$.
\end{remark}

\begin{remark}[Stochastic order and equality in distribution]
    \label{rem:so_eqs}
    Consider real-valued random variables $X$, $Y$ and $\hat{Y}$. If $X \leq_{st} Y$ and $Y \stackrel{d}{=} \hat{Y}$, then $X \leq_{st} \hat{Y}$. 
    Since $Y$ and $\hat{Y}$ have the same cdfs by definition of equality in distribution, for any $y \in \mathbb{R}$, $\Pr(X > y) \leq \Pr(Y > y) =  \Pr(\hat{Y} > y)$.
\end{remark}

\subsection{Expression for \texorpdfstring{$\|\mathbf{J}_{\mathcal{N}} (\mathbf{x}) \mathbf{u}\|^2$}{}} 

Before proceeding to the proof of the main statement, given in Theorem \ref{th:jacobian_so}, we present Proposition \ref{prop:jacobian_equality}.
Firstly, in Proposition \ref{prop:jacobian_equality} below, we prove an equality that holds almost surely for an input-output Jacobian under our assumptions.
In this particular statement the reasoning closely follows \citet[Proposition~C.2]{hanin_deep_2021}.
The modifications are due to the fact that a maxout network Jacobian is a product of matrices consisting of the rows of weights that are selected based on which pre-activation feature attains maximum, while in a ReLU network, the rows in these matrices are either the neuron weights or zeros.

\begin{proposition}[Equality for $\| \mathbf{J}_{\mathcal{N}}(\mathbf{x}) \mathbf{u}\|^2$]
    \label{prop:jacobian_equality}
    Let $\mathcal{N}$ be a fully-connected feed-forward neural network with maxout units of rank $K$ and a linear last layer. 
    Let the network have $L$ layers of widths $n_1, \ldots, n_L$ and $n_0$ inputs.
    Assume that the weights are continuous random variables (that have a density) and that the biases are independent of the weights but otherwise initialized using any approach. 
    Let $\mathbf{u} \in \mathbb{R}^{n_0}$ be a fixed unit vector. 
   Then, for any input into the network, $\mathbf{x} \in \mathbb{R}^{n_0}$, almost surely with respect to the parameter initialization the Jacobian with respect to the input satisfies 
    \begin{align}
        \| \mathbf{J}_{\mathcal{N}}(\mathbf{x}) \mathbf{u}\|^2
        =
        \|W^{(L)} \mathbf{u}^{(L-1)}\|^2
        \prod_{l=1}^{L - 1}
        \sum_{i=1}^{n_l} \langle \overline{W}_i^{(l)}, \mathbf{u}^{(l-1)} \rangle^2,
        \label{eq:jacobian_equality}
    \end{align}
    where vectors $\mathbf{u}^{(l)}$, $l=1, \ldots, L - 1$ are defined recursively as $\mathbf{u}^{(l)} = \overline{W}^{(l)} \mathbf{u}^{(l-1)} / \| \overline{W}^{(l)} \mathbf{u}^{(l-1)}\|$ when $\overline{W}^{(l)} \mathbf{u}^{(l-1)}\neq 0$ and $0$ otherwise, and $\mathbf{u}^{(0)} = \mathbf{u}$. 
    The matrices $\overline{W}^{(l)}$ consist of rows $\overline{W}_i^{(l)} = W_{i, k'}^{(l)}\in\mathbb{R}^{n_{l-1}}$, $i = 1, \ldots, n_l$, where $k' = \argmax_{k\in[K]}\{W^{(l)}_{i,k}\mathbf{x}^{(l - 1)} + b^{(l)}_{i,k}\}$, 
$\mathbf{x}^{(l)}$ is the output of the $l$th layer, and $\mathbf{x}^{(0)} = \mathbf{x}$. 
\end{proposition}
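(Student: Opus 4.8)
The plan is to express the input-output Jacobian as a product of layer-wise Jacobians and then identify each factor. Since the network is $\mathcal{N} = \psi \circ \phi_{L-1} \circ \cdots \circ \phi_1$, by the chain rule $\mathbf{J}_{\mathcal{N}}(\mathbf{x}) = W^{(L)} \mathbf{J}_{\phi_{L-1}} \cdots \mathbf{J}_{\phi_1}$, wherever all these Jacobians exist. The first thing to establish is that, almost surely with respect to the continuous weight distribution, for the given fixed input $\mathbf{x}$ each maxout pre-activation block has a \emph{unique} maximizer $k' = \argmax_{k \in [K]}\{W^{(l)}_{i,k}\mathbf{x}^{(l-1)} + b^{(l)}_{i,k}\}$; ties occur only on a measure-zero set of parameters because the difference of two distinct affine functions of the weights is a nontrivial continuous random variable. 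On this full-measure event the map $\phi_l$ is differentiable at $\mathbf{x}^{(l-1)}$ and $\mathbf{J}_{\phi_l} = \overline{W}^{(l)}$, the matrix whose $i$th row is $W^{(l)}_{i,k'}$. So the key reduction is $\mathbf{J}_{\mathcal{N}}(\mathbf{x}) = W^{(L)} \overline{W}^{(L-1)} \cdots \overline{W}^{(1)}$ almost surely.

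Next I would compute $\|\mathbf{J}_{\mathcal{N}}(\mathbf{x})\mathbf{u}\|^2$ by peeling off the product from the right, layer by layer. Define $\mathbf{u}^{(0)} = \mathbf{u}$ and inductively $\mathbf{u}^{(l)} = \overline{W}^{(l)}\mathbf{u}^{(l-1)} / \|\overline{W}^{(l)}\mathbf{u}^{(l-1)}\|$ (and $\mathbf{u}^{(l)} = 0$ if the denominator vanishes). Then $\overline{W}^{(1)}\mathbf{u} = \|\overline{W}^{(1)}\mathbf{u}^{(0)}\| \, \mathbf{u}^{(1)}$, so
\begin{align*}
    \overline{W}^{(L-1)} \cdots \overline{W}^{(1)} \mathbf{u}
    = \Big(\prod_{l=1}^{L-1} \|\overline{W}^{(l)}\mathbf{u}^{(l-1)}\|\Big) \mathbf{u}^{(L-1)},
\end{align*}
which follows by a straightforward induction on the number of factors, using homogeneity of matrix-vector multiplication. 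Applying $W^{(L)}$ and taking squared norms gives $\|\mathbf{J}_{\mathcal{N}}(\mathbf{x})\mathbf{u}\|^2 = \|W^{(L)}\mathbf{u}^{(L-1)}\|^2 \prod_{l=1}^{L-1} \|\overline{W}^{(l)}\mathbf{u}^{(l-1)}\|^2$, and finally $\|\overline{W}^{(l)}\mathbf{u}^{(l-1)}\|^2 = \sum_{i=1}^{n_l} \langle \overline{W}^{(l)}_i, \mathbf{u}^{(l-1)}\rangle^2$ by definition of the Euclidean norm, yielding \eqref{eq:jacobian_equality}. One should also handle the degenerate case: if some $\overline{W}^{(l)}\mathbf{u}^{(l-1)} = 0$, then both sides are zero, so the identity still holds; I would note this explicitly.

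The subtlety worth being careful about — and the one the excerpt flags by citing \citet[Proposition~C.2]{hanin_deep_2021} — is the interplay between "almost surely" and "for any input $\mathbf{x}$". The a.s.-exceptional set (where some pre-activation tie occurs) depends on $\mathbf{x}$, so the statement is: for each fixed $\mathbf{x}$, the identity holds on a full-measure set of parameters; it is \emph{not} claiming one full-measure set works for all $\mathbf{x}$ at once. Making this precise requires observing that for fixed $\mathbf{x}$, the event "$W^{(l)}_{i,k}\mathbf{x}^{(l-1)} + b^{(l)}_{i,k} = W^{(l)}_{i,k''}\mathbf{x}^{(l-1)} + b^{(l)}_{i,k''}$ for some $l, i, k \neq k''$" is a finite union of zero-probability events, since conditioning on the earlier layers $\mathbf{x}^{(l-1)}$ is itself a (measurable) function of the earlier weights and biases, and then the difference of the two affine forms in the remaining fresh weights/biases is absolutely continuous. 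This is the main obstacle in the sense of being the only non-routine part; the algebraic peeling-off is elementary. I would also remark that the differentiability of $\phi_l$ at $\mathbf{x}^{(l-1)}$ given a unique argmax is standard (the max of finitely many smooth functions is differentiable wherever the maximizer is unique), and that composition of differentiable maps justifies the chain-rule factorization on the good event.
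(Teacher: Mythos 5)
Your proof is correct and follows essentially the same route as the paper's: factor the Jacobian into the product of argmax-selected weight matrices, peel off normalized direction vectors $\mathbf{u}^{(l)}$ layer by layer, and identify each factor via the definition of the $\ell_2$ norm. Your explicit treatment of the a.s.\ uniqueness of the argmax and of the degenerate case $\overline{W}^{(l)}\mathbf{u}^{(l-1)}=0$ (where both sides vanish) is if anything slightly more careful than the paper, which instead argues that the first-layer norm is a.s.\ nonzero.
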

\begin{proof}
    The Jacobian $\mathbf{J}_{\mathcal{N}}(\mathbf{x})$ of a network $\mathcal{N}(\mathbf{x})\colon \mathbb{R}^{n_0} \rightarrow \mathbb{R}^{n_L}$ can be written as a product of matrices $\overline{W}^{(l)}$, $l=1,\ldots,L$, depending on the activation region of the input $\mathbf{x}$. 
    The matrix $\overline{W}^{(l)}$ consists of rows $\overline{W}_i^{(l)} = W_{i, k'}^{(l)}\in\mathbb{R}^{n_{l-1}}$, where $k' = \argmax_{k\in[K]}\{W^{(l)}_{i,k}\mathbf{x}^{(l - 1)} + b^{(l)}_{i,k}\}$ for $i = 1, \ldots, n_l$, 
    and $\mathbf{x}^{(l-1)}$ is the $l$th layer's input. 
    For the last layer, which is linear, we have $\overline W^{(L)} = W^{(L)}$. 
    Thus,
    \begin{align}
        \| \mathbf{J}_{\mathcal{N}}(\mathbf{x}) \mathbf{u} \|^2 = \|W^{(L)} \overline{W}^{(L - 1)} \cdots \overline{W}^{(1)} \mathbf{u}\|^2.
        \label{eq1}
    \end{align}
    
    Further we denote $\mathbf{u}$ with $\mathbf{u}^{(0)}$
    and assume $\|\overline{W}^{(1)} \mathbf{u}^{(0)}\| \neq 0$. 
    To see that this holds almost surely, note that for a fixed unit vector $\mathbf{u}^{(0)}$, the probability of $\overline{W}^{(1)}$ being such that $\|\overline{W}^{(1)} \mathbf{u}^{(0)}\| = 0$ is $0$. This is indeed the case since to satisfy $\|\overline{W}^{(1)} \mathbf{u}^{(0)}\| = 0$, the weights must be a solution to a system of $n_1$ linear equations and this system is regular when $\mathbf{u} \neq 0$, so the solution set has positive co-dimension and hence zero measure. 
    Multiplying and dividing \eqref{eq1} by $\| \overline{W}^{(1)} \mathbf{u}^{(0)}\|^2$,
    \begin{align*}
        \| \mathbf{J}_{\mathcal{N}}(\mathbf{x}) \mathbf{u} \|^2 
        &= \left\|W^{(L)} \overline{W}^{(L - 1)} \cdots \overline{W}^{(2)} \frac{ \overline{W}^{(1)} \mathbf{u}^{(0)}}{\| \overline{W}^{(1)} \mathbf{u}^{(0)}\|}\right\|^2 \| \overline{W}^{(1)} \mathbf{u}^{(0)}\|^2\\
        &= \left\|W^{(L)} \overline{W}^{(L - 1)} \cdots \overline{W}^{(2)} \mathbf{u}^{(1)}\right\|^2 \| \overline{W}^{(1)} \mathbf{u}^{(0)}\|^2,
    \end{align*}
    where $\mathbf{u}^{(1)} = \overline{W}^{(1)} \mathbf{u}^{(0)} / \| \overline{W}^{(1)} \mathbf{u}^{(0)}\|$.
    Repeating this procedure layer-by-layer, we get 
    \begin{align}
        \|W^{(L)} \mathbf{u}^{(L-1)}\|^2 \| \overline{W}^{(L-1)} \mathbf{u}^{(L-2)}\|^2 \cdots \| \overline{W}^{(1)} \mathbf{u}^{(0)}\|^2. 
        \label{eq2}
    \end{align}
    By definition of the $\ell_2$ norm, for any layer $l$, $\| \overline{W}^{(l)} \mathbf{u}^{(l-1)}\|^2 = \sum_{i=1}^{n_l} \langle \overline{W}_i^{(l)}, \mathbf{u}^{(l-1)} \rangle^2$. Substituting this into \eqref{eq2}
    we get the desired statement.
\end{proof}

\subsection{Stochastic Ordering for \texorpdfstring{$\|\mathbf{J}_{\mathcal{N}} (\mathbf{x}) \mathbf{u}\|^2$}{}}
Now we prove the result for the stochastic ordering of the input-output Jacobian in a finite-width maxout network.

\thjacobianso*
\begin{proof}
    From Proposition \ref{prop:jacobian_equality}, we have the following equality
    \begin{align}
        \| \mathbf{J}_{\mathcal{N}}(\mathbf{x}) \mathbf{u}\|^2
        =
        \|W^{(L)} \mathbf{u}^{(L-1)}\|^2
        \prod_{l=1}^{L - 1}
        \sum_{i=1}^{n_l} \langle \overline{W}_i^{(l)}, \mathbf{u}^{(l-1)} \rangle^2,
        \label{eq:jacobian_eq_th}
    \end{align}
    where vectors $\mathbf{u}^{(l)}$, $l=0, \ldots, L - 1$ are defined recursively as $\mathbf{u}^{(l)} = \overline{W}^{(l)} \mathbf{u}^{(l-1)} / \| \overline{W}^{(l)} \mathbf{u}^{(l-1)}\|$ and $\mathbf{u}^{(0)} = \mathbf{u}$. Matrices $\overline{W}^{(l)}$ consist of rows $\overline{W}_i^{(l)} = W_{i, k'}^{(l)}\in\mathbb{R}^{n_{l-1}}$, $i = 1, \ldots, n_l$, where $k' = \argmax_{k\in[K]}\{W^{(l)}_{i,k}\mathbf{x}^{(l - 1)} + b^{(l)}_{i,k}\}$, 
    and $\mathbf{x}^{(l-1)}$ is the $l$th layer's input, $\mathbf{x}^{(0)} = \mathbf{x}$.
    
    We assumed that weights in the last layer are sampled from a Gaussian distribution with mean zero and variance $1 / n_{L - 1}$. Then, by Lemma \ref{lem:gaussian_variable} item 3, $\|W^{(L)} \mathbf{u}^{(L-1)}\|^2 \stackrel{d}{=} (1 / n_{L - 1}) \chi_{n_L}^2$ and is independent of $\mathbf{u}^{(L-1)}$.
    In equation \eqref{eq:jacobian_eq_th}, using this observation and then multiplying and dividing the summands by $c / n_{l-1}$ and rearranging we obtain 
    \begin{align*}
        \| \mathbf{J}_{\mathcal{N}}(\mathbf{x}) \mathbf{u}\|^2
        &\stackrel{d}{=}
        \frac{1}{n_{L-1}} \chi_{n_L}^2
        \prod_{l=1}^{L - 1}
        \frac{c}{n_{l-1}} \sum_{i=1}^{n_{l}} \left( \frac{n_{l-1}}{c}
        \langle \overline{W}_i^{(l)}, \mathbf{u}^{(l-1)} \rangle^2 \right)\\
        &=
        \frac{1}{n_{0}} \chi_{n_L}^2
        \prod_{l=1}^{L - 1}
        \frac{c}{n_{l}} \sum_{i=1}^{n_{l}} \left( \frac{n_{l-1}}{c}
        \langle \overline{W}_i^{(l)}, \mathbf{u}^{(l-1)} \rangle^2 \right).
    \end{align*}
    
    Now we focus on $\sqrt{n_{l-1} / c} \ \langle \overline{W}_i^{(l)}, \mathbf{u}^{(l-1)} \rangle$.
    Since we have assumed that the weights are sampled from a Gaussian distribution with zero mean and variance $c / n_{l-1}$, any weight $W_{i,k, j}^{(l)}$, $j = 1, \ldots, n_{l-1}$, can be written as $\sqrt{c / n_{l-1}} v_{i,k,j}^{(l)}$, where $v_{i,k,j}^{(l)}$ is a standard Gaussian random variable. 
    We also write $W_{i,k}^{(l)} = \sqrt{c/n_{l-1}} V_{i,k}^{(l)}$, where $V_{i,k}^{(l)}$ is an $n_{l-1}$-dimensional standard Gaussian random vector.
    Observe that for any $k' \in [K]$, $\langle W_{i,k'}^{(l)}, \mathbf{u}^{(l-1)} \rangle^2 \leq \max_{k \in [K]}\{ \langle W_{i,k}^{(l)}, \mathbf{u}^{(l-1)} \rangle^2 \}$ and $\langle W_{i,k'}^{(l)}, \mathbf{u}^{(l-1)} \rangle^2 \geq \min_{k \in [K]}\{ \langle W_{i,k}^{(l)}, \mathbf{u}^{(l-1)} \rangle^2 \}$.
    Therefore,
    \begin{align*}
        \frac{c}{n_{l-1}} \min\limits_{k \in [K]}\left\{ \left\langle V_{i,k}^{(l)}, \mathbf{u}^{(l-1)} \right\rangle^2 \right\}
        \leq \langle \overline{W}_i^{(l)}, \mathbf{u}^{(l-1)} \rangle^2
        \leq \frac{c}{n_{l-1}} \max\limits_{k \in [K]}\left\{ \left\langle V_{i,k}^{(l)}, \mathbf{u}^{(l-1)} \right\rangle^2 \right\}. 
    \end{align*}

    Notice that vectors $\mathbf{u}^{(l - 1)}$ are unit vectors by their definition.
    By Lemma \ref{lem:gaussian_variable}, the inner product of a standard Gaussian vector and a unit vector is a standard Gaussian random variable independent of the given unit vector.
    
    By definition, a squared standard Gaussian random variable is distributed as $\chi^2_1$, a chi-squared random variable with $1$ degree of freedom.
    Hence, $\max_{k \in [K]} \{ \langle V_{i,k}^{(l)}, \mathbf{u}^{(l-1)} \rangle^2 \}$
    is distributed as the largest order statistic in a sample of size $K$ of chi-squared random variables with $1$ degree of freedom. We will denote such a random variable with $\Xi_{l, i}(\chi^2_1, K)$.
    Likewise, $\min_{k \in [K]} \{ \langle V_{i,k}^{(l)}, \mathbf{u}^{(l-1)} \rangle^2 \}$ is distributed as the smallest order statistic in a sample of size $K$ of chi-squared random variables with $1$ degree of freedom, denoted with $\xi_{l, i}(\chi^2_1, K)$.
    
    Combining results for each layer, we obtain the following bounds
    \begin{align*}
        &\| \mathbf{J}_{\mathcal{N}}(\mathbf{x}) \mathbf{u}\|^2
        \leq
        \frac{1}{n_{0}} \chi_{n_L}^2
        \prod_{l=1}^{L - 1}
        \frac{c}{n_{l}} \sum_{i=1}^{n_{l}}
        \max\limits_{k \in [K]}\left\{ \left\langle V_{i,k}^{(l)}, \mathbf{u}^{(l-1)} \right\rangle^2 \right\}
        \stackrel{d}{=}
        \frac{1}{n_{0}} \chi_{n_L}^2
        \prod_{l=1}^{L - 1}
        \frac{c}{n_{l}} \sum_{i=1}^{n_{l}} \Xi_{l, i}(\chi^2_1, K),\\
        &\| \mathbf{J}_{\mathcal{N}}(\mathbf{x}) \mathbf{u}\|^2
        \geq
        \frac{1}{n_{0}} \chi_{n_L}^2
        \prod_{l=1}^{L - 1}
        \frac{c}{n_{l}} \sum_{i=1}^{n_{l}}
        \min\limits_{k \in [K]}\left\{ \left\langle V_{i,k}^{(l)}, \mathbf{u}^{(l-1)} \right\rangle^2 \right\}
        \stackrel{d}{=}
        \frac{1}{n_{0}} \chi_{n_L}^2
        \prod_{l=1}^{L - 1}
        \frac{c}{n_{l}} \sum_{i=1}^{n_{l}} \xi_{l, i}(\chi^2_1, K).
    \end{align*}
    
    Then, by Remarks \ref{rem:so_functions} and \ref{rem:so_eqs}, the following stochastic ordering holds
    \begin{align*}
        \frac{1}{n_{0}} \chi_{n_L}^2
        \prod_{l=1}^{L - 1}
        \frac{c}{n_{l}} \sum_{i=1}^{n_{l}} \xi_{l, i}(\chi^2_1, K)
        \leq_{st} 
        \| \mathbf{J}_{\mathcal{N}}(\mathbf{x}) \mathbf{u}\|^2
        \leq_{st} 
        \frac{1}{n_{0}} \chi_{n_L}^2
        \prod_{l=1}^{L - 1}
        \frac{c}{n_{l}} \sum_{i=1}^{n_{l}} \Xi_{l, i}(\chi^2_1, K),
    \end{align*}
    which concludes the proof.
\end{proof}

\section{Moments of the Input-Output Jacobian Norm \texorpdfstring{$\|\mathbf{J}_{\mathcal{N}} (\mathbf{x}) \mathbf{u}\|^2$}{}}
\label{app:jacobian_moments}

In the proof on the bounds of the moments, we use an approach similar to \citet{hanin_deep_2021} for upper bounding the moments of the chi-squared distribution.

\corsqgradient*
\begin{proof}
    We first prove results for the mean, then for the moments of order $t > 1$, and finish with the proof for the variance.
    \paragraph{Mean}
    Using mutual independence of the variables in the bounds in Theorem \ref{th:jacobian_so},
    and that if two non-negative univariate random variables $X$ and $Y$ are such that $X \leq_{st} Y$ then $\mathbb{E}[X^n] \leq \mathbb{E}[Y^n]$ for all $n \geq 1$ \citep[Theorem~1.2.12]{muller_comparison_2002},
    \begin{align*}
        \frac{1}{n_{0}} \mathbb{E}\left[ \chi^2_{n_{L}} \right] \prod_{l=1}^{L - 1} \frac{c}{n_{l}} \sum_{i=1}^{n_l}  \mathbb{E}\left[ \xi_{l, i}\right]
        \leq \mathbb{E}[\| \mathbf{J}_{\mathcal{N}}(\mathbf{x}) \mathbf{u} \|^2]
        \leq \frac{1}{n_{0}} \mathbb{E}\left[ \chi^2_{n_{L}} \right] \prod_{l=1}^{L - 1} \frac{c}{n_{l}} \sum_{i=1}^{n_l} \mathbb{E}\left[ \Xi_{l, i} \right].
    \end{align*}
    where we used $\xi_{l,i}$ and $\Xi_{l,i}$ as shorthands for $\xi_{l, i}(\chi^2_1, K)$ and $\Xi_{l, i}(\chi^2_1, K)$.
    Using the formulas for the largest and the smallest order statistic pdfs from Remark \ref{rem:constants}, the largest order statistic mean equals
    \begin{align*}
        \mathbb{E}\left[ \Xi_{l, i} \right]
        = \frac{K}{\sqrt{2 \pi}} \int_0^{\infty} \left( \operatorname{erf}
        \left(\sqrt{\frac{x}{2}} \right) \right)^{K - 1} x^{1/2} e^{-x/2} dx = \chilos,
    \end{align*}
    and the smallest order statistic mean equals
    \begin{align*}
        \mathbb{E}\left[ \xi_{l, i} \right]
        = \frac{K}{\sqrt{2 \pi}} \int_0^{\infty} \left( 1 - \operatorname{erf} \left(\sqrt{\frac{x}{2}} \right) \right)^{K - 1} x^{1/2} e^{-x/2} dx = \chisos.
    \end{align*}
    Here we denoted the right hand-sides with $\chilos$ and $\chisos$, which are constants depending on K, and can be computed exactly for $K = 2$ and $K = 3$, and approximately for higher $K$-s, see Table \ref{tab:constants}.
    It is known that $\mathbb{E}\left[\chi_{n_{L}}^2\right] = n_L$. Combining, we get 
    \begin{align*}
        \frac{n_L}{n_{0}} (c \chisos)^{L - 1}
        \leq \mathbb{E}[\| \mathbf{J}_{\mathcal{N}}(\mathbf{x}) \mathbf{u} \|^2]
        \leq \frac{n_L}{n_{0}} (c \chilos)^{L - 1}.
    \end{align*}
    
    \paragraph{Moments of order $t > 1$}
    
    As above, using mutual independence of the variables in the bounds in Theorem \ref{th:jacobian_so},
    and that if two non-negative univariate random variables $X$ and $Y$ are such that $X \leq_{st} Y$ then $\mathbb{E}[X^n] \leq \mathbb{E}[Y^n]$ for all $n \geq 1$ \citep[Theorem~1.2.12]{muller_comparison_2002},
    \begin{equation}
        \begin{aligned}
            \mathbb{E}[\| \mathbf{J}_{\mathcal{N}}(\mathbf{x}) \mathbf{u}\|^{2t}]
            &\leq 
            \mathbb{E} \left[\left(
            \frac{1}{n_{0}} \chi_{n_L}^2 \prod_{l = 1}^{L - 1} \frac{c}{n_{l}} \sum_{i=1}^{n_l} \Xi_{l, i}
            \right)^t \right]\\
            &= \left(\frac{n_L}{n_{0}}\right)^t \left(\frac{1}{n_{L}}\right)^t 
            \mathbb{E} \left[
            \left(\chi_{n_L}^2\right)^t \right]
            \prod_{l = 1}^{L - 1} \left(\frac{c}{n_{l}}\right)^t \mathbb{E} \left[\left( \sum_{i=1}^{n_l} \Xi_{l, i}
            \right)^t \right].
        \end{aligned}
        \label{eq:moments_upper}
    \end{equation}
    
    Upper-bounding the maximum of chi-squared variables with a sum,
    \begin{align*}
        & \left(\frac{c}{n_{l}}\right)^t\mathbb{E} \left[\left( \sum_{i=1}^{n_l}  \Xi_{l,i} \right)^{t} \right]
        \leq \left(\frac{c}{n_{l}}\right)^t \mathbb{E} \left[\left( \sum_{i=1}^{n_l} \sum_{k=1}^{K} (\chi_{1}^2)_{l,i,k} \right)^{t} \right]
        = \left(\frac{c}{n_{l}}\right)^t \mathbb{E} \left[\left( \chi^2_{n_l K} \right)^{t} \right],
    \end{align*}
    where we used that a sum of $n_l K$ chi-squared variables with one degree of freedom is a chi-squared variable with $n_l K$ degrees of freedom.
    Using the formula for noncentral moments of the chi-squared distribution and the inequality $1 + x \leq e^{x}$,
    \begin{align*}
        &\left(\frac{c}{n_{l}}\right)^t \mathbb{E} \left[\left( \chi^2_{n_l K} \right)^{t} \right]
        = \left( \frac{c}{n_{l}} \right)^{t} \left( n_l K \right) \left( n_lK + 2 \right) \cdots \left( n_lK + 2t - 2 \right)\\
        &= c^t K^t \cdot 1 \cdot \left( 1 + \frac{2}{n_l K} \right) \cdots \left( 1 + \frac{2t - 2}{n_l K} \right)
        \leq c^t K^t \exp\left\{ \sum_{i = 0}^{t - 1}\frac{2 i}{n_l K} \right\}
        \leq c^t K^t \exp\left\{\frac{t^2}{n_l K} \right\},
    \end{align*}
    where we used the formula for calculating the sum of consecutive numbers $\sum_{i=1}^{t-1} i = t (t - 1) / 2$.
    Similarly,
    \begin{align*}
        & \left(\frac{1}{n_L}\right)^{t} \mathbb{E} \left[ \left(\chi^2_{n_{L}} \right)^{t} \right]
        \leq \exp\left\{ \frac{t^2}{n_L} \right\}.
    \end{align*}
    
    Combining, we upper bound \eqref{eq:moments_upper} with
    \begin{align*}
        \left(\frac{n_L}{n_0}\right)^{t}
         \left(c K\right)^{t(L-1)} \exp\left\{ t^2 \left(\sum_{l=1}^{L - 1}\frac{1}{n_l K} + \frac{1}{n_L} \right)  \right\}.
    \end{align*}
    
    \paragraph{Variance}
    Combining the upper bound on the second moment and the lower bound on the mean, we get the following upper bound on the variance
    \begin{align*}
        \var\left[ \| \mathbf{J}_{\mathcal{N}}(\mathbf{x}) \mathbf{u}\|^2 \right]
        \leq
        \left(\frac{n_L}{n_0}\right)^{2} c^{2(L-1)}
        \left( K^{2(L-1)} \exp\left\{ 4 \left(\sum_{l=1}^{L - 1}\frac{1}{n_l K} + \frac{1}{n_L} \right)  \right\}
        - \chisos^{2(L - 1)}\right).
    \end{align*}
    which concludes the proof.
\end{proof}

\begin{remark}[Computing the constants]
    \label{rem:constants}
    Here we provide the derivations necessary to compute the constants equal to the moments of the largest and the smallest order statistics appearing in the results.
    Firstly, the cdf of the largest order statistic
    of independent univariate random variables $y_1, \ldots, y_K$ with cdf $F(x)$ and pdf $f(x)$ is
    \begin{align*} 
        \Pr \left(\max_{k \in [K]}\{y_{k}\} < x \right)
        = \Pr \left(\bigcap_{k = 1}^{K} \left(y_{k}\ < x \right) \right)
        = \prod_{k = 1}^K \Pr\left( y_k < x \right)= (F(x))^K.
    \end{align*}
    Hence, the  pdf is $K (F(x))^{K - 1} f(x)$.
    For the smallest order statistic, the cdf is
    \begin{align*} 
        \Pr \left(\min_{k \in [K]}\{y_{k}\} < x \right)
        = 1 - \prod_{k = 1}^{K} \Pr \left(y_{k} \geq x \right)
        =  1 - \left( 1 - F(x) \right)^K. 
    \end{align*}
    Thus, the pdf is $ K \left(1 - F(x)\right)^{K - 1} f(x)$.
    
    Now we obtain pdfs for the distributions that are used in the results.
    
    \paragraph{Chi-squared distribution}
    The cdf of a chi-squared random variable $\chi^2_k$ with $k = 1$ degree of freedom is 
    $F(x) = (\Gamma(k / 2))^{-1} \gamma(k/2, x/2) = \operatorname{erf} (\sqrt{x/2} )$,
    and
    the pdf is 
    $f(x) = (2^{k/2} \Gamma(k/2))^{-1} x^{k/2 - 1} e^{-x/2} = (2 \pi)^{-1/2} x^{- 1/2} e^{-x/2}$.
    Here we used that $\Gamma(1/2) = \sqrt{\pi}$ and $\gamma(1/2, x/2) = \sqrt{\pi} \text{ erf}(\sqrt{x/2})$.
    Therefore, the pdf of the largest order statistic in a sample of $K$ chi-squared random variables with $1$ degree of freedom $\Xi_{l, i}(\chi^2_1, K)$ is
    \begin{align*}
        K \left(\operatorname{erf} \left(\sqrt{\frac{x}{2}} \right) \right)^{K-1} \frac{1}{\sqrt{2 \pi}} x^{- \frac{1}{2}} e^{-\frac{x}{2}}.
    \end{align*}
    The pdf of the smallest order statistic in a sample of $K$ chi-squared random variables with $1$ degree of freedom $\xi_{l, i}(\chi^2_1, K)$ is
    \begin{align*}
        K \left(1 - \operatorname{erf} \left(\sqrt{\frac{x}{2}} \right)\right)^{K - 1} \frac{1}{\sqrt{2 \pi}} x^{- \frac{1}{2}} e^{-\frac{x}{2}}.
    \end{align*}
    
    \paragraph{Standard Gaussian distribution}
    Recall that the cdf of a standard Gaussian random variable is $F(x) = 1/2(1 + \operatorname{erf} ( x/\sqrt{2} ))$, and the pdf is $f(x) = 1 / \sqrt{2 \pi} \exp\{- x^2 / 2\}$.
    Then, for the pdf of the largest order statistic in a sample of $K$ standard Gaussian random variables $\Xi_{l, i}(\gaussian(0,1), K)$ we get
    \begin{align*}
        \frac{K}{2^{K-1} \sqrt{2 \pi}} \left(1 + \operatorname{erf} \left( \frac{x}{\sqrt{2}} \right)\right)^{K-1} e^{- \frac{x^2}{2}}.
    \end{align*}
    
    \paragraph{Constants}
    Now we obtain formulas for the constants.
    For the mean of the smallest order statistic in a sample of $K$ chi-squared random variables with $1$ degree of freedom $\xi_{l, i}(\chi^2_1, K)$, we get
    \begin{align*}
        \chisos &= \frac{K}{\sqrt{2 \pi}} \int_{0}^{\infty} x^{\frac{1}{2}} \left(1 - \operatorname{erf} \left(\sqrt{\frac{x}{2}} \right)\right)^{K - 1} e^{-\frac{x}{2}} dx.
    \end{align*}
    The mean of the largest order statistic in a sample of $K$ chi-squared random variables with $1$ degree of freedom $\Xi_{l, i}(\chi^2_1, K)$ is
    \begin{align*}
        \chilos &= \frac{K}{\sqrt{2 \pi}} \int_{0}^{\infty} x^{\frac{1}{2}} \left( \operatorname{erf} \left(\sqrt{\frac{x}{2}} \right)\right)^{K - 1} e^{-\frac{x}{2}} dx.
    \end{align*}
    The second moment of the largest order statistic in a sample of $K$ standard Gaussian random variables $\Xi_{l, i}(\gaussian(0,1), K)$ equals
    \begin{align*}
        \glos &= 
        \frac{K}{2^{K - 1} \sqrt{2 \pi}} \int_{-\infty}^{\infty} x^2 \left( 1 + \operatorname{erf} \left(\frac{x}{\sqrt{2}} \right) \right)^{K - 1} e^{-\frac{x^2}{2}} dx.
    \end{align*}
    
    These constants can be evaluated using numerical computation software. The values estimated for $K = 2, \ldots, 10$ using Mathematica \citep{Mathematica} are in Table~\ref{tab:constants}.
\end{remark}

\begin{table}[ht]
    \caption{Constants $\chilos$ and $\chisos$ 
    denote the means of the largest and the smallest order statistics in a sample of size $K$ of chi-squared random variables with $1$ degree of freedom. 
    Constant $\glos$
    denotes the second moment of the largest order statistic in a sample of size $K$ of standard Gaussian random variables.
    See Remark~\ref{rem:constants} for the explanation of how these constants are computed.
    }
    \label{tab:constants}
    \centering
    {\small 
    \begin{tabular}{c lll}
        \toprule
        \multicolumn{1}{c}{\textbf{MAXOUT RANK}} & \multicolumn{1}{c}{$\bm{\chilos}$} & \multicolumn{1}{c}{$\bm{\chisos}$} & \multicolumn{1}{c}{$\bm{\glos}$} \\
        \midrule
        $2$ & $1.63662$ & $0.36338$ & $1$ \\
        $3$ & $2.10266$ & $0.1928$ & $1.27566$ \\
        $4$ & $2.47021$ & $0.1207$ & $1.55133$\\
        $5$ & $2.77375$ & $0.08308$ & $1.80002$ \\
        $6$ & $3.03236$ & $0.06083$ & $2.02174$ \\
        $7$ & $3.25771$ & $0.04655$ & $2.2203$ \\
        $8$ & $3.45743$ & $0.0368$ & $2.39954$ \\
        $9$ & $3.63681$ & $0.02984$ & $2.56262$ \\
        $10$ & $3.79962$ & $0.0247$ & $2.7121$ \\
        \bottomrule
    \end{tabular}
    }
\end{table}

\section{Equality in Distribution for the Input-Output Jacobian Norm and Wide Network Results}
\label{app:wide_net}
Here we prove results from Section~\ref{subsec:jac_eq_in_distr}.
We will use the following theorem from \citet{anderson_introduction_2003}. We reference it here without proof, but remark that it is based on the well-known result that uncorrelated jointly Gaussian random variables are independent. 
\begin{theorem}[{\citealt[Theorem 3.3.1]{anderson_introduction_2003}}]
    \label{th:orthogonal_lin_comb}
    Suppose $\bm{X}_1, \ldots,  \bm{X}_N$ are independent, where $\bm{X}_{\alpha}$ is distributed according to $\gaussian(\bm{\mu}_{\alpha}, \bm{\Sigma})$. Let $\bm{C} = (c_{\alpha \beta})$ be an $N \times N$ orthogonal matrix. Then $\bm{Y}_{\alpha} = \sum_{\beta = 1}^N c_{\alpha \beta} \bm{X}_{\beta}$ is distributed according to $\gaussian(\bm{\nu}_{\alpha}, \bm{\Sigma})$, where $\bm{\nu}_{\alpha} = \sum_{\beta = 1}^N c_{\alpha \beta} \bm{\mu}_{\beta}$, $\alpha = 1, \ldots, N$, and $\bm{Y}_1, \ldots, \bm{Y}_N$ are independent.
\end{theorem}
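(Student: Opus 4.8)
The plan is to verify the claim directly through the joint characteristic function of the stacked family $(\bm{Y}_1, \ldots, \bm{Y}_N)$, which simultaneously yields both the marginal laws $\gaussian(\bm{\nu}_\alpha, \bm{\Sigma})$ and the mutual independence in a single computation. First I would fix test vectors $\bm{t}_1, \ldots, \bm{t}_N$ (of the common dimension of the $\bm{X}_\alpha$) and rewrite the exponent $\sum_\alpha \bm{t}_\alpha^\top \bm{Y}_\alpha = \sum_\beta \bm{s}_\beta^\top \bm{X}_\beta$ by exchanging the order of summation, where $\bm{s}_\beta \defeq \sum_\alpha c_{\alpha\beta} \bm{t}_\alpha$. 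Because the $\bm{X}_\beta$ are independent, the characteristic function factorizes over $\beta$, and using the Gaussian characteristic function $\mathbb{E}[\exp\{i \bm{s}_\beta^\top \bm{X}_\beta\}] = \exp\{i \bm{s}_\beta^\top \bm{\mu}_\beta - \tfrac{1}{2} \bm{s}_\beta^\top \bm{\Sigma} \bm{s}_\beta\}$ reduces the whole expression to a single exponential of a linear-plus-quadratic form in the $\bm{s}_\beta$.

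The orthogonality of $\bm{C}$ enters at exactly one point. I would substitute $\bm{s}_\beta = \sum_\alpha c_{\alpha\beta} \bm{t}_\alpha$ back into both the linear and the quadratic term. The linear part collapses to $\sum_\alpha \bm{t}_\alpha^\top (\sum_\beta c_{\alpha\beta} \bm{\mu}_\beta) = \sum_\alpha \bm{t}_\alpha^\top \bm{\nu}_\alpha$, which is precisely the definition of $\bm{\nu}_\alpha$. The quadratic part becomes $\sum_\beta \bm{s}_\beta^\top \bm{\Sigma} \bm{s}_\beta = \sum_{\alpha,\gamma} (\sum_\beta c_{\alpha\beta} c_{\gamma\beta}) \bm{t}_\alpha^\top \bm{\Sigma} \bm{t}_\gamma$, and here the relation $\sum_\beta c_{\alpha\beta} c_{\gamma\beta} = (\bm{C}\bm{C}^\top)_{\alpha\gamma} = \delta_{\alpha\gamma}$ annihilates every cross term, leaving $\sum_\alpha \bm{t}_\alpha^\top \bm{\Sigma} \bm{t}_\alpha$. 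The joint characteristic function therefore equals $\prod_\alpha \exp\{i \bm{t}_\alpha^\top \bm{\nu}_\alpha - \tfrac{1}{2} \bm{t}_\alpha^\top \bm{\Sigma} \bm{t}_\alpha\}$, i.e.\ the product of the characteristic functions of $\gaussian(\bm{\nu}_\alpha, \bm{\Sigma})$; by the uniqueness theorem for characteristic functions this gives both the stated marginals and the independence at once.

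An equivalent route, and the one suggested by the remark preceding the statement, is to observe that $(\bm{Y}_1, \ldots, \bm{Y}_N)$ is jointly Gaussian (being a fixed linear image of $(\bm{X}_1, \ldots, \bm{X}_N)$, which is itself jointly Gaussian because its members are independent Gaussians), then to compute $\mathrm{Cov}(\bm{Y}_\alpha, \bm{Y}_\gamma) = \sum_{\beta,\delta} c_{\alpha\beta} c_{\gamma\delta} \mathrm{Cov}(\bm{X}_\beta, \bm{X}_\delta) = \sum_\beta c_{\alpha\beta} c_{\gamma\beta} \bm{\Sigma} = \delta_{\alpha\gamma} \bm{\Sigma}$, and to invoke that uncorrelated jointly Gaussian blocks are independent. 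There is essentially no obstacle here beyond bookkeeping: the single place where the hypotheses are indispensable is the combination of a \emph{common} covariance $\bm{\Sigma}$ across all $\bm{X}_\alpha$ together with $\bm{C}\bm{C}^\top = I$. If the covariances differed across $\alpha$, orthogonality alone would not decorrelate the $\bm{Y}_\alpha$, so I would take care to deploy both hypotheses precisely at the quadratic (respectively covariance) step, and to avoid relying on invertibility or any further structure of $\bm{\Sigma}$.
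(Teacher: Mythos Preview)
Your proof is correct. Note, however, that the paper does not actually prove this theorem: it is quoted from \citet{anderson_introduction_2003} and explicitly ``referenced without proof,'' with only the remark that it rests on the fact that uncorrelated jointly Gaussian random variables are independent. Your second route---observing that $(\bm{Y}_1,\ldots,\bm{Y}_N)$ is jointly Gaussian as a linear image of independent Gaussians, computing $\mathrm{Cov}(\bm{Y}_\alpha,\bm{Y}_\gamma)=\delta_{\alpha\gamma}\bm{\Sigma}$ via $\bm{C}\bm{C}^\top=I$ and the common $\bm{\Sigma}$, and invoking that uncorrelated jointly Gaussian blocks are independent---is exactly the argument the paper alludes to. Your primary route via the joint characteristic function is equally valid and self-contained; it has the minor advantage of not requiring a separate appeal to the ``uncorrelated $\Rightarrow$ independent'' fact for Gaussians, since factorization of the characteristic function delivers independence directly.
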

\begin{remark}
    We will use Theorem \ref{th:orthogonal_lin_comb} in the following way. 
    Notice that it is possible to consider a vector $\mathbf{v}$ with entries sampled i.i.d.\ from $\gaussian(0, \sigma^2)$ in Theorem~\ref{th:orthogonal_lin_comb} and treat entries of $\mathbf{v}$ as a set of $1$-dimensional vectors $\bm{X}_1, \ldots, \bm{X}_N$. Then we can obtain that products of the columns of the orthogonal matrix $\bm{C}$ and the vector $\mathbf{v}$, $\bm{Y}_\beta = \sum_{\alpha = 1}^N c_{\alpha \beta} \mathbf{v}_{\alpha}$, are distributed according to $\gaussian(0, \sigma^2)$ and are mutually independent.
\end{remark}

\thequalityindistr*

\begin{proof}
     By Proposition \ref{prop:jacobian_equality}, almost surely with respect to the parameter initialization,
    \begin{align}
        \| \mathbf{J}_{\mathcal{N}}(\mathbf{x}) \mathbf{u}\|^2
        = \|W^{(L)} \mathbf{u}^{(L-1)}\|^2
        \prod_{l=1}^{L - 1}
        \sum_{i=1}^{n_l} \langle \overline{W}_i^{(l)}, \mathbf{u}^{(l-1)} \rangle^2,
        \label{eq:ju_product}
    \end{align}
    where vectors $\mathbf{u}^{(l)}$, $l=0, \ldots, L - 1$ are defined recursively as $\mathbf{u}^{(l)} = \overline{W}^{(l)} \mathbf{u}^{(l-1)} / \| \overline{W}^{(l)} \mathbf{u}^{(l-1)}\|$ and $\mathbf{u}^{(0)} = \mathbf{u}$. Matrices $\overline{W}^{(l)}$ consist of rows $\overline{W}_i^{(l)} = W_{i, k'}^{(l)}\in\mathbb{R}^{n_{l-1}}$, $i = 1, \ldots, n_l$, where 
    $k' = \argmax_{k\in[K]}\{W^{(l)}_{i,k}\mathbf{x}^{(l - 1)} + b^{(l)}_{i,k}\}$,
    and $\mathbf{x}^{(l-1)}$ is the $l$th layer's input, $\mathbf{x}^{(0)} = \mathbf{x}$.
    
    We assumed that weights in the last layer are sampled from a Gaussian distribution with mean zero and variance $1 / n_{L - 1}$. Then, by Lemma \ref{lem:gaussian_variable}, $\|W^{(L)} \mathbf{u}^{(L-1)}\|^2 \stackrel{d}{=} (1 / n_{L - 1}) \chi_{n_L}^2$ and is independent of $\mathbf{u}^{(L-1)}$.
    We use this observation in the equation \eqref{eq:jacobian_eq_th}, multiply and divide the summands in the expression by $c / n_{l-1}$ and rearrange to obtain that
    \begin{equation}
        \begin{aligned}
            \| \mathbf{J}_{\mathcal{N}}(\mathbf{x}) \mathbf{u}\|^2
            &\stackrel{d}{=}
            \frac{1}{n_{L-1}} \chi_{n_L}^2
            \prod_{l=1}^{L - 1}
            \frac{c}{n_{l-1}} \sum_{i=1}^{n_{l}} \left( \frac{n_{l-1}}{c}
            \langle \overline{W}_i^{(l)}, \mathbf{u}^{(l-1)} \rangle^2 \right)\\
            &=
            \frac{1}{n_{0}} \chi_{n_L}^2
            \prod_{l=1}^{L - 1}
            \frac{c}{n_{l}} \sum_{i=1}^{n_{l}} \left( \frac{n_{l-1}}{c}
            \langle \overline{W}_i^{(l)}, \mathbf{u}^{(l-1)} \rangle^2 \right).
        \end{aligned}
        \label{eq:jxu_with_chi}
    \end{equation}
    
    We define $\bm{\mathfrak{x}}^{(l-1)} \defeq (\mathbf{x}^{(l-1)}_1, \ldots, \mathbf{x}^{(l-1)}_{n_{l-1}}, 1) \in \mathbb{R}^{n_{l-1} + 1}$ and $\bm{\mathfrak{u}}^{(l-1)} \defeq (\mathbf{u}^{(l-1)}_1, \ldots, \mathbf{u}^{(l-1)}_{n_{l-1}}, 0) \in \mathbb{R}^{n_{l-1} + 1}$, $\|\mathfrak{\bm{u}}\| = 1$. We append the vectors of biases to the weight matrices and denote obtained matrices with $\mathfrak{W}^{(l)} \in \mathbb{R}^{n_l \times (n_{l-1} + 1)}$.
    Then \eqref{eq:jxu_with_chi} equals
    \begin{align*}
        \frac{1}{n_{0}} \chi_{n_L}^2
        \prod_{l=1}^{L - 1}
        \frac{c}{n_{l}} \sum_{i=1}^{n_{l}} \left( \frac{n_{l-1}}{c}
        \langle \overline{\mathfrak{W}}_i^{(l)}, \bm{\mathfrak{u}}^{(l-1)} \rangle^2 \right).
    \end{align*}
    
    Now we focus on $\sqrt{n_{l-1} / c} \ \langle \overline{\mathfrak{W}}_i^{(l)}, \bm{\mathfrak{u}}^{(l-1)} \rangle$.
    Since we have assumed that the weights and biases are sampled from the Gaussian distribution with zero mean and variance $c / n_{l-1}$, any weight $W_{i,k, j}^{(l)}$, $j = 1, \ldots, n_{l-1}$ (or bias), can be written as $\sqrt{c / n_{l-1}} v_{i,k,j}^{(l)}$, where $v_{i,k,j}^{(l)}$ is 
    standard Gaussian.
    Therefore,
    \begin{align}
        \frac{n_{l-1}}{c} \langle \overline{\mathfrak{W}}_i^{(l)}, \bm{\mathfrak{u}}^{(l-1)} \rangle^2
        = \langle \overline{\mathfrak{V}}_i^{(l)}, \bm{\mathfrak{u}}^{(l-1)} \rangle^2,
        \label{eq:std_out}
    \end{align}
    where
    $\overline{\mathfrak{V}}_i^{(l)} = \mathfrak{V}_{i, k'}^{(l)} \in \mathbb{R}^{n_{l-1}+1}$, $k' = \argmax_{k\in[K]}\{\langle \mathfrak{V}^{(l)}_{i,k}, \bm{\mathfrak{x}}^{(l - 1)} \rangle \}$, $\mathfrak{V}_{i,k}^{(l)}$ are $(n_{l-1} + 1)$-dimensional standard Gaussian random vectors.
    
    We construct an orthonormal basis $B = (\mathbf{b}_1, \ldots, \mathbf{b}_{n_{l-1} + 1})$ of $\mathbb{R}^{n_{l-1} + 1}$, where we set $\mathbf{b}_1 = \bm{\mathfrak{x}}^{(l-1)} / \|\bm{\mathfrak{x}}^{(l-1)}\|$ and choose the other vectors to be unit vectors orthogonal to $\mathbf{b}_1$. 
    The change of basis matrix from the standard basis $I$ to the basis $B$ is given by $B^T$; 
    see, e.g., \citet[Theorem~6.6.4]{anton_elementary_2013}. 
    Then, any row $\mathfrak{V}_{i,k}^{(l)}$ 
    an be expressed
    as 
    \begin{align*}
        \mathfrak{V}_{i,k}^{(l)} = c_{k, 1} \mathbf{b}_1 + \cdots + c_{k, n_{l-1}+1} \mathbf{b}_{n_{l-1}+1},
    \end{align*}
    where $c_{k,j} = \langle \mathfrak{V}_{i,k}^{(l)}, \mathbf{b}_j \rangle$, $j = 1, \ldots, n_{l-1}+1$.
    
    The coordinate vector of $\bm{\mathfrak{x}}^{(l-1)}$ relative to $B$ is $(\|\bm{\mathfrak{x}}^{(l-1)}\|, 0, \ldots, 0)$.
    Vector $\bm{\mathfrak{u}}^{(l-1)}$ in $B$ has the coordinate vector 
    $(\langle \bm{\mathfrak{u}}^{(l-1)}, \mathbf{b}_1 \rangle, \ldots, \langle \bm{\mathfrak{u}}^{(l-1)}, \mathbf{b}_{n_{l-1}+1} \rangle)$. This coordinate vector has norm $1$
    since the change of basis between two orthonormal bases does not change the $\ell_2$ norm; 
    see, e.g., \citet[Theorem~6.3.2]{anton_elementary_2013}.
    
    For the maximum, using the representation of the vectors in the basis $B$, we get
    \begin{align}
        \langle \overline{\mathfrak{V}}_{i}^{(l)}, \bm{\mathfrak{x}}^{(l-1)} \rangle
        = \max_{k \in [K]} \left\{ \langle \mathfrak{V}_{i,k}^{(l)}, \bm{\mathfrak{x}}^{(l-1)} \rangle \right\}
        = \max_{k \in [K]} \left\{ c_{k, 1} \| \bm{\mathfrak{x}}^{(l-1)} \| \right\}
        = \| \bm{\mathfrak{x}}^{(l-1)} \| \max_{k \in [K]} \left\{ c_{k, 1} \right\}.
        \label{eq:vx}
    \end{align}
    Therefore, in the basis $B$, 
    $\overline{\mathfrak{V}}_{i}^{(l)}$ 
    has components $(\max_{k \in [K]} \left\{ c_{k, 1} \right\}, c_{k', 2}, \ldots, c_{k', n_{l-1}+1})$.
    By Theorem~\ref{th:orthogonal_lin_comb}, for all $k = 1, \ldots, K$, $j = 1, \ldots, n_{l-1}+1$, the coefficients $c_{k, j}$ are mutually independent standard Gaussian random variables that are also independent of vectors $\mathbf{b}_j, j = 1, \ldots, n_{l-1}$, by Lemma~\ref{lem:gaussian_variable} and of $\mathbf{u}^{(l-1)}$.
    \begin{equation}
        \begin{aligned}
        \langle \overline{\mathfrak{V}}_{i}^{(l)}, \bm{\mathfrak{u}}^{(l-1)} \rangle
        & = \max_{k \in [K]} \left\{ c_{k, 1} \right\} \langle \bm{\mathfrak{u}}^{(l-1)}, \mathbf{b}_1 \rangle + \sum_{j=2}^{n_{l-1}+1} c_{k', j} \langle \bm{\mathfrak{u}}^{(l-1)}, \mathbf{b}_j \rangle \\
        &\stackrel{d}{=} 
        \Xi_{l, i}(\gaussian(0, 1), K)
        \langle \bm{\mathfrak{u}}^{(l-1)}, \mathbf{b}_1 \rangle + \sum_{j=2}^{n_{l-1}+1} v_{j}
        \langle \bm{\mathfrak{u}}^{(l-1)}, \mathbf{b}_j \rangle,
        \end{aligned}
        \label{eq:step1_res}
    \end{equation}
    where
    $\Xi_{l, i}(\gaussian(0, 1), K)$
    is the largest order statistic in a sample of $K$ standard Gaussian random variables, and $v_j \sim \gaussian(0, 1)$.
    Since we have simply written equality in distribution for $\max_{k \in [K]} \left\{ c_{k, 1} \right\} $ and $ c_{k', j}$, the variables
    $\Xi_{l, i}(\gaussian(0, 1), K)$ and $v_j$, $j = 2, \ldots, n_{l-1}$ are also mutually independent, and independent of vectors $\mathbf{b}_j, j = 1, \ldots, n_{l-1}$, and of $\mathbf{u}^{(l-1)}$. 
    In the following we will use $\Xi_{l, i}$ as a shorthand for $\Xi_{l, i}(\gaussian(0, 1), K)$. 
    
    A linear combination $\sum_{i=1}^n a_i, v_i$ of Gaussian random variables $v_1, \ldots, v_n$, $v_j \sim \gaussian(\mu_j, \sigma_j^2)$, $j = 1, \ldots, n$ with coefficients $a_1, \ldots, a_n $ is distributed according to $\gaussian(\sum_{i=1}^n a_i \mu_i, \sum_{i=1}^n a_i^2 \sigma_i^2)$.
    Hence, $\sum_{j=2}^{n_{l-1}+1} v_{j} \langle \bm{\mathfrak{u}}^{(l-1)}, \mathbf{b}_j \rangle \sim \gaussian(0, \sum_{j=2}^{n_{l-1}+1}  \langle \bm{\mathfrak{u}}^{(l-1)}, \mathbf{b}_j \rangle^2)$.
    Since $\sum_{j=2}^{n_{l-1}+1}  \langle \bm{\mathfrak{u}}^{(l-1)}, \mathbf{b}_j \rangle^2
    = 1 - \langle \bm{\mathfrak{u}}^{(l-1)}, \mathbf{b}_1 \rangle^2
    = 1 - \cos^2 \gamma_{\bm{\mathfrak{x}}^{(l-1)}, \bm{\mathfrak{u}}^{(l-1)}}
    $,
    we get
    \begin{equation}
        \begin{aligned}
            &\sum_{j=2}^{n_{l-1}+1} v_{j}
            \langle \bm{\mathfrak{u}}^{(l-1)}, \mathbf{b}_j \rangle
            + \Xi_{l, i} \langle \bm{\mathfrak{u}}^{(l-1)}, \mathbf{b}_1 \rangle\\
            &\hspace{14em} \stackrel{d}{=} 
            v_{i}\sqrt{1 - \cos^2 \gamma_{\bm{\mathfrak{x}}^{(l-1)}, \bm{\mathfrak{u}}^{(l-1)}}} + \Xi_{l, i} \cos \gamma_{\bm{\mathfrak{x}}^{(l-1)}, \bm{\mathfrak{u}}^{(l-1)}}, 
            \label{eq:cosine_res}
        \end{aligned}
    \end{equation}
    where
    $v_{i} \sim \gaussian(0, 1)$.
    Notice that
    $v_{i}\sqrt{1 - \cos^2 \gamma_{\bm{\mathfrak{x}}^{(l-1)}, \bm{\mathfrak{u}}^{(l-1)}}}$ and $\Xi_{l, i} \cos \gamma_{\mathbf{u}^{(l-1)}, \mathbf{x}^{(l-1)}}$ are stochastically independent because $v_{i}$ and $\Xi_{l, i}$ are independent and multiplying random variables by constants does not affect stochastic independence.
\end{proof}

\begin{remark}
    \label{rem:eq_in_distr_zero_bias}
    The result in Theorem \ref{th:jxu_tighter_equality} also holds when the biases are initialized to zero. The proof is simplified in this case. There is no need to define additional vectors $\bm{\mathfrak{x}}^{(l-1)}$ and $\bm{\mathfrak{u}}^{(l-1)}$, and when constructing the basis, the first vector is defined as $\mathbf{b}_1 \defeq \mathbf{x}^{(l-1)} / \|\mathbf{x}^{(l-1)}\|$. The rest of the proof remains the same.
\end{remark}

\begin{figure}
    \centering
    \setlength\tabcolsep{1pt}
    \begin{tabular}{cccl}
        \centering
        \begin{tabular}{c}
            \centering
            \includegraphics[width=0.28\textwidth]{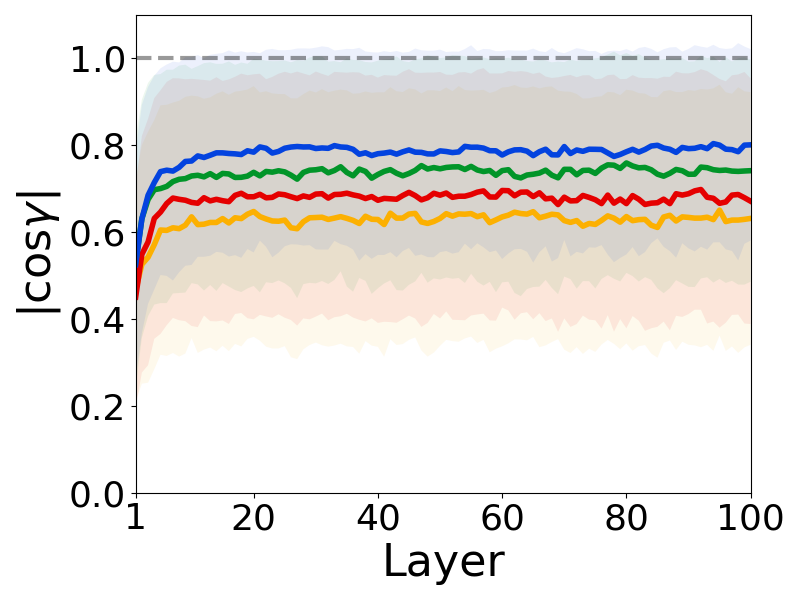}
        \end{tabular} &
        \begin{tabular}{c}
            \centering
            \includegraphics[width=0.28\textwidth]{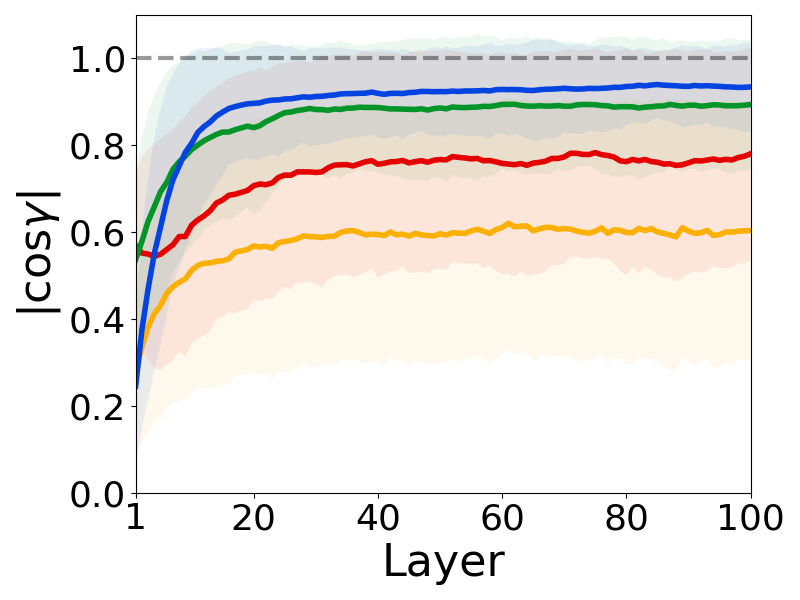}
        \end{tabular} &
        \begin{tabular}{c}
            \centering
          \includegraphics[width=0.28\textwidth]{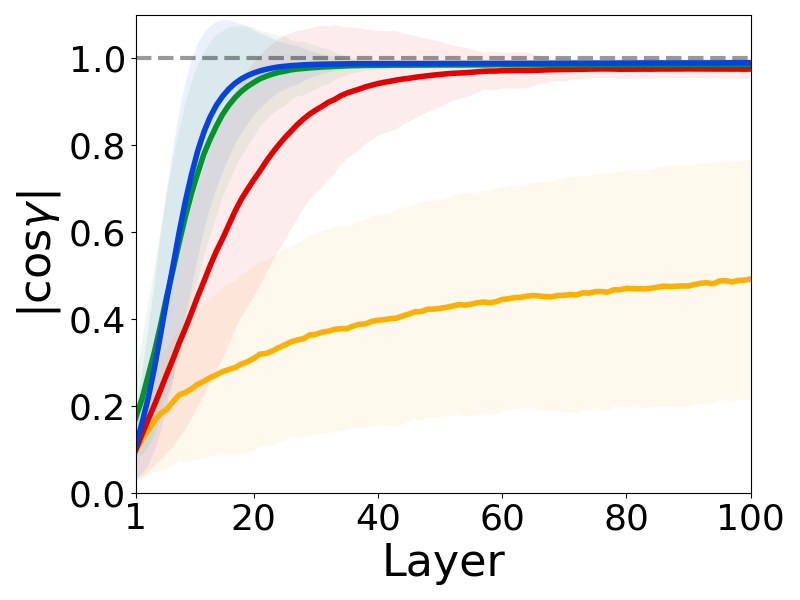}
        \end{tabular} &
        \begin{tabular}{l}
            \includegraphics[width=0.1\textwidth, trim={19.8cm 5cm 2.7cm 3cm}, clip]{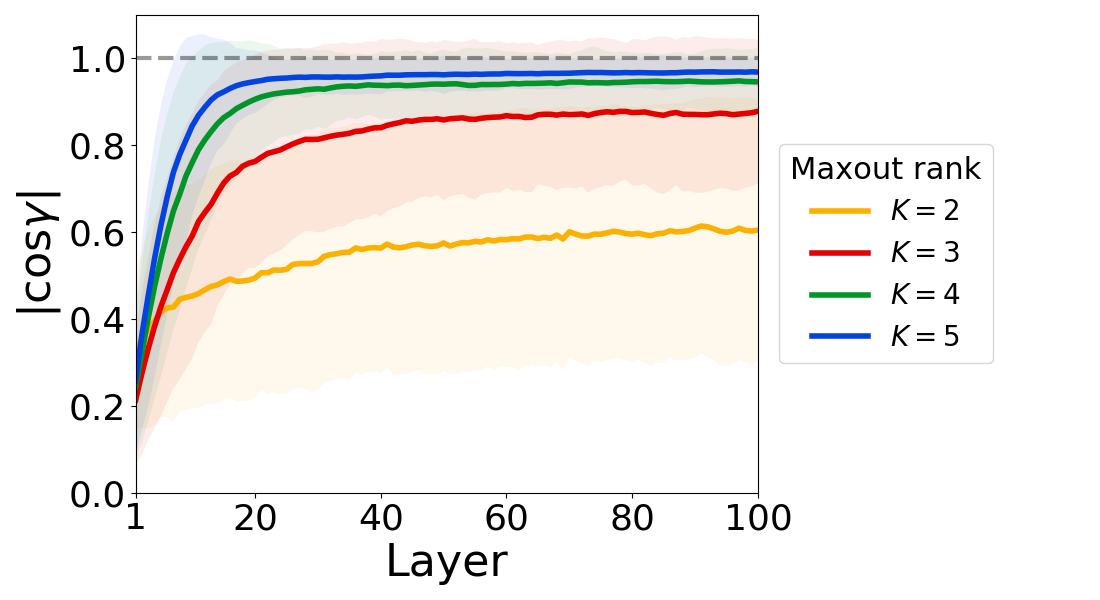}
        \end{tabular}
        \\
        Width $2$ &
        Width $10$ &
        Width $100$ &
    \end{tabular}
    \caption{The plots show that $|\cos \gamma_{\bm{\mathfrak{x}}^{(l)}, \bm{\mathfrak{u}}^{(l)}}|$ grows with the network depth and eventually converges to $1$ for wide networks and maxout rank $K > 2$. The results were averaged over $1000$ parameter initializations, and both weights and biases were sampled from $\gaussian(0, c / \text{fan-in})$, $c = 1 / \mathbb{E}[\left(\Xi(\gaussian(0,1), K)\right)^2]$, as discussed in Section \ref{sec:implications_init}. Vectors $\mathbf{x}$ and $\mathbf{u}$ were sampled from $\gaussian(0, I)$.
    }
    \label{fig:cosine}
\end{figure}

\begin{figure}
    \centering
    \setlength\tabcolsep{1pt}
    \begin{tabular}{cccl}
        \centering
        \begin{tabular}{c}
            \centering
            \includegraphics[width=0.28\textwidth]{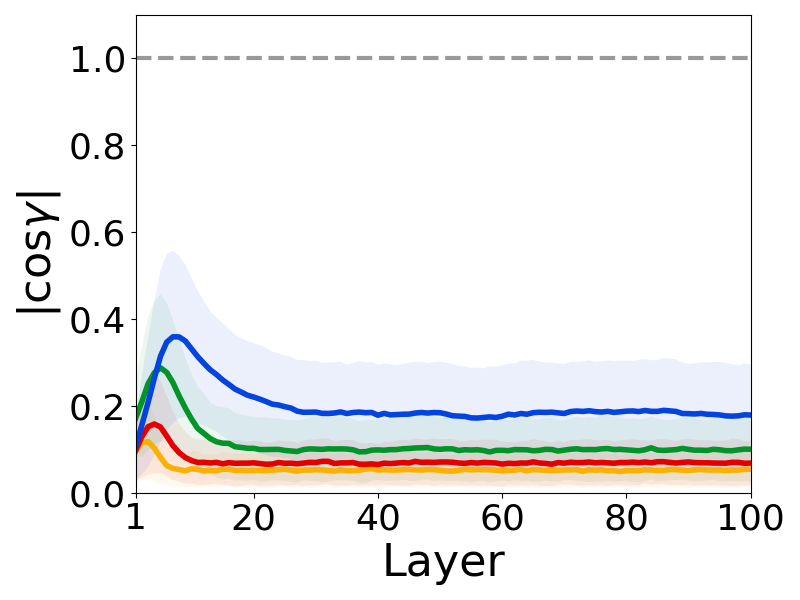}
        \end{tabular} &
        \begin{tabular}{c}
            \centering
            \includegraphics[width=0.28\textwidth]{images/cosine/2022-08-19_12-00-22_K5_w100_d100_ws1000.png}
        \end{tabular} &
        \begin{tabular}{c}
            \centering
            \includegraphics[width=0.28\textwidth]{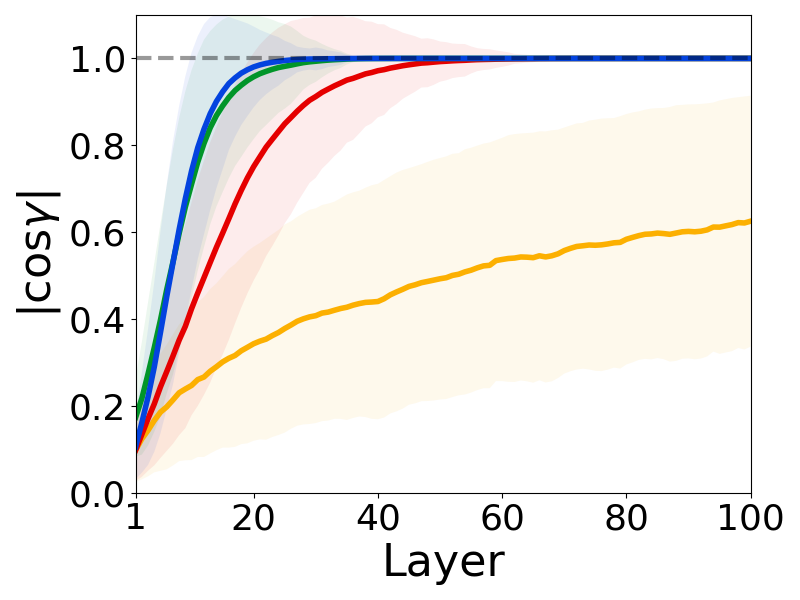}
        \end{tabular} &
        \begin{tabular}{l}
            \includegraphics[width=0.1\textwidth, trim={19.8cm 5cm 2.7cm 3cm}, clip]{images/cosine/2022-08-19_11-39-27_K5_w20_d100_ws1000.png}
        \end{tabular}
        \\
        $c = 0.3$, $c < 1 / \mathbb{E}[\Xi^2]$ &
        $c = 1 / \mathbb{E}[\Xi^2]$ &
        $c = 10$, $c > 1 / \mathbb{E}[\Xi^2]$ &
    \end{tabular}
    \caption{
    The plots show that $|\cos \gamma_{\bm{\mathfrak{x}}^{(l)}, \bm{\mathfrak{u}}^{(l)}}|$ does not converge to $1$ for $c < 1 / \mathbb{E}[\Xi^2]$ and converges for $c \geq 1 / \mathbb{E}[\Xi^2]$.
    The network had $100$ neurons at each layer, and both weights and biases were sampled from $\gaussian(0, c / \text{fan-in})$.
    The results were averaged over $1000$ parameter initializations.
    Vectors $\mathbf{x}$ and $\mathbf{u}$ were sampled from $\gaussian(0, I)$.
    }
    \label{fig:cosine_init}
\end{figure}

\begin{figure}
    \centering
    \includegraphics[width=0.3\textwidth]{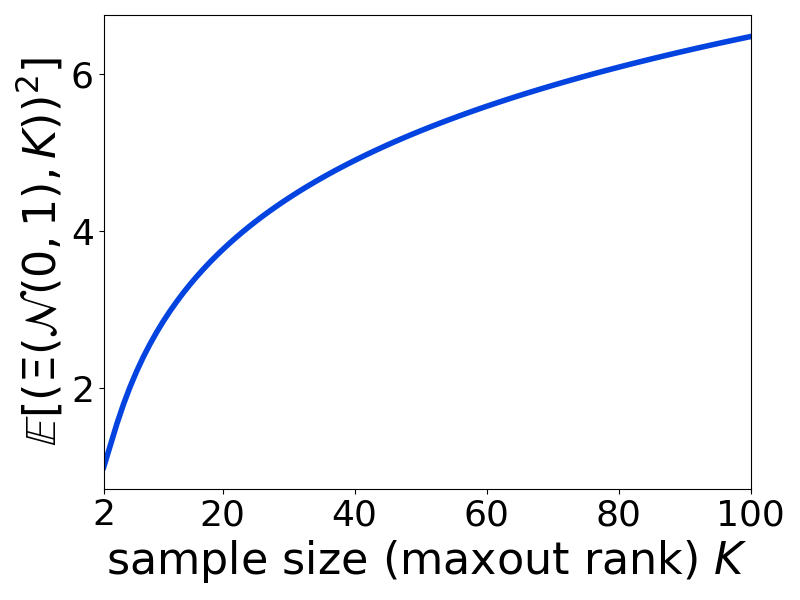}
    \caption{
    Second moment of $\Xi(\gaussian(0, 1), K)$ for different sample sizes $K$. 
    It increases with $K$ for any $K$, $2 \leq K \leq 100$, and $\mathbb{E}[(\Xi(\gaussian(0, 1), K))^2] > 1$ for $K > 2$. 
    }
    \label{fig:second_gaussianlos_moment}
\end{figure}

\begin{figure}
    
    \begin{subfigure}{\textwidth}
        \centering
        \setlength\tabcolsep{1pt}
        \begin{tabular}{c ccccc}
            \centering
            &
            Width $1$ &
            Width $2$ &
            Width $5$ &
            Width $20$ &
            Width $50$ \\
            
            {\rotatebox[origin=c]{90}{$1$ layer}} &
            \begin{tabular}{c}
                \centering
                \includegraphics[trim={1.6cm 2cm 0 0}, clip, width=0.18\textwidth]{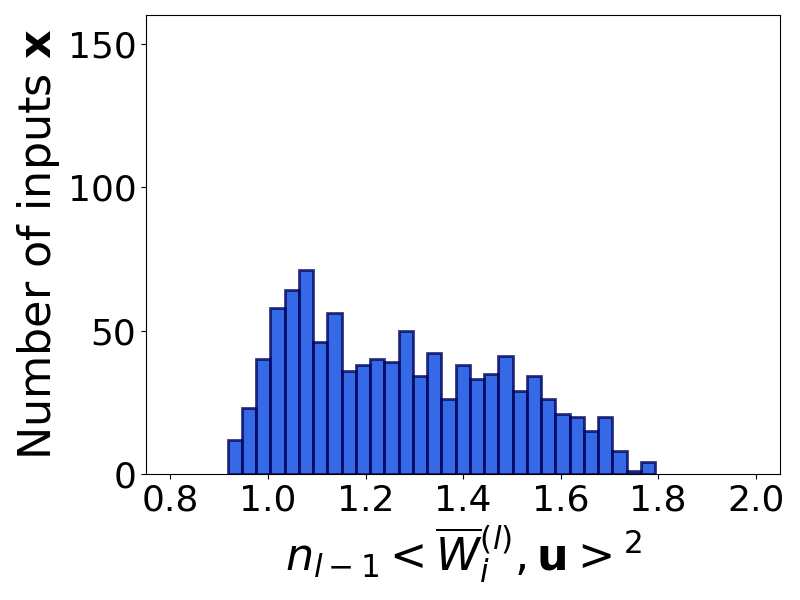}
            \end{tabular} &
            
            \begin{tabular}{c}
                \centering
                \includegraphics[trim={1.6cm 2cm 0 0}, clip, width=0.18\textwidth]{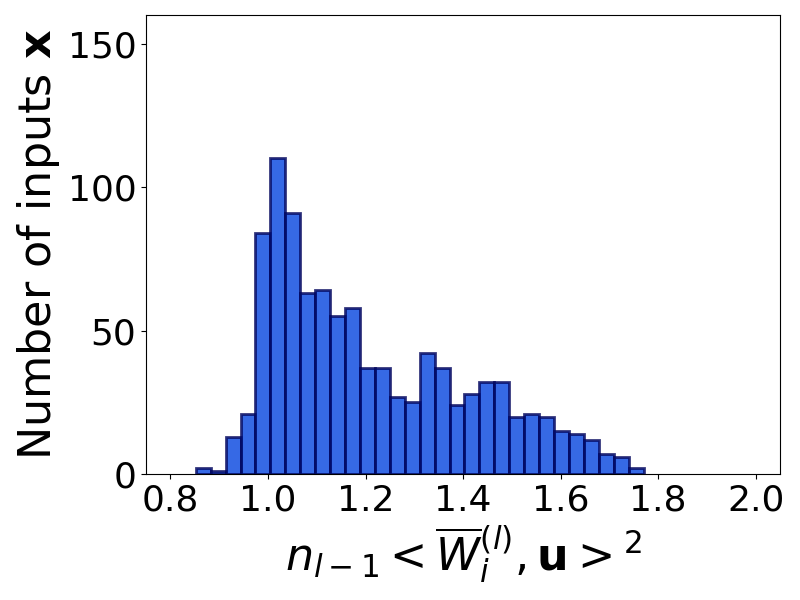}
            \end{tabular} &
            
            \begin{tabular}{c}
                \centering
                \includegraphics[trim={1.6cm 2cm 0 0}, clip, width=0.18\textwidth]{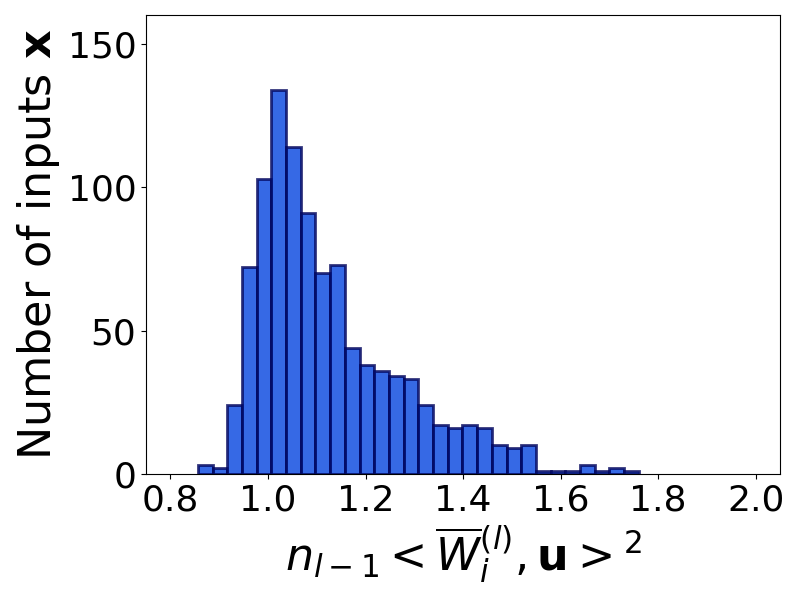}
            \end{tabular} &
            
            \begin{tabular}{c}
                \centering
                \includegraphics[trim={1.6cm 2cm 0 0}, clip, width=0.18\textwidth]{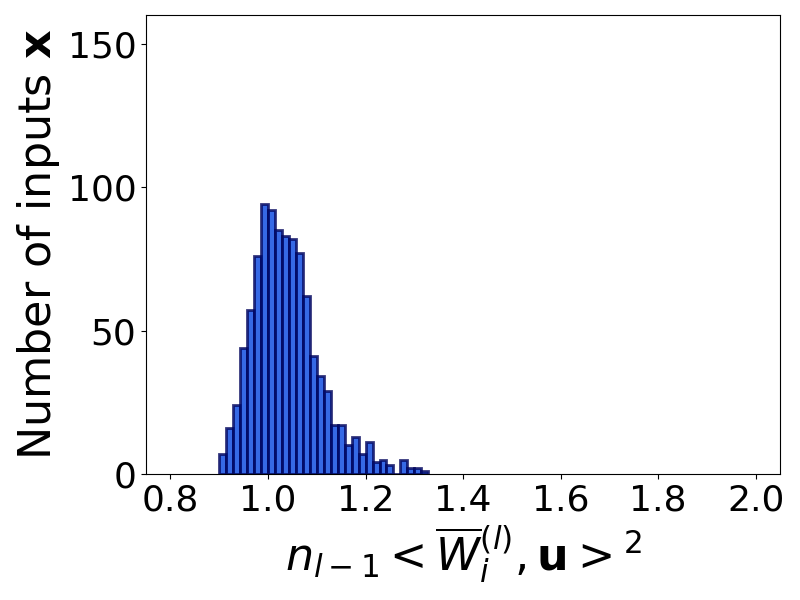}
            \end{tabular} &
            
            \begin{tabular}{c}
                \centering
                \includegraphics[trim={1.6cm 2cm 0 0}, clip, width=0.18\textwidth]{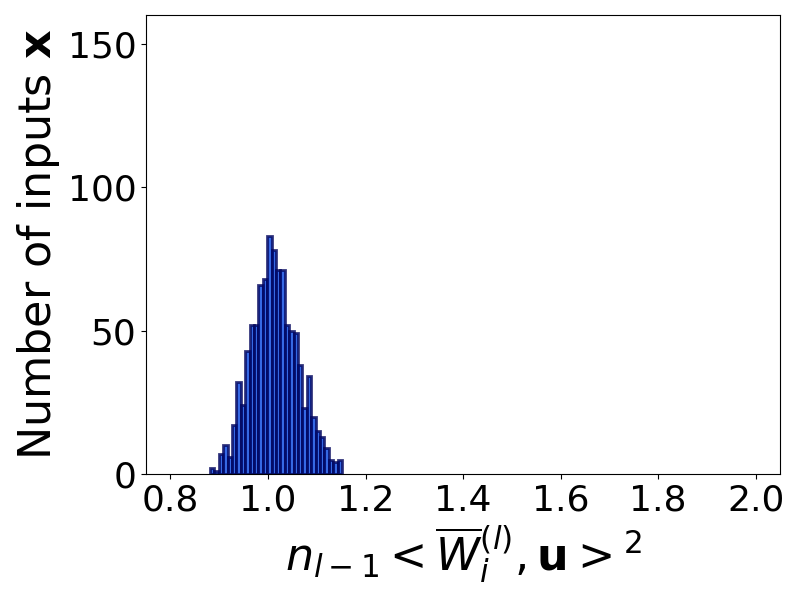}
            \end{tabular}\\
            
            {\rotatebox[origin=c]{90}{$2$ layers}} &
            \begin{tabular}{c}
                \centering
                \includegraphics[trim={1.6cm 2cm 0 0}, clip, width=0.18\textwidth]{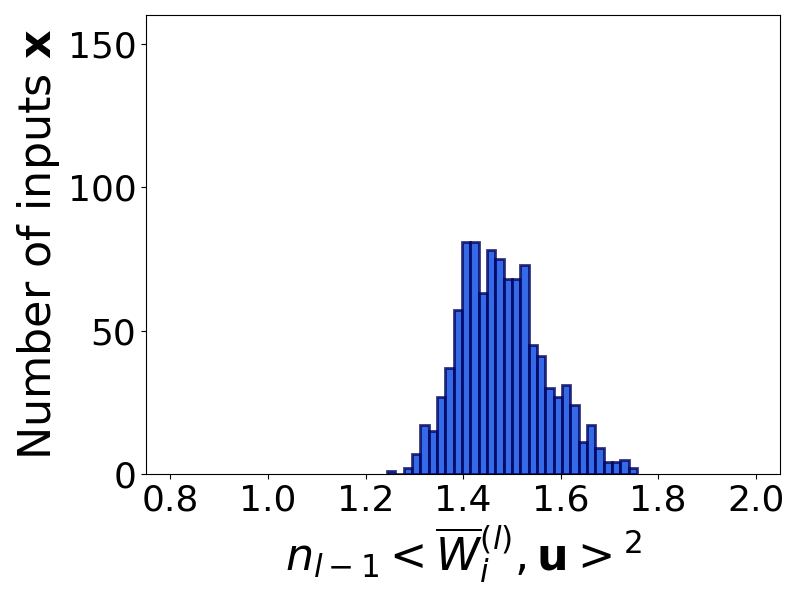}
            \end{tabular} &
            
            \begin{tabular}{c}
                \centering
                \includegraphics[trim={1.6cm 2cm 0 0}, clip, width=0.18\textwidth]{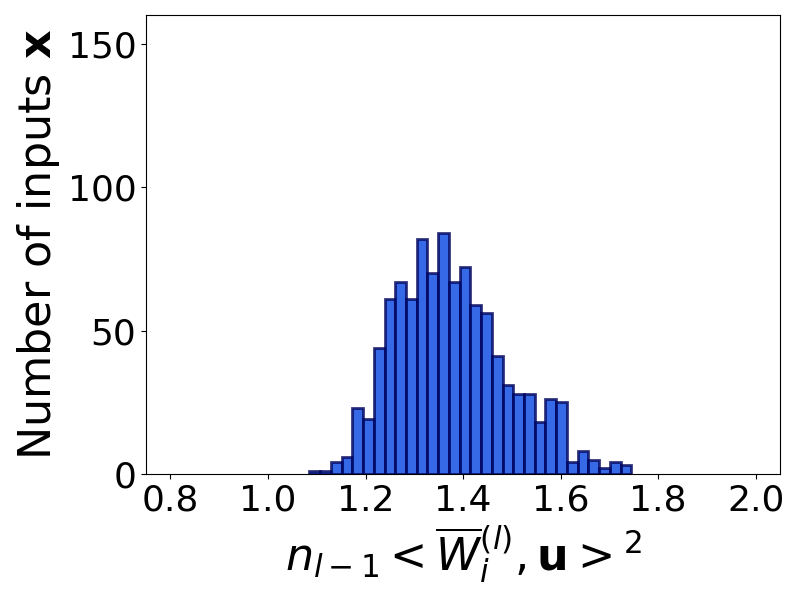}
            \end{tabular} &
            
            \begin{tabular}{c}
                \centering
                \includegraphics[trim={1.6cm 2cm 0 0}, clip, width=0.18\textwidth]{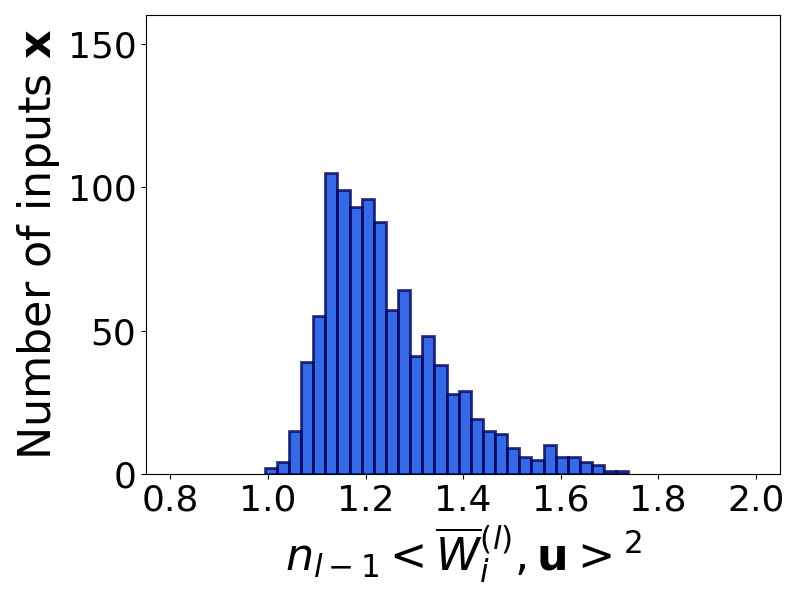}
            \end{tabular} &
            
            \begin{tabular}{c}
                \centering
                \includegraphics[trim={1.6cm 2cm 0 0}, clip, width=0.18\textwidth]{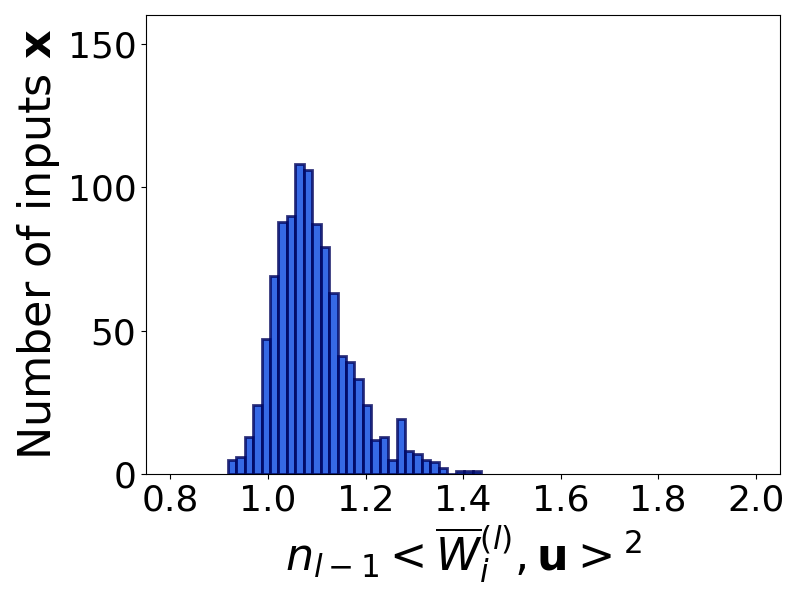}
            \end{tabular} &
            
            \begin{tabular}{c}
                \centering
                \includegraphics[trim={1.6cm 2cm 0 0}, clip, width=0.18\textwidth]{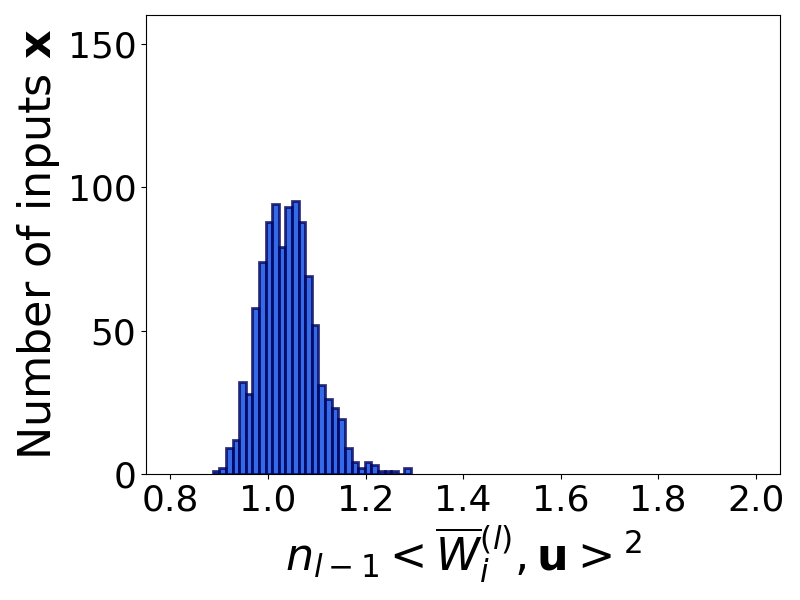}
            \end{tabular}\\
            
            {\rotatebox[origin=c]{90}{$5$ layers}} &
            \begin{tabular}{c}
                \centering
                \includegraphics[trim={1.6cm 2cm 0 0}, clip, width=0.18\textwidth]{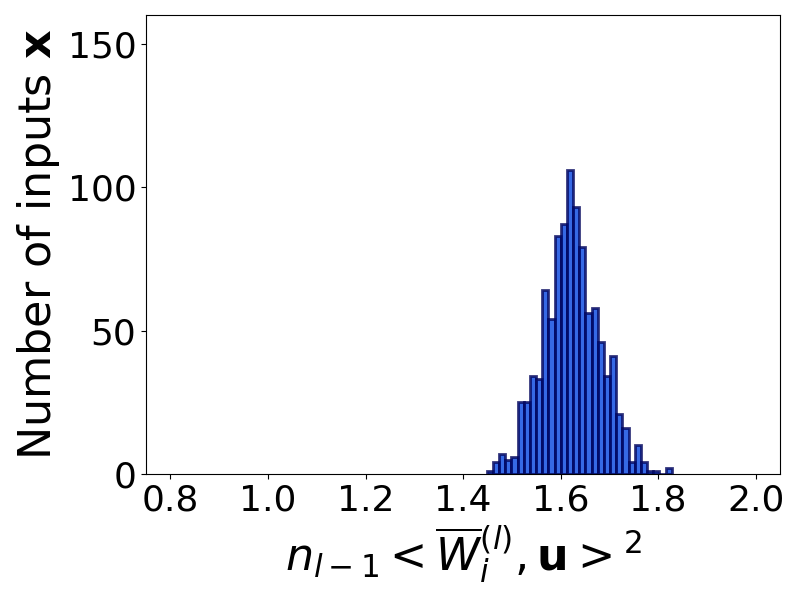}
            \end{tabular} &
            
            \begin{tabular}{c}
                \centering
                \includegraphics[trim={1.6cm 2cm 0 0}, clip, width=0.18\textwidth]{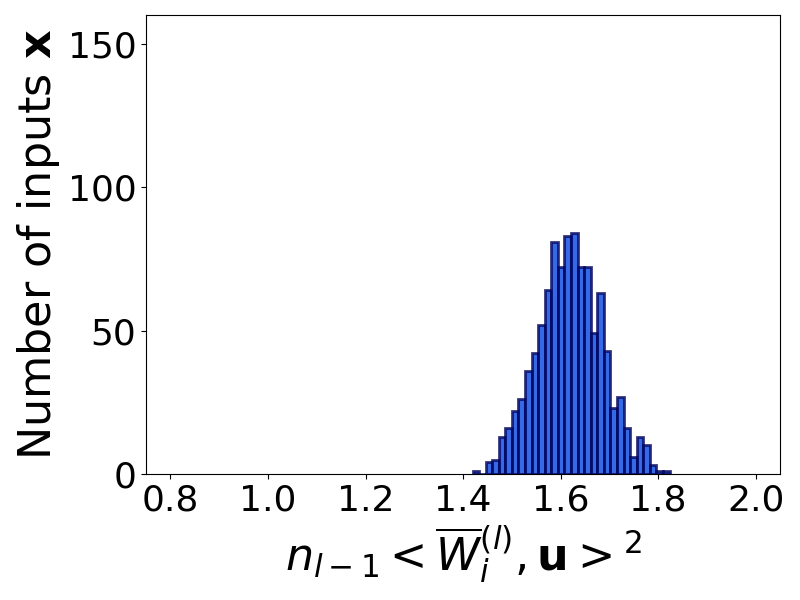}
            \end{tabular} &
            
            \begin{tabular}{c}
                \centering
                \includegraphics[trim={1.6cm 2cm 0 0}, clip, width=0.18\textwidth]{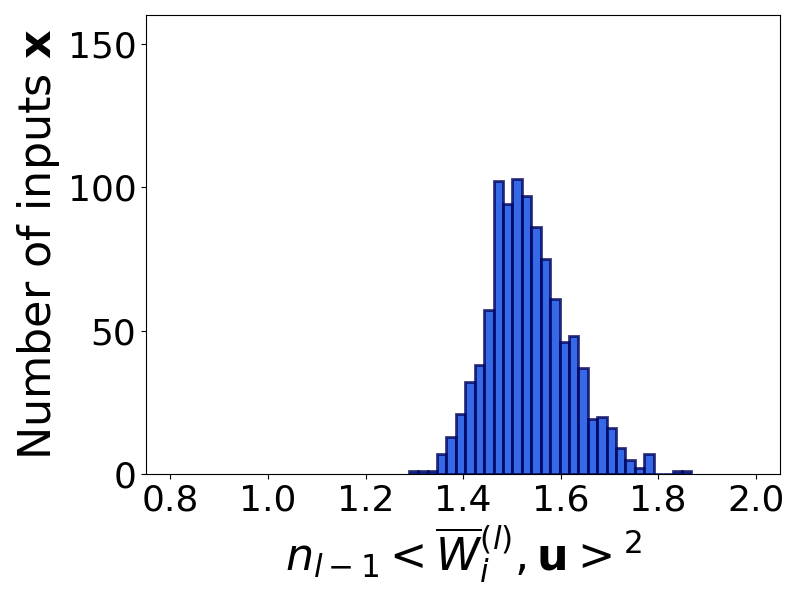}
            \end{tabular} &
            
            \begin{tabular}{c}
                \centering
                \includegraphics[trim={1.6cm 2cm 0 0}, clip, width=0.18\textwidth]{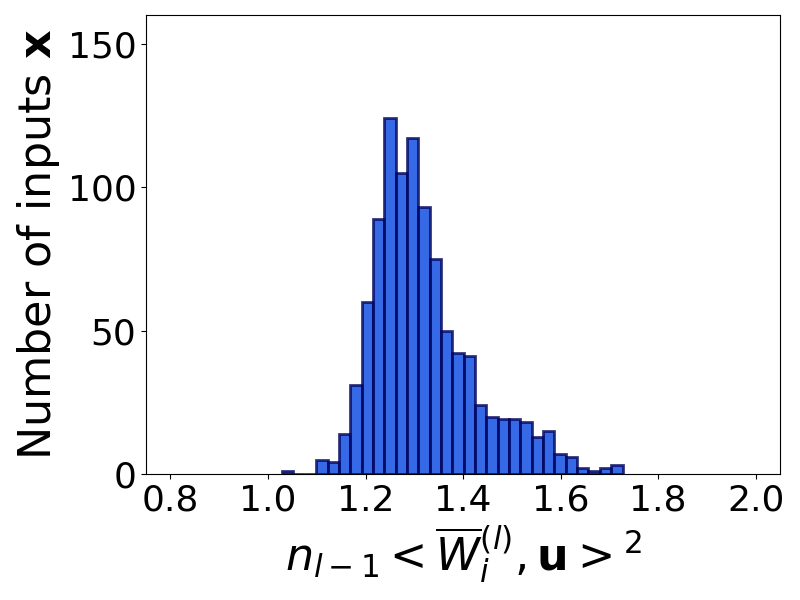}
            \end{tabular} &
            
            \begin{tabular}{c}
                \centering
                \includegraphics[trim={1.6cm 2cm 0 0}, clip, width=0.18\textwidth]{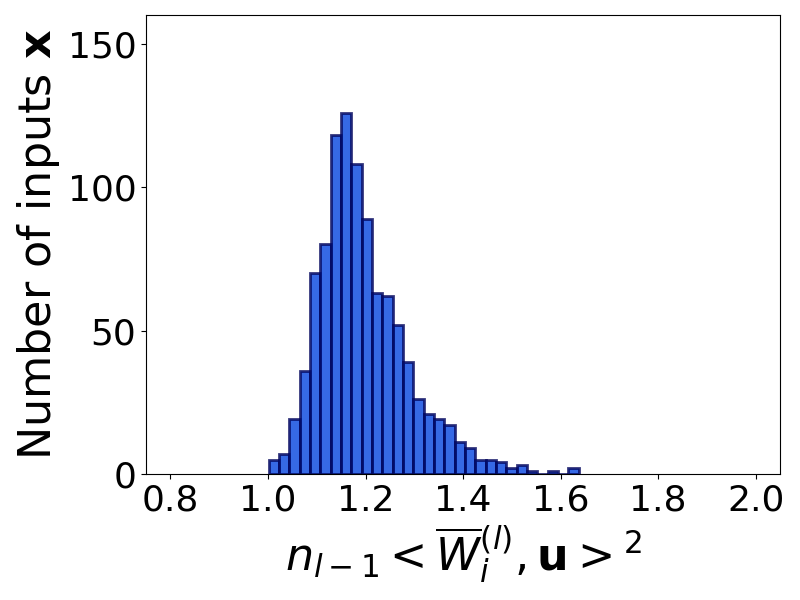}
            \end{tabular}\\
            
            {\rotatebox[origin=c]{90}{$10$ layers}} &
            \begin{tabular}{c}
                \centering
                \includegraphics[trim={1.6cm 2cm 0 0}, clip, width=0.18\textwidth]{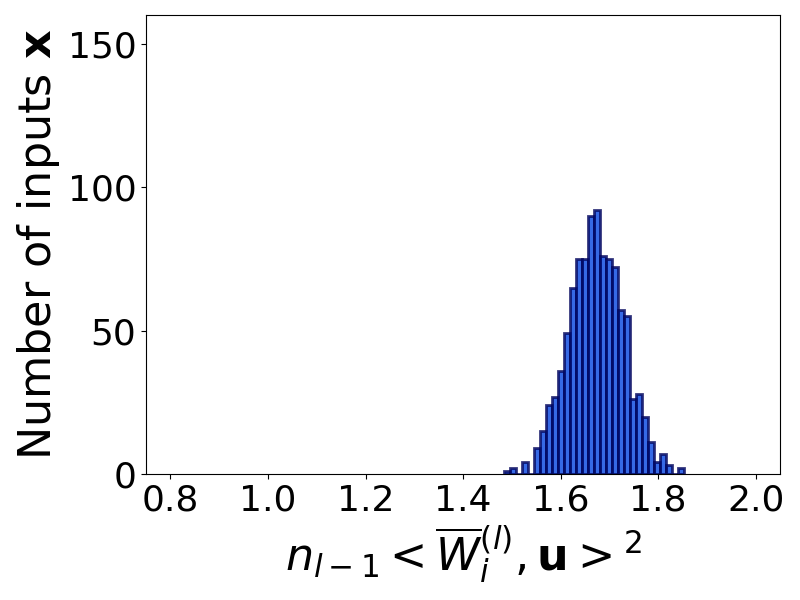}
            \end{tabular} &
            
            \begin{tabular}{c}
                \centering
                \includegraphics[trim={1.6cm 2cm 0 0}, clip, width=0.18\textwidth]{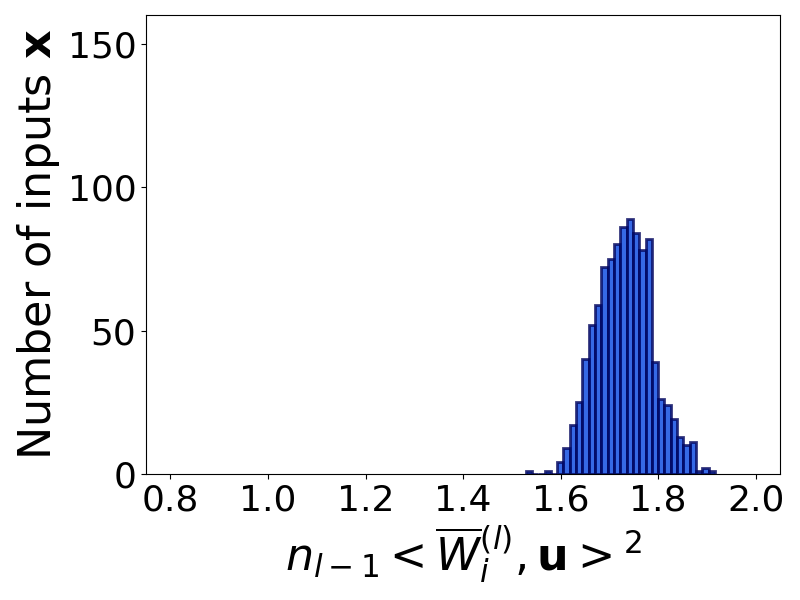}
            \end{tabular} &
            
            \begin{tabular}{c}
                \centering
                \includegraphics[trim={1.6cm 2cm 0 0}, clip, width=0.18\textwidth]{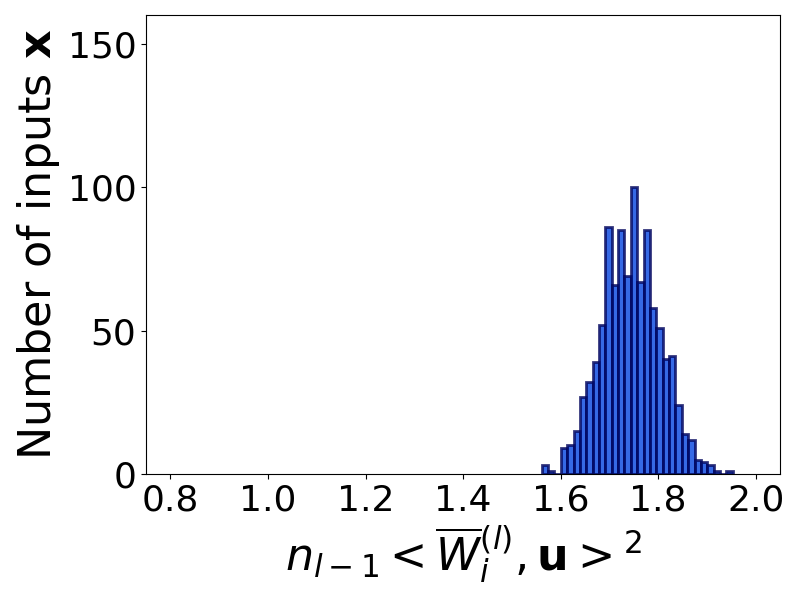}
            \end{tabular} &
            
            \begin{tabular}{c}
                \centering
                \includegraphics[trim={1.6cm 2cm 0 0}, clip, width=0.18\textwidth]{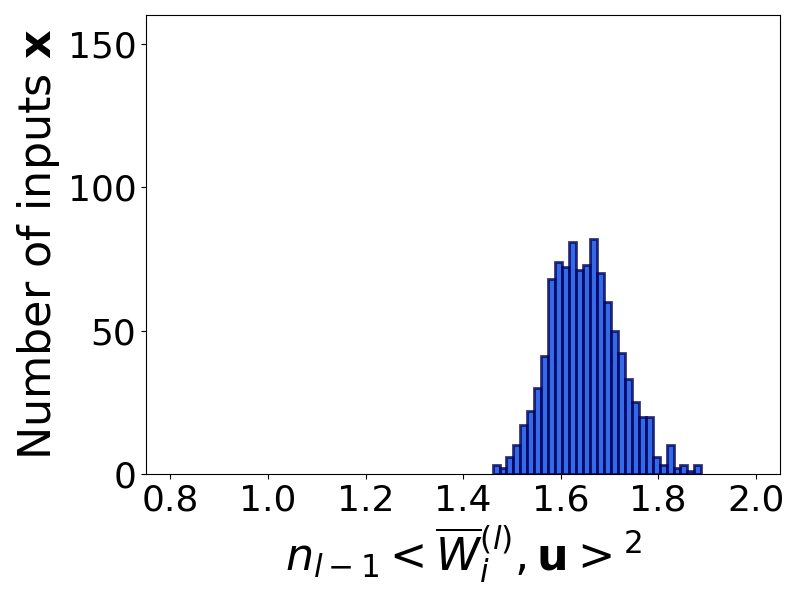}
            \end{tabular} &
            
            \begin{tabular}{c}
                \centering
                \includegraphics[trim={1.6cm 2cm 0 0}, clip, width=0.18\textwidth]{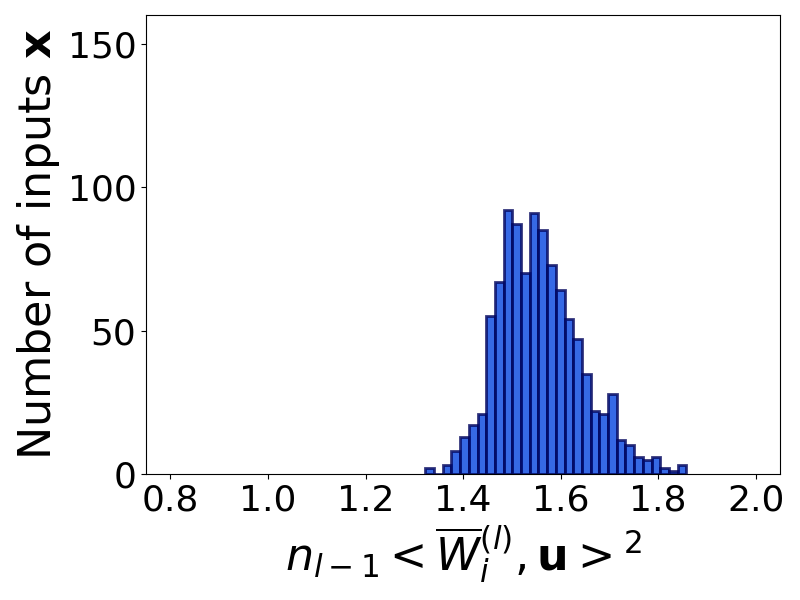}
            \end{tabular}\\
            
            {\rotatebox[origin=c]{90}{$30$ layers}} &
            \begin{tabular}{c}
                \centering
                \includegraphics[trim={1.6cm 2cm 0 0}, clip, width=0.18\textwidth]{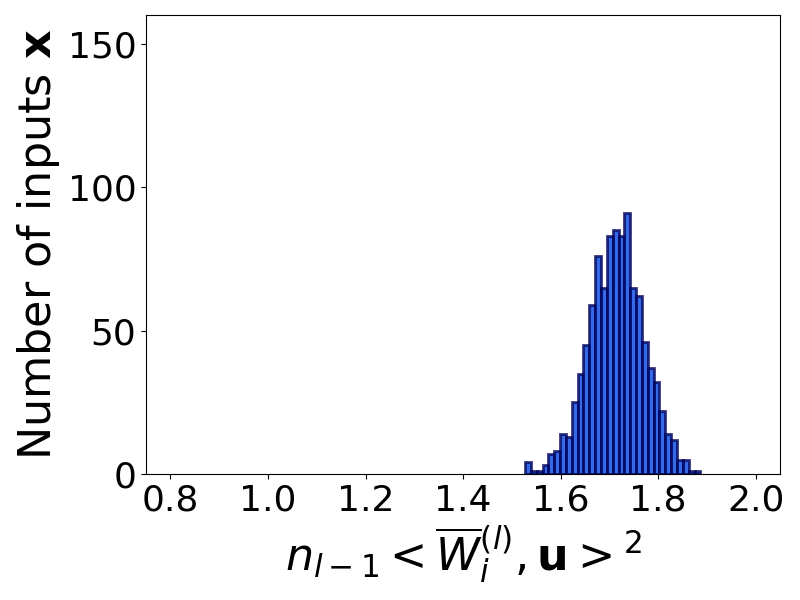}
            \end{tabular} &
            
            \begin{tabular}{c}
                \centering
                \includegraphics[trim={1.6cm 2cm 0 0}, clip, width=0.18\textwidth]{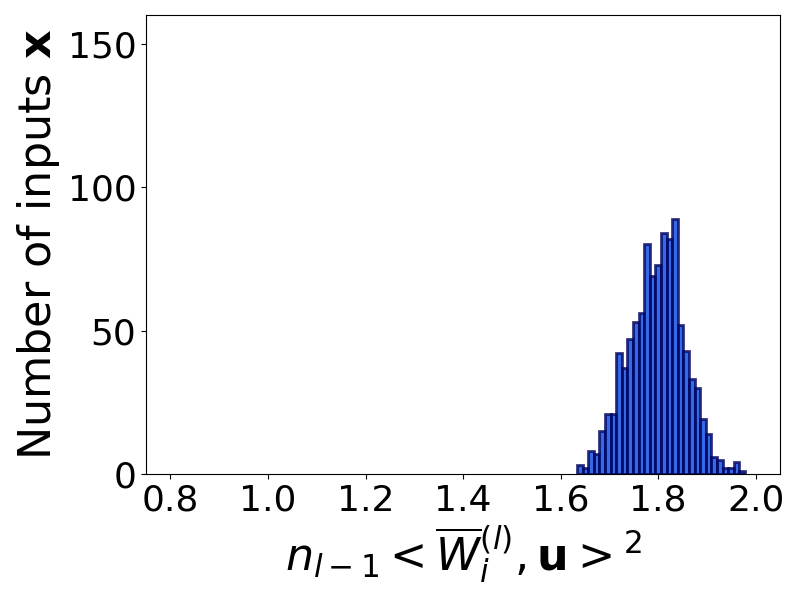}
            \end{tabular} &
            
            \begin{tabular}{c}
                \centering
                \includegraphics[trim={1.6cm 2cm 0 0}, clip, width=0.18\textwidth]{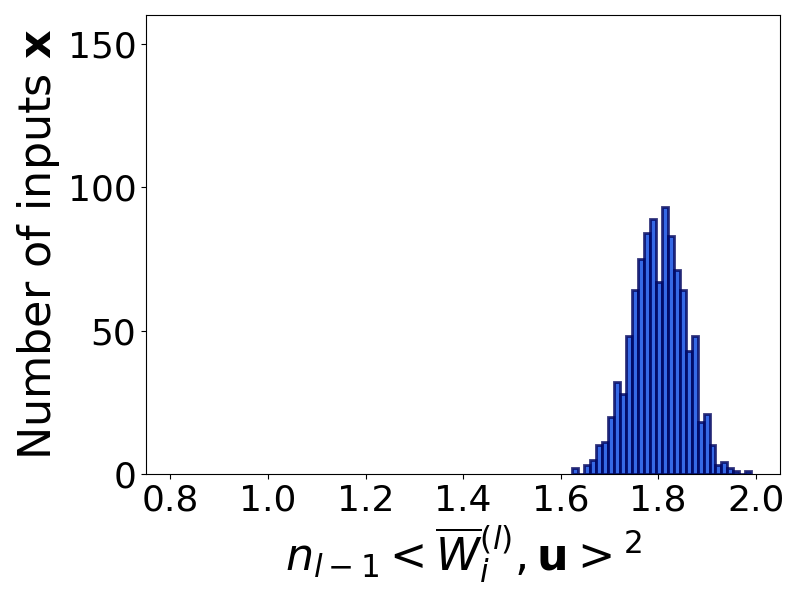}
            \end{tabular} &
            
            \begin{tabular}{c}
                \centering
                \includegraphics[trim={1.6cm 2cm 0 0}, clip, width=0.18\textwidth]{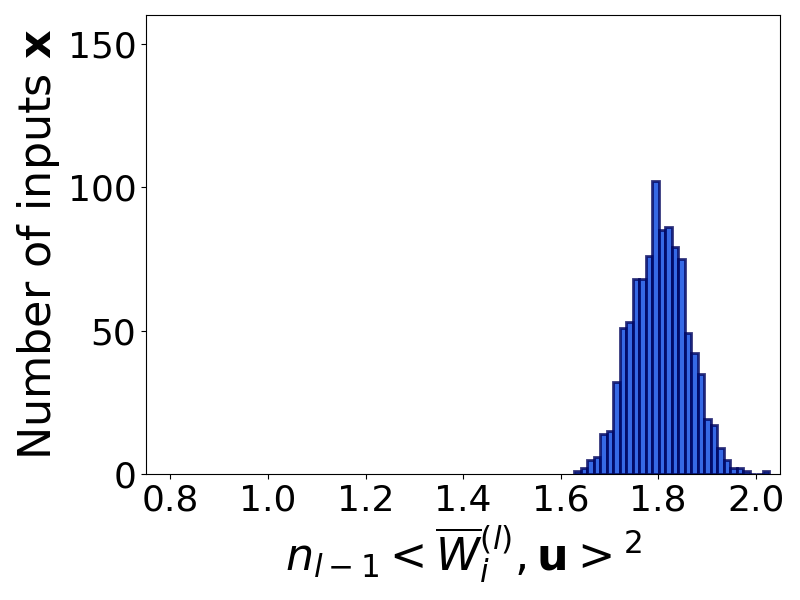}
            \end{tabular} &
            
            \begin{tabular}{c}
                \centering
                \includegraphics[trim={1.6cm 2cm 0 0}, clip, width=0.18\textwidth]{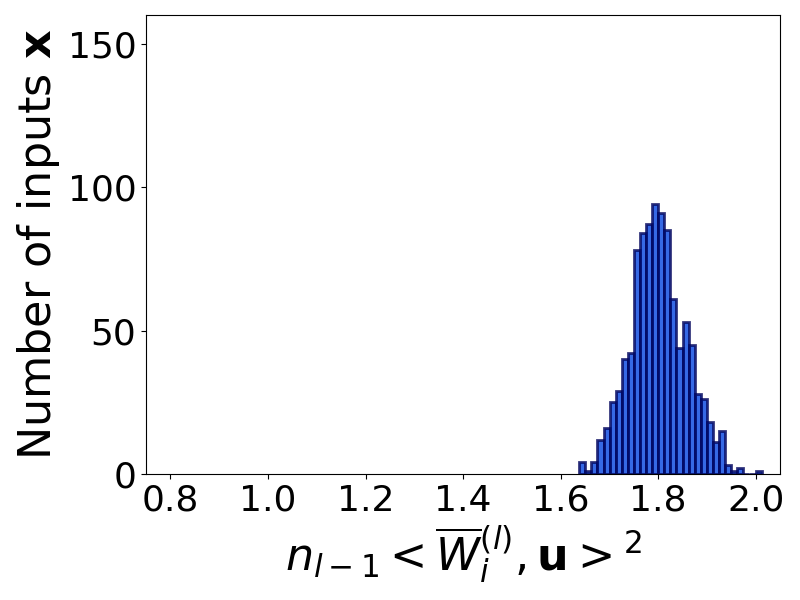}
            \end{tabular}\\
            
        \end{tabular}
    \end{subfigure}
    
    \caption{Shown is the expectation value of the square norm of the directional derivative of the input output map of a maxout network for a fixed random direction with respect to the weights, plotted as a function of the input. 
    Weights and biases are sampled from $\gaussian(0, 1 / \text{fan-in})$ and biases are zero. Inputs are standard Gaussian vectors. Vector $\mathbf{u}$ is a one-hot vector with $1$ at a random position, and it is the same for one setup. We sampled $1000$ inputs and $1000$ initializations for each input.
    The left end corresponds to the second moments of the Gaussian distribution and the right end to the second moment of the largest order statistic.
    Observe that for wide and deep networks the mean is closer to the second moment of the largest order statistic. 
    }
    \label{fig:depth_width_cor}
\end{figure}

\begin{remark}[Effects of the width and depth on a maxout network]
    \label{rem:depth_width_effects}
    According to Theorem \ref{th:jxu_tighter_equality}, the behavior of $\| \mathbf{J}_{\mathcal{N}}(\mathbf{x}) \mathbf{u}\|^2$ in a maxout network depends on the $\cos \gamma_{\bm{\mathfrak{x}}^{(l-1)}, \bm{\mathfrak{u}}^{(l-1)}}$, which changes as the network gets wider or deeper. 
    Figure \ref{fig:depth_width_cor} demonstrates how the width and depth affect $\| \mathbf{J}_{\mathcal{N}}(\mathbf{x}) \mathbf{u}\|^2$.

    \paragraph{Wide shallow networks}
    Since independent and isotropic random vectors in high-dimensional spaces tend to be almost orthogonal \citep[Remark~2.3.5]{vershynin_high-dimensional_2018}, $\cos \gamma_{\mathbf{x}^{(0)}, \mathbf{u}^{(0)}}$ will be close to $0$ with high probability for wide networks if the entries of the vectors $\mathbf{x}$ and $\mathbf{u}$ are i.i.d.\ standard Gaussian (or i.i.d.\ from an isotropic distribution). Hence, we expect that the cosine will be around zero for the earlier layers of wide networks and individual units will behave more as the squared standard Gaussians.
    
    \paragraph{Wide deep networks}
    
    Consider wide and deep networks, where the layers $l = 0, \ldots, L - 1$ are approximately of the same width $n_{l_1} \approx n_{l_2}, l_1, l_2 = 0, \ldots, L-1$.
    Assume that $c = 1 / \glos = 1 / \mathbb{E}[\left(\Xi(\gaussian(0,1), K)\right)^2]$.
    We will demonstrate that under these conditions. $|\cos \gamma_{\bm{\mathfrak{x}}^{(l)}, \bm{\mathfrak{u}}^{(l)}}| \approx 1$ 
    for the later layers for $2 < K < 100$.
    Thus, individual units behave as the squared largest order statistics.
    To see this,
    we need to estimate $\cos \gamma_{\bm{\mathfrak{x}}^{(l)}, \bm{\mathfrak{u}}^{(l)}}$ from Theorem \ref{th:jxu_tighter_equality}, which is defined as
    \begin{align*}
        \cos \gamma_{\bm{\mathfrak{x}}^{(l)}, \bm{\mathfrak{u}}^{(l)}}
        = \rho_{\bm{\mathfrak{x}} \bm{\mathfrak{u}}}^{(l)}
        = \frac{\langle \bm{\mathfrak{x}}^{(l)}, \bm{\mathfrak{u}}^{(l)} \rangle}{\|\bm{\mathfrak{x}}^{(l)}\| \|\bm{\mathfrak{u}}^{(l)}\|}
        = \frac{\langle \bm{\mathfrak{x}}^{(l)}, \overline{\bm{\mathfrak{u}}}^{(l)} \rangle}{\|\bm{\mathfrak{x}}^{(l)}\| \|\overline{\bm{\mathfrak{u}}}^{(l)}\|}
        = \frac{ \frac{n_{l-1}}{n_l}\langle \bm{\mathfrak{x}}^{(l)}, \overline{\bm{\mathfrak{u}}}^{(l)} \rangle}{\left(\sqrt{\frac{n_{l-1}}{n_l}}\|\bm{\mathfrak{x}}^{(l)}\|\right) \left( \sqrt{\frac{n_{l-1}}{n_l}} \|\overline{\bm{\mathfrak{u}}}^{(l)}\|\right)},
    \end{align*}
    where we denoted $\cos \gamma_{\bm{\mathfrak{x}}^{(l)}, \bm{\mathfrak{u}}^{(l)}}$ with $\rho_{\bm{\mathfrak{x}} \bm{\mathfrak{u}}}^{(l)}$,
    and with $\overline{\bm{\mathfrak{u}}}^{(l)}$, $\bm{\mathfrak{u}}^{(l)}$ before the normalization.
    
    Firstly, for $\bm{\mathfrak{x}}^{(l)}$ we get
    \begin{equation}
        \begin{aligned}
            \frac{n_{l-1}}{n_l} \|\bm{\mathfrak{x}}^{(l)}\|^2
            &= \frac{n_{l-1}}{n_l} \left(\sum_{i=1}^{n_l} \left(\max_{k \in [K]} \left\{ \mathfrak{W}^{(l)}_{i, k} \bm{\mathfrak{x}}^{(l-1)} \right\}\right)^2 + 1\right)\\
            &= c \|\bm{\mathfrak{x}}^{(l-1)}\|^2\left(\frac{ 1 }{n_{l}}\sum_{i=1}^{n_l} \left(\max_{k \in [K]} \left\{ \mathfrak{V}^{(l)}_{i, k} \frac{\bm{\mathfrak{x}}^{(l-1)}}{\|\bm{\mathfrak{x}}^{(l-1)}\|} \right\}\right)^2 + \frac{n_{l-1}}{c \|\bm{\mathfrak{x}}^{(l-1)}\|^2 n_l}\right)\\
            &\stackrel{d}{=} c \|\bm{\mathfrak{x}}^{(l-1)}\|^2\left(\frac{ 1 }{n_{l}}\sum_{i=1}^{n_l} \Xi_{l, i}^2 + \frac{n_{l-1}}{c \|\bm{\mathfrak{x}}^{(l-1)}\|^2 n_l}\right),
            \label{eq:xnorm}
        \end{aligned}
    \end{equation}
    where in the second line we used that $\mathfrak{W}^{(l)}_{i, k} = \sqrt{c / n_{l-1}} \mathfrak{V}^{(l)}_{i, k}$, $\mathfrak{V}^{(l)}_{i, k, j} \sim \gaussian(0, 1), j = 1, \ldots, n_{l-1}$.
    In the third line, $\Xi_{l, i} \stackrel{d}{=} \Xi(\gaussian(0, 1), K)$ is the largest order statistic in a sample of $K$ standard Gaussians,
    since by Lemma~\ref{lem:gaussian_variable}, $\mathfrak{V}^{(l)}_{i, k} \bm{\mathfrak{x}}^{(l-1)} / \| \bm{\mathfrak{x}}^{(l-1)}\|$ are mutually independent standard Gaussian random variables.
    When the network width is large, $1 / n_l \sum_{i=1}^{n_l} \Xi_{l, i}^2$ approximates the second moment of the largest order statistic, and $n_{l-1} / n_l \approx 1$ when the layer widths are approximately the same. Then
    \begin{align*}
        \frac{n_{l-1}}{n_l} \|\bm{\mathfrak{x}}^{(l)}\|^2
        \approx c \|\bm{\mathfrak{x}}^{(l-1)}\|^2\left(\mathbb{E} \left[ \Xi^2 \right] + \frac{1}{c \|\bm{\mathfrak{x}}^{(l-1)}\|^2}\right).
    \end{align*}
    Now we will show that $1 / \|\bm{\mathfrak{x}}^{(l-1)}\|^2 \approx 0$. Firstly, by the same reasoning as above,
    \begin{align*}
        &\|\bm{\mathfrak{x}}^{(l-1)}\|^2 
        = \sum_{i=1}^{n_{l-1}} \left(\max_{k \in [K]} \left\{ \mathfrak{W}^{(l)}_{i, k} \bm{\mathfrak{x}}^{(l-2)} \right\}\right)^2 + 1\\
        &\stackrel{d}{=} 
        \|\bm{\mathfrak{x}}^{(0)}\|^2 \frac{c^{l-1} n_{l-1}}{n_0} \prod_{j = 1}^{l-1} \frac{1}{n_{j}} \sum_{i=1}^{n_{j}} \Xi_{l, i}^2
        + \cdots 
        + c^{2} \frac{n_{l-1}}{n_{l-3}} \prod_{j = l-2}^{l-1} \frac{1}{n_{j}} \sum_{i=1}^{n_{j}} \Xi_{l, i}^2
        + \frac{c n_{l-1}}{n_{l-2}} \frac{1}{n_{l-1}}\sum_{i=1}^{n_{l-1}} \Xi_{l, i}^2 + 1.
    \end{align*}
    Since we assumed that the layer widths are large and approximately the same, 
    \begin{align*}
        &\|\bm{\mathfrak{x}}^{(l-1)}\|^2 
        \approx
        \|\bm{\mathfrak{x}}^{(0)}\|^2 \left(c \mathbb{E}[\Xi^2] \right)^{l-1}
        + \cdots 
        + c \mathbb{E}[\Xi^2] + 1
        = \|\bm{\mathfrak{x}}^{(0)}\|^2 \left(c \mathbb{E}[\Xi^2] \right)^{l-1} + \sum_{j = 0}^{l-2} \left(c \mathbb{E}[\Xi^2] \right)^{j}.
    \end{align*}
    Using the assumption that $c = 1 / \mathbb{E}[\Xi^2]$, we obtain that $\|\bm{\mathfrak{x}}^{(l-1)}\|^2 \approx \|\bm{\mathfrak{x}}^{(0)}\|^2 + (l-1)$
    and goes to infinity with the network depth. Hence, $1 / \|\bm{\mathfrak{x}}^{(l-1)}\|^2 \approx 0$ and
    \begin{align*}
        \frac{n_{l-1}}{n_l} \|\bm{\mathfrak{x}}^{(l)}\|^2
        \approx c \|\bm{\mathfrak{x}}^{(l-1)}\|^2 \mathbb{E} \left[ \Xi^2 \right].
    \end{align*}
    
    Now consider $\overline{\bm{\mathfrak{u}}}^{(l)}$.
    Using the reasoning from Theorem \ref{th:jxu_tighter_equality}, see equations \eqref{eq:step1_res} and \eqref{eq:cosine_res},
    $\overline{\bm{\mathfrak{u}}}^{(l)}_i \stackrel{d}{=} 
    c / n_{l-1} ( \Xi_{l, i} \rho_{\bm{\mathfrak{x}} \bm{\mathfrak{u}}}^{(l-1)} + v_i\sqrt{1 - (\rho_{\bm{\mathfrak{x}} \bm{\mathfrak{u}}}^{(l-1)})^2}), i = 1, \ldots, n_{l}$, $v_i \sim \gaussian(0, 1)$.
    Then in a wide network
    \begin{align}
        \|\overline{\bm{\mathfrak{u}}}^{(l)}\|^2
        \approx \frac{c n_l}{n_{l-1}} \mathbb{E} \left[ \left( \Xi \rho_{\bm{\mathfrak{x}} \bm{\mathfrak{u}}}^{(l-1)} + v \sqrt{1 - \left(\rho_{\bm{\mathfrak{x}} \bm{\mathfrak{u}}}^{(l-1)}\right)^2} \right)^2 \right].
        \label{eq:unorm}
    \end{align}
    Note that the random variable $\Xi$ in equations \eqref{eq:xnorm} and \eqref{eq:unorm} is the same based on the derivations in Theorem \ref{th:jxu_tighter_equality}, to see this, compare equations \eqref{eq:vx} and \eqref{eq:step1_res}.
    
    Similarly, for the dot product $\langle \bm{\mathfrak{x}}^{(l)}, \overline{\bm{\mathfrak{u}}}^{(l)} \rangle$ in a wide network we obtain that
    \begin{align*}
        \langle \bm{\mathfrak{x}}^{(l)}, \overline{\bm{\mathfrak{u}}}^{(l)} \rangle
        \approx \|\bm{\mathfrak{x}}^{(l-1)}\| \frac{c n_l}{n_{l-1}} \mathbb{E} \left[\Xi\left(\Xi \rho_{\bm{\mathfrak{x}} \bm{\mathfrak{u}}}^{(l-1)} + v \sqrt{1 - \left(\rho_{\bm{\mathfrak{x}} \bm{\mathfrak{u}}}^{(l-1)}\right)^2} \right)\right].
    \end{align*}
    Hence, we have the following recursive map for $\rho_{\bm{\mathfrak{x}} \bm{\mathfrak{u}}}^{(l)}$
    \begin{align*}
        \rho_{\bm{\mathfrak{x}} \bm{\mathfrak{u}}}^{(l)}
        &= \frac{\mathbb{E} \left[\Xi\left(\Xi \rho_{\bm{\mathfrak{x}} \bm{\mathfrak{u}}}^{(l-1)} + v \sqrt{1 - \left(\rho_{\bm{\mathfrak{x}} \bm{\mathfrak{u}}}^{(l-1)}\right)^2} \right)\right]}{
        \sqrt{\mathbb{E} \left[ \Xi^2 \right]}
        \sqrt{\mathbb{E} \left[ \left( \Xi \rho_{\bm{\mathfrak{x}} \bm{\mathfrak{u}}}^{(l-1)} + v \sqrt{1 - \left(\rho_{\bm{\mathfrak{x}} \bm{\mathfrak{u}}}^{(l-1)}\right)^2} \right)^2 \right]}}\\
        &= \frac{1}{\sqrt{\mathbb{E} \left[ \Xi^2 \right]}} \frac{ \rho_{\bm{\mathfrak{x}} \bm{\mathfrak{u}}}^{(l-1)} \mathbb{E}[\Xi^2]}{\sqrt{\left( \rho_{\bm{\mathfrak{x}} \bm{\mathfrak{u}}}^{(l-1)}\right)^2 (\mathbb{E}[\Xi^2] - 1) + 1}},
    \end{align*}
    where we used independence of $v$ and $\Xi$, see Theorem \ref{th:jxu_tighter_equality}, and that $\mathbb{E}[v] = 0$ and $\mathbb{E}[v^2] = 1$.
    This map has fixed points $\rho^* = \pm1$, which can be confirmed by direct calculation.
    To check if these fixed points are stable, we need to consider the values of the derivative $\partial \rho_{\bm{\mathfrak{x}} \bm{\mathfrak{u}}}^{(l)} / \partial \rho_{\bm{\mathfrak{x}} \bm{\mathfrak{u}}}^{(l-1)}$ at them.
    We obtain
    \begin{align*}
        \frac{\partial \rho_{\bm{\mathfrak{x}} \bm{\mathfrak{u}}}^{(l)}}{\partial \rho_{\bm{\mathfrak{x}} \bm{\mathfrak{u}}}^{(l-1)}}
        = \left(\mathbb{E} [\Xi^2]\right)^{\frac{1}{2}} \left(\left(\rho_{\bm{\mathfrak{x}} \bm{\mathfrak{u}}}^{(l-1)}\right)^2 (\mathbb{E}[\Xi^2] - 1) + 1\right)^{-\frac{3}{2}}.
    \end{align*}
    When $\rho_{\bm{\mathfrak{x}} \bm{\mathfrak{u}}}^{(l-1)} = \pm 1$ this partial derivative equals $1 / \mathbb{E}[\Xi^2] < 1$ for $K > 2$, since $\mathbb{E}[\Xi^2] > 1$, see Table \ref{tab:constants} for $K = 2, \ldots, 10$ and Figure~\ref{fig:second_gaussianlos_moment} for $K = 2, \ldots, 100$. 
    Hence, the fixed points are stable \citep[Chapter~10.1]{strogatz_nonlinear_2018}. 
    Note that for $K = 2$, $1 / \mathbb{E}[\Xi^2] = 1$, and this analysis is inconclusive. 
    Therefore, if the network parameters are sampled from $\gaussian(0, c / n_{l-1})$, $c = 1 / \glos = 1 / \mathbb{E}[\Xi(\gaussian(0,1), K)^2]$,
    we expect that $|\cos \gamma_{\bm{\mathfrak{x}}^{(l)}, \bm{\mathfrak{u}}^{(l)}}| \approx 1$ 
    for the later layers of deep networks and individual units will behave more as the squared largest order statistics. Figure \ref{fig:cosine} demonstrates convergence of $|\cos \gamma_{\bm{\mathfrak{x}}^{(l)}, \bm{\mathfrak{u}}^{(l)}}|$ to $1$ with the depth for wide networks, and Figure~\ref{fig:cosine_init} shows that there is no convergence for $c < 1 / \mathbb{E}[\Xi^2]$ and that the cosine still converges for $c > 1 / \mathbb{E}[\Xi^2]$.
\end{remark}

\begin{remark}[Expectation of $\| \mathbf{J}_{\mathcal{N}}(\mathbf{x}) \mathbf{u} \|^2$ in a wide and deep network]
    \label{rem:wide_2nd_moment}
    According to Remark \ref{rem:depth_width_effects}, for deep and wide networks, we can expect that $|\cos \gamma_{\bm{\mathfrak{x}}^{(l-1)}, \bm{\mathfrak{u}}^{(l-1)}}| \approx 1$ if $c = 1 / \glos$, which allows obtaining an approximate equality for the expectation of $\| \mathbf{J}_{\mathcal{N}}(\mathbf{x}) \mathbf{u} \|^2$. Hence, using Theorem \ref{th:jxu_tighter_equality},
    \begin{align}
        \|\mathbf{J}_{\mathcal{N}} (\mathbf{x}) \mathbf{u}\|^2
        \approx 
        \frac{1}{n_{0}} \chi_{n_L}^2 \prod_{l = 1}^{L - 1} \frac{c}{n_{l}} \sum_{i=1}^{n_l} \left( \Xi_{l, i}(\gaussian(0, 1), K)  \right)^2.
        \label{eq:deep_net}
    \end{align}
    Then, using mutual independence of the variables in equation \eqref{eq:deep_net},
    \begin{align*}
        \mathbb{E}[\| \mathbf{J}_{\mathcal{N}}(\mathbf{x}) \mathbf{u} \|^2]
        \approx \frac{1}{n_{0}} \mathbb{E}\left[ \chi^2_{n_{L}} \right] \prod_{l=1}^{L - 1} \frac{c}{n_{l}} \sum_{i=1}^{n_l} \mathbb{E}\left[ \left(\Xi_{l, i}(\gaussian(0, 1), K)\right)^2 \right].
    \end{align*}
    Since $\glos = \mathbb{E}[ (\Xi_{l, i}(\gaussian(0, 1), K))^2 ]$, see Table~\ref{tab:constants}, and $c = 1 / \glos$, we get
    \begin{align*}
        \mathbb{E}[\| \mathbf{J}_{\mathcal{N}}(\mathbf{x}) \mathbf{u} \|^2]
        \approx \frac{n_L}{n_{0}} (c \glos)^{L - 1} = \frac{n_L}{n_{0}}.
    \end{align*}
\end{remark}

\begin{remark}[Lower bound on the moments in a wide and deep network]
    Using \eqref{eq:deep_net} and taking into account the mutual independence of the variables,
    \begin{equation}
        \begin{aligned}
            \mathbb{E}[\| \mathbf{J}_{\mathcal{N}}(\mathbf{x}) \mathbf{u}\|^{2t}]
            &\approx
            \mathbb{E} \left[\left(
            \frac{1}{n_{0}} \chi_{n_L}^2 \prod_{l = 1}^{L - 1} \frac{c}{n_{l}} \sum_{i=1}^{n_l} \left( \Xi_{l, i}(\gaussian(0, 1), K)  \right)^2
            \right)^t \right]\\
            &= \left(\frac{n_L}{n_{0}}\right)^t \left(\frac{1}{n_{L}}\right)^t 
            \mathbb{E} \left[
            \left(\chi_{n_L}^2\right)^t \right]
            \prod_{l = 1}^{L - 1} \left(\frac{c}{n_{l}}\right)^t \mathbb{E} \left[\left( \sum_{i=1}^{n_l} \left( \Xi_{l, i}(\gaussian(0, 1), K)  \right)^2
            \right)^t \right]\\
            &\geq \left(\frac{n_L}{n_{0}}\right)^t \left(\frac{1}{n_{L}}\right)^t 
            \mathbb{E} \left[
            \left(\chi_{n_L}^2\right)^t \right]
            \prod_{l = 1}^{L - 1} \left(\frac{c}{n_{l}}\right)^t 
            \left( \sum_{i=1}^{n_l} \mathbb{E} \left[\left( \Xi_{l, i}(\gaussian(0, 1), K)  \right)^2
            \right]\right)^t,
        \end{aligned}
        \label{eq:moments_geq}
    \end{equation}
    where in the last inequality, we used linearity of expectation and Jensen's inequality since taking the $t$th power for $t\geq 1$ is a convex function for non-negative arguments.
    Using the formula for noncentral moments of the chi-squared distribution and the inequality
    $\ln x \geq 1 - 1/x , \forall x > 0$, meaning that 
    $x = \exp \{\ln x \} \geq \exp \{1 - 1/x \}$, we get
    \begin{equation}
        \begin{aligned}
             &\left(\frac{1}{n_{L}}\right)^t \mathbb{E} \left[\left( \chi^2_{n_L} \right)^{t} \right]
            = \left( \frac{1}{n_{l}} \right)^{t} \left( n_L \right) \left( n_L + 2 \right) \cdots \left( n_L + 2t - 2 \right)
            = \prod_{i = 0}^{t-1}\left(1 +\frac{2 i}{n_L}\right)\\
            &\geq
            \exp\left\{\sum_{i = 1}^{t-1}  \left(\frac{2 i}{n_L +2 i}\right)\right\}
            \geq \exp\left\{ \frac{t-1}{2 n_L} \right\},
        \end{aligned}
        \label{eq:chi_lower}
    \end{equation}
    where in the last inequality, we used that $2i / (n_L + 2i) \geq 2 / (n_L + 2) \geq 1 / (2 n_L)$ for all $i, n_L \geq 1$.
    
    Using that $\mathbb{E}[ (\Xi_{l, i}(\gaussian(0, 1), K))^2 ] = \glos$, see Table \ref{tab:constants}, and combing this with \eqref{eq:moments_geq} and \eqref{eq:chi_lower},
    \begin{align}
        \mathbb{E}[\| \mathbf{J}_{\mathcal{N}}(\mathbf{x}) \mathbf{u}\|^{2t}]
        \gtrapprox
        \left(\frac{n_L}{n_{0}}\right)^t 
        \exp\left\{ \frac{t-2}{2 n_L} \right\}
        \left( c \glos \right)^{t (L - 1)}.
        \label{eq:lower_bound}
    \end{align}
    The bound in \eqref{eq:lower_bound} can be tightened if a tighter lower bound on the moments of the sum of the squared largest order statistics in a sample of $K$ standard Gaussians is known. To derive a lower bound on the moments $t \geq 2$ for the general case in Corollary \ref{cor:jacobian_moments}, it is necessary to obtain a non-trivial lower bound on the moments of the sum of the smallest order statistics in a sample of $K$ chi-squared random variables with $1$ degree of freedom. 
\end{remark}

\section{Activation Length}
\label{app:activation_length}

Here we prove the results from Subsection~\ref{subsec:activation_length}. Figure \ref{fig:act_len} demonstrates a close match between the estimated normalized activation length and the behavior predicted in Corollary \ref{cor:act_length_distr} and \ref{cor:act_len_moments}.

\begin{corollary}[Distribution of the normalized activation length]
    \label{cor:act_length_distr}
    Consider a maxout network with the settings of Section~\ref{sec:prelims}.
    Then, almost surely with respect to the parameter initialization, 
    for any input into the network $\mathbf{x} \in \mathbb{R}^{n_0}$
    and $l' = 1, \ldots, L-1$, the normalized activation length $ A^{(l')}$ is equal in distribution to
    \begin{align*}
        \|\bm{\mathfrak{x}}^{(0)}\|^2 \frac{1}{n_0} \prod_{l = 1}^{l'} \left(\frac{c}{n_{l}} \sum_{i = 1}^{n_l} \Xi_{l, i}(\gaussian(0, 1), K)^2\right)
        + \sum_{j = 2}^{l'}
        \left[ \frac{1}{n_{j-1}} \prod_{l = j}^{l'} \left(\frac{c}{n_{l}} \sum_{i = 1}^{n_l} \Xi_{l, i}(\gaussian(0, 1), K)^2\right)\right],
    \end{align*}
    where 
    $\bm{\mathfrak{x}}^{(0)} \defeq (\mathbf{x}_1, \ldots, \mathbf{x}_{n_{0}}, 1) \in \mathbb{R}^{n_{0} + 1}$,
    $\Xi_{l, i}(\gaussian(0, 1), K)$
    is the largest order statistic in a sample of $K$ standard Gaussian random variables,
    and
    $\Xi_{l, i}(\gaussian(0, 1), K)$ are stochastically independent.
    Notice that variables $\Xi_{l, i}(\gaussian(0, 1), K)$ with the same indices are the same random variables. 
\end{corollary}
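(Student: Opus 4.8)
The plan is to reduce the claim to a scalar affine recursion for the bias-augmented squared norm $R_l \defeq \|\bm{\mathfrak{x}}^{(l)}\|^2 = \|\mathbf{x}^{(l)}\|^2 + 1$, running the same computation that underlies Theorem~\ref{th:jxu_tighter_equality} but with the direction vector $\bm{\mathfrak{u}}^{(l-1)}$ taken to equal $\bm{\mathfrak{x}}^{(l-1)}/\|\bm{\mathfrak{x}}^{(l-1)}\|$ at every layer, which forces all the angles $\gamma_{\bm{\mathfrak{x}}^{(l-1)},\bm{\mathfrak{u}}^{(l-1)}}$ to zero and removes the mixed $v_i\sqrt{1-\cos^2\gamma}$ contributions. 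Concretely, write $\mathfrak{W}^{(l)} = \sqrt{c/n_{l-1}}\,\mathfrak{V}^{(l)}$ with $\mathfrak{V}^{(l)}_{i,k}$ the $(n_{l-1}+1)$-dimensional standard Gaussian rows of the bias-augmented weight matrix, as in that proof. Then the $i$th output coordinate factors as $\mathbf{x}^{(l)}_i = \max_{k\in[K]}\{\mathfrak{W}^{(l)}_{i,k}\bm{\mathfrak{x}}^{(l-1)}\} = \sqrt{c/n_{l-1}}\,\|\bm{\mathfrak{x}}^{(l-1)}\|\,\max_{k\in[K]}\{\mathfrak{V}^{(l)}_{i,k}\,\bm{\mathfrak{x}}^{(l-1)}/\|\bm{\mathfrak{x}}^{(l-1)}\|\}$, the norm extraction being legitimate because $\|\bm{\mathfrak{x}}^{(l-1)}\|\ge 1$. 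Since $\bm{\mathfrak{x}}^{(l-1)}$ is a function of $\mathfrak{W}^{(1)},\dots,\mathfrak{W}^{(l-1)}$ and therefore independent of $\mathfrak{W}^{(l)}$, Lemma~\ref{lem:gaussian_variable} shows the $n_l\times K$ array $(\mathfrak{V}^{(l)}_{i,k}\,\bm{\mathfrak{x}}^{(l-1)}/\|\bm{\mathfrak{x}}^{(l-1)}\|)_{i,k}$ has i.i.d.\ $\gaussian(0,1)$ entries and is independent of $\bm{\mathfrak{x}}^{(l-1)}$; setting $\Xi_{l,i} \defeq \max_{k\in[K]}\{\mathfrak{V}^{(l)}_{i,k}\,\bm{\mathfrak{x}}^{(l-1)}/\|\bm{\mathfrak{x}}^{(l-1)}\|\}$, squaring, and summing over rows gives the pathwise identity
\begin{align*}
R_l = \frac{c}{n_{l-1}}\, R_{l-1} \sum_{i=1}^{n_l}\Xi_{l,i}^2 \;+\; 1 ,
\end{align*}
where each $\Xi_{l,i}$ has the distribution of the largest order statistic of $K$ standard Gaussians.

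Next I would record the dependence structure. The matrices $\mathfrak{W}^{(1)},\dots,\mathfrak{W}^{(L-1)}$ are mutually independent, and, by the argument just given, for each $l$ the batch $(\Xi_{l,i})_i$ is independent of $\mathfrak{W}^{(1)},\dots,\mathfrak{W}^{(l-1)}$, hence of $R_{l-1}$ and of all lower-layer order statistics; consequently the whole family $\{\Xi_{l,i}: 1\le l\le L-1,\ 1\le i\le n_l\}$ is mutually independent and independent of the (deterministic) quantity $R_0 = \|\bm{\mathfrak{x}}^{(0)}\|^2$. Unrolling the affine recursion from $l=1$ to $l'$ (a purely algebraic step) gives, almost surely,
\begin{align*}
R_{l'} = \|\bm{\mathfrak{x}}^{(0)}\|^2\prod_{l=1}^{l'}\frac{c}{n_{l-1}}\sum_{i=1}^{n_l}\Xi_{l,i}^2 \;+\; \sum_{j=2}^{l'}\;\prod_{l=j}^{l'}\frac{c}{n_{l-1}}\sum_{i=1}^{n_l}\Xi_{l,i}^2 \;+\; 1 .
\end{align*}

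Finally, $A^{(l')} = \|\mathbf{x}^{(l')}\|^2/n_{l'} = (R_{l'}-1)/n_{l'}$, so the trailing $+1$ cancels, and I would carry out the cosmetic index shift $\frac{1}{n_{l'}}\prod_{l=j}^{l'}\frac{c}{n_{l-1}} = \frac{1}{n_{j-1}}\prod_{l=j}^{l'}\frac{c}{n_l}$ (and the analogous shift $\frac{1}{n_{l'}}\prod_{l=1}^{l'}\frac{c}{n_{l-1}} = \frac{1}{n_0}\prod_{l=1}^{l'}\frac{c}{n_l}$ for the first product) to land exactly on the displayed expression in the statement. The independence properties collected above then identify its distribution with the one asserted, under the stated convention that $\Xi_{l,i}$ appearing in several summands denote the same variable — which is automatic, since these are literally the same variables in the unrolled recursion. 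The ``almost surely'' qualifier is inherited from the framing of Theorem~\ref{th:jxu_tighter_equality} and is vacuous here: the activation length is defined for every parameter value, and only the max \emph{values}, never the $\argmax$ selections, enter the computation, so there is no exceptional null set.

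The argument is essentially mechanical; the only point that needs genuine care is the second step, namely the joint dependence bookkeeping — one must verify that the extracted Gaussian array at layer $l$ is independent of \emph{everything} produced in layers $1,\dots,l-1$, not merely that it has the correct marginal law, because the same $\Xi_{l,i}$ are reused across the summands of the final formula and hence cannot be replaced by independent copies. Everything else (extracting the scalar $\|\bm{\mathfrak{x}}^{(l-1)}\|$ from the maximum, invoking Lemma~\ref{lem:gaussian_variable}, unrolling the recursion, and the index arithmetic) is routine.
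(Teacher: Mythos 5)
Your proposal is correct and follows essentially the same route as the paper: both factor the positive scalar $\sqrt{c/n_{l-1}}\,\|\bm{\mathfrak{x}}^{(l-1)}\|$ out of the maximum so that each unit contributes the squared largest order statistic of $K$ standard Gaussians (the paper phrases this as the $\cos\gamma=1$ specialization of the basis construction in Theorem~\ref{th:jxu_tighter_equality}, which is the same computation), and both then unroll the resulting affine recursion for $\|\bm{\mathfrak{x}}^{(l)}\|^2$ and shift indices. Your explicit bookkeeping of the layerwise independence and of the pathwise-versus-distributional steps is a slightly more careful write-up of what the paper leaves implicit, but it is not a different argument.
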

\begin{proof}
    Define $\bm{\mathfrak{x}}^{(l)} = (\mathbf{x}_1, \ldots, \mathbf{x}_{n_{l}}, 1) \in \mathbb{R}^{n_l + 1}$.
    Append the bias columns to the weight matrices and denote obtained matrices with
    $\mathfrak{W}^{(l)} \in \mathbb{R}^{n_l \times (n_{l-1} + 1)}$.
    Denote
    $\mathfrak{W}_{i, k'}^{(l)} \in \mathbb{R}^{n_{l-1}+1}$,
    $k' = \argmax_{k\in[K]}\{\langle \mathfrak{W}^{(l)}_{i,k}, \bm{\mathfrak{x}}^{(l - 1)} \rangle \}$,
    with $\overline{\mathfrak{W}}_i^{(l)}$.
    Under this notation, $\|\bm{\mathfrak{x}}^{(l)}\|^2 = ( \|\mathbf{x}^{(l)}\|^2 + 1)$, $ \|\mathbf{x}^{(l)}\| = \| \overline{\mathfrak{W}}^{(l)} \bm{\mathfrak{x}}^{(l-1)}\|$.
    Then $\|\mathbf{x}^{(l')}\|^2$ equals
    \begin{align*}
        \|\mathbf{x}^{(l')}\|^2 
        &=  \| \overline{\mathfrak{W}}^{(l')} \bm{\mathfrak{x}}^{(l'-1)}\|^2
        = \left\| \overline{\mathfrak{W}}^{(l')} \frac{\bm{\mathfrak{x}}^{(l'-1)}}{\|\bm{\mathfrak{x}}^{(l'-1)}\|} \right\|^2
        \| \bm{\mathfrak{x}}^{(l'-1)}\|^2\\
        &= \left\| \overline{\mathfrak{W}}^{(l')} \frac{\bm{\mathfrak{x}}^{(l'-1)}}{\|\bm{\mathfrak{x}}^{(l'-1)}\|} \right\|^2
        \left(
        \left\| \overline{\mathfrak{W}}^{(l'-1)} \frac{\bm{\mathfrak{x}}^{(l'-2)}}{\|\bm{\mathfrak{x}}^{(l'-2)}\|} \right\|^2
        \|\bm{\mathfrak{x}}^{(l'-2)}\|^2 + 1 \right) \\
        &= \dots = \|\bm{\mathfrak{x}}^{(0)}\|^2
        \prod_{l = 1}^{l'} 
        \left\| \overline{\mathfrak{W}}^{(l)} \frac{\bm{\mathfrak{x}}^{(l-1)}}{\|\bm{\mathfrak{x}}^{(l-1)}\|} \right\|^2
        + 
        \sum_{j = 2}^{l'} \left[
        \prod_{l = j}^{l'} \left\| \overline{\mathfrak{W}}^{(l)} \frac{\bm{\mathfrak{x}}^{(l-1)}}{\|\bm{\mathfrak{x}}^{(l-1)}\|} \right\|^2 \right],
    \end{align*}
    where we 
    multiplied and divided $\| \overline{\mathfrak{W}}^{(l)} \bm{\mathfrak{x}}^{(l-1)}\|^2$ by $\|\bm{\mathfrak{x}}^{(l-1)}\|^2$ at each step.
    Using the approach from Theorem~\ref{th:jxu_tighter_equality}, more specifically equations \eqref{eq:std_out}, \eqref{eq:step1_res} and \eqref{eq:cosine_res}, with $\bm{\mathfrak{u}}^{(l)} = \bm{\mathfrak{x}}^{(l)} / \|\bm{\mathfrak{x}}^{(l)}\|$, 
    implying that $\cos \gamma_{\bm{\mathfrak{x}}^{(l)}, \bm{\mathfrak{u}}^{(l)}} = 1$,
    \begin{align*}
        A^{(l')}
        =
        \frac{1}{n_l}\|\mathbf{x}^{(l')}\|^2 
        &\stackrel{d}{=} \|\bm{\mathfrak{x}}^{(0)}\|^2 \frac{1}{n_{l'}} \prod_{l = 1}^{l'} \left(\frac{c}{n_{l-1}} \sum_{i = 1}^{n_l} \Xi_{l, i}^2\right)
        + \frac{1}{n_{l'}}
        \sum_{j = 2}^{l'} \left[ \prod_{l = j}^{l'} \left(\frac{c}{n_{l-1}} \sum_{i = 1}^{n_l} \Xi_{l, i}^2\right)\right]\\
        &= \|\bm{\mathfrak{x}}^{(0)}\|^2 \frac{1}{n_0} \prod_{l = 1}^{l'} \left(\frac{c}{n_{l}} \sum_{i = 1}^{n_l} \Xi_{l, i}^2\right)
        + \sum_{j = 2}^{l'}
        \left[ \frac{1}{n_{j-1}} \prod_{l = j}^{l'} \left(\frac{c}{n_{l}} \sum_{i = 1}^{n_l} \Xi_{l, i}^2\right)\right],
    \end{align*}
    where
    $\Xi_{l, i} = \Xi_{l, i}(\gaussian(0, 1), K)$
    is the largest order statistic in a sample of $K$ standard Gaussian random variables, 
    and stochastic independence of variables $\Xi_{l, i}$ follows from Theorem~\ref{th:jxu_tighter_equality}.
\end{proof}

\begin{figure}
    \centering
    \includegraphics[width=0.35\textwidth]{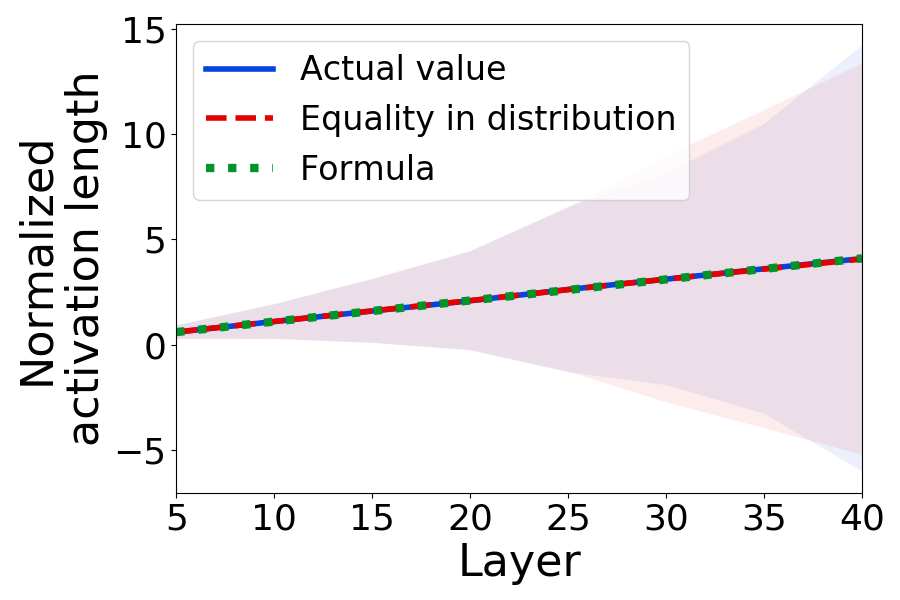}
    \caption{Comparison of the normalized activation length with the equality in distribution result from Corollary~\ref{cor:act_length_distr} and the formula for the mean from Corollary \ref{cor:act_len_moments}. Plotted are means and stds estimated with respect to the distribution of parameters / random variables $\Xi_{l, i}(\gaussian(0, 1), K)$ averaged over $100,000$ initializations, and a numerically evaluated formula for the mean from Corollary~\ref{cor:act_len_moments}. All layers had $10$ neurons.
    The lines for the mean and areas for the std overlap. Note that there is no std for the formula in the plot.
    }
    \label{fig:act_len}
\end{figure}

{
\renewcommand{\thecorollary}{\ref{cor:act_len_moments}}
\begin{corollary}[Moments of the activation length]
    Consider a maxout network with the settings of Section~\ref{sec:prelims}.
    Let 
    $\mathbf{x} \in \mathbb{R}^{n_0}$ be any input into the network.
    Then, for the moments of the normalized activation length, the following results hold.
    
    Mean:
    \begin{align*}
        \mathbb{E}\left[ A^{(l')} \right]
            = \|\bm{\mathfrak{x}}^{(0)}\|^2 \frac{1}{n_0} \left(c \glos\right)^{l'}
            + \sum_{j = 2}^{l'} \left(\frac{1}{n_{j-1}} \left(c \glos\right)^{l'-j+1}\right).
    \end{align*}
    
    Variance:
    \begin{align*}
        \var[A^{(l')}]
        &\leq
        2
        \frac{\|\bm{\mathfrak{x}}^{(0)}\|^{4}}{n_0^2}
        c^{2l'} K^{2l'} 
        \exp\left\{ 
        \sum_{l = 1}^{l'}  \frac{4}{n_l K} \right\}.
    \end{align*}
    
    Moments of the order $t \geq 2$:
    \begin{align*}
        \mathbb{E}\left[ \left(A^{(l')}\right)^t \right]
        &\leq 
        2^{t-1}
        \frac{\|\bm{\mathfrak{x}}^{(0)}\|^{2t}}{n_0^t}
        c^{tl'} K^{tl'} 
        \exp\left\{ 
        \sum_{l = 1}^{l'}  \frac{t^2}{n_l K} \right\} \\
        & \qquad \qquad + (2 (l' - 1))^{t-1}
        \sum_{j = 2}^{l'}\left( \frac{(c K)^{t(l'-j+1)} }{n_{j-1}^t}
        \exp\left\{ \sum_{l = j}^{l'}  \frac{t^2}{n_l K} \right\}
        \right),\\
        \mathbb{E}\left[ \left(A^{(l')}\right)^t \right]
        &\geq 
        \frac{\|\bm{\mathfrak{x}}^{(0)}\|^{2t}}{n_0^t}
        \left(c \glos \right)^{t l'}
        + \sum_{j = 2}^{l'} \frac{\left(c
        \glos \right)^{t(l' - j + 1)}}{n_{j-1}^t}.
    \end{align*}
    where expectation is taken with respect to the distribution of the network weights and biases,
    and
    $\glos$ is a constant depending on K that can be computed approximately, see Table \ref{tab:constants} for the values for $K = 2, \ldots, 10$.
\end{corollary}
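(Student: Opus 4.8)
The plan is to take as given the distributional identity of Corollary~\ref{cor:act_length_distr}, which expresses $A^{(l')} \stackrel{d}{=} T_0 + \sum_{j=2}^{l'} T_j$ with $T_0 = \|\bm{\mathfrak{x}}^{(0)}\|^2 n_0^{-1}\prod_{l=1}^{l'}\big(\tfrac{c}{n_l}\sum_{i=1}^{n_l}\Xi_{l,i}^2\big)$ and $T_j = n_{j-1}^{-1}\prod_{l=j}^{l'}\big(\tfrac{c}{n_l}\sum_{i=1}^{n_l}\Xi_{l,i}^2\big)$, where $\Xi_{l,i} = \Xi_{l,i}(\gaussian(0,1),K)$, all terms are nonnegative, the variables with distinct first index $l$ are mutually independent (so products over layers factor through the expectation), and within a fixed layer the same $\Xi_{l,i}$ is shared across the $T_j$'s. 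Two facts are used repeatedly: (i) $\mathbb{E}[\Xi_{l,i}^2] = \glos$ by definition of $\glos$; and (ii) writing $\Xi_{l,i} = \max_{k\in[K]}\{v_{l,i,k}\}$ for i.i.d.\ standard Gaussians $v_{l,i,k}$, we have $\Xi_{l,i}^2 \le \sum_{k=1}^K v_{l,i,k}^2$, hence $\sum_{i=1}^{n_l}\Xi_{l,i}^2 \le_{st} \chi^2_{n_l K}$. This is the same machinery as in the proof of Corollary~\ref{cor:jacobian_moments}.

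For the mean I would take expectations termwise and factor each product over layers by independence; fact (i) gives $\mathbb{E}\big[\tfrac{c}{n_l}\sum_i \Xi_{l,i}^2\big] = c\glos$, so $\mathbb{E}[T_0] = \|\bm{\mathfrak{x}}^{(0)}\|^2 n_0^{-1}(c\glos)^{l'}$ and $\mathbb{E}[T_j] = n_{j-1}^{-1}(c\glos)^{l'-j+1}$, and summing yields the stated exact formula. For the upper bounds on the $t$-th moments ($t\ge 2$), I would first split with $(a+b)^t \le 2^{t-1}(a^t+b^t)$ applied to $a = T_0$, $b = \sum_{j=2}^{l'}T_j$, and then $\big(\sum_{j=2}^{l'}T_j\big)^t \le (l'-1)^{t-1}\sum_{j=2}^{l'}T_j^t$ by convexity of $x\mapsto x^t$; this accounts for the prefactors $2^{t-1}$ and $(2(l'-1))^{t-1}$. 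It then remains to bound $\mathbb{E}[T_0^t]$ and $\mathbb{E}[T_j^t]$: factoring over layers and using (ii) together with the noncentral chi-squared moment formula $\mathbb{E}[(\chi^2_m)^t] = \prod_{i=0}^{t-1}(m+2i) = m^t\prod_{i=0}^{t-1}(1+\tfrac{2i}{m}) \le m^t \exp\{t^2/m\}$ with $m = n_l K$ gives $\big(\tfrac{c}{n_l}\big)^t\mathbb{E}\big[(\sum_i\Xi_{l,i}^2)^t\big] \le (cK)^t\exp\{t^2/(n_l K)\}$, and multiplying over the relevant range of $l$ produces exactly the stated bounds. The matching lower bound is dual: superadditivity of $x\mapsto x^t$ on nonnegatives gives $A^{(l')t} \ge T_0^t + \sum_{j=2}^{l'}T_j^t$, and factoring over layers with Jensen's inequality $\mathbb{E}\big[(\sum_i\Xi_{l,i}^2)^t\big] \ge (n_l\glos)^t$ yields $\mathbb{E}[T_0^t]\ge \|\bm{\mathfrak{x}}^{(0)}\|^{2t}n_0^{-t}(c\glos)^{tl'}$ and $\mathbb{E}[T_j^t]\ge n_{j-1}^{-t}(c\glos)^{t(l'-j+1)}$, hence the claim.

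For the variance I would use $\var[A^{(l')}] \le \mathbb{E}[(A^{(l')})^2]$ and specialize the $t=2$ estimate above. Getting the precise stated form, which retains only the leading $T_0$ contribution $2\|\bm{\mathfrak{x}}^{(0)}\|^4 n_0^{-2}(cK)^{2l'}\exp\{\sum_{l=1}^{l'}4/(n_l K)\}$, requires subtracting off $(\mathbb{E}[A^{(l')}])^2$ and checking that it absorbs the lower-order, bias-related pieces of the second moment; alternatively, exploiting the recursion $A^{(l')} \stackrel{d}{=} Y_{l'}\big(A^{(l'-1)} + n_{l'-1}^{-1}\big)$ with $Y_{l'} = \tfrac{c}{n_{l'}}\sum_i \Xi_{l',i}^2$ independent of $A^{(l'-1)}$, and the law of total variance, gives a clean inductive route. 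I expect the bulk of the work to be pure bookkeeping — keeping straight which $\Xi_{l,i}$ are shared versus independent, and organizing the decomposition into the main term $T_0$ and the $l'-1$ bias terms so the combinatorial constants come out correctly — while the one genuinely delicate point is the variance refinement, where one must argue that the bias contributions to $\mathbb{E}[(A^{(l')})^2]$ do not inflate the bound beyond the leading term.
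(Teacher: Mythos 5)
Your proposal follows essentially the same route as the paper's proof: the exact mean via layer-wise independence and $\mathbb{E}[\Xi_{l,i}^2]=\glos$, the upper bounds via the power-mean split into the main term and the $l'-1$ bias terms, the bound $\sum_{i}\Xi_{l,i}^2\le \chi^2_{n_l K}$ together with the noncentral chi-squared moment formula and $1+x\le e^x$, and the lower bound via superadditivity plus Jensen. The only divergence is the variance, where you are in fact more careful than the paper: the paper simply bounds $\var[A^{(l')}]$ by its second-moment upper bound (the refinement you flag as delicate is not carried out there, even though the stated variance bound retains only the leading $t=2$ term).
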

\addtocounter{corollary}{-1}
}

\begin{proof}
    \textbf{Mean} \enspace
    Taking expectation in Corollary~\ref{cor:act_length_distr} and using independence of $\Xi_{l, i}(\gaussian(0, 1), K)$, 
    \begin{align}
        \mathbb{E}\left[ A^{(l')} \right]
        = \|\bm{\mathfrak{x}}^{(0)}\|^2 \frac{1}{n_0} \left(c \glos\right)^{l'}
        + \sum_{j = 2}^{l'} \left(\frac{1}{n_{j-1}} \left(c \glos\right)^{l'-j+1}\right),
        \label{eq:actlen_meanres}
    \end{align}
    where $\glos$ is the second moment of $\Xi_{l, i}(\gaussian(0, 1), K)$, see  Table~\ref{tab:constants} for its values for $K = 2, \ldots, 10$.
    
    \textbf{Moments of the order $\bm{t \geq 2}$} \enspace
    Using Corollary \ref{cor:act_length_distr}, we get
    \begin{equation}
        \begin{aligned}
            &\mathbb{E}\left[ \left(A^{(l')}\right)^t \right]\\
            &\qquad = \mathbb{E}\left[
            \left(
            \|\bm{\mathfrak{x}}^{(0)}\|^2 \frac{1}{n_0} \prod_{l = 1}^{l'} \left(\frac{c}{n_{l}} \sum_{i = 1}^{n_l} \Xi_{l, i}^2\right)
            + \sum_{j = 2}^{l'}
            \left[ \frac{1}{n_{j-1}} \prod_{l = j}^{l'} \left(\frac{c}{n_{l}} \sum_{i = 1}^{n_l} \Xi_{l, i}^2\right)\right]
            \right)^t
            \right].
        \label{eq:act_tmoment}
        \end{aligned}
    \end{equation}
    
    \textit{Upper bound} \enspace
    First, we derive an upper bound on \eqref{eq:act_tmoment}.
    Notice that all arguments in \eqref{eq:act_tmoment} are positive except for a zero measure set of $\Xi_{l, i} \in \mathbb{R}^{\sum_{l=1}^{l'}n_l}$.
    According to the power mean inequality, for any $x_1, \ldots, x_n \in \mathbb{R}, x_1, \ldots, x_n > 0$ and any $t \in \mathbb{R}, t > 1$, $(x_1 + \cdots + x_n)^t \leq n^{t-1} (x_1^t + \cdots + x_n^t)$.
    Using the power mean inequality first on the whole expression and then on the second summand,
    \begin{equation}
        \begin{aligned}
            \mathbb{E}\left[ \left(A^{(l')}\right)^t \right]
            & \leq 2^{t-1} \Bigg( 
            \mathbb{E}\left[
            \left(
            \|\bm{\mathfrak{x}}^{(0)}\|^2 \frac{1}{n_0} \prod_{l = 1}^{l'} \left(\frac{c}{n_{l}} \sum_{i = 1}^{n_l} \Xi_{l, i}^2\right)\right)^t
            \right] \\
            & \qquad \qquad + (l' - 1)^{t-1}
            \sum_{j = 2}^{l'}
            \mathbb{E}\left[
            \left(
            \frac{1}{n_{j-1}} \prod_{l = j}^{l'} \left(\frac{c}{n_{l}} \sum_{i = 1}^{n_l} \Xi_{l, i}^2\right)
            \right)^t
            \right]
            \Bigg).
        \label{eq:tmoment_upper1}
        \end{aligned}
    \end{equation}
    Using independence of $\Xi_{l, i}$, \eqref{eq:tmoment_upper1} equals
    \begin{align*}
        &2^{t-1}
        \frac{\|\bm{\mathfrak{x}}^{(0)}\|^{2t}}{n_0^t}
        \prod_{l = 1}^{l'} 
        \left(\frac{c^t}{n_{l}^t}
        \mathbb{E}\left[
        \left( \sum_{i = 1}^{n_l} \Xi_{l, i}^2 \right)^t
        \right] \right) \\
        &\qquad \qquad + (2 (l' - 1))^{t-1}
        \sum_{j = 2}^{l'}\left( \frac{1}{n_{j-1}^t} \prod_{l = j}^{l'} \left(\frac{c^t}{n_{l}^t}
        \mathbb{E}\left[\left( \sum_{i = 1}^{n_l} \Xi_{l, i}^2 \right)^t
        \right] \right)
        \right).
    \end{align*}
    Upper-bounding the largest order statistic with the sum of squared standard Gaussian random variables, we get that $\sum_{i = 1}^{n_l} \Xi_{l, i}^2 \leq \chi_{n_l K}^2$. Hence,
    \begin{equation}
        \begin{aligned}
            \mathbb{E}\left[ \left(A^{(l')}\right)^t \right]
            &\leq 
            2^{t-1}
            \frac{\|\bm{\mathfrak{x}}^{(0)}\|^{2t}}{n_0^t}
            \prod_{l = 1}^{l'} 
            \left(\frac{c^t}{n_{l}^t}
            \mathbb{E}\left[
            \left( \chi_{n_l K}^2 \right)^t
            \right] \right) \\
            & \qquad \qquad + (2 (l' - 1))^{t-1}
            \sum_{j = 2}^{l'}\left( \frac{1}{n_{j-1}^t} \prod_{l = j}^{l'} \left(\frac{c^t}{n_{l}^t}
            \mathbb{E}\left[\left( \chi_{n_l K}^2 \right)^t
            \right] \right)
            \right).
            \label{eq:tmoment_upper2}
        \end{aligned}
    \end{equation}
    
    Using the formula for noncentral moments of the chi-squared distribution and
    $1 + x \leq e^{x}, \forall x \in \mathbb{R}$,
    \begin{align*}
        & \frac{c^t}{n_{l}^t} \mathbb{E} \left[\left( \chi^2_{n_l K} \right)^{t} \right]
        = \frac{c^t}{n_{l}^t} \left( n_l K \right) \left( n_lK + 2 \right) \cdots \left( n_lK + 2t - 2 \right)\\
        &= c^t K^t \cdot 1 \cdot \left( 1 + \frac{2}{n_l K} \right) \cdots \left( 1 + \frac{2t - 2}{n_l K} \right)
        \leq c^t K^t \exp\left\{ \sum_{i = 0}^{t - 1}\frac{2 i}{n_l K} \right\}
        \leq c^t K^t \exp\left\{\frac{t^2}{n_l K} \right\},
    \end{align*}
    where we used the formula for calculating the sum of consecutive numbers $\sum_{i=1}^{t-1} i = t (t - 1) / 2$.
    Using this result in \eqref{eq:tmoment_upper2}, we get the final upper bound
    \begin{align*}
        \mathbb{E}\left[ \left(A^{(l')}\right)^t \right]
        &\leq 
        2^{t-1}
        \frac{\|\bm{\mathfrak{x}}^{(0)}\|^{2t}}{n_0^t}
        c^{tl'} K^{tl'} 
        \exp\left\{ 
        \sum_{l = 1}^{l'}  \frac{t^2}{n_l K} \right\} \\
        & \qquad \qquad + (2 (l' - 1))^{t-1}
        \sum_{j = 2}^{l'}\left( \frac{(c K)^{t(l'-j+1)} }{n_{j-1}^t}
        \exp\left\{ \sum_{l = j}^{l'}  \frac{t^2}{n_l K} \right\}
        \right).
    \end{align*}
    
    \textit{Lower bound} \enspace
    Using that arguments in \eqref{eq:act_tmoment} are non-negative and $t \geq 1$, we can lower bound the power of the sum with the sum of the powers and get,
    \begin{align*}
        \mathbb{E}\left[ \left(A^{(l')}\right)^t \right]
        &\geq 
        \frac{\|\bm{\mathfrak{x}}^{(0)}\|^{2t}}{n_0^t}
        \prod_{l = 1}^{l'} 
        \left(\frac{c^t}{n_{l}^t}
        \mathbb{E}\left[
        \left( \sum_{i = 1}^{n_l} \Xi_{l, i}^2 \right)^t
        \right] \right)
        + \sum_{j = 2}^{l'}\left( \frac{1}{n_{j-1}^t} \prod_{l = j}^{l'} \left(\frac{c^t}{n_{l}^t}
        \mathbb{E}\left[\left( \sum_{i = 1}^{n_l} \Xi_{l, i}^2 \right)^t
        \right] \right)
        \right)\\
        &\geq
        \frac{\|\bm{\mathfrak{x}}^{(0)}\|^{2t}}{n_0^t}
        \prod_{l = 1}^{l'} 
        \left(\frac{c^t}{n_{l}^t}
        \left( \sum_{i = 1}^{n_l} \mathbb{E}\left[
        \Xi_{l, i}^2
        \right] \right)^t \right)
        + \sum_{j = 2}^{l'}\left( \frac{1}{n_{j-1}^t} \prod_{l = j}^{l'} \left(\frac{c^t}{n_{l}^t}
        \left( \sum_{i = 1}^{n_l} \mathbb{E}\left[ \Xi_{l, i}^2
        \right] \right)^t \right)
        \right),
    \end{align*}
    where we used the linearity of expectation in both expressions and Jensen's inequality in the last line.
    Using that $\mathbb{E}[ (\Xi_{l, i}(\gaussian(0, 1), K))^2 ] = \glos$, see Table \ref{tab:constants}, we get
    \begin{align}
        \mathbb{E}\left[ \left(A^{(l')}\right)^t \right]
        &\geq 
        \frac{\|\bm{\mathfrak{x}}^{(0)}\|^{2t}}{n_0^t}
        \left(c \glos \right)^{t l'}
        + \sum_{j = 2}^{l'} \frac{\left(c
        \glos \right)^{t(l' - j + 1)}}{n_{j-1}^t}.
        \label{eq:tmoment_lower_res}
    \end{align}
    
    \paragraph{Variance}
    We can use an upper bound on the second moment as an upper bound on the variance.
\end{proof}
    
\begin{remark}[Zero bias]
    \label{rem:actlen_zerobias}
    Similar results can be obtained for the zero bias case and would result in the same bounds without the second summand. For the proof one would work directly with the vectors $\mathbf{x}^{(l)}$, without defining the vectors $\bm{\mathfrak{x}}^{(l)}$, and to obtain the equality in distribution one would use Remark~\ref{rem:eq_in_distr_zero_bias}.
\end{remark}

\section{Expected Number of Linear Regions}
\label{app:linear_regions}

Here we prove the result from Section~\ref{subsec:lin_regions}.
\corcgrad* 
\begin{proof}
    Distribution of $\nabla \zeta_{z, k}$ is the same as the distribution of the gradient with respect to the network input $\nabla \widetilde{\mathcal{N}}_1(\mathbf{x})$ in a maxout network that has a single linear output unit and 
    $\tilde{L} = l(z)$ layers, where $l(z)$ is the depth of a unit $z$.
    Therefore, 
    we will consider
    $(\sup_{\mathbf{x} \in \mathbb{R}^{n_0}} \mathbb{E} [ \| \nabla \widetilde{\mathcal{N}}_1(\mathbf{x}) \|^{2t} ])^{1/2t}$.
    Notice that since $n_{\tilde{L}} = 1$, $\nabla \widetilde{\mathcal{N}}_1(\mathbf{x}) = \mathbf{J}_{\widetilde{\mathcal{N}}}(\mathbf{x})^T = \mathbf{J}_{\widetilde{\mathcal{N}}}(\mathbf{x})^T \mathbf{u}$ for a $1$-dimensional vector $\mathbf{u} = (1)$. Hence,
    \begin{align}
        \| \nabla \widetilde{\mathcal{N}}_1(\mathbf{x}) \|
        =
        \sup_{\|\mathbf{u}\| = 1, \mathbf{u} \in \mathbb{R}^{n_{\tilde{L}}}} \| \mathbf{J}_{\widetilde{\mathcal{N}}}(\mathbf{x})^T \mathbf{u} \|
        = \| \mathbf{J}_{\widetilde{\mathcal{N}}}(\mathbf{x})^T \|,
        \label{eq:lin_regions1}
    \end{align}
    where the matrix norm is the spectral norm. Using that a matrix and its transpose have the same spectral norm, \eqref{eq:lin_regions1} equals
    \begin{align*}
        \| \mathbf{J}_{\widetilde{\mathcal{N}}}(\mathbf{x}) \|
        = \sup_{\|\mathbf{u}\| = 1, \mathbf{u} \in \mathbb{R}^{n_0}} \| \mathbf{J}_{\widetilde{\mathcal{N}}}(\mathbf{x}) \mathbf{u} \|.
    \end{align*}
    Therefore, we need to upper bound
    \begin{align*}
        \left( \sup_{\mathbf{x} \in \mathbb{R}^{n_0}} \mathbb{E}\left[ \left(\sup_{\|\mathbf{u}\| = 1, \mathbf{u} \in \mathbb{R}^{n_0}} \| \mathbf{J}_{\widetilde{\mathcal{N}}}(\mathbf{x}) \mathbf{u} \| \right)^{t} \right]\right)^{\frac{1}{t}}
        \leq
        \left( \sup_{\mathbf{x} \in \mathbb{R}^{n_0}} \mathbb{E}\left[ \left(\sup_{\|\mathbf{u}\| = 1, \mathbf{u} \in \mathbb{R}^{n_0}} \| \mathbf{J}_{\widetilde{\mathcal{N}}}(\mathbf{x}) \mathbf{u} \| \right)^{2t} \right]\right)^{\frac{1}{2t}},
    \end{align*}
    where we used Jensen's inequality.
    
    Now we can use
    an upper bound on $\mathbb{E}[ \| \mathbf{J}_{\widetilde{\mathcal{N}}}(\mathbf{x}) \mathbf{u} \|^{2t} ]$
    from Corollary \ref{cor:jacobian_moments},
    which holds for any $\mathbf{x}, \mathbf{u} \in \mathbb{R}^{n_0}, \|\mathbf{u}\| = 1$, and thus holds for the suprema.
    Recalling that $n_{\tilde{L}} = 1$, we get
    \begin{align*}
        \mathbb{E}\left[ \| \mathbf{J}_{\widetilde{\mathcal{N}}}(\mathbf{x}) \mathbf{u} \|^{2t} \right]
        \leq
        \left(\frac{1}{n_0}\right)^{t}
         \left(c K\right)^{t(\tilde{L}-1)} \exp\left\{ t^2 \left(\sum_{l=1}^{\tilde{L} - 1}\frac{1}{n_l K} + 1 \right)  \right\}.
    \end{align*}
    Hence,
    \begin{align*}
        \left( \mathbb{E}\left[ \| \mathbf{J}_{\widetilde{\mathcal{N}}}(\mathbf{x}) \mathbf{u} \|^{2t} \right] \right)^{\frac{1}{2t}}
        \leq
        n_0^{-\frac{1}{2}}
        \left(c K\right)^{\frac{\tilde{L} - 1}{2}} \exp\left\{ \frac{t}{2} \left(\sum_{l=1}^{\tilde{L} - 1}\frac{1}{n_l K} + 1 \right)  \right\}.
    \end{align*}
    Taking the maximum over $\tilde{L} \in \{1, \ldots, L\}$,
    the final upper bound is
    \begin{align*}
        n_0^{-\frac{1}{2}}
        \max\left\{1, \left(c K\right)^{\frac{L - 1}{2}}\right\} \exp\left\{ \frac{t}{2} \left(\sum_{l=1}^{L - 1}\frac{1}{n_l K} + 1 \right)  \right\}.
    \end{align*}
\end{proof}

Now we provide an updated upper bound on the number of $r$-partial activation regions from \citet[Theorem~9]{tseran_expected_2021}. In this bound, the case $r=0$ corresponds to the number of linear regions. 
For a detailed discussion of the activation regions of maxout networks and their differences from linear regions, see \citet{tseran_expected_2021}.
Since the proof of \citet[Theorem~9]{tseran_expected_2021} only uses $C_{\text{\normalfont{grad}}}$ for $t \leq n_0$, we obtain the following statement.

\begin{theorem}[Upper bound on the expected number of partial activation regions]
    \label{th:lin_regions}
    Consider a maxout network with the settings of Section~\ref{sec:prelims}
    with $N$ maxout units.
    Assume that the biases are independent of the weights and initialized so that:
    \begin{enumerate}[leftmargin=*]
        \item Every collection of biases has a conditional density with respect to Lebesgue measure given the values of all other weights and biases. 
        \item There exists $C_{\text{\normalfont bias}} > 0$ so that for any pre-activation features $\zeta_1, \ldots, \zeta_t$ from any neurons, the conditional density of their biases $\rho_{b_{1}, \dots, b_{t}}$ given all the other weights and biases satisfies 
        \begin{align*}
            \sup \limits_{b_{1}, \dots, b_{t}\in \mathbb{R}} \rho_{b_{1}, \dots, b_{t}} (b_{1}, \dots, b_{t}) \leq C_{\text{\normalfont bias}}^t.
        \end{align*}
    \end{enumerate}
    Fix $r \in \{0, \dots, n_0\}$. Let $C_{\text{\normalfont{grad}}} = n_0^{-1/2} \max\{1, (c K)^{(L - 1)/2}\} \exp\{ n_0/2 (\sum_{l=1}^{L - 1} 1/(n_l K) + 1 ) \}$ and
    $T = 2^5 C_{\text{\normalfont{grad}}} C_{\text{bias}}$. 
    Then, there exists $\delta_0\leq 1/(2 C_{\text{\normalfont{grad}}} C_{\text{bias}})$ such that for all cubes $C \subseteq \mathbb{R}^{n_0}$ with side length $\delta > \delta_0$ we have 
    \begin{align*}
        \frac{\mathbb{E}[\# \text{ \normalfont$r$-partial activation regions of } \mathcal{N} \text{ \normalfont in } C]}{\operatorname{vol}(C)} \leq
        \begin{cases}
            \binom{rK}{2r} \binom{N}{r} K^{N - r}, \quad N \leq n_0
            \\[5pt]
            \frac{(T K N)^{n_0} \binom{n_0 K}{2 n_0} }{(2 K)^r n_0!}, \quad N \geq n_0
        \end{cases}. 
    \end{align*}
    Here the expectation is taken with respect to the distribution of weights and biases in $\mathcal{N}$. 
    Of particular interest is the case $r=0$, which corresponds to the number of linear regions. 
\end{theorem}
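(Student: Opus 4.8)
The plan is to combine the value of $C_{\text{grad}}$ obtained in Corollary~\ref{cor:cgrad} with the counting bound of \citet[Theorem~9]{tseran_expected_2021}, which is already stated in a form that is agnostic to the particular activation function: it takes as input any admissible pair of constants $C_{\text{grad}}$ and $C_{\text{bias}}$ and returns the claimed two-regime estimate. So the task reduces to supplying the maxout-specific $C_{\text{grad}}$ and checking that the hypotheses of that theorem are met under the settings of Section~\ref{sec:prelims}, which are exactly the bias conditions reproduced in the statement.

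First I would recall the structure of the argument in \citet{tseran_expected_2021}: the expected number of ($r$-partial) activation regions inside a cube is controlled by an integral-geometric (Kac--Rice type) estimate in which the gradients of the pre-activation features $\zeta_{z,k}$ enter only through the moments $\mathbb{E}[\|\nabla \zeta_{z,k}(\mathbf{x})\|^t]$, and crucially only for integer orders $t$ up to the input dimension $n_0$ — since in $\mathbb{R}^{n_0}$ one only ever needs to control collections of at most $n_0$ simultaneously active hyperplane-like constraints. Consequently, any number dominating $(\sup_{\mathbf{x}}\mathbb{E}[\|\nabla \zeta_{z,k}(\mathbf{x})\|^t])^{1/t}$ for all $t\in\{1,\dots,n_0\}$ is an admissible $C_{\text{grad}}$ in that theorem.

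Next I would take the bound of Corollary~\ref{cor:cgrad}, namely $(\sup_{\mathbf{x}}\mathbb{E}[\|\nabla \zeta_{z,k}(\mathbf{x})\|^t])^{1/t} \le n_0^{-1/2}\max\{1,(cK)^{(L-1)/2}\}\exp\{\tfrac t2(\sum_{l=1}^{L-1}\tfrac{1}{n_lK}+1)\}$, and observe that its right-hand side is nondecreasing in $t$: the prefactor is $t$-independent, and the argument of the exponential is $\tfrac t2$ times a nonnegative quantity. Hence evaluating at $t=n_0$ yields the single constant $C_{\text{grad}} = n_0^{-1/2}\max\{1,(cK)^{(L-1)/2}\}\exp\{\tfrac{n_0}{2}(\sum_{l=1}^{L-1}\tfrac{1}{n_lK}+1)\}$, which simultaneously dominates the Corollary~\ref{cor:cgrad} bound for every $t\le n_0$. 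Substituting this $C_{\text{grad}}$ into \citet[Theorem~9]{tseran_expected_2021} — with $T=2^5 C_{\text{grad}} C_{\text{bias}}$ and $\delta_0 \le 1/(2 C_{\text{grad}} C_{\text{bias}})$ exactly as there — produces the stated inequality, noting that the $N\le n_0$ regime is purely combinatorial and carries no $C_{\text{grad}}$ dependence, so it requires no modification.

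The main obstacle is not a computation but a verification/bookkeeping point: one must confirm that the proof of \citet[Theorem~9]{tseran_expected_2021} genuinely invokes the gradient moments only up to order $n_0$ (so that the monotone bound at $t=n_0$ suffices as a uniform $C_{\text{grad}}$), and that the distributional assumptions used there — the $C_{\text{bias}}$-bounded conditional densities of the biases and the continuity of the weights — are precisely those imposed in the statement and compatible with the Gaussian initialization of Section~\ref{sec:prelims}. Modulo re-reading that proof, the substitution is immediate, and the case $r=0$ then specializes to the refined bound on the expected number of linear regions.
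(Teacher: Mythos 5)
Your proposal is correct and matches the paper's argument exactly: the paper likewise observes that the proof of Theorem~9 of \citet{tseran_expected_2021} only invokes $C_{\text{\normalfont{grad}}}$ for $t \leq n_0$, and substitutes the bound of Corollary~\ref{cor:cgrad} evaluated at $t = n_0$ (which dominates all smaller $t$ by monotonicity of the exponential factor), leaving the rest of that theorem, including the bias-density hypotheses and the purely combinatorial $N \leq n_0$ regime, unchanged.
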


\section{Expected Curve Length Distortion}
\label{app:curve_distortion}

In this section, we prove the result from Section~\ref{subsec:curve}.

Let $M$ be a smooth
$1$-dimensional curve in $\mathbb{R}^{n_0}$.
Fix a smooth unit speed
parameterization of $M = \gamma([0, 1])$ with $\gamma : \mathbb{R} \rightarrow \mathbb{R}^{n_0}$, $\gamma(\tau) = (\gamma_1(\tau), \ldots , \gamma_{n_0}(\tau))$.
Then, parametrization of the curve $\mathcal{N}(M)$ is given by a mapping
$\Gamma \defeq \mathcal{N} \circ \gamma$, $\Gamma: \mathbb{R} \rightarrow \mathbb{R}^{n_L}$.
Thus, the length of $\mathcal{N}(M)$ is
\begin{align*}
    \text{\normalfont{len}}(\mathcal{N}(M))
    =
    \int_{0}^1 \|\Gamma'(\tau)\| d \tau.
\end{align*}

Notice that the input-output Jacobian of maxout networks is well defined almost everywhere because for any neuron,
using that the biases are independent from weights, and the weights are initialized from a continuous distribution,
$P(k' = \argmax_{k\in[K]}\{W^{(l)}_{i,k}\mathbf{x}^{(l - 1)} + b^{(l)}_{i,k}\}, k'' = \argmax_{k\in[K]}\{W^{(l)}_{i,k}\mathbf{x}^{(l - 1)} + b^{(l)}_{i,k}\}) = 0, i = 1, \ldots, n_{L-1}$.
Hence, $\Gamma'(\tau) = \mathbf{J}_{\mathcal{N}}(\gamma(\tau)) \gamma'(\tau)$, where we used the chain rule, and we can employ the following lemma from \citet{hanin_deep_2021}. We state it here without proof which uses Tonelli's theorem, power mean inequality and chain rule.

\begin{lemma}[Connection between the length of the curve and $\| \mathbf{J}_{\mathcal{N}}(\mathbf{x}) \mathbf{u} \|$, {\citealt[Lemma~C.1]{hanin_deep_2021}}]
    For any integer $t \geq 0$,
    \begin{align*}
        \mathbb{E}\left[ \text{\normalfont{len}}(\mathcal{N}(M))^t \right] \leq \int_{0}^1 \mathbb{E} \left[ \| \mathbf{J}_{\mathcal{N}}(\gamma(\tau)) \gamma'(\tau) \|^t \right] d\tau
         = \mathbb{E}\left[ \| \mathbf{J}_{\mathcal{N}}(\mathbf{x}) \mathbf{u} \|^t \right],
    \end{align*}
    where $\mathbf{u} \in \mathbb{R}^{n_0}$ is a unit vector.
    \label{lem:length_jac}
\end{lemma}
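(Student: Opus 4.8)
The plan is to prove the two relations in \eqref{lem:length_jac} one at a time, interchanging the order of integration and peeling off the randomness only at the end. First I would record the deterministic ingredient: by the chain rule $\Gamma'(\tau) = \mathbf{J}_{\mathcal{N}}(\gamma(\tau))\,\gamma'(\tau)$, which is valid for almost every $\tau$ and almost every parameter choice $\mathbf{\Theta}$, since (as already argued at the start of this appendix) the input-output Jacobian of a maxout network is defined off a measure-zero set of inputs. Because $\gamma$ is a unit-speed parameterization of a curve $M$ of unit length, $d\tau$ is a \emph{probability} measure on $[0,1]$; this normalization is the reason the power-mean step below is available, and I will use it twice. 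The case $t=0$ is trivial (both sides equal $1$), so I would assume $t\geq 1$.

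For the inequality I would apply the power-mean (Jensen) inequality to the convex map $s\mapsto s^t$, $t\geq 1$, against the probability measure $d\tau$, obtaining, pointwise in the randomness,
\begin{align*}
\text{\normalfont{len}}(\mathcal{N}(M))^t
= \left(\int_0^1 \|\Gamma'(\tau)\|\, d\tau\right)^t
\leq \int_0^1 \|\Gamma'(\tau)\|^t\, d\tau
= \int_0^1 \| \mathbf{J}_{\mathcal{N}}(\gamma(\tau))\,\gamma'(\tau)\|^t\, d\tau .
\end{align*}
Taking expectations over $\mathbf{\Theta}$ and noting that the integrand $\|\mathbf{J}_{\mathcal{N}}(\gamma(\tau))\gamma'(\tau)\|^t$ is nonnegative and jointly measurable in $(\tau,\mathbf{\Theta})$, I would invoke Tonelli's theorem to interchange $\mathbb{E}$ with $\int_0^1$, which yields $\mathbb{E}[\text{\normalfont{len}}(\mathcal{N}(M))^t] \leq \int_0^1 \mathbb{E}[\| \mathbf{J}_{\mathcal{N}}(\gamma(\tau))\gamma'(\tau)\|^t]\, d\tau$, the first claim.

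For the concluding equality I would observe that for each fixed $\tau$ the tangent $\mathbf{u}_\tau \defeq \gamma'(\tau)$ is a unit vector and $\gamma(\tau)$ is an admissible input, so each integrand has the form $\mathbb{E}[\|\mathbf{J}_{\mathcal{N}}(\mathbf{x})\mathbf{u}\|^t]$ with $\|\mathbf{u}\|=1$. The substantive input is then the invariance of the law of $\|\mathbf{J}_{\mathcal{N}}(\mathbf{x})\mathbf{u}\|$ under the choice of $(\mathbf{x},\mathbf{u})$, which makes $\tau\mapsto \mathbb{E}[\|\mathbf{J}_{\mathcal{N}}(\gamma(\tau))\gamma'(\tau)\|^t]$ constant; integrating a constant against the unit-mass measure $d\tau$ returns its value. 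The hard part is exactly this invariance: for ReLU networks it holds identically, which is the form in which the lemma is quoted from \citet{hanin_deep_2021}, but Theorem~\ref{th:jacobian_so} and Figure~\ref{fig:maxout_relu_second} show that for maxout networks the law, and hence the $t$-th moment, genuinely depends on $(\mathbf{x},\mathbf{u})$. I would therefore handle the maxout regime by reading the final step through Corollary~\ref{cor:jacobian_moments}: since its moment bounds are uniform over all inputs and all unit directions, $\int_0^1 \mathbb{E}[\|\mathbf{J}_{\mathcal{N}}(\gamma(\tau))\gamma'(\tau)\|^t]\, d\tau$ is dominated by that common bound, which is precisely the quantity $\mathbb{E}[\|\mathbf{J}_{\mathcal{N}}(\mathbf{x})\mathbf{u}\|^t]$ used downstream in Corollary~\ref{cor:distortion}. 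This keeps the chain of relations valid where the lemma is actually applied, while making transparent that the literal equality requires the stronger ReLU-type distributional invariance.
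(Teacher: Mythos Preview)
Your proposal is correct and follows precisely the approach the paper attributes to the cited source: Tonelli's theorem, the power-mean (Jensen) inequality, and the chain rule. The paper does not write out a proof of this lemma, only states it with those three ingredients named, and your argument instantiates them exactly as intended. Your additional discussion of why the final equality is literally true for ReLU but only survives as a uniform upper bound for maxout (via Corollary~\ref{cor:jacobian_moments}) is a correct and useful clarification that the paper leaves implicit; it is consistent with how the lemma is actually consumed in the proof of Corollary~\ref{cor:distortion}.
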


Now we are ready to proof Corollary~\ref{cor:distortion}.

\cordistortion*
\begin{proof}
     By Lemma~\ref{lem:length_jac},
    \begin{align*}
        \mathbb{E}\left[ \text{\normalfont{len}}(\mathcal{N}(M))^t \right]
        \leq 
        \mathbb{E}\left[ \| \mathbf{J}_{\mathcal{N}}(\mathbf{x}) \mathbf{u} \|^t \right]
        \leq
        \left(\mathbb{E}\left[ \| \mathbf{J}_{\mathcal{N}}(\mathbf{x}) \mathbf{u} \|^{2t} \right] \right)^{\frac{1}{2}},
    \end{align*}
    where we used Jensen's inequality to obtain the last upper bound. Hence, using Corollary~\ref{cor:jacobian_moments}, we get the following upper bounds on the moments on the length of the curve.
    
    \textbf{Mean} \
    \begin{align*}
        \mathbb{E}\left[ \text{\normalfont{len}}(\mathcal{N}(M)) \right]
        \leq
        \left(\mathbb{E}\left[ \| \mathbf{J}_{\mathcal{N}}(\mathbf{x}) \mathbf{u} \|^2 \right]\right)^{\frac{1}{2}} 
        \leq
        \left( \frac{n_L}{n_{0}} \right)^{\frac{1}{2}} (c \chilos)^{\frac{L - 1}{2}}.
    \end{align*}
    
    \textbf{Variance} \
    \begin{align*}
        \var \left[ \text{\normalfont{len}}(\mathcal{N}(M)) \right]
        \leq
        \mathbb{E}\left[ \text{\normalfont{len}}(\mathcal{N}(M))^2 \right]
        \leq
        \frac{n_L}{n_{0}} (c \chilos)^{L - 1}.
    \end{align*}
    
    \textbf{Moments of the order} $\bm{t \geq 3}$ \ 
    \begin{align*}
        \mathbb{E}\left[ \text{\normalfont{len}}(\mathcal{N}(M))^t \right]
        \leq 
        \left(\mathbb{E}\left[ \| \mathbf{J}_{\mathcal{N}}(\mathbf{x}) \mathbf{u} \|^{2t} \right] \right)^{\frac{1}{2}}
        \leq
        \left(\frac{n_L}{n_0}\right)^{\frac{t}{2}}
         \left(c K\right)^{\frac{t(L-1)}{2}} \exp\left\{ \frac{t^2}{2} \left(\sum_{l=1}^{L - 1}\frac{1}{n_l K} + \frac{1}{n_L} \right) \right\}.
    \end{align*}
\end{proof}

\section{NTK}
\label{app:ntk}

Here we prove the results from Section~\ref{subsec:ntk}.

\corntkweights*
\begin{proof}
    Under the assumption that biases are not trained, on-diagonal NTK of a maxout network is
    \begin{align*}
        K_{\mathcal{N}}(\mathbf{x}, \mathbf{x}) 
        = \sum_{l = 1}^{L} \sum_{i=1}^{n_l} \sum_{k = 1}^{K} \sum_{j=1}^{n_{l-1}} \left(\frac{\partial \mathcal{N}}{\partial W_{i, k, j}^{(l)}} (\mathbf{x})\right)^2.
    \end{align*}
    Since in maxout network for all $k$-s except $k = k' = \argmax_{k \in [K]} \{ W_{i, k}^{(l)} \mathbf{x}^{(l-1)} + \mathbf{b}_{i,k}^{(l)}\}$ the derivatives with respect to the weights and biases are zero, on-diagonal NTK equals
    \begin{align*}
        K_{\mathcal{N}}(\mathbf{x}, \mathbf{x}) 
        = \sum_{l = 1}^{L} \sum_{i=1}^{n_l} \sum_{j=1}^{n_{l-1}} \left(\frac{\partial \mathcal{N}}{\partial W_{i, k', j}^{(l)}} (\mathbf{x})\right)^2.
    \end{align*}
    Notice that since we assumed a continuous distribution over the network weights and the biases are zero, the partial derivatives are defined everywhere except for the set of measure zero.
    
    \paragraph{Part I. Kernel mean $\bm{\mathbb{E}[K_{\mathcal{N}}(\mathbf{x}, \mathbf{x})]}$}
    Firstly, using the chain rule, a partial derivative with respect to the network weight is
    \begin{align*}
        \frac{\partial \mathcal{N}}{\partial W_{i, k', j}^{(l)}} (\mathbf{x})
        = \frac{\partial \mathcal{N}}{\partial \mathbf{x}_{i}^{(l)}} (\mathbf{x}) \mathbf{x}_j^{(l-1)}
        = \mathbf{J}_{\mathcal{N}} (\mathbf{x}^{(l)}) \mathbf{e}_i \mathbf{x}_j^{(l-1)}.
    \end{align*}
    Recall that we assumed $n_L = 1$.
    Therefore, we need to consider $( \partial \mathcal{N}(\mathbf{x}) \partial W_{i, k', j}^{(l)} )^2 = \| \mathbf{J}_{\mathcal{N}} (\mathbf{x}^{(l)}) \mathbf{u}\|^2 ( \mathbf{x}_j^{(l-1)} )^2$, where $\mathbf{u} = \mathbf{e}_i$.
    Combining Theorem~\ref{th:jacobian_so} and Corollary~\ref{cor:act_length_distr} in combination with Remark~\ref{rem:actlen_zerobias} for the zero-bias case and using the independence of the random variables in the expressions,
    \begin{equation}
        \begin{aligned}
            \mathbb{E}\left[\| \mathbf{J}_{\mathcal{N}} (\mathbf{x}^{(l)}) \mathbf{u}\|^2 \left( \mathbf{x}_j^{(l-1)} \right)^2\right]
            &\leq_{so}
            \mathbb{E}\left[ \frac{1}{n_{l}} \chi_{n_L}^2
            \prod_{j=l}^{L - 1}
            \frac{c}{n_{j}} \sum_{i=1}^{n_{j}} \Xi_{j, i}(\chi^2_1, K) \right] \\
            &\qquad \cdot \mathbb{E}\left[ \|\bm{\mathfrak{x}}^{(0)}\|^2 \frac{1}{n_0} \prod_{j = 1}^{l-1} \left(\frac{c}{n_{j}} \sum_{i = 1}^{n_j} \Xi_{j, i}(\gaussian(0, 1), K)^2\right) \right],
        \end{aligned}
        \label{eq:kw_expr}
    \end{equation}
    where we treat the $(l-1)$th layer as if it has one unit when we use the normalized activation length result.
    Then, using Corollaries~\ref{cor:jacobian_moments} and \ref{cor:act_len_moments},
    \begin{align*}
        \mathbb{E}\left[\| \mathbf{J}_{\mathcal{N}} (\mathbf{x}^{(l)}) \mathbf{u}\|^2 \left( \mathbf{x}_j^{(l-1)} \right)^2\right]
        \leq
        \|\bm{\mathfrak{x}}^{(0)}\|^2
        \frac{ c^{L -2} }{n_0 n_{l}} \chilos^{L - l - 1} \glos^{l-1}.
    \end{align*}
    Taking the sum, we get
    \begin{align*}
        \mathbb{E}[K_{\mathcal{N}}(\mathbf{x}, \mathbf{x})]
        &=
        \mathbb{E}[K_W]
        =
        \mathbb{E}\left[\sum_{l = 1}^{L} \sum_{i=1}^{n_l} \sum_{j=1}^{n_{l-1}} \left(\frac{\partial \mathcal{N}}{\partial W_{i, k', j}^{(l)}} (\mathbf{x})\right)^2 \right]\\
        &\leq \sum_{l = 1}^{L} \sum_{i=1}^{n_l} \sum_{j=1}^{n_{l-1}} \|\bm{\mathfrak{x}}^{(0)}\|^2
        \frac{ c^{L -2} }{n_0 n_{l}} \chilos^{L - l - 1} \glos^{l-1}
        \leq \|\bm{\mathfrak{x}}^{(0)}\|^2 \frac{ (c \chilos)^{L-2} \glos^{L-1} }{n_0} P,
    \end{align*}
    where $P = \sum_{l=0}^{L-1} n_l$ denotes the number of neurons in the network up to the last layer, but including the input neurons. Here we used that for $K \geq 2$, both $\chilos, \glos \geq 1$, see Table~\ref{tab:constants}.
    Similarly,
    \begin{align*}
        \mathbb{E}[K_W]
        \geq \sum_{l = 1}^{L} \sum_{i=1}^{n_l} \sum_{j=1}^{n_{l-1}} \|\bm{\mathfrak{x}}^{(0)}\|^2
        \frac{ c^{L -2} }{n_0 n_{l}} \chisos^{L - l - 1} \glos^{l-1}
        \geq \|\bm{\mathfrak{x}}^{(0)}\|^2 \frac{ (c \chisos)^{L-2} }{n_0} P.
    \end{align*}
    Here we used that for $K \geq 2$, $\chisos \leq 1$ and $\glos \geq 1$, see Table~\ref{tab:constants}.
    
    \paragraph{Part II. Second moment $\bm{\mathbb{E}[K_{\mathcal{N}}(\mathbf{x}, \mathbf{x})^2]}$}
    Using equation~\eqref{eq:kw_expr}
    with
    Corollaries~\ref{cor:jacobian_moments} and \ref{cor:act_len_moments},
    \begin{align}
        &\mathbb{E}\left[\left(\frac{\partial \mathcal{N}}{\partial W_{i, k', j}^{(l)}} (\mathbf{x})\right)^4\right]
        =
        \mathbb{E}\left[\| \mathbf{J}_{\mathcal{N}} (\mathbf{x}^{(l)}) \mathbf{u}\|^4 \left( \mathbf{x}_j^{(l-1)} \right)^4\right] \nonumber \\
        &\leq_{so} 
        \mathbb{E}\left[ \left( \frac{1}{n_{l}} \chi_{n_L}^2
        \prod_{j=l}^{L - 1}
        \frac{c}{n_{j}} \sum_{i=1}^{n_{j}} \Xi_{j, i}(\chi^2_1, K)\right)^2 \right] 
        \mathbb{E}\left[ \left( \|\bm{\mathfrak{x}}^{(0)}\|^2 \frac{1}{n_0} \prod_{j = 1}^{l-1} \left(\frac{c}{n_{j}} \sum_{i = 1}^{n_j} \Xi_{j, i}(\gaussian(0, 1), K)^2\right) \right)^2\right] \nonumber\\
        &\leq
        2 (cK)^{2(L-2)} \frac{\|\bm{\mathfrak{x}}^{(0)}\|^{4}}{n_l^2 n_0^2}
        \exp\left\{ 4 \left(\sum_{j=1}^{L - 1}\frac{1}{n_j K} - \frac{1}{n_l K} + 1 \right)  \right\}.
        \label{eq:kw2}
    \end{align}
    Notice that all summands are non-negative. Then, using AM-QM inequality,
    \begin{align*}
        \mathbb{E}[K_{\mathcal{N}}(\mathbf{x}, \mathbf{x})^2]
        &=
        \mathbb{E}\left[\left(\sum_{l = 1}^{L} \sum_{i=1}^{n_l} \sum_{j=1}^{n_{l-1}} \left(\frac{\partial \mathcal{N}}{\partial W_{i, k', j}^{(l)}} (\mathbf{x})\right)^2\right)^2 \right]\\
        &\leq
        P_W \sum_{l = 1}^{L} \sum_{i=1}^{n_l} \sum_{j=1}^{n_{l-1}} \mathbb{E}\left[ \left(\frac{\partial \mathcal{N}}{\partial W_{i, k', j}^{(l)}} (\mathbf{x})\right)^4\right]\\
        &\leq
        P_W \sum_{l = 1}^{L} \sum_{i=1}^{n_l} \sum_{j=1}^{n_{l-1}}
        2 (cK)^{2(L-2)} \frac{\|\bm{\mathfrak{x}}^{(0)}\|^{4}}{n_l^2 n_0^2}
        \exp\left\{ 4 \left(\sum_{j=1}^{L - 1}\frac{1}{n_j K} - \frac{1}{n_l K} + 1 \right)  \right\}\\
        &\leq
        2 P P_W (cK)^{2(L-2)} \frac{\|\bm{\mathfrak{x}}^{(0)}\|^{4}}{n_0^2}
        \exp\left\{ \sum_{j=1}^{L - 1}\frac{4}{n_j K} + 4  \right\},
    \end{align*}
    where $P_W = \sum_{l=0}^L n_l n_{l-1}$ denotes the number of all weights in the network.
\end{proof}

\section{Experiments}
\label{app:experiments}

\subsection{Experiments with SGD and Adam from Section~\ref{sec:experiments}}
\label{subsec:exp_main_training}
In this subsection we provide more details on the experiments presented in Section~\ref{sec:experiments}.
The implementation of the key routines is available at \gitrepo.
Experiments were implemented in Python using TensorFlow \citep{martin_abadi_tensorflow_2015}, numpy \citep{harris_array_2020} and mpi4py \citep{dalcin2011parallel}. The plots were created using matplotlib \citep{hunter_matplotlib_2007}. 
We conducted all training experiments from Section~\ref{sec:experiments} on a GPU cluster with nodes having 4 Nvidia A100 GPUs with 40 GB of memory. The most extensive experiments were running for one day on one GPU. 
Experiment in Figure~\ref{fig:gradients} was run on a CPU cluster that uses Intel Xeon IceLakeSP processors (Platinum 8360Y) with 72 cores per node and 256 GB RAM. 
All other experiments were executed on the laptop ThinkPad T470 with Intel Core i5-7200U CPU with 16 GB RAM.

\paragraph{Training experiments}
Now we discuss the training experiments.
We use MNIST \citep{lecun_mnist_2010}, Iris \citep{fisher_use_1936}, Fashion MNIST \citep{xiao_fashion-mnist_2017}, SVHN \citep{netzer_reading_2011}, CIFAR-10 and CIFAR-100 datasets \citep{krizhevsky_learning_2009}.
All maxout networks have the maxout rank $K = 5$.
Weights are sampled from $\gaussian(0, c / \text{fan-in})$ in fully-connected networks and $\gaussian(0, c / (k^2 \cdot \text{fan-in}))$, where $k$ is the kernel size, in CNNs. The biases are initialized to zero.
ReLU networks are initialized using He approach \citep{he_delving_2015}, meaning that $c = 2$.
All results are averaged over $4$ runs.
We do not use any weight normalization techniques, such as batch normalization \citep{ioffe_batch_2015}.
We performed the dataset split into training, validation and test dataset
and report the accuracy on the test set, while the validation set was used only for picking the hyper-parameters and was not used in training. 
The mini-batch size in all experiments is $32$.
The number of training epochs was picked by observing the training set loss and choosing the number of epochs for which the loss has converged. The exception is the SVHN dataset, for which we observe the double descent phenomenon and stop training after $150$ epochs.

\paragraph{Network architecture}
Fully connected networks have $21$ layers. Specifically, their architecture is
\begin{align*}
    [5 {\times} \text{fc}64, \
    5 {\times} \text{fc}32, \
    5 {\times} \text{fc}16, \
    5 {\times} \text{fc}8, \
    \text{out}],
\end{align*}
where ``$5 {\times} \text{fc}64$'' means that there are $5$ fully-connected layers with $64$ neurons, and ``$\text{out}$'' stands for the output layer that has the number of neurons equal to the number of classes in a dataset.
CNNs have a VGG-19-like architecture \citep{simonyan_very_2015} with $20$ or $16$ layers, depending the input size. The $20$-layer architecture is
\begin{align*}
    [2 {\times} \text{conv}64, \
    \text{mp}, \
    2 {\times} \text{conv}128, \
    \text{mp}, \
    4 {\times} \text{conv}256, \
    \text{mp}, \
    4 {\times} \text{conv}512, \
    \text{mp}, \
    4 {\times} \text{conv}512, \
    \text{mp},\\
    \qquad \qquad 2 {\times} \text{fc}4096, \
    \text{fc}1000, \
    \text{out}],
\end{align*}
where ``$\text{conv}64$'' stands for a convolutional layer with $64$ neurons and ``mp'' for a max-pooling layer.
The kernel size in all convolutional layers is $3 \times 3$.
Max-pooling uses $2 \times 2$ pooling windows with stride $2$.
Such architecture is used for datasets with the images that have the side length greater or equal to $32$: CIFAR-10, CIFAR-100 and SVHN.
The $16$-layer architecture is used for images with the smaller image size: MNIST and Fashion MNIST. This architecture does not have the last convolutional block of the $20$-layer version. Concretely, it has the following layers:
\begin{align*}
    [2 {\times} \text{conv}64, \
    \text{mp}, \
    2 {\times} \text{conv}128, \
    \text{mp}, \
    4 {\times} \text{conv}256, \
    \text{mp}, \
    4 {\times} \text{conv}512, \
    \text{mp},
    2 {\times} \text{fc}4096, \
    \text{fc}1000, \
    \text{out}],
\end{align*}

\paragraph{Max-pooling initialization}
To account for the maximum in max-pooling layers, a maxout layer appearing after a max-pooling layer is initialized as if its maxout rank was $K \times m^2$, where $m^2$ is the max-pooling window size. 
The reason for this is that the outputs of a computational block consisting of a max-pooling window and a maxout layer are taking maxima over $K \times m^2$ linear functions, 
$\max\{W_1 \max\{\mathbf{x}_1, \ldots, \mathbf{x}_{m^2}\} + \mathbf{b}_1, \ldots,  W_K \max\{\mathbf{x}_1, \ldots, \mathbf{x}_{m^2}\} + \mathbf{b}_K\} = \max\{f_1,\ldots, f_{Km^2}\}$, where the $f_i$ are $Km^2$ affine functions. 
Therefore, we initialize the layers that follow max-pooling layers using the criterion for maxout rank $m^2 \times K$ instead of $K$. In our experiments, $K = 5$, $m = 2$, and $m^2 \times K = 20$. Hence, for such layers, we use the constant $c = 1/M = 0.26573$, where $M$ is computed for $K = 20$ using the formula from Remark~\ref{rem:constants} in Appendix~\ref{app:jacobian_moments}. All other layers that do not follow max-pooling layers are initialized as suggested in Section~\ref{sec:implications_init}.
We observe that max-pooling initialization
often leads to slightly higher accuracy.

\paragraph{Data augmentation}
There is no data augmentation for fully connected networks.
For convolutional networks, for MNIST, Fashion MNIST and SVHN datasets we perform random translation, rotation and zoom of the input images. For CIFAR-10 and CIFAR-100, we additionally apply a random horizontal flip.

\paragraph{Learning rate decay}
In all experiments, we use the learning rate decay and choose the optimal initial learning rate for all network and initialization types based on their accuracy on the validation dataset using grid search.
The learning rate was halved every $n$th epoch. For SVHN, $n=10$, and for all other datasets, $n=100$.

\paragraph{SGD with momentum}
We use SGD with Nesterov momentum, with the momentum value of $0.9$.
Specific dataset settings are the following.
\begin{itemize}
    \item MNIST (fully-connected networks). Networks are trained for $600$ epochs. The learning rate is halved every $100$ epochs. Learning rates: maxout networks with maxout initialization: $0.002$, maxout networks with $c=0.1$: $0.002$, maxout networks with $c=2$: $2 \times 10^{-7}$, ReLU networks: $0.002$.
    \item Iris. Networks are trained for $500$ epochs. The learning rate is halved every $100$ epochs. Learning rates: maxout networks with maxout initialization: $0.01$, maxout networks with $c=0.1$: $0.01$, maxout networks with $c=2$: $4 \times 10^{-8}$, ReLU networks: $0.005$.
    \item MNIST (convolutional networks). Networks are trained for $800$ epochs. The learning rate is halved every $100$ epochs. Learning rates: maxout networks with maxout initialization: $0.009$, maxout networks with max-pooling initialization: $0.009$, maxout networks with $c=0.1$: $0.009$, maxout networks with $c=2$: $8 \times 10^{-6}$, ReLU networks: $0.01$.
    \item Fashion MNIST. Networks are trained for $800$ epochs. The learning rate is halved every $100$ epochs. Learning rates: maxout networks with maxout initialization: $0.004$, maxout networks with max-pooling initialization: $0.006$, maxout networks with $c=0.1$: $0.4$, maxout networks with $c=2$: $5 \times 10^{-6}$, ReLU networks: $0.01$.
    \item CIFAR-10. Networks are trained for $1000$ epochs. The learning rate is halved every $100$ epochs. Learning rates: maxout networks with maxout initialization: $0.004$, maxout networks with max-pooling initialization: $0.005$, maxout networks with $c=0.1$: $0.5$, maxout networks with $c=2$: $8 \times 10^{-8}$, ReLU networks: $0.009$.
    \item CIFAR-100. Networks are trained for $1000$ epochs. The learning rate is halved every $100$ epochs. Learning rates: maxout networks with maxout initialization: $0.002$, maxout networks with max-pooling initialization: $0.002$, maxout networks with $c=0.1$: $0.002$, maxout networks with $c=2$: $8 \times 10^{-5}$, ReLU networks: $0.006$.
    \item SVHN. Networks are trained for $150$ epochs. The learning rate is halved every $10$ epochs. Learning rates: maxout networks with maxout initialization: $0.005$, maxout networks with max-pooling initialization: $0.005$, maxout networks with $c=0.1$: $0.005$, maxout networks with $c=2$: $7 \times 10^{-5}$, ReLU networks: $0.005$.
\end{itemize}

\paragraph{Adam}
We use Adam optimizer \citet{DBLP:journals/corr/KingmaB14} with default TensorFlow parameters $\beta_1 = 0.9, \beta_2 = 0.999$.
Specific dataset settings are the following.
\begin{itemize}
    \item MNIST (fully-connected networks). Networks are trained for $600$ epochs. The learning rate is halved every $100$ epochs. Learning rates: maxout networks with maxout initialization: $0.0008$, maxout networks with $c=2$: $0.0007$, ReLU networks: $0.0008$.
    \item MNIST (convolutional networks). Networks are trained for $800$ epochs. The learning rate is halved every $100$ epochs. Learning rates: maxout networks with maxout initialization: $0.0001$, maxout networks with max-pooling initialization: $0.00006$, maxout networks with $c=2$: $0.00004$, ReLU networks: $0.00009$.
    \item Fashion MNIST. Networks are trained for $1000$ epochs. The learning rate is halved every $100$ epochs. Learning rates: maxout networks with maxout initialization: $0.00007$, maxout networks with max-pooling initialization: $0.00008$, maxout networks with $c=2$: $0.00005$, ReLU networks: $0.0002$.
    \item CIFAR-10. Networks are trained for $1000$ epochs. The learning rate is halved every $100$ epochs. Learning rates: maxout networks with maxout initialization: $0.00009$, maxout networks with max-pooling initialization: $0.00009$, maxout networks with $c=2$: $0.00005$, ReLU networks: $0.0001$.
    \item CIFAR-100. Networks are trained for $1000$ epochs. The learning rate is halved every $100$ epochs. Learning rates: maxout networks with maxout initialization: $0.00008$, maxout networks with max-pooling initialization: $0.00009$, maxout networks with $c=2$: $0.00005$, ReLU networks: $0.00009$.
\end{itemize}

\subsection{Ablation Analysis}

Table~\ref{tab:ablation} shows the results of the additional experiments that use SGD with Nesterov momentum for more values of $c$ and $K=5$. From this, we see that the recommended value of $c$ from Section \ref{sec:implications_init} closely matches the empirical optimum value of $c$. Note that here we have fixed the learning rate across choices of $c$. More specifically, the following learning rates were used for the experiments with different datasets. MNIST with fully-connected networks: $0.002$; Iris: $0.01$; MNIST with convolutional networks: $0.009$; CIFAR-10: $0.004$; CIFAR-100: $0.002$; Fashion MNIST: $0.004$. These are the learning rates reported for the SGD with Nesterov momentum experiments in Section~\ref{subsec:exp_main_training}.

\begin{table}[t!]
    \centering
    \caption{Ablation study of the value of $c$. Reported is accuracy on the test set for maxout networks with maxout rank $K=5$ trained using SGD with Nesterov momentum. Observe that the optimal value of $c$ is close to $c = 0.55555$ which is suggested in Section~\ref{sec:implications_init}.}
    \label{tab:ablation}
    \begin{tabular}{c ll ll ll}
        \toprule
        \multirow{2}{*}{VALUE OF $c$ }
        & \multicolumn{2}{c}{FULLY-CONNECTED}        
        & \multicolumn{4}{c}{CONVOLUTIONAL}\\
        \cmidrule(r){2-3}
        \cmidrule(r){4-7}
        & MNIST
        & Iris
        & MNIST
        & CIFAR-10
        & CIFAR-100
        & Fashion MNIST\\
        $0.01$
        & $11.35^{\pm 0.00}$
        & $30^{\pm 0.00}$
        & $11.35^{\pm 0.00}$
        & $10^{\pm 0.00}$
        & $1^{\pm 0.00}$
        & $10^{\pm 0.00}$\\
        $0.05$        
        & $11.35^{\pm 0.00}$
        & $30^{\pm 0.00}$
        & $11.35^{\pm 0.00}$
        & $10^{\pm 0.00}$
        & $1^{\pm 0.00}$
        & $10^{\pm 0.00}$\\
        $0.07$
        & $11.35^{\pm 0.00}$
        & $31.67^{\pm 2.89}$
        & $11.35^{\pm 0.00}$
        & $10^{\pm 0.00}$
        & $1^{\pm 0.00}$
        & $10^{\pm 0.00}$\\
        $0.1$
        & $11.35^{\pm 0.00}$
        & $30^{\pm 0.00}$
        & $11.35^{\pm 0.00}$
        & $10^{\pm 0.00}$
        & $1^{\pm 0.00}$
        & $10^{\pm 0.00}$\\
        $0.2$
        & $11.35^{\pm 0.00}$
        & $30^{\pm 0.00}$
        & $99.56^{\pm 0.03}$
        & $10^{\pm 0.00}$
        & $1^{\pm 0.00}$
        & $93.21^{\pm 0.11}$\\
        $0.3$
        & $97.63^{\pm 0.16}$
        & $60.83^{\pm 30.86}$
        & $99.55^{\pm 0.02}$
        & $90.97^{\pm 0.11}$
        & $64.71^{\pm 0.25}$
        & $93.41^{\pm 0.11}$\\
        $0.4$
        & $97.89^{\pm 0.12}$
        & $85^{\pm 10.67}$
        & $\mathbf{99.6^{\pm 0.03}}$
        & $91.15^{\pm 0.07}$
        & $64.9^{\pm 0.33}$
        & $93.21^{\pm 0.11}$\\
        $0.5$
        & $97.82^{\pm 0.09}$
        & $\mathbf{92.5^{\pm 1.44}}$
        & $99.56^{\pm 0.05}$
        & $91.33^{\pm 0.13}$
        & $65.48^{\pm 0.43}$
        & $93.5^{\pm 0.15}$\\
        $0.55555$
        & $\mathbf{97.92^{\pm 0.18}}$
        & $90.83^{\pm 3.63}$
        & $99.57^{\pm 0.07}$
        & $91.4^{\pm 0.22}$
        & $65.38^{\pm 0.32}$
        & $93.51^{\pm 0.08}$\\
        $0.6$
        & $97.77^{\pm 0.17}$
        & $90.83^{\pm 1.44}$
        & $99.59^{\pm 00.02}$
        & $\mathbf{91.69^{\pm 0.25}}$
        & $65.58^{\pm 0.24}$
        & $93.54^{\pm 0.13}$\\
        $0.7$
        & $97.91^{\pm 0.11}$
        & $90^{\pm 0.00}$
        & $54.69^{\pm 44.89}$
        & $50.83^{\pm 40.83}$
        & $\mathbf{66.26^{\pm 0.42}}$
        & $\mathbf{93.62^{\pm 0.23}}$\\
        $0.8$
        & $75.82^{\pm 38.12}$
        & $30^{\pm 0.00}$
        & $9.8^{\pm 0.00}$
        & $10^{\pm 0.00}$
        & $1^{\pm 0.00}$
        & $72.66^{\pm 36.17}$\\
        $0.9$
        & $75.94^{\pm 38.18}$
        & $30^{\pm 0.00}$
        & $9.8^{\pm 0.00}$
        & $10^{\pm 0.00}$
        & $1^{\pm 0.00}$
        & $10^{\pm 0.00}$\\
        $1$
        & $97.89^{\pm 0.10}$
        & $30^{\pm 0.00}$
        & $9.8^{\pm 0.00}$
        & $10^{\pm 0.00}$
        & $1^{\pm 0.00}$
        & $10^{\pm 0.00}$\\
        $1.5$
        & $9.8^{\pm 0.00}$
        & $30^{\pm 0.00}$
        & $9.8^{\pm 0.00}$
        & $10^{\pm 0.00}$
        & $1^{\pm 0.00}$
        & $10^{\pm 0.00}$\\
        $2$
        & $9.8^{\pm 0.00}$
        & $30^{\pm 0.00}$
        & $9.8^{\pm 0.00}$
        & $10^{\pm 0.00}$
        & $1^{\pm 0.00}$
        & $10^{\pm 0.00}$\\
        $10$
        & $9.8^{\pm 0.00}$
        & $30^{\pm 0.00}$
        & $9.8^{\pm 0.00}$
        & $10^{\pm 0.00}$
        & $1^{\pm 0.00}$
        & $10^{\pm 0.00}$\\
        \bottomrule
    \end{tabular}
\end{table}

\subsection{Batch Normalization}

Table~\ref{tab:bn} reports test accuracy for maxout networks with batch normalization trained using SGD with Nesterov momentum for various values of $c$. The implementation of the experiments is similar to that described in Section~\ref{subsec:exp_main_training}, except for the following differences: The networks use batch normalization after each layer with activations; The width of the last fully connected layer is $100$, and all other layers of the convolutional networks are $8$ times narrower; The learning rate is fixed at $0.01$ for all experiments. We use the default batch normalization parameters from TensorFlow. Specifically, the momentum equals $0.99$ and $\epsilon = 0.001$. We observe that our initialization strategy is still beneficial when training with batch normalization.

\begin{table}[t!]
    \centering
    \caption{Accuracy on the test set for maxout networks with maxout rank $K=5$ that use batch normalization trained using SGD with Nesterov momentum. Observe that the optimal value of $c$ is close to $c = 0.55555$ which is suggested in Section~\ref{sec:implications_init}.}
    \label{tab:bn}
    \begin{tabular}{c ll ll}
        \toprule
        \multirow{2}{*}{VALUE OF $c$ }   
        & \multicolumn{4}{c}{CONVOLUTIONAL}\\
        \cmidrule(r){2-5}
        & MNIST
        & CIFAR-10
        & CIFAR-100
        & Fashion MNIST\\
        $10^{-6}$
        & $11.35^{\pm 0.00}$
        & $10^{\pm 0.00}$
        & $1^{\pm 0.00}$
        & $10^{\pm 0.00}$\\
        $10^{-5}$
        & $11.35^{\pm 0.00}$
        & $10^{\pm 0.00}$
        & $1^{\pm 0.00}$
        & $10^{\pm 0.00}$\\
        $10^{-4}$
        & $99.33^{\pm 0.09}$
        & $10^{\pm 0.00}$
        & $1^{\pm 0.00}$
        & $90.89^{\pm 0.27}$\\
        $0.001$
        & $99.35^{\pm 0.05}$
        & $74.85^{\pm 3.29}$
        & $38.95^{\pm 4.46}$
        & $89.78^{\pm 1.2}$\\
        $0.01$ 
        & $99.32^{\pm 0.05}$
        & $75.72^{\pm 4.94}$
        & $37.03^{\pm 4.19}$
        & $90.51^{\pm 0.19}$\\
        $0.1$
        & $99.36^{\pm 0.04}$
        & $77.16^{\pm 1.84}$
        & $41.64^{\pm 1.53}$
        & $90.66^{\pm 0.3}$\\
        $0.55555$
        & $\mathbf{99.41^{\pm 0.07}}$
        & $77.68^{\pm 1.07}$
        & $42.^{\pm 1.51}$
        & $90.89^{\pm 0.23}$\\
        $1$
        & $99.39^{\pm 0.04}$
        & $\mathbf{79.26^{\pm 0.76}}$
        & $\mathbf{43.93^{\pm 1.04}}$
        & $\mathbf{90.93^{\pm 0.36}}$\\
        $10$
        & $99.35^{\pm 0.02}$
        & $75.82^{\pm 1.05}$
        & $43.17^{\pm 0.28}$
        & $90.14^{\pm 0.18}$\\
        $100$
        & $98.83^{\pm 0.07}$
        & $66.23^{\pm 1.69}$
        & $35.67^{\pm 0.88}$
        & $86.99^{\pm 0.39}$\\
        $1000$
        & $97.69^{\pm 0.31}$
        & $50.97^{\pm 2.28}$
        & $21.95^{\pm 0.59}$
        & $80.93^{\pm 0.92}$\\
        $10^{4}$
        & $95.11^{\pm 1.40}$
        & $43.81^{\pm 1.80}$
        & $19.87^{\pm 1.29}$
        & $76.02^{\pm 1.44}$\\
        $10^{5}$
        & $93.09^{\pm 1.88}$
        & $39.27^{\pm 2.73}$
        & $14.28^{\pm 2.17}$
        & $73.71^{\pm 1.55}$\\
        $10^{6}$
        & $87.63^{\pm 1.86}$
        & $40.27^{\pm 0.92}$
        & $14.91^{\pm 0.9}$
        & $71.71^{\pm 3.3}$\\
        \bottomrule
    \end{tabular}
\end{table}

\subsection{Comparison of Maxout and ReLU Networks in Terms of the Number of Parameters}

We should point out that what is a fair comparison is not as straightforward as matching the parameter count. In particular, wider networks have the advantage of having a higher dimensional representation. A fully connected network will not necessarily perform as well as a convolutional network with the same number of parameters, and a deep and narrow network will not necessarily perform as well as a wider and shallower network with the same number of parameters. 

Nevertheless, to add more details to the results, we perform experiments using ReLU networks that have as many parameters as maxout networks. See Tables~\ref{tab:wide_relu} and \ref{tab:deep_relu} for results. We modify network architectures described in Section~\ref{subsec:exp_main_training} for these experiments in the following way.

In the first experiment, we use fully connected ReLU networks $5$ times wider than maxout networks. For convolutional networks, however, the resulting CNNs with ReLU activations would be extremely wide, so we only made it $4$ and $3$ times wider depending on the depth of the network. In our setup, a $5$ times wider CNN network would need to be trained for longer than $24$ hours, which is difficult in our experiment environment. 
Maxout networks only required a much shorter time of around $10$ hours, which indicates possible benefits in some cases. 

In the second experiment, we consider ReLU networks that are $5$ times deeper than maxout networks.
More specifically, fully-connected ReLU networks have the following architecture: $[25 {\times} \text{fc}64, \
    25 {\times} \text{fc}32, \
    25 {\times} \text{fc}16, \
    25 {\times} \text{fc}8, \
    \text{out}]$,
convolutional networks used for MNIST and Fashion MNIST datasets have the following layers:
$
    [10 {\times} \text{conv}64, \
    \text{mp}, \
    10 {\times} \text{conv}128, \
    \text{mp}, \
    20 {\times} \text{conv}256, \
    \text{mp}, \
    20 {\times} \text{conv}512, \
    \text{mp},
    10 {\times} \text{fc}4096, \
    5 {\times} \text{fc}1000, \
    \text{out}]
$,
and architecture of the convolutional networks used for CIFAR-10 and CIFAR-100 datasets is
$
    [10 {\times} \text{conv}64, \
    \text{mp}, \
    10 {\times} \text{conv}128, \
    \text{mp}, \
    20 {\times} \text{conv}256, \
    \text{mp}, \
    20 {\times} \text{conv}512, \
    \text{mp},
    20 {\times} \text{conv}512, \
    \text{mp},
    10 {\times} \text{fc}4096, \
    5 {\times} \text{fc}1000, \
    \text{out}]
$.
As expected, wider networks do better. On the other hand, deeper ReLU networks of the same width do much worse than maxout networks. 

We performed a grid search based on the  performance of the model on the validation dataset to determine the optimal learning rate for each ReLU network. Specifically, the following learning rates were used. In the experiment with ReLU networks that are wider than maxout networks, MNIST (fully-connected networks): $0.003$, Iris: $0.009$, MNIST (convolutional networks): $0.01$, Fashion MNIST: $0.008$, CIFAR-10: $0.007$, CIFAR-100: $0.008$. In the experiment with ReLU networks that are deeper than maxout networks, MNIST (fully-connected networks): $0.00002$, Iris: $0.0006$, MNIST (convolutional networks): $0.00008$, Fashion MNIST: $0.0008$, CIFAR-10: $0.00008$, CIFAR-100: $0.00008$.

\begin{table}[t!]
  \caption{Accuracy on the test set for networks trained using SGD with Nesterov momentum.
    Fully-connected ReLU networks are $5$ times wider than fully-connected maxout networks.
    Convolutional ReLU networks are $4$ times wider than convolutional maxout networks for the MNIST and Fashion MNIST datasets and $3$ times wider for the CIFAR-10 and CIFAR-100 datasets.
    All networks have the same number of layers.
  }
  \label{tab:wide_relu}
  \centering
    \begin{tabular}{l cc}
        \toprule
        & MAXOUT & RELU\\
        & Maxout init & He init \\
        VALUE OF $c$ & $0.55555$ & $2$\\
        \midrule
        & \multicolumn{2}{c}{FULLY-CONNECTED}\\
        \cmidrule(r){2-3}
        MNIST
        & $97.8^{\pm 0.15}$
        & $\mathbf{98.11^{\pm 0.02}}$\\
        Iris
        & $91.67^{\pm 3.73}$
        & $\mathbf{92.5^{\pm 2.76}}$\\
        \midrule
        & \multicolumn{2}{c}{CONVOLUTIONAL}\\
        \cmidrule(r){2-3}
        MNIST
        & $\mathbf{99.59^{\pm 0.04}}$
        & $99.55^{\pm 0.01}$\\
        Fashion MNIST
        & $93.49^{\pm 0.13}$
        & $\mathbf{93.71^{\pm 0.19}}$\\
        CIFAR-10
        & $91.21^{\pm 0.13}$
        & $\mathbf{91.24^{\pm 0.21}}$\\
        CIFAR-100
        & $65.39^{\pm 0.39}$
        & $\mathbf{66^{\pm 0.45}}$\\
        \bottomrule
    \end{tabular}
\end{table}

\begin{table}[t!]
  \caption{Accuracy on the test set for networks trained using SGD with Nesterov momentum.
    ReLU networks are $5$ times deeper than maxout networks but have the same width.
  }
  \label{tab:deep_relu}
  \centering
    \begin{tabular}{l cc}
        \toprule
        & MAXOUT & RELU\\
        & Maxout init & He init \\
        VALUE OF $c$ & $0.55555$ & $2$\\
        \midrule
        & \multicolumn{2}{c}{FULLY-CONNECTED}\\
        \cmidrule(r){2-3}
        MNIST
        & $\mathbf{97.8^{\pm 0.15}}$
        & $63.47^{\pm 33.32}$\\
        Iris
        & $\mathbf{91.67^{\pm 3.73}}$
        & $75.83^{\pm 11.15}$\\
        \midrule
        & \multicolumn{2}{c}{CONVOLUTIONAL}\\
        \cmidrule(r){2-3}
        MNIST
        & $\mathbf{99.59^{\pm 0.04}}$
        & $99.4^{\pm 0.05}$\\
        Fashion MNIST
        & $\mathbf{93.49^{\pm 0.13}}$
        & $93.25^{\pm 0.11}$\\
        CIFAR-10
        & $\mathbf{91.21^{\pm 0.13}}$
        & $73.25^{\pm 3.19}$\\
        CIFAR-100
        & $\mathbf{65.39^{\pm 0.39}}$
        & $17.97^{\pm 4.57}$\\
        \bottomrule
    \end{tabular}
\end{table}

\subsection{Comparison of Maxout and ReLU Networks with Dropout}

One of the motivations for introducing maxout units in \citet{goodfellow_maxout_2013} was to obtain better model averaging by techniques such as dropout \citep{srivastava2014dropout}. The original paper \citep{goodfellow_maxout_2013} conducted experiments comparing maxout and $\tanh$. In Table \ref{tab:dropout}, we show the results of an experiment demonstrating that in terms of allowing for a better approximation of model averaging based on dropout, maxout networks compare favorably against ReLU. This indicates that maxout units can indeed be more suitable for training with dropout when properly initialized. We point out that several contemporary architectures often rely on dropout, such as transformers \citep{vaswani2017attention}. 

\begin{table}[t!]
    \caption{Accuracy on the MNIST dataset of fully-connected networks trained with dropout with a rate of $0.5$ and of average predictions of several networks in which half of the weights were masked. All results were averaged over $4$ runs. Maxout rank $K = 5$. Networks had $3$ layers with $128$, $64$, and $32$ neurons. Maxout networks were initialized using the initialization suggested in Section~\ref{sec:implications_init}, and ReLU networks using He initialization with Gaussian distribution \citep{he_delving_2015}. ReLU networks with dropout give results closer to a single model, whereas maxout networks with dropout give results closer to the average of a larger number of models. This indicates that maxout units are more effective for obtaining better model averaging using dropout.}
    \label{tab:dropout}
    \centering
    \begin{tabular}{lccccc}
        \toprule
         & 
        \begin{tabular}{c}
            DROPOUT
        \end{tabular} 
        &
        \begin{tabular}{c}
            $1$ MODEL
        \end{tabular} 
        &
        \begin{tabular}{c}
            AVERAGE OF\\$2$ MODELS
        \end{tabular} 
        &
        \begin{tabular}{c}
            AVERAGE OF\\$3$ MODELS
        \end{tabular} 
        &
        \begin{tabular}{c}
            AVERAGE OF\\$4$ MODELS
        \end{tabular} \\
         \cmidrule(r){2-6}
         ReLU & $97.04^{\pm 0.14}$ & $97.09^{\pm 0.17}$ & $97.73^{\pm 0.08}$ & $97.87^{\pm 0.04}$ & $97.94^{\pm 0.08}$ \\
          Maxout & $98.37^{\pm 0.09}$ & $97.66^{\pm 0.04}$ & $98.03^{\pm 0.05}$ & $98.15^{\pm 0.08}$ & $98.19^{\pm 0.06}$\\
        \bottomrule
    \end{tabular}
\end{table}

\subsection{Gradient Values during Training}

The goal of the suggested initialization is to ensure that the training can start, while the gradients might vary during training. Nevertheless, it is natural to also consider the gradient values during training for a fuller picture. Hence, we demonstrate the gradients during training in Figures \ref{fig:gradients_during_training} and \ref{fig:gradients_during_training_all}.

\begin{figure}[t!]
    \centering
    \includegraphics[width=0.7\textwidth]{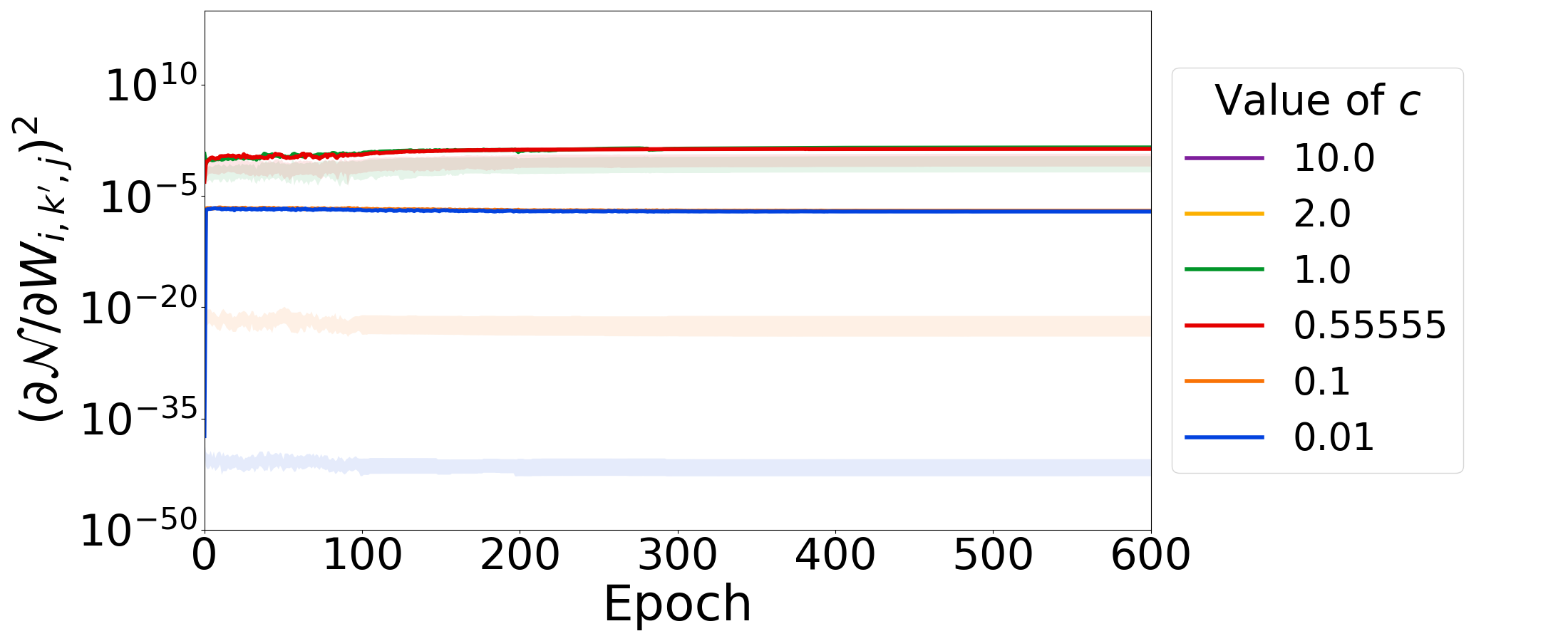}
    \caption{Expected value and interquartile range of the squared gradients $(\partial \mathcal{N} / \partial W_{i,k',j})^2$ of a fully-connected maxout network during training on the MNIST dataset. 
    Weights are sampled from $\gaussian(0, c / \text{fan-in})$, and biases are initialized to zero. The maxout rank $K$ is $5$.
    We compute the mean and quartiles for $4$ training runs using one random input that is fixed at the start of the training.
    The gradient values increase at the beginning of the training and then remain stable during training for  all plotted initializations.
    Note that red and green lines corresponding to the values of $c = 0.55555$ and $c = 1$, respectively, overlap. Similarly, blue and orange lines corresponding to $c = 0.01$ and $c=0.1$ overlap. Results for $c=2$ and $c = 10$ are not shown in the plot since their gradients explode and go to NaN after the training starts.
    }
    \label{fig:gradients_during_training}
\end{figure}

\begin{figure}[t!]
    \centering
    \includegraphics[width=0.8\textwidth]{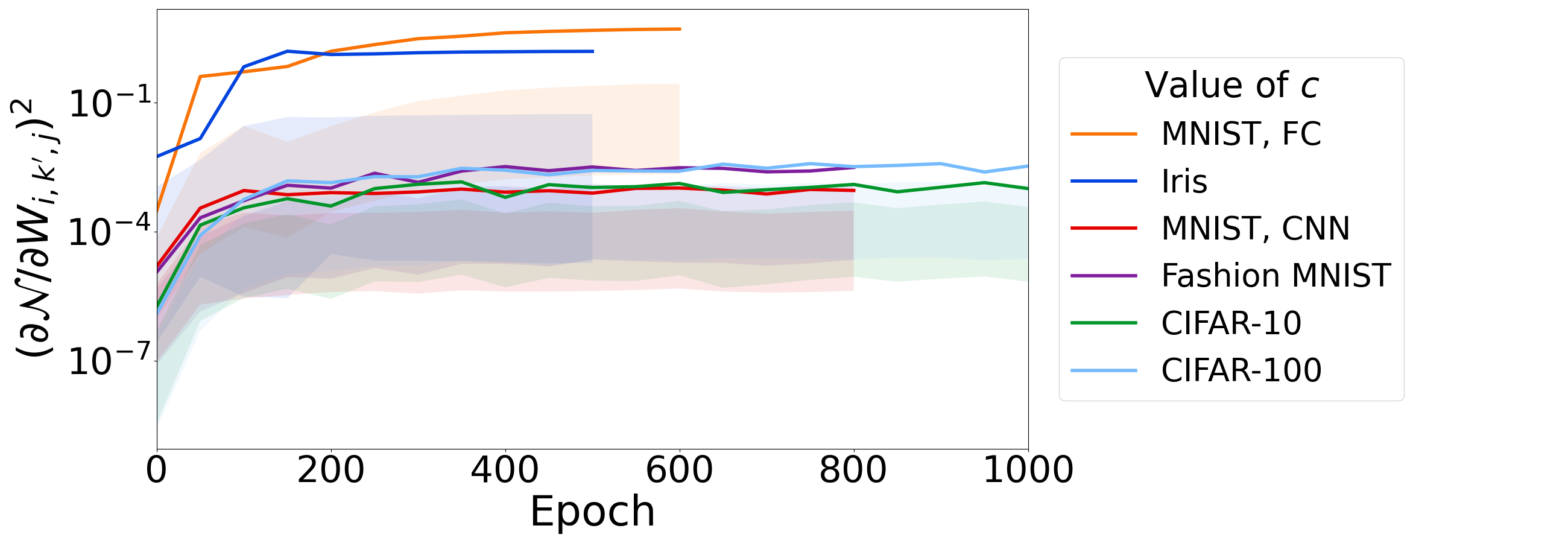}
    \caption{Expected value and interquartile range of the squared gradients $(\partial \mathcal{N} / \partial W_{i,k',j})^2$ of maxout networks during training. 
    Weights are sampled from $\gaussian(0, c / \text{fan-in})$, where $c = 0.55555$, and biases are initialized to zero. The maxout rank $K$ is $5$. We use SGD with momentum and compute the mean and quartiles for $4$ training runs using one random input that is fixed at the start of the training. All other experiment parameters are as described in Section~\ref{subsec:exp_main_training}.
    The gradient values increase at the beginning of the training and then remain stable during training for all datasets.
    }
    \label{fig:gradients_during_training_all}
\end{figure}

\end{document}